\documentclass{article} 
\usepackage{iclr2027_conference,times}

\usepackage{amsmath,amsfonts,bm}

\def\eqref#1{equation~\ref{#1}}

\def\1{\bm{1}}

\DeclareMathAlphabet{\mathsfit}{\encodingdefault}{\sfdefault}{m}{sl}
\SetMathAlphabet{\mathsfit}{bold}{\encodingdefault}{\sfdefault}{bx}{n}

\usepackage{microtype}
\usepackage{amsmath}
\usepackage{amssymb}
\usepackage{amsfonts}
\usepackage{amsthm}

\usepackage{xcolor}
\usepackage{graphicx}
\usepackage{float}
\usepackage{wrapfig}
\usepackage{subcaption}
\usepackage{colortbl}

\usepackage{booktabs}
\usepackage{array}
\usepackage{tabularx}
\usepackage{threeparttable}
\usepackage{multirow}
\usepackage{multicol}
\usepackage{makecell}
\usepackage{longtable}
\usepackage{ragged2e}
\usepackage{adjustbox}
\usepackage{caption}
\usepackage{calc}
\usepackage{tikz}
\usepackage{algorithm}

\usepackage{algpseudocode}
\usepackage{listings}
\definecolor{lightblue}{RGB}{225,240,250}
\definecolor{softred}{RGB}{192,0,0}
\definecolor{darkgreen}{RGB}{0,100,70}
\definecolor{lightgray}{RGB}{242,242,242}

\definecolor{ttclboxbg}{RGB}{238,246,252}
\definecolor{ttclboxframe}{RGB}{120,170,210}
\definecolor{ttclboxtitle}{RGB}{170,205,232}

\newcommand{\repeatvalue}[1]{%
\begin{tikzpicture}[baseline=(val.base)]
    \draw[line width=0.5pt] (0,0.65) -- (0,-0.65);
    
    \draw[line width=0.5pt] (-0.35,0.65) -- (0.35,0.65);
    \draw[line width=0.5pt] (-0.35,-0.65) -- (0.35,-0.65);
    
    \node[fill=white, inner sep=1.5pt] (val) at (0,0) {#1};
\end{tikzpicture}%
}

\usepackage{enumitem}
\usepackage{nicefrac}
\definecolor{citeblue}{RGB}{70,110,180}
\usepackage[colorlinks=true, citecolor=citeblue]{hyperref}

\usepackage[most]{tcolorbox}

\newtheorem{theorem}{Theorem}[section]
\newtheorem{proposition}[theorem]{Proposition}

\title{TTCL: Test-time Calibration Learning for Large Language Model Reasoning}

\author{
\begin{tabular}{@{}l@{}}
\textbf{Zizhuo Zhang}$^{1}$\thanks{Equal contribution.}  \quad
\textbf{Xiong Peng}$^{1*}$ \quad
\textbf{Jingwei Sun}$^{1*}$ \quad
\textbf{Rong Yao}$^{2}$ \quad
\textbf{Shixiong Kai}$^{2}$ \\
\textbf{Mingxuan Yuan}$^{2}$ \quad
\textbf{Bo Han}$^{1}$\thanks{Correspondence to Bo Han (bhanml@comp.hkbu.edu.hk).}
\end{tabular}
\vspace{0mm} \\
$^{1}$TMLR Group, Department of Computer Science, Hong Kong Baptist University \\ 
$^{2}$Huawei Noah's Ark Lab.
}

\iclrfinalcopy

\begin{document}

\maketitle

\lhead{}

\begin{abstract}
Reliable large language models (LLMs) must not only produce accurate answers but also express confidence that faithfully reflects their probability of being correct. Such calibration is essential for identifying uncertain predictions and supporting reliable decision-making in real-world deployment. Recent studies incorporate calibration learning into reinforcement learning (RL), jointly optimizing answer correctness and verbalized confidence using ground-truth correctness supervision. However, their reliance on labeled data limits their applicability in practical test-time settings, where ground-truth labels are unavailable and calibration may need to adapt to newly encountered target tasks. To address this challenge, we propose \textit{Test-Time Calibration Learning (TTCL)}, a label-free framework that jointly adapts reasoning accuracy and verbalized confidence directly on unlabeled target-task data. Specifically, TTCL derives self-supervision signals for both correctness and calibration from multiple model-generated responses, enabling calibration learning at test time without ground-truth labels. Theoretical analysis further establishes TTCL as a bounded surrogate for the ideal calibration objective. Extensive experiments on mathematical reasoning and factual question answering demonstrate that TTCL consistently improves both accuracy and calibration across diverse models and tasks. On base models, TTCL achieves an average relative accuracy improvement of $+40.13\%$ and an ECE reduction of $+70.80\%$ across eight benchmarks. Moreover, TTCL can further improve both accuracy and calibration for already calibrated models under domain shift, particularly when source-domain calibration transfers poorly to target tasks. In the math-to-factQA setting, TTCL achieves an average relative accuracy gain of $+20.35\%$ and reduces ECE by $+53.83\%$. The source code is released at~\url{https://github.com/tmlr-group/TTCL}.

\begin{figure}[h]
  \centering
  \vspace{-1mm}
  \includegraphics[width=\textwidth]{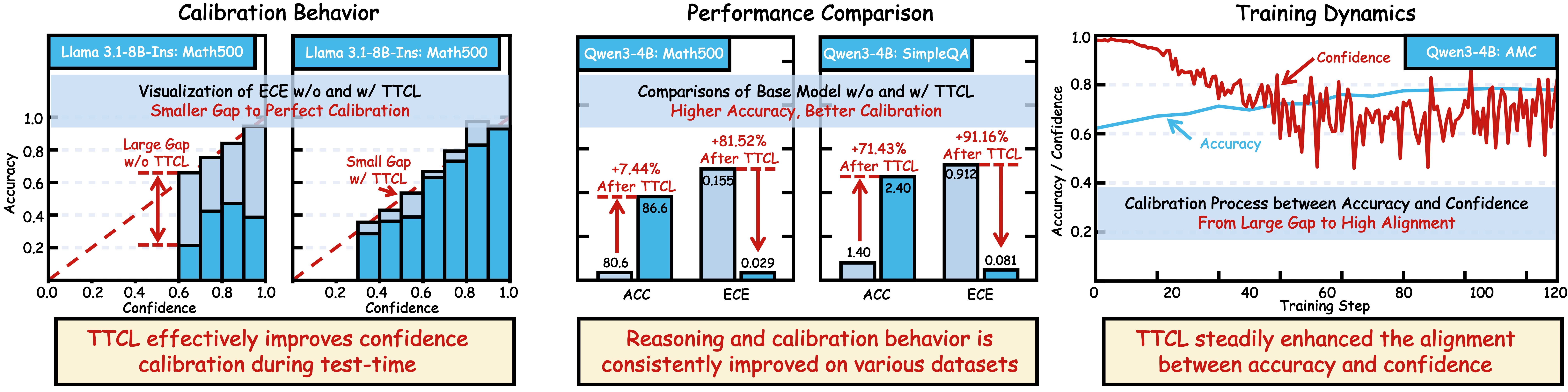}
  \vspace{-5mm}
  \caption{\textbf{Overview of our work}: TTCL enables label-free calibration learning directly on unlabeled target-task data, consistently improving reasoning accuracy while reducing calibration error across models and tasks, with confidence progressively aligning with accuracy during test-time learning.}
  \label{Fig:overview}
  \vspace{-6mm}
\end{figure} 
\end{abstract}

\section{Introduction}
\label{sec:introduction}
\vspace{-1mm}
Large language models (LLMs) are increasingly deployed in user-facing and decision-support systems~\citep{zhang2026co,sun2026trimem}, where the outputs may directly influence real-world decisions. Despite their remarkable capabilities, LLMs often assign high confidence to incorrect responses. Such miscalibration makes errors difficult to detect and can lead to severe consequences in high-stakes settings~\citep{ji2023hallucination}. Ideally, a well-calibrated model should produce confidence estimates that faithfully reflect its empirical likelihood of being correct. To achieve this, researchers have proposed calibration learning~\citep{damani2026RLCR,bani2026rewarding}, which combine calibration-oriented reward with answer correctness reward during reinforcement learning process to consistently improve LLM reasoning performance and confidence calibration behavior.

Despite the promising results achieved by previous research, we observe that existing calibration learning methods typically rely on large amounts of labeled data to provide supervision for calibration, incurring substantial annotation costs~\citep{damani2026RLCR,tan2026basecal,zollo2026unsupervised}. More importantly, a model calibrated on a source domain may not remain well-calibrated when testing on a different target domain. As shown in Fig.~\ref{Fig:Motivation}, a model calibrated by existing method RLCR~\citep{damani2026RLCR} on DAPO-Math-14K~\citep{yu2026dapo} transfers well to the math-related AMC benchmark~\citep{yang2024qwen2p5math}, but remains limited calibration behavior on the factual NQ benchmark~\citep{kwiatkowski2019natural}, highlighting the sensitivity of learned calibration to domain shift. These limitations pose a key challenge: \textit{how can we adapt a model's calibration behavior directly to an unseen target task at test time, without relying on ground-truth labels?}

To address this, we propose \textit{\textbf{T}est-\textbf{T}ime \textbf{C}alibration \textbf{L}earning} (TTCL), which provides a label-free surrogate for calibration learning. Specifically, TTCL derives two self-supervision signals from multiple sampled responses: a correctness reward based on the majority answer and a calibration reward based on the empirical support of each answer. The two rewards are separately applied to the answer and confidence tokens, allowing the model to improve answer correctness and confidence calibration without undesirable interference. To stabilize learning, we further introduce a target cache that maintains temporally lagged supervision targets, reducing the risk of reinforcing the model's own instantaneous predictions. To demonstrate the effectiveness of our method, we conduct extensive experiments on mathematical reasoning and factual question-answering benchmarks with multiple model scales and series. The results show that TTCL can consistently improve both reasoning accuracy and confidence on target task. Our main contributions are summarized as follows:

\begin{itemize}[leftmargin=1.3em, labelsep=0.5em, nosep]
\item Conceptually, we propose Test-Time Calibration Learning, a new calibration paradigm that adapts LLM confidence directly to the target task at test time without relying on ground-truth labels, enabling calibration of confidence estimation under unseen test distributions (Section~\ref{Sec:motivation_ttcl}).

\item Technically, we develop a label-free TTCL framework that constructs self-supervised signals for correctness and calibration rewards from multi-rollout frequency (Section~\ref{Sec:ttcl_1} \&~\ref{Sec:ttcl_2}).

\item Theoretically, we show that TTCL essentially serves as a bounded surrogate for the ideal calibration learning objective, theoretically grounding its label-free learning mechanism (Section~\ref{Sec:Theory}).

\item Experimentally, extensive experiments across models and benchmarks show consistent improvements of TTCL in both reasoning and calibration performance  (Section~\ref{sec:experiments}).

\end{itemize}

\section{Preliminary}
\label{sec:preliminary}
\vspace{-1mm}
In this section, we first introduce the problem setup of calibration learning and the corresponding calibration measurement, and then review Group Relative Policy Optimization (GRPO), a widely used RL algorithm in eliciting LLM reasoning capability.

\par
\textbf{Problem setup of calibration learning.} 
Let $\pi_\theta$ denote an LLM parameterized by $\theta$. Given a question $x$, the model generates a response $y\sim \pi_\theta(\cdot|x)$ containing a reasoning trajectory and an extracted final answer $a=\operatorname{Ans}(y)$. Given the ground-truth (GT) answer $a^*$, the correctness indicator is defined as $z(y,a^*)=\mathbb{I}[a \equiv a^\star]\in\{0,1\}.$ A confidence estimator $c(x,y)\in[0,1]$ quantifies the model's belief that the generated response $y$ is correct. Let $C$ and $Z$ denote the corresponding confidence and correctness random variables. A perfectly calibrated model satisfies:
\begin{equation}
\label{Eq:Def_Calibration}
    \operatorname{Pr}(Z=1|C=c)=c,
\end{equation}
indicating that predictions with confidence $c$ should be correct about $c$ fraction of the time. In practice, calibration learning typically optimizes $c(x,y)$ against observed correctness $z(y)$ using a proper scoring rule~\citep{gneiting2007strictlyproper}. A common choice is the Brier score~\citep{glenn1950Brier}, $\mathcal{L}_{\mathrm{Brier}}=\left(c(x,y)-z(y,a^*)\right)^2$, which encourages the predicted confidence to match the probability of correctness. Building on this, recent methods~\citep{damani2026RLCR,bani2026rewarding,ma2026decoupling} jointly optimize reasoning correctness and confidence calibration during RL. Specifically, RLCR~\citep{damani2026RLCR} trains the LLM to verbalize a confidence score using reward as:
\begin{equation}
    r_{\mathrm{RLCR}} = z(y,a^*) - \left(c(x,y)-z(y,a^*)\right)^2,
\end{equation}
where the first term is a binary correctness reward, while the second is a Brier-score calibration reward. However, both objectives require the ground-truth answer $a^*$, which limits their applicability when annotated labels are not available. In this work, we focus on test-time calibration learning, where the model adapts its confidence estimation on unlabeled test data.

\par
\textbf{Calibration measurement.}
Expected Calibration Error (ECE) is a widely used metric for evaluating calibration by measuring the discrepancy between predicted confidence and empirical accuracy~\citep{guo2017ECE}. Given $N$ predictions partitioned into $M$ confidence bins $\{B_m\}_{m=1}^M$, ECE is defined as:
\begin{equation}
\label{Eq:ECE}
    \mathrm{ECE}=\sum_{m=1}^{M}\frac{|B_m|}{N}\left|\operatorname{Acc}(B_m) -\operatorname{Conf}(B_m)\right|,
\end{equation}
where $\operatorname{Acc}(B_m)$ and $\operatorname{Conf}(B_m)$ denote the average correctness and confidence within bin $B_m$, respectively. A lower ECE indicates better calibration, with confidence better aligned with accuracy.

\vspace{-2mm}

\begin{figure}[t]
  \centering
  \vspace{-2mm}
  \includegraphics[width=0.98\textwidth]{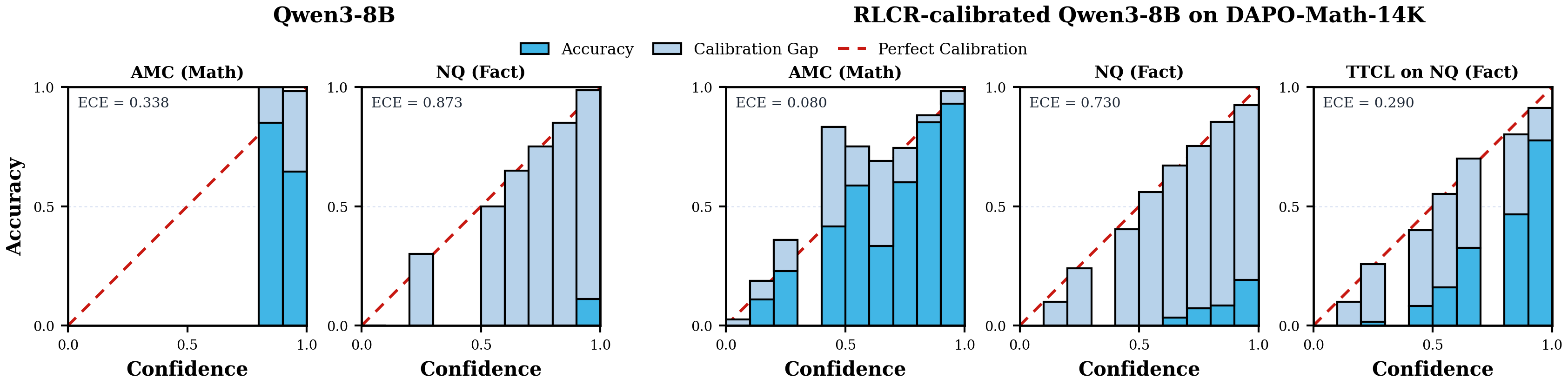}
  \vspace{-3mm}
  \caption{\textbf{Calibration performance of Qwen3-8B and its RLCR-calibrated model on DAPO-Math-14K.} \textit{Left:} Base Qwen3-8B exhibits over-confidence on both AMC and NQ, with predicted confidence exceeding empirical accuracy. \textit{Right:} RLCR on DAPO-Math-14K transfers well to AMC but remains limited on NQ, where further TTCL adaptation markedly improves calibration.}
  \label{Fig:Motivation}
  \vspace{-5mm}
\end{figure}

\section{TTCL: Test-time Calibration Learning}
\vspace{-1mm}
In this section, we first present the motivation of the need for TTCL (Sec.~\ref{Sec:motivation_ttcl}), and then introduce the technical details of our proposed TTCL method (Sec.~\ref{Sec:ttcl_1} and Sec.~\ref{Sec:ttcl_2}).
\vspace{-1mm}

\subsection{Why Test-time Calibration Learning (TTCL)?}
\label{Sec:motivation_ttcl}
Despite their strong reasoning capabilities, current LLMs often remain over-confident on downstream tasks. As illustrated in the left part of Fig.~\ref{Fig:Motivation}, Qwen3-8B exhibits clear over-confidence on both the mathematical benchmark AMC~\citep{yang2024qwen2p5math} and the factual benchmark NQ~\citep{kwiatkowski2019natural}, with estimated confidence systematically exceeding empirical accuracy and resulting in larger ECE values. Moreover, calibration learned on one task may not transfer well to others. As shown in the right part of Fig.~\ref{Fig:Motivation}, Qwen3-8B calibrated with RLCR on DAPO-Math-14K~\citep{yu2026dapo} achieves strong calibration on AMC, yet remains poorly calibrated on NQ. These observations motivate direct test-time calibration of LLMs and lead to two key principles of TTCL:

\begin{tcolorbox}[
    enhanced,
    colback=ttclboxbg,
    colframe=ttclboxframe,
    boxrule=0.5pt,
    arc=2mm,
    left=2mm,
    right=2mm,
    top=4mm,
    bottom=1mm,
    title={\small \textbf{Key Principles of TTCL}},
    attach boxed title to top left={
        xshift=4mm,
        yshift=-4mm
    },
    boxed title style={
        colback=ttclboxtitle,
        colframe=ttclboxframe,
        boxrule=0.5pt,
        arc=1mm,
        left=1.5mm,
        right=1.5mm,
        top=0.5mm,
        bottom=0.5mm
    },
    coltitle=black
]

\begin{itemize}[leftmargin=1.2em, nosep]
    \item \textbf{Label-free calibration learning:} derive supervision signals for calibration learning from unlabeled test-time data, removing the dependence on annotated ground-truth labels.
    \item \textbf{Target-task adaptation:} adapt confidence estimation directly to newly encountered tasks at test time, alleviating the limited cross-task generalization of calibration learning.
\end{itemize}

\end{tcolorbox}

Compared with existing calibration learning methods~\citep{damani2026RLCR,bani2026rewarding}, the first principle, \textit{label-free calibration learning}, removes the dependence on annotated correctness labels, which are often costly to obtain or unavailable at test time. The second principle, \textit{target-task adaptation}, enables the model to recalibrate its confidence directly on the deployment task, mitigating the limited cross-task transferability of calibration learned from source tasks. The right part of Fig.~\ref{Fig:Motivation} further illustrates that TTCL can adapt calibration directly to a new target task at test time. Starting from the RLCR-calibrated Qwen3-8B, applying TTCL on unlabeled NQ data reduces the ECE from 0.730 to 0.290, without requiring additional ground-truth annotations. We next introduce how TTCL constructs label-free supervision for both correctness and calibration from unlabeled data.

\begin{figure}[t]
  \centering
  \vspace{-2mm}
  \includegraphics[width=0.98\textwidth]{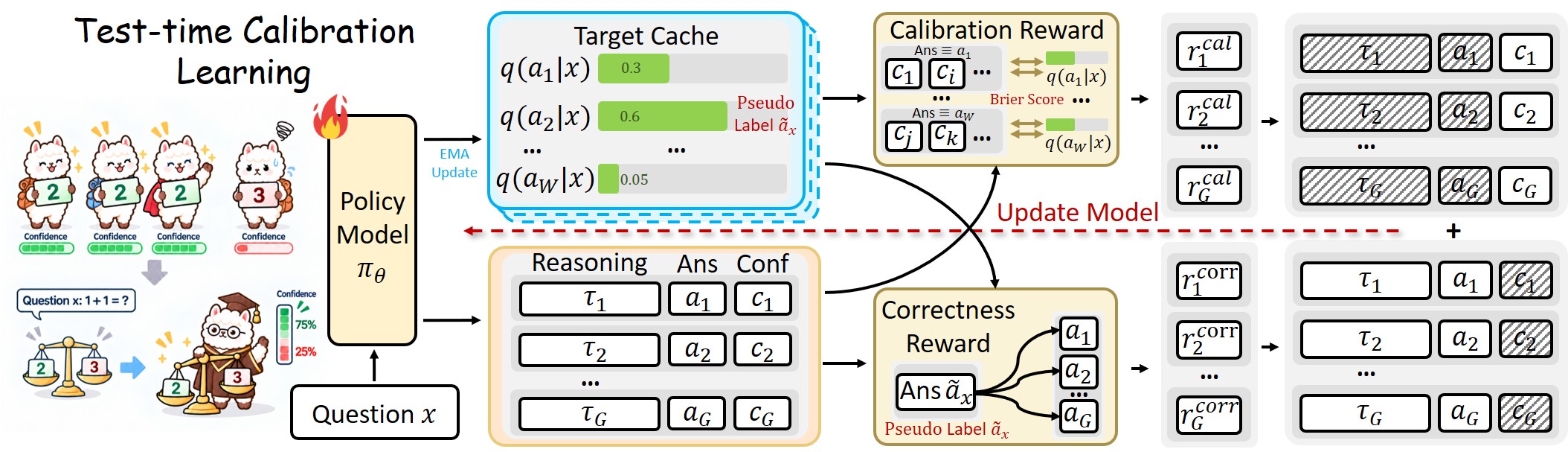}
  \vspace{-2mm}
  \caption{\textbf{Illustration of \textit{TTCL} method}: Given an unlabeled question, TTCL samples multiple rollouts to construct label-free supervision: the most supported answer serves as the pseudo label, while the support of each answer provides its confidence target. A target cache stabilizes these signals and is updated via EMA across epochs. Correctness and calibration rewards are optimized separately, with the former updating reasoning and answer tokens and the latter updating only confidence tokens.}
  \label{Fig:TTCL}
  \vspace{-4mm}
\end{figure} 

\subsection{TTCL Self-Supervision Signals and Optimization Objective}
\label{Sec:ttcl_1}
Given a test-time question $x$, TTCL samples a group of $G$ responses $\{y_i\}_{i=1}^G$. where each response is denoted as $y_i=(\tau_i, a_i, c_i)$, with $\tau_i$ denoting the reasoning trajectory, $a_i$ the final answer, and $c_i\in[0,1]$ the verbalized confidence. As illustrated in Fig.~\ref{Fig:TTCL}, our label-free supervision is built on a simple assumption: answers receiving stronger consensus across repeated sampled rollouts are more likely to be correct and should be associated with higher confidence. Based on this assumption, TTCL derives two self-supervision signals for correctness and calibration from the answer distribution.

\textbf{Correctness self-supervision signal.}
Let $q(a|x)$ denote the empirical answer distribution induced by the rollout group. TTCL selects the most supported answer as the pseudo correctness target:
\begin{equation}
    \tilde{a}_x = \arg\max_{a} q(a \mid x), \quad q(a \mid x)=\frac{1}{G}\sum_{j=1}^{G}\mathbb{I}[a_j \equiv a].
\end{equation}
The correctness reward for response $y_i$ is then defined as:
\begin{equation}
    r_i^{\mathrm{corr}}=\mathbb{I}[a_i \equiv \tilde{a}_x] \in \{0,1\}.
\end{equation}
This majority-based pseudo-labeling strategy follows prior self-supervised RL methods~\citep{zuo2026ttrl,zhang2026co,shafayat2025can}, which use the most frequent answer as a pseudo correctness target to encourage consensus and improve LLM reasoning capability.

\par
\textbf{Calibration self-supervision signal.}
Unlike the correctness signal that only uses the most supported answer, calibration requires finer-grained supervision to reflect the model's uncertainty over different candidate answers. TTCL therefore uses the empirical support of each generated answer as its confidence target. For response $y_i$, the calibration target is defined as:
\begin{equation}
    \tilde{p}_i = q(a_i \mid x).
\end{equation}
Accordingly, responses whose answers receive stronger consensus across the rollout group are assigned higher confidence targets, while less-supported answers receive lower confidence targets. Given the predicted confidence $c_i$, the calibration reward is defined using the negative Brier score:
\begin{equation}
    r_i^{\mathrm{cal}}=-\left(c_i-\tilde{p}_i\right)^2
\end{equation}
In this way, TTCL converts the empirical answer frequency into a soft probability target for confidence calibration, enabling calibration learning without access to ground-truth correctness labels.

\par
\textbf{TTCL optimization objective.} 
As the response $y_i$ consists of the tokens of reasoning trajectory, final answer, and confidence prediction, naively combining the correctness and calibration rewards over the entire response can introduce undesirable interference between correctness and calibration~\citep{ma2026decoupling}. For example, correct reasoning and answer tokens may still be penalized due to poorly calibrated confidence. TTCL computes separate group-relative advantages for the two rewards:
\begin{align}
    \hat{A}_i^{\mathrm{corr}} &= r_i^{\mathrm{corr}} - \frac{1}{G}\sum_{j=1}^{G} r_j^{\mathrm{corr}}, \quad \hat{A}_i^{\mathrm{cal}} = r_i^{\mathrm{cal}} - \frac{1}{G}\sum_{j=1}^{G} r_j^{\mathrm{cal}}.
\end{align}
The correctness advantage is applied only to the reasoning and answer tokens, while the calibration advantage is applied only to the confidence tokens. Accordingly, the TTCL objective is defined as:
\begin{equation}
\label{Eq:TTCL}
    \mathcal{J}_{\mathrm{TTCL}}(\theta) = \mathbb{E}\left[\frac{1}{G}\sum_{i=1}^{G} \left[ \frac{1}{|o_i^{\mathrm{corr}}|}\sum_{y_t\in o_i^{\mathrm{corr}}}\left(\rho_{i,t}(\theta)\hat{A}_i^{\mathrm{corr}} \right) + \frac{1}{|o_i^{\mathrm{cal}}|}\sum_{y_t\in o_i^{\mathrm{cal}}}\left(\rho_{i,t}(\theta)\hat{A}_i^{\mathrm{cal}} \right) \right] \right],
\end{equation}
where $o_i^{\mathrm{corr}} = (\tau_i,a_i)$ denotes the tokens of the reasoning trajectory and final answer, and $o_i^{\mathrm{cal}} = (c_i)$ denotes the confidence tokens. The $\rho_{i,t}(\theta)$ denotes the PPO-style clipped importance ratio.

\subsection{Target Cache for Stable Learning}
\label{Sec:ttcl_2}
The self-supervision signals above are constructed from the empirical answer distribution $q(a\mid x)$ of the current rollout group. However, directly using this instantaneous distribution can tightly couple the policy with its self-generated targets, leading to reward hacking. As majority-based supervision encourages answer consistency, their empirical frequencies rise, further increasing the associated confidence targets. Without stabilization, this feedback can drive the model toward a degenerate solution where rollouts collapse to the same incorrect answer with confidence approaching $1.0$.

\par
To mitigate this issue, TTCL introduces a \textit{target cache} that maintains a temporally lagged answer distribution for each question. Let $q_{\mathrm{cache}}^{(e)}(a \mid x)$ denote the cached answer distribution at epoch $e$. During epoch $e$, the correctness and calibration targets are constructed from the frozen cache rather than the current policy rollout distribution:
\begin{equation}
    \tilde{a}_x^{(e)}=\arg\max_{a} q_{\mathrm{cache}}^{(e)}(a\mid x), \quad \tilde{p}_i^{(e)}=q_{\mathrm{cache}}^{(e)}(a_i\mid x).
\end{equation}
The corresponding rewards are therefore:
\begin{equation}
    r_i^{\mathrm{corr}}=\mathbb{I}\!\left[a_i\equiv \tilde{a}_x^{(e)}\right], \quad
    r_i^{\mathrm{cal}}=-\left(c_i-\tilde{p}_i^{(e)}\right)^2.
\end{equation}
Before training, the cache is initialized from the answer distribution of the initial model. During training, the newly observed rollout distribution is not used for immediate supervision, but instead updates the cache for subsequent epochs. Let $q^{(e)}(a \mid x)$ denote the empirical answer distribution observed during epoch $e$, and the cache is updated using an exponential moving average (EMA):
\begin{equation}
    q_{\mathrm{cache}}^{(e+1)}(a\mid x) = (1-\eta)\,q_{\mathrm{cache}}^{(e)}(a\mid x) + \eta\,q^{(e)}(a\mid x).
\end{equation}
where $\eta$ controls the update rate. By disentangling supervision targets from the current policy, the target cache blocks the positive feedback between answer consistency, confidence targets, and policy updates, stabilizing TTCL training and mitigating collapse toward a single high-confidence answer.
\vspace{-2mm}

\vspace{-2mm}
\section{Theoretical Analysis of TTCL}
\label{Sec:Theory}
In this section, we theoretically connect TTCL to the calibration definition in Eq.~(\ref{Eq:Def_Calibration}).

\definecolor{lightpurple}{RGB}{245,240,252}
\definecolor{lightpurpleframe}{RGB}{210,195,230}
\begin{tcolorbox}[
    enhanced,
    colback=lightpurple,
    colframe=lightpurpleframe,
    boxrule=0.5pt,
    arc=2mm,
    left=2mm,
    right=2mm,
    top=1.5mm,
    bottom=1.5mm,
    coltitle=black
]

\textbf{Analysis:} \textit{TTCL provides a label-free surrogate for calibration learning, with its deviation from the ideal calibration objective bounded by the frequency-based estimation error.}

\end{tcolorbox}

Let $p^\star(x,y)=\Pr(Z=1\mid X=x,Y=y)$ denote the conditional probability that response $y$ is correct for question $x$. For TTCL, we denote its frequency-based confidence target and the corresponding estimation error as $s(x,y)=q\!\left(\operatorname{Ans}(y)\mid x\right)$ and $\epsilon=\mathbb{E}\!\left[\left|s(X,Y)-p^\star(X,Y)\right|\right]$ respectively.

\begin{theorem}[\textbf{TTCL as a Surrogate for the Ideal Calibration Objective}]
\label{Theory:TTCL_boundary}
\leavevmode\\
Let $\mathcal{J}_{\mathrm{TTCL}}(c)=\mathbb{E}\!\left[(c(X,Y)-s(X,Y))^2\right]$ denote the calibration objective optimized by TTCL, and let $\mathcal{J}_{\mathrm{ideal}}(c)=\mathbb{E}\!\left[(c(X,Y)-p^\star(X,Y))^2\right]$ denote the ideal calibration objective. Then, for any confidence estimator $c(X,Y)\in[0,1]$, $\left|\mathcal{J}_{\mathrm{TTCL}}(c) - \mathcal{J}_{\mathrm{ideal}}(c) \right| \le 2\epsilon.$
\end{theorem}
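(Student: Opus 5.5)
The plan is to bound the pointwise difference of the two squared losses and then take expectations. For fixed $(X,Y)$, abbreviate $c=c(X,Y)$, $s=s(X,Y)$ and $p=p^\star(X,Y)$. All three quantities lie in $[0,1]$. Here $c$ lies in $[0,1]$ by hypothesis, $s$ is an empirical frequency $q(\operatorname{Ans}(Y)\mid X)$ (or an EMA of such frequencies, which is a convex combination and so also in $[0,1]$), and $p^\star$ is a conditional probability. The key algebraic step is the difference-of-squares factorization
\begin{equation*}
(c-s)^2-(c-p)^2=\bigl((c-s)-(c-p)\bigr)\bigl((c-s)+(c-p)\bigr)=(p-s)\,(2c-s-p).
\end{equation*}

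Next I bound the second factor uniformly. Since $c,s,p\in[0,1]$, we have $2c-s-p\in[-2,2]$, so $|2c-s-p|\le 2$. This gives the pointwise bound
\begin{equation*}
\bigl|(c-s)^2-(c-p)^2\bigr|\le 2\,|s-p|.
\end{equation*}

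Finally, use linearity of expectation and Jensen's inequality ($|\mathbb{E}[W]|\le\mathbb{E}|W|$):
\begin{equation*}
\bigl|\mathcal{J}_{\mathrm{TTCL}}(c)-\mathcal{J}_{\mathrm{ideal}}(c)\bigr|
=\Bigl|\mathbb{E}\bigl[(c-s)^2-(c-p)^2\bigr]\Bigr|\le \mathbb{E}\bigl[2|s-p|\bigr]=2\epsilon.
\end{equation*}
The expectation exists because the integrand is bounded.

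No step is a real obstacle; the argument is a one-line factorization. The only point needing care is the range bound $|2c-s-p|\le 2$. It relies on $s\in[0,1]$, which I would state explicitly, noting that it holds both for the instantaneous frequency $q$ and for the cached EMA target $q_{\mathrm{cache}}$. I would also remark that the bound can be sharpened to $\mathbb{E}[|s-p|\cdot|2c-s-p|]$, but the stated $2\epsilon$ is enough.
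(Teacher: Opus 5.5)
Your proof is correct and is essentially the paper's own argument. Both use the factorization $(c-s)^2-(c-p^\star)^2=(p^\star-s)(2c-s-p^\star)$, the uniform bound $|2c-s-p^\star|\le 2$ from $c,s,p^\star\in[0,1]$, and then $|\mathbb{E}[U]|\le\mathbb{E}[|U|]$ to reach $2\epsilon$; your note that $s\in[0,1]$ also holds for the EMA-cached target is a small but sensible addition.
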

Theorem~\ref{Theory:TTCL_boundary} demonstrates that TTCL optimizes a bounded surrogate of the ideal calibration objective. The objective gap is controlled by the estimation error between the frequency-based target and the latent conditional correctness probability. A detailed proof is provided in Appendix~\ref{App:Proof_Theorem}.

\begin{table}[t]
    \centering
    \caption{\textbf{Performance of TTCL for base models on mathematical reasoning benchmarks.} Improved results after TTCL are highlighted in \textbf{bold}, with positive relative improvements shown in \textcolor{softred}{red} color. AUROC is not reported for Llama-3.1-8B-Instruct on AIME25 because its accuracy is 0.}
    \label{Tble:TTCL_Base_Math}
    \vspace{-3mm}
    \resizebox{\textwidth}{!}{
    \begin{tabular}{l|ccc|ccc|ccc|ccc}
    \toprule[1.6pt]
        \multirow{2}{*}{\textbf{Methods}} & \multicolumn{3}{c}{\textbf{MATH500}} & \multicolumn{3}{c}{\textbf{AMC}} & \multicolumn{3}{c}{\textbf{AIME24}} & \multicolumn{3}{c}{\textbf{AIME25}} \\ \cmidrule(lr){2-4} \cmidrule(lr){5-7} \cmidrule(lr){8-10} \cmidrule(lr){11-13} 
        ~ & \textbf{ACC $\uparrow$} & \textbf{ECE $\downarrow$} & \textbf{AUROC $\uparrow$} & \textbf{ACC $\uparrow$} & \textbf{ECE $\downarrow$} & \textbf{AUROC $\uparrow$} & \textbf{ACC $\uparrow$} & \textbf{ECE $\downarrow$} & \textbf{AUROC $\uparrow$} & \textbf{ACC $\uparrow$} & \textbf{ECE $\downarrow$} & \textbf{AUROC $\uparrow$} \\ 
        \midrule
            \multicolumn{13}{c}{\textit{\textbf{Qwen3-4B}}} \\
        \midrule
        Base Model & 80.6 & 0.155 & 0.670 & 59.38 & 0.383 & 0.703 & 20.42 & 0.739 & 0.652 & 16.25 & 0.807 & 0.692 \\
        \rowcolor{lightblue} w/ \textbf{TTCL} & \textbf{86.6} & \textbf{0.029} & \textbf{0.866} & \textbf{77.81} & \textbf{0.091} & \textbf{0.910} & \textbf{28.33} & \textbf{0.065} & \textbf{0.856} & \textbf{20.42} & \textbf{0.261} & \textbf{0.878} \\
        \# Improv & \color{softred}+7.44\% & \color{softred}+81.52\% & \color{softred}+29.28\% & \color{softred}+31.05\% & \color{softred}+76.25\% & \color{softred}+29.52\% & \color{softred}+38.78\% & \color{softred}+91.26\% & \color{softred}+31.15\% & \color{softred}+25.64\% & \color{softred}+67.68\% & \color{softred}+27.00\% \\
        \specialrule{1pt}{0.4ex}{0.4ex}
            \multicolumn{13}{c}{\textit{\textbf{Qwen3-8B}}} \\
        \midrule
        Base Model & 84.4 & 0.183 & 0.717 & 62.81 & 0.338 & 0.651 & 21.67 & 0.753 & 0.642 & 18.75 & 0.789 & 0.620 \\
        \rowcolor{lightblue} w/ \textbf{TTCL} & \textbf{89.0} & \textbf{0.131} & \textbf{0.830} & \textbf{80.63} & \textbf{0.168} & \textbf{0.844} & \textbf{30.83} & \textbf{0.120} & \textbf{0.882} & \textbf{26.67} & \textbf{0.145} & \textbf{0.908} \\
        \# Improv & \color{softred}+5.45\% & \color{softred}+28.17\% & \color{softred}+15.72\% & \color{softred}+28.36\% & \color{softred}+50.31\% & \color{softred}+29.60\% & \color{softred}+42.31\% & \color{softred}+84.09\% & \color{softred}+37.37\% & \color{softred}+42.22\% & \color{softred}+81.58\% & \color{softred}+46.37\% \\
        \specialrule{1pt}{0.4ex}{0.4ex}
            \multicolumn{13}{c}{\textit{\textbf{Llama-3.1-8B-Instruct}}} \\
        \midrule
        Base Model & 39.6 & 0.436 & 0.479 & 21.88 & 0.611 & 0.518 & 4.58 & 0.807 & 0.365 & 0.00 & 0.853 & - \\
        \rowcolor{lightblue} w/ \textbf{TTCL} & \textbf{58.0} & \textbf{0.094} & \textbf{0.770} & \textbf{40.94} & \textbf{0.128} & \textbf{0.617} & \textbf{9.17} & \textbf{0.223} & \textbf{0.878} & 0.00 & \textbf{0.114} & - \\
        \# Improv & \color{softred}+46.46\% & \color{softred}+78.50\% & \color{softred}+60.99\% & \color{softred}+87.14\% & \color{softred}+79.12\% & \color{softred}+19.03\% & \color{softred}+100.00\% & \color{softred}+72.37\% & \color{softred}+140.72\% & 0.00\% & \color{softred}+86.59\% & - \\
        \bottomrule[1.6pt]
    \end{tabular}}
    \vspace{-4mm}
\end{table}

\begin{table}[t!]
    \centering
    \caption{\textbf{Performance of TTCL for base models on factual QA benchmarks.} Improved results after TTCL are highlighted in \textbf{bold}, with positive relative improvements shown in \textcolor{softred}{red}.}
    \label{Tble:TTCL_Base_Fact}
    \vspace{-3mm}
    \resizebox{\textwidth}{!}{
    \begin{tabular}{l|ccc|ccc|ccc|ccc}
    \toprule[1.6pt]
        \multirow{2}{*}{\textbf{Methods}} & \multicolumn{3}{c}{\textbf{SimpleQA}} & \multicolumn{3}{c}{\textbf{NQ}} & \multicolumn{3}{c}{\textbf{HotpotQA}} & \multicolumn{3}{c}{\textbf{TriviaQA}} \\ \cmidrule(lr){2-4} \cmidrule(lr){5-7} \cmidrule(lr){8-10} \cmidrule(lr){11-13}
        ~ & \textbf{ACC $\uparrow$} & \textbf{ECE $\downarrow$} & \textbf{AUROC $\uparrow$} & \textbf{ACC $\uparrow$} & \textbf{ECE $\downarrow$} & \textbf{AUROC $\uparrow$} & \textbf{ACC $\uparrow$} & \textbf{ECE $\downarrow$} & \textbf{AUROC $\uparrow$} & \textbf{ACC $\uparrow$} & \textbf{ECE $\downarrow$} & \textbf{AUROC $\uparrow$} \\
        \midrule
        \multicolumn{13}{c}{\textit{\textbf{Qwen3-4B}}} \\
        \midrule
        Base Model & 1.40 & 0.912 & 0.481 & 8.63 & 0.859 & 0.705 & 9.56 & 0.831 & 0.703 & 15.63 & 0.789 & 0.660  \\
        \rowcolor{lightblue} w/ \textbf{TTCL} & \textbf{2.40} & \textbf{0.081} & \textbf{0.731} & \textbf{10.63} & \textbf{0.155} & \textbf{0.848} & \textbf{10.06} & \textbf{0.367} & \textbf{0.816} & \textbf{17.00} & \textbf{0.257} & \textbf{0.766} \\
        \# Improv & \color{softred}+71.43\% & \color{softred}+91.16\% & \color{softred}+51.96\% & \color{softred}+23.25\% & \color{softred}+81.95\% & \color{softred}+20.32\% & \color{softred}+5.20\% & \color{softred}+55.82\% & \color{softred}+16.13\% & \color{softred}+8.80\% & \color{softred}+67.41\% & \color{softred}+16.08\% \\
        \specialrule{1pt}{0.4ex}{0.4ex}
        \multicolumn{13}{c}{\textit{\textbf{Qwen3-8B}}} \\
        \midrule
        Base Model & 2.20 & 0.906 & 0.564 & 9.63 & 0.873 & 0.624 & 12.75 & 0.822 & 0.596 & 24.13 & 0.735 & 0.563 \\
        \rowcolor{lightblue} w/ \textbf{TTCL} & \textbf{3.50} & \textbf{0.352} & \textbf{0.623} & \textbf{14.81} & \textbf{0.190} & \textbf{0.752} & \textbf{15.31} & \textbf{0.330} & \textbf{0.798} & \textbf{26.50} & \textbf{0.226} & \textbf{0.799} \\
        \# Improv & \color{softred}+59.09\% & \color{softred}+61.17\% & \color{softred}+10.45\% & \color{softred}+53.87\% & \color{softred}+78.23\% & \color{softred}+20.56\% & \color{softred}+20.08\% & \color{softred}+59.85\% & \color{softred}+33.78\% & \color{softred}+9.84\% & \color{softred}+69.23\% & \color{softred}+42.02\% \\
        \specialrule{1pt}{0.4ex}{0.4ex}
        \multicolumn{13}{c}{\textit{\textbf{Llama-3.1-8B-Instruct}}} \\
        \midrule
        Base Model & 3.20 & 0.812 & 0.535 & 14.69 & 0.670 & 0.516 & 12.31 & 0.714 & 0.514 & 36.88 & 0.451 & 0.522 \\
        \rowcolor{lightblue} w/ \textbf{TTCL} & \textbf{5.80} & \textbf{0.219} & \textbf{0.716} & \textbf{23.31} & \textbf{0.174} & \textbf{0.704} & \textbf{20.13} & \textbf{0.107} & \textbf{0.666} & \textbf{50.19} & \textbf{0.335} & \textbf{0.639} \\
        \# Improv & \color{softred}+81.25\% & \color{softred}+72.98\% & \color{softred}+33.82\% & \color{softred}+40.23\% & \color{softred}+73.83\% & \color{softred}+35.36\% & \color{softred}+64.29\% & \color{softred}+85.10\% & \color{softred}+23.79\% & \color{softred}+30.78\% & \color{softred}+25.00\% & \color{softred}+24.08\% \\
        \bottomrule[1.6pt]
    \end{tabular}}
    \vspace{-5mm}
\end{table}

\par
We next present two supporting results: the empirical probability $q\!\left(\operatorname{Ans}(y)\mid x\right)$ of the answer frequency can serve as a proxy for the latent conditional correctness probability $p^\star(x,y)$, while $p^\star(x,y)$ constitutes the ideal target for calibration learning.

\begin{proposition}[\textbf{Answer Frequency as a Proxy for Conditional Correctness}]
\label{Theory:frequency_proxy}
\leavevmode\\
Let $q_{\pi_\theta}(a\mid x)$ denote the population probability of answer $a$ under the model. The empirical answer probability estimated from $G$ independent responses is an unbiased and consistent estimator of $q_{\pi_\theta}(a\mid x)$. When answer distribution reflects model's uncertainty over the correct answer, $q_{\pi_\theta}(a\mid x)$ serves as a proxy for the probability that response $y$ is correct, i.e., $p^\star(x,y)$, where $a=\operatorname{Ans}(y)$.
\end{proposition}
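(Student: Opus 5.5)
The proposition has a rigorous part (unbiasedness and consistency of the empirical answer frequency) and a modeling part (the population answer probability tracks conditional correctness). I will prove the first part directly and make the second part precise by stating the assumption it depends on.

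\textbf{Step 1: unbiasedness.} Fix a question $x$ and an answer $a$. Define the indicators $B_j=\mathbb{I}[a_j\equiv a]$ for the $G$ independent responses $y_j\sim\pi_\theta(\cdot\mid x)$. Each $B_j$ is Bernoulli with mean $\Pr(\operatorname{Ans}(y_j)\equiv a\mid x)=q_{\pi_\theta}(a\mid x)$. By linearity of expectation,
\begin{equation*}
\mathbb{E}[q(a\mid x)]=\frac{1}{G}\sum_{j=1}^{G}\mathbb{E}[B_j]=q_{\pi_\theta}(a\mid x).
\end{equation*}

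\textbf{Step 2: consistency.} Independence of the $B_j$ gives
\begin{equation*}
\operatorname{Var}(q(a\mid x))=\frac{q_{\pi_\theta}(a\mid x)\,\bigl(1-q_{\pi_\theta}(a\mid x)\bigr)}{G}\le\frac{1}{4G}.
\end{equation*}
Chebyshev's inequality, or the weak law of large numbers, then yields $q(a\mid x)\to q_{\pi_\theta}(a\mid x)$ in probability as $G\to\infty$. The strong law gives almost-sure convergence. Hoeffding's inequality gives the explicit rate $\Pr(|q-q_{\pi_\theta}|\ge t)\le 2e^{-2Gt^2}$, which is uniform in $a$. This uniformity matters for Theorem~\ref{Theory:TTCL_boundary}: it shows that the sampling part of $\epsilon$ is $O(1/\sqrt{G})$.

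\textbf{Step 3: the proxy claim.} This is the main obstacle, because without an assumption the claim is false. A model can be consistently wrong, so that $q_{\pi_\theta}$ is high while $p^\star=0$. I would therefore state the hypothesis ``the answer distribution reflects the model's uncertainty'' explicitly as an answer-level calibration assumption:
\begin{equation*}
\Pr(a\equiv a^\star\mid x,\ q_{\pi_\theta}(a\mid x)=p)=p.
\end{equation*}
Equivalently, if $\Pr(Z=1\mid X=x,Y=y)$ is understood as averaged over the latent ground truth given the model's belief, then $p^\star(x,y)=q_{\pi_\theta}(\operatorname{Ans}(y)\mid x)$. Correctness of $y$ depends on $y$ only through $\operatorname{Ans}(y)$, because $z(y,a^\star)=\mathbb{I}[\operatorname{Ans}(y)\equiv a^\star]$. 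Hence, under this assumption, $p^\star(x,y)$ equals the answer-level correctness probability, which in turn equals $q_{\pi_\theta}(a\mid x)$.

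\textbf{Step 4: decomposing the gap.} Without the exact assumption, I would bound the estimation error by the triangle inequality:
\begin{equation*}
|s-p^\star|\le|q-q_{\pi_\theta}|+|q_{\pi_\theta}-p^\star|.
\end{equation*}
The first term is the sampling error controlled in Step 2. The second term is a model-miscalibration term, which vanishes under the assumption of Step 3. This makes the ``proxy'' statement quantitative and plugs directly into the $\epsilon$ of Theorem~\ref{Theory:TTCL_boundary}.
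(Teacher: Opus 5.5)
Your Steps 1--2 are the first half of the paper's proof. Both use linearity of expectation for unbiasedness, the Bernoulli variance $q_{\pi_\theta}(a\mid x)\bigl(1-q_{\pi_\theta}(a\mid x)\bigr)/G\to 0$, and the law of large numbers for consistency. Your Chebyshev, strong-law and Hoeffding additions are correct extras.

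The real difference is the proxy claim. The paper never formalizes it. It asserts that, when the answer distribution reflects the model's uncertainty, answers with larger $q_{\pi_\theta}(a\mid x)$ are more likely to be correct, which is at most a monotone correspondence, and it stops there. You instead isolate a hypothesis under which $p^\star(x,y)=q_{\pi_\theta}(\operatorname{Ans}(y)\mid x)$ holds exactly. You then split $|s-p^\star|$ into a sampling term of order $1/\sqrt{G}$ and a model-miscalibration term $|q_{\pi_\theta}-p^\star|$. That gives the $\epsilon$ of Theorem~\ref{Theory:TTCL_boundary} an interpretation the paper lacks: only the second term survives as $G\to\infty$, and the paper's monotonicity remark says nothing about its size.

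One refinement on the sampling term. In TTCL the frequency is evaluated at $a_i$, the answer of one of the $G$ rollouts themselves, so fixed-$a$ unbiasedness does not transfer literally: $\mathbb{E}[q(a_i\mid x)\mid a_i=a]=q_{\pi_\theta}(a\mid x)+\bigl(1-q_{\pi_\theta}(a\mid x)\bigr)/G$. Your $O(1/\sqrt{G})$ bound on that term still holds.

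You should tighten the hypothesis in Step 3.
\begin{itemize}
\item As displayed, it conditions on $x$. For fixed $x$ and $a$, $q_{\pi_\theta}(a\mid x)$ is a constant, so the extra conditioning does nothing.
\item If $a^\star$ is a deterministic function of $x$ (the standard reading, under which the paper's own $p^\star(x,y)$ is already $\{0,1\}$-valued), the display forces $q_{\pi_\theta}(a\mid x)=\mathbb{I}[a\equiv a^\star]$.
\item If you drop the conditioning on $x$, it becomes a calibration property of the score $q_{\pi_\theta}$. That does not give $p^\star=q_{\pi_\theta}$ pointwise: a constant score equal to the base rate of correctness is calibrated. So the second term in Step 4 need not vanish.
\end{itemize}
The assumption that actually gives your equality is the one in your ``equivalently'' sentence. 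Treat $a^\star$ as latent with $\Pr(a^\star\equiv a\mid X=x)=q_{\pi_\theta}(a\mid x)$, and conditionally independent of $Y$ given $X$. Stated that way, Step 3 is a one-line computation, and the Step 4 triangle-inequality decomposition holds regardless.
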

Proposition~\ref{Theory:frequency_proxy} provides the statistical justification for using answer frequency as the label-free confidence target in TTCL. A detailed proof is provided in Appendix~\ref{App:Proof_Prop1}.

We next show that the conditional correctness probability is the ideal target for calibration learning.
\begin{proposition}[\textbf{Conditional Correctness as the Ideal Calibration Target}]
\label{Theory:ideal_calibration_target}
\leavevmode\\
If the confidence estimator satisfies $c(X,Y)=p^\star(X,Y)$, then the resulting confidence is perfectly calibrated, i.e., $\Pr(Z=1\mid C=c)=c$. Minimizing the expected Brier score yields the optimal confidence estimator $c^\star = \arg\min_c \mathcal{J}_{\mathrm{Brier}}(c) = p^\star$.
\end{proposition}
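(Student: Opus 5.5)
The statement has two parts, and I would handle them separately. Both use only the tower property of conditional expectation and the definition $p^\star(X,Y)=\Pr(Z=1\mid X,Y)=\mathbb{E}[Z\mid X,Y]$.

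\emph{Part 1 (perfect calibration).} Set $C=c(X,Y)=p^\star(X,Y)$. Since $C$ is a measurable function of $(X,Y)$, the $\sigma$-algebra generated by $C$ is contained in that generated by $(X,Y)$. The tower property then gives
\begin{equation*}
\Pr(Z=1\mid C)=\mathbb{E}\bigl[\mathbb{E}[Z\mid X,Y]\mid C\bigr]=\mathbb{E}[p^\star(X,Y)\mid C]=\mathbb{E}[C\mid C]=C
\end{equation*}
almost surely. Evaluating at $C=c$ yields $\Pr(Z=1\mid C=c)=c$, which is exactly Eq.~(\ref{Eq:Def_Calibration}). If $C$ has a continuous law, I would note that conditioning on $\{C=c\}$ should be read as the regular conditional probability; this is the only measure-theoretic caveat, and it is mild.

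\emph{Part 2 (Brier minimizer).} Write $\mathcal{J}_{\mathrm{Brier}}(c)=\mathbb{E}[(c(X,Y)-Z)^2]$ and decompose $c-Z=(c-p^\star)+(p^\star-Z)$. Expanding the square produces the cross term $2\,\mathbb{E}[(c-p^\star)(p^\star-Z)]$. Condition on $(X,Y)$: the factor $c-p^\star$ is $(X,Y)$-measurable, and $\mathbb{E}[p^\star-Z\mid X,Y]=0$, so the cross term vanishes. This gives
\begin{equation*}
\mathcal{J}_{\mathrm{Brier}}(c)=\mathbb{E}[(c-p^\star)^2]+\mathbb{E}[(p^\star-Z)^2]=\mathcal{J}_{\mathrm{ideal}}(c)+\mathbb{E}[p^\star(1-p^\star)].
\end{equation*}
The second term does not depend on $c$, so $\mathcal{J}_{\mathrm{Brier}}$ is minimized exactly when $\mathcal{J}_{\mathrm{ideal}}(c)=0$. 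That happens precisely when $c=p^\star$ almost surely. Uniqueness is therefore up to null sets, and I would state it that way.

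The only delicate step is the cross-term cancellation. It relies on $c$ being a function of $(X,Y)$ alone, as assumed throughout Section~\ref{Sec:Theory}. The identity also links the result back to Theorem~\ref{Theory:TTCL_boundary}: the Brier objective equals $\mathcal{J}_{\mathrm{ideal}}$ up to an additive constant, so $p^\star$ is the right target for TTCL's surrogate objective.
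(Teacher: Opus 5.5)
Your proposal is correct and follows the paper's proof essentially step for step: the tower property with $C=p^\star(X,Y)$ gives perfect calibration, and the decomposition $c-Z=(c-p^\star)+(p^\star-Z)$, with the cross term vanishing after conditioning on $(X,Y)$, gives $\mathcal{J}_{\mathrm{Brier}}(c)=\mathcal{J}_{\mathrm{ideal}}(c)+\mathbb{E}[p^\star(1-p^\star)]$. Your added remarks are sound refinements that the paper leaves implicit: that $C$ is $\sigma(X,Y)$-measurable, that $\{C=c\}$ must be read via regular conditional probabilities, and that uniqueness holds only up to null sets.
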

This proposition connects Brier score minimization to perfect calibration through the conditional correctness probability. A detailed proof is provided in Appendix~\ref{App:Proof_Prop2}. Together with Proposition~\ref{Theory:frequency_proxy}, this forms a theoretical bridge from answer frequency to conditional correctness probability and ultimately to calibrated confidence, supporting the surrogate guarantee in Theorem~\ref{Theory:TTCL_boundary}.

\begin{table}[t]
    \centering
    \caption{\textbf{In-domain performance of TTCL on RLCR-calibrated models.} Improved results are highlighted in \textbf{bold}, with positive and negative relative gains shown in \textcolor{softred}{red} and \textcolor{darkgreen}{green} respectively.}
    \label{Tble:TTCL_InDomain}
    \vspace{-3mm}
    \resizebox{\textwidth}{!}{
    \begin{tabular}{l|ccc|ccc|ccc|ccc|ccc}
    \toprule[1.6pt]
        \multirow{2}{*}{\textbf{Math-to-Math}} & \multicolumn{3}{c}{\textbf{MATH500}} & \multicolumn{3}{c}{\textbf{AMC}} & \multicolumn{3}{c}{\textbf{AIME24}} & \multicolumn{3}{c}{\textbf{AIME25}} & \multicolumn{3}{c}{\textbf{AIME26}} \\ 
        \cmidrule(lr){2-4} \cmidrule(lr){5-7} \cmidrule(lr){8-10} \cmidrule(lr){11-13} \cmidrule(lr){14-16}
        ~ & \textbf{ACC $\uparrow$} & \textbf{ECE $\downarrow$} & \textbf{AUROC $\uparrow$} & \textbf{ACC $\uparrow$} & \textbf{ECE $\downarrow$} & \textbf{AUROC $\uparrow$} & \textbf{ACC $\uparrow$} & \textbf{ECE $\downarrow$} & \textbf{AUROC $\uparrow$} & \textbf{ACC $\uparrow$} & \textbf{ECE $\downarrow$} & \textbf{AUROC $\uparrow$} & \textbf{ACC $\uparrow$} & \textbf{ECE $\downarrow$} & \textbf{AUROC $\uparrow$} \\ 
        \midrule
        \multicolumn{16}{c}{\textit{\textbf{RLCR-calibrated Qwen3-4B on DAPO-Math-14K}}} \\
        \midrule
        Base Model & 80.6 & 0.155 & 0.670 & 59.38 & 0.383 & 0.703 & 20.42 & 0.739 & 0.652 & 16.25 & 0.807 & 0.692 & 15.42 & 0.813 & 0.606  \\
        \rowcolor{lightgray} RLCR Model & 89.0 & 0.158 & 0.867 & 79.38 & 0.131 & 0.865 & 29.17 & 0.291 & 0.881 & 24.17 & 0.311 & 0.888 & 23.33 & 0.348 & 0.823 \\
        \rowcolor{lightblue} w/ \textbf{TTCL} & \textbf{90.4} & \textbf{0.026} & \textbf{0.924} & \textbf{88.13} & \textbf{0.101} & \textbf{0.902} & \textbf{42.08} & \textbf{0.096} & \textbf{0.933} & \textbf{25.00} & \textbf{0.122} & \textbf{0.910} & \textbf{24.58} & \textbf{0.189} & \textbf{0.839} \\
        \# Improv & \color{softred}+1.57\% & \color{softred}+83.37\% & \color{softred}+6.67\% & \color{softred}+11.02\% & \color{softred}+22.76\% & \color{softred}+4.31\% & \color{softred}+44.29\% & \color{softred}+67.18\% & \color{softred}+5.93\% & \color{softred}+3.45\% & \color{softred}+60.57\% & \color{softred}+2.52\% & \color{softred}+5.36\% & \color{softred}+45.52\% & \color{softred}+2.02\% \\
        \specialrule{1pt}{0.4ex}{0.4ex}
        \multicolumn{16}{c}{\textit{\textbf{RLCR-calibrated Qwen3-8B on DAPO-Math-14K}}} \\
        \midrule
        Base Model & 84.4 & 0.183 & 0.717 & 62.81 & 0.338 & 0.651 & 21.67 & 0.753 & 0.642 & 18.75 & 0.789 & 0.620 & 13.75 & 0.839 & 0.597 \\
        \rowcolor{lightgray} RLCR Model  & 90.0 & 0.054 & 0.899 & 80.63 & 0.080 & 0.866 & 35.42 & 0.100 & 0.936 & 28.33 & 0.141 & 0.975 & 27.50 & 0.157 & 0.928 \\
        \rowcolor{lightblue} w/ \textbf{TTCL} & \textbf{91.6} & \textbf{0.042} & \textbf{0.967} & \textbf{89.38} & \textbf{0.065} & \textbf{0.881} & \textbf{45.42} & \textbf{0.074} & 0.907 & \textbf{30.42} & \textbf{0.127} & 0.959 & \textbf{28.33} & \textbf{0.122} & 0.902 \\
        \# Improv & \color{softred}+1.78\% & \color{softred}+23.14\% & \color{softred}+7.58\% & \color{softred}+10.85\% & \color{softred}+18.75\% & \color{softred}+1.68\% & \color{softred}+28.24\% & \color{softred}+26.06\% & \color{darkgreen}-3.14\% & \color{softred}+7.35\% & \color{softred}+10.35\% & \color{darkgreen}-1.62\% & \color{softred}+3.03\% & \color{softred}+22.37\% & \color{darkgreen}-2.76\% \\
        \bottomrule[1.6pt]
    \end{tabular}}
    \vspace{-4mm}
\end{table}

\begin{table}[t!]
    \centering
    \caption{\textbf{Out-of-domain adaptation of TTCL on RLCR-calibrated models.} Improved results after TTCL are highlighted in \textbf{bold}, with positive relative improvements shown in \textcolor{softred}{red}.}
    \label{Tble:TTCL_OutDomain}
    \vspace{-3mm}
    \resizebox{\textwidth}{!}{
    \begin{tabular}{l|ccc|ccc|ccc|ccc|ccc}
    \toprule[1.6pt]
        \multirow{2}{*}{\textbf{Math-to-FactQA}} & \multicolumn{3}{c}{\textbf{SimpleQA}} & \multicolumn{3}{c}{\textbf{ChineseSimpleQA}} & \multicolumn{3}{c}{\textbf{NQ}} & \multicolumn{3}{c}{\textbf{HotpotQA}} & \multicolumn{3}{c}{\textbf{TriviaQA}} \\ \cmidrule(lr){2-4} \cmidrule(lr){5-7} \cmidrule(lr){8-10} \cmidrule(lr){11-13} \cmidrule(lr){14-16}
        ~ & \textbf{ACC $\uparrow$} & \textbf{ECE $\downarrow$} & \textbf{AUROC $\uparrow$} & \textbf{ACC $\uparrow$} & \textbf{ECE $\downarrow$} & \textbf{AUROC $\uparrow$} & \textbf{ACC $\uparrow$} & \textbf{ECE $\downarrow$} & \textbf{AUROC $\uparrow$} & \textbf{ACC $\uparrow$} & \textbf{ECE $\downarrow$} & \textbf{AUROC $\uparrow$} & \textbf{ACC $\uparrow$} & \textbf{ECE $\downarrow$} & \textbf{AUROC $\uparrow$} \\ 
        \midrule
            \multicolumn{16}{c}{\textit{\textbf{RLCR-calibrated Qwen3-4B on DAPO-Math-14K}}} \\
        \midrule
        Base Model & 1.40 & 0.912 & 0.481 & 12.13 & 0.819 & 0.658 & 8.63 & 0.859 & 0.705 & 9.56 & 0.831 & 0.703 & 15.63 & 0.789 & 0.660 \\
        \rowcolor{lightgray} RLCR Model & 2.00 & 0.623 & 0.662 & 13.00 & 0.600 & 0.730 & 11.00 & 0.651 & 0.732 & 11.13 & 0.560 & 0.753 & 19.38 & 0.551 & 0.713 \\
        \rowcolor{lightblue} w/ \textbf{TTCL} & \textbf{2.70} & \textbf{0.277} & \textbf{0.747} & \textbf{14.03} & \textbf{0.307} & \textbf{0.798} & \textbf{12.81} & \textbf{0.228} & \textbf{0.867} & \textbf{13.50} & \textbf{0.293} & \textbf{0.850} & \textbf{20.38} & \textbf{0.290} & \textbf{0.812} \\
        \# Improv & \color{softred}+35.00\% & \color{softred}+55.61\% & \color{softred}+12.84\% & \color{softred}+7.95\% & \color{softred}+48.83\% & \color{softred}+9.18\% & \color{softred}+16.48\% & \color{softred}+65.01\% & \color{softred}+18.47\% & \color{softred}+21.35\% & \color{softred}+47.62\% & \color{softred}+12.87\% & \color{softred}+5.16\% & \color{softred}+47.34\% & \color{softred}+13.91\% \\
        \specialrule{1pt}{0.4ex}{0.4ex}
            \multicolumn{16}{c}{\textit{\textbf{RLCR-calibrated Qwen3-8B on DAPO-Math-14K}}} \\
        \midrule
        Base Model & 2.20 & 0.906 & 0.564 & 15.77 & 0.796 & 0.579 & 9.63 & 0.873 & 0.624 & 12.75 & 0.822 & 0.596 & 24.13 & 0.735 & 0.563  \\
        \rowcolor{lightgray} RLCR Model & 2.80 & 0.768 & 0.591 & 18.17 & 0.640 & 0.676 & 12.56 & 0.730 & 0.658 & 14.13 & 0.633 & 0.696 & 25.69 & 0.599 & 0.691 \\
        \rowcolor{lightblue} w/ \textbf{TTCL} & \textbf{3.90} & \textbf{0.270} & \textbf{0.712} & \textbf{20.83} & \textbf{0.225} & \textbf{0.771} & \textbf{15.00} & \textbf{0.290} & \textbf{0.863} & \textbf{18.38} & \textbf{0.292} & \textbf{0.772} & \textbf{29.31} & \textbf{0.426} & \textbf{0.823} \\
        \# Improv & \color{softred}+39.29\% & \color{softred}+64.85\% & \color{softred}+20.46\% & \color{softred}+14.68\% & \color{softred}+64.92\% & \color{softred}+14.07\% & \color{softred}+19.40\% & \color{softred}+60.30\% & \color{softred}+31.12\% & \color{softred}+30.09\% & \color{softred}+53.85\% & \color{softred}+10.90\% & \color{softred}+14.11\% & \color{softred}+28.98\% & \color{softred}+19.15\% \\
        \toprule[1.6pt]
        \multirow{2}{*}{\textbf{FactQA-to-Math}} & \multicolumn{3}{c}{\textbf{MATH500}} & \multicolumn{3}{c}{\textbf{AMC}} & \multicolumn{3}{c}{\textbf{AIME24}} & \multicolumn{3}{c}{\textbf{AIME25}} & \multicolumn{3}{c}{\textbf{AIME26}} \\ \cmidrule(lr){2-4} \cmidrule(lr){5-7} \cmidrule(lr){8-10} \cmidrule(lr){11-13} \cmidrule(lr){14-16}
        ~ & \textbf{ACC $\uparrow$} & \textbf{ECE $\downarrow$} & \textbf{AUROC $\uparrow$} & \textbf{ACC $\uparrow$} & \textbf{ECE $\downarrow$} & \textbf{AUROC $\uparrow$} & \textbf{ACC $\uparrow$} & \textbf{ECE $\downarrow$} & \textbf{AUROC $\uparrow$} & \textbf{ACC $\uparrow$} & \textbf{ECE $\downarrow$} & \textbf{AUROC $\uparrow$} & \textbf{ACC $\uparrow$} & \textbf{ECE $\downarrow$} & \textbf{AUROC $\uparrow$} \\ 
        \midrule
            \multicolumn{16}{c}{\textit{\textbf{RLCR-calibrated Qwen3-4B on FactQA-10K}}} \\
        \midrule
        Base Model & 80.6 & 0.155 & 0.670 & 59.38 & 0.383 & 0.703 & 20.42 & 0.739 & 0.652 & 16.25 & 0.807 & 0.692 & 15.42 & 0.813 & 0.606 \\
        \rowcolor{lightgray} RLCR Model & 79.8 & 0.114 & 0.818 & 64.38 & 0.103 & 0.779 & 23.75 & 0.286 & 0.814 & 18.75 & 0.322 & 0.822 & 14.58 & 0.334 & 0.772 \\
        \rowcolor{lightblue} w/ \textbf{TTCL} & \textbf{85.2} & \textbf{0.059} & \textbf{0.920} & \textbf{80.00} & \textbf{0.055} & \textbf{0.859} & \textbf{25.00} & \textbf{0.087} & \textbf{0.847} & \textbf{24.17} & \textbf{0.123} & \textbf{0.855} & \textbf{16.67} & \textbf{0.202} & 0.743 \\
        \# Improv & \color{softred}+6.77\% & \color{softred}+48.33\% & \color{softred}+12.48\% & \color{softred}+24.27\% & \color{softred}+46.37\% & \color{softred}+10.22\% & \color{softred}+5.26\% & \color{softred}+69.46\% & \color{softred}+3.96\% & \color{softred}+28.89\% & \color{softred}+61.68\% & \color{softred}+3.97\% & \color{softred}+14.29\% & \color{softred}+39.44\% & \color{darkgreen}-3.69\% \\
        \bottomrule[1.6pt]
    \end{tabular}}
    \vspace{-4mm}
\end{table}

\section{Experiments}
\label{sec:experiments}
\vspace{-1mm}
\subsection{Experimental Setup}
\textbf{Models.} 
We conduct experiments on LLMs from different families and scales, including Qwen3-4B/8B~\citep{yang2025qwen3} and Llama-3.1-8B-Instruct~\citep{grattafiori2024llama}, where Qwen3 is used in non-thinking mode. Besides applying TTCL directly to base models, we further evaluate it on models pre-calibrated with RLCR~\citep{damani2026RLCR}. Specifically, Qwen3-4B/8B are trained with RLCR for 3 epochs on DAPO-Math-14K and FactQA-10K, and the resulting models serve as calibrated starting points for evaluating TTCL adaptation to new target tasks.

\textbf{Benchmarks.}
We evaluate TTCL on mathematical reasoning and factual QA benchmarks. The mathematics includes MATH500~\citep{lightman2024math500}, AMC~\citep{yang2024qwen2p5math}, AIME24~\citep{aime24}, AIME25~\citep{aime25} and AIME26~\citep{aime26}, while the factual QA includes SimpleQA~\citep{wei2024simpleqa}, ChineseSimpleQA~\citep{he2025chinese}, NQ~\citep{kwiatkowski2019natural}, HotpotQA~\citep{yang2018hotpotqa} and TriviaQA~\citep{joshi2017triviaqa}. We apply TTCL to each benchmark independently without access its ground-truth labels, and evaluate the resulting model on the same benchmark. More details are provided in Appendix~\ref{App:Benchmarks}.

\textbf{Implementation details.}
We implement TTCL based on the VERL framework~\citep{sheng2025Verl}. For each question, we sample $G=32$ rollouts with a temperature of $0.7$. We train TTCL with the learning rate of $5\times10^{-7}$. The target cache is updated after each epoch with an EMA rate of $\eta=0.01$. Following prior studies~\citep{damani2026RLCR}, we evaluate answer correctness using Accuracy (ACC), calibration using Expected Calibration Error (ECE), and confidence discrimination using AUROC. More implementation and metric details refer to Appendix~\ref{App:Implement_detail} and Appendix~\ref{App:Metrics}.

\textbf{Baselines.}
Besides the corresponding starting models, we compare TTCL with several label-free calibration methods, including P(True)~\citep{kadavath2022PTrue}, Verbalization and Elicitation~\citep{tian2023calibration}, and BaseCal~\citep{tan2026basecal}. These methods mainly rely on post-hoc confidence estimation strategies to obtain more accurate confidence scores for given responses, without training the model. More details of the baselines are introduced in Appendix~\ref{App:Baselines}.

\vspace{-1mm}
\subsection{Main Experimental Results}
\vspace{-1mm}
\label{Sec:Main_Results}
\textbf{TTCL consistently improves both accuracy and calibration of base models.} 
As shown in Table~\ref{Tble:TTCL_Base_Math} and Table~\ref{Tble:TTCL_Base_Fact}, we observe that TTCL consistently improves performance across mathematical reasoning and factual QA benchmarks. On mathematical reasoning, TTCL achieves average relative improvements of $+41.35\%$ in ACC and $+73.12\%$ in ECE; on factual QA, the corresponding gains are $+31.02\%$ and $+66.37\%$. These results suggest that TTCL substantially improves calibration while enhancing answer accuracy across diverse models and tasks. The AUROC gains further indicate improved confidence discrimination between correct and incorrect responses.

\begin{table}[t]
    \centering
    \caption{\textbf{Performance comparison with existing label-free calibration baselines.} The best results among label-free methods are highlighted in \textbf{bold}. RLCR is included as a label-leakage reference. AUROC is not reported for Llama-3.1-8B-Instruct on AIME25 because its accuracy is 0.}
    \label{Tble:TTCL_Baseline}
    \vspace{-3mm}
    \resizebox{\textwidth}{!}{
    \begin{tabular}{l|c|ccc|ccc|ccc|ccc|ccc}
    \toprule[1.6pt]
        \multirow{2}{*}{\textbf{Methods}} & \multirow{2}{*}{\makecell{\textbf{Label} \\ \textbf{Leakage}}} & \multicolumn{3}{c}{\textbf{MATH500}} & \multicolumn{3}{c}{\textbf{AMC}} & \multicolumn{3}{c}{\textbf{AIME24}} & \multicolumn{3}{c}{\textbf{AIME25}} & \multicolumn{3}{c}{\textbf{Average}} \\ 
        \cmidrule(lr){3-5} \cmidrule(lr){6-8} \cmidrule(lr){9-11} \cmidrule(lr){12-14} \cmidrule(lr){15-17}
        ~ & ~ & \textbf{ACC $\uparrow$} & \textbf{ECE $\downarrow$} & \textbf{AUROC $\uparrow$} & \textbf{ACC $\uparrow$} & \textbf{ECE $\downarrow$} & \textbf{AUROC $\uparrow$} & \textbf{ACC $\uparrow$} & \textbf{ECE $\downarrow$} & \textbf{AUROC $\uparrow$} & \textbf{ACC $\uparrow$} & \textbf{ECE $\downarrow$} & \textbf{AUROC $\uparrow$} & \textbf{ACC $\uparrow$} & \textbf{ECE $\downarrow$} & \textbf{AUROC $\uparrow$} \\ 
        \midrule
            \multicolumn{17}{c}{\textit{\textbf{Qwen3-4B}}} \\
        \midrule
        RLCR & \color{softred}Yes & 88.8 & 0.054 & 0.713 & 89.38 & 0.054 & 0.803 & 47.92 & 0.004 & 0.996 & 39.58 & 0.000 & 1.000 & 66.42 & 0.028 & 0.878\\
        \midrule
        P(True) & No & \multirow{4}{*}{\repeatvalue{80.6}} & 0.155 & 0.786 & \multirow{4}{*}{\repeatvalue{59.38}} & 0.315 & 0.697 & \multirow{4}{*}{\repeatvalue{20.42}} & 0.214 & 0.865 & \multirow{4}{*}{\repeatvalue{16.25}} & 0.306 & 0.701 & \multirow{4}{*}{\repeatvalue{44.16}} & 0.248 & 0.762\\
        Verbalization & No &  & 0.266 & 0.602 &  & 0.326 & 0.706 &  & 0.485 & 0.750 &  & 0.636 & 0.543 & & 0.428 & 0.650\\
        Elicitation & No &  & 0.096 & 0.754 &  & 0.281 & 0.718 &  & 0.486 & 0.800 &  & 0.592 & 0.779 & & 0.364 & 0.763\\
        BaseCal & No &  & 0.062 & 0.598 &  & 0.270 & 0.618 &  & 0.636 & 0.675 &  & 0.667 & 0.717 & & 0.409 & 0.652\\
        \rowcolor{lightblue} \textbf{TTCL} & No & \textbf{86.6} & \textbf{0.029} & \textbf{0.866} & \textbf{77.81} & \textbf{0.091} & \textbf{0.910} & \textbf{28.33} & \textbf{0.065} & \textbf{0.856} & \textbf{20.42} & \textbf{0.261} & \textbf{0.878} & \textbf{53.29} & \textbf{0.112} & \textbf{0.878} \\

        \specialrule{1pt}{0.4ex}{0.4ex}
            \multicolumn{17}{c}{\textit{\textbf{Qwen3-8B}}} \\
        \midrule
        RLCR & \color{softred}Yes & 91.2 & 0.070 & 0.727 & 95.94 & 0.028 & 0.807 & 61.67 & 0.074 & 0.959 & 47.08 & 0.095 & 0.994 & 73.97 & 0.067 & 0.872\\
        \midrule
        P(True) & No & \multirow{4}{*}{\repeatvalue{84.4}} & 0.170 & 0.823 & \multirow{4}{*}{\repeatvalue{62.81}} & 0.275 & 0.819 & \multirow{4}{*}{\repeatvalue{21.67}} & 0.534 & 0.911 & \multirow{4}{*}{\repeatvalue{18.75}} & 0.526 & 0.701 & \multirow{4}{*}{\repeatvalue{46.91}} & 0.376 & 0.814\\
        Verbalization & No &  & 0.197 & 0.672 &  & 0.280 & 0.669 &  & 0.706 & 0.632 &  & 0.683 & 0.706 & & 0.467 & 0.670\\
        Elicitation & No &  & 0.165 & 0.728 &  & 0.286 & 0.693 &  & 0.599 & 0.758 &  & 0.619 & 0.722 & & 0.417 & 0.725\\
        BaseCal & No &  & \textbf{0.096} & 0.599 &  & 0.236 & 0.593 &  & 0.633 & 0.704 &  & 0.643 & 0.716 & & 0.402 & 0.653\\
        \rowcolor{lightblue} \textbf{TTCL} & No & \textbf{89.0} & 0.131 & \textbf{0.830} & \textbf{80.63} & \textbf{0.168} & \textbf{0.844} & \textbf{30.83} & \textbf{0.120} & \textbf{0.882} & \textbf{26.67} & \textbf{0.145} & \textbf{0.908} & \textbf{56.78} & \textbf{0.141} & \textbf{0.866} \\

        \specialrule{1pt}{0.4ex}{0.4ex}
            \multicolumn{17}{c}{\textit{\textbf{Llama-3.1-8B-Instruct}}} \\
        \midrule
        RLCR & \color{softred}Yes & 71.0 & 0.100 & 0.688 & 82.19 & 0.109 & 0.791 & 39.58 & 0.040 & 0.994 & 0.00 & 0.010 & - & 48.19 & 0.065 & 0.824\\
        \midrule
        P(True) & No & \multirow{4}{*}{\repeatvalue{39.6}} & 0.095 & 0.755 & \multirow{4}{*}{\repeatvalue{21.88}} & 0.163 & \textbf{0.620} & \multirow{4}{*}{\repeatvalue{4.58}} & 0.291 & 0.439 & \multirow{4}{*}{\repeatvalue{0.00}} & 0.326 & - & \multirow{4}{*}{\repeatvalue{16.52}} & 0.219 & 0.605\\
        Verbalization & No &  & 0.436 & 0.479 &  & 0.611 & 0.518 &  & 0.807 & 0.365 &  & 0.853 & - & & 0.677 & 0.454\\
        Elicitation & No &  & 0.450 & 0.599 &  & 0.612 & 0.524 &  & 0.770 & 0.364 &  & 0.837 & - & & 0.667 & 0.496\\
        BaseCal & No &  & 0.372 & 0.385 &  & 0.575 & 0.362 &  & 0.730 & 0.519 &  & 0.774 & - & & 0.613 & 0.422\\
        \rowcolor{lightblue} \textbf{TTCL} & No & \textbf{58.0} & \textbf{0.094} & \textbf{0.770} & \textbf{40.94} & \textbf{0.128} & 0.617 & \textbf{9.17} & \textbf{0.223} & \textbf{0.878} & 0.00 & \textbf{0.114} & - & \textbf{27.03} & \textbf{0.140} & \textbf{0.755} \\
        
        \bottomrule[1.6pt]
    \end{tabular}}
\end{table}

\newlength{\rightimgheight}
\setlength{\rightimgheight}{0.1\textheight} 

\begin{figure}[t!]
  \centering
  \vspace{-4mm}
  \begin{minipage}[t]{0.58\textwidth}
    \vspace{-22mm}
    \centering 
    \captionof{table}{\textbf{Performance of TTCL models on other domains.} The results show that TTCL improves calibration without sacrificing task performance on other domains. More results of other TTCL models refer to Appendix~\ref{App:Results_Other_Domains}.}
    \label{Tble:TTCL_Other_Bench} 
    \vspace{-2mm} 
    \resizebox{\linewidth}{!}{
    \begin{tabular}{l|ccccccc}
    \toprule[1.6pt]
        \multirow{2}{*}{\textbf{TTCL Target}} & \multicolumn{2}{c}{\textbf{ARC}} & \multirow{2}{*}{\textbf{GPQA-D}} & \multirow{2}{*}{\textbf{IFEval}} & \multirow{2}{*}{\textbf{MMLU-Pro}} & \multicolumn{2}{c}{\textbf{TruthfulQA}} \\
        \cmidrule(lr){2-3} \cmidrule(lr){7-8}
        ~ & Challenge & Easy & ~ & ~ & ~ & MC1 & MC2 \\
        \midrule
        \multicolumn{8}{c}{\textit{\textbf{Qwen3-8B}}} \\
        \midrule
        \textbf{Base Model} & 55.20 & 75.34 & 50.00 & 84.62 & 63.70 & 32.80 & 51.60 \\
        \textbf{MATH500} & 55.29 & 74.83 & 51.52 & 85.47 & 64.82 & 33.29 & 51.65 \\
        \textbf{AMC} & 55.29 & 74.83 & 45.96 & 85.83 & 63.87 & 33.05 & 51.35 \\
        \textbf{AIME24} & 54.52 & 74.07 & 48.99 & 86.20 & 62.66 & 32.68 & 50.95 \\
        \textbf{AIME25} & 54.52 & 74.12 & 51.01 & 84.98 & 63.16 & 32.80 & 50.82 \\
        \specialrule{1pt}{0.4ex}{0.4ex}
        \multicolumn{8}{c}{\textit{\textbf{RLCR-Qwen3-8B on DAPO-Math-14K}}} \\
        \midrule
        \textbf{RLCR Model} & 55.03 & 74.62 & 52.53 & 84.63 & 63.85 & 32.44 & 51.48 \\
        \textbf{AMC} & 54.61 & 74.41 & 57.07 & 84.58 & 64.50 & 32.80 & 51.61 \\
        \textbf{SimpleQA} & 55.03 & 74.03 & 52.53 & 84.28 & 64.07 & 32.93 & 50.49 \\
        \textbf{ChineseSimpleQA} & 54.95 & 74.16 & 55.05 & 84.10 & 64.54 & 32.56 & 50.83  \\
        \textbf{HotpotQA} & 54.86 & 74.33 & 52.53 & 85.38 & 64.40 & 32.93 & 51.12  \\
        \textbf{TriviaQA} & 54.86 & 74.49 & 53.03 & 84.07 & 63.92 & 32.56 & 50.71  \\
    \bottomrule[1.6pt]
    \end{tabular}}
  \end{minipage}\hfill
  \begin{minipage}[t]{0.41\textwidth}
    \centering
    \includegraphics[width=0.49\linewidth,height=\rightimgheight]{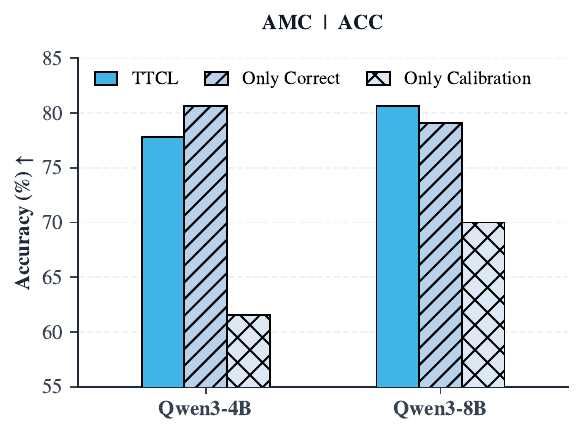}
    \includegraphics[width=0.49\linewidth,height=\rightimgheight]{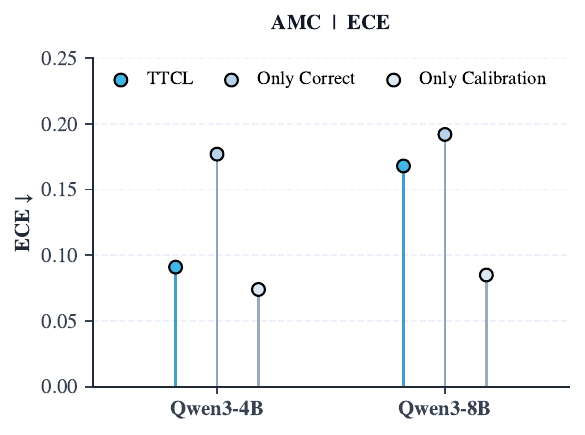} \\
    \includegraphics[width=0.49\linewidth,height=\rightimgheight]{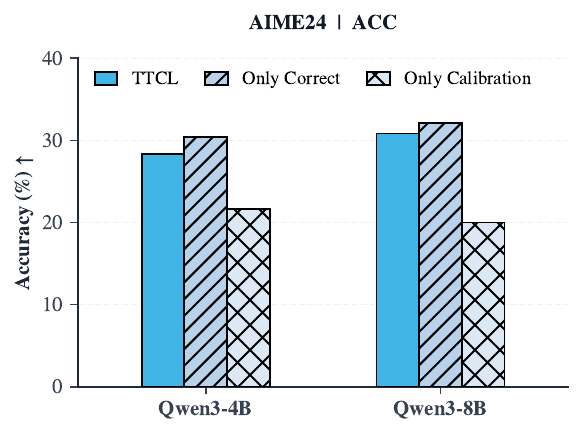}
    \includegraphics[width=0.49\linewidth,height=\rightimgheight]{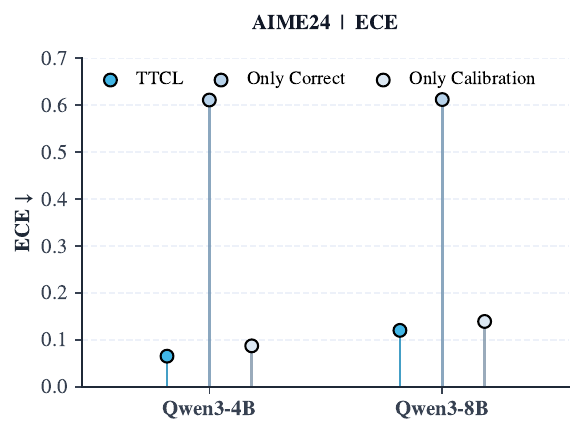} \\
    \vspace{-3mm}
    \captionof{figure}{\textbf{Ablation study.} ``Only Correct" and ``Only Calibration" retain correctness and calibration rewards, respectively.}
    \label{Fig.Ablation}
  \end{minipage}
  \vspace{-5mm}
\end{figure}

\textbf{TTCL can further adapt calibrated models to new target tasks.}
From Tables~\ref{Tble:TTCL_InDomain} and~\ref{Tble:TTCL_OutDomain}, TTCL remains effective on RLCR-calibrated models. In the in-domain math-to-math setting, it further improves both accuracy and calibration. More importantly, RLCR calibration transfers imperfectly across domains: math-calibrated models remain poorly calibrated on factual QA tasks, and vice versa. In both math-to-FactQA and FactQA-to-math settings, TTCL consistently improves ACC and ECE, showing that it can refine calibration directly on unlabeled target-task data when transfer is limited.

\textbf{TTCL consistently outperforms existing label-free calibration strategies.}
From Table~\ref{Tble:TTCL_Baseline}, TTCL generally achieves higher ACC and AUROC and lower ECE than existing label-free baselines. As these methods mainly perform post-hoc confidence estimation without model training, their ACC remains unchanged, while TTCL further improves accuracy through target-task adaptation. RLCR performs strongly under label-leakage settings with access to ground-truth correctness labels, reflecting the benefit of explicit supervision. Notably, TTCL matches or surpasses RLCR in calibration in several cases without such labels, such as Llama-3.1-8B-Instruct on MATH500.

\vspace{-2mm}
\subsection{Further Analysis}
\label{Sec:Furhter_Results}
\vspace{-2mm}
\textbf{TTCL improves calibration without sacrificing performance on other domains.}
Table~\ref{Tble:TTCL_Other_Bench} evaluates TTCL-trained models on ARC~\citep{clark2018arc}, GPQA-Diamond~\citep{rein2023gpqa}, IFEval~\citep{zhou2023ifeval}, MMLU-Pro~\citep{wang2024mmlupro}, and TruthfulQA~\citep{lin2022truthfulqa}, covering scientific reasoning, instruction following, multi-domain knowledge, and truthfulness. TTCL-trained models maintain performance comparable to their pre-adaptation counterparts, indicating that TTCL improves target-task calibration while largely preserving general capabilities of the models.

\par
\textbf{Each component of TTCL contributes to its effectiveness.} As shown in the ablation study in Fig.~\ref{Fig.Ablation}, optimizing only the correctness reward preserves accuracy but leads to substantially worse calibration, whereas optimizing only the calibration reward achieves low ECE at the cost of clear accuracy degradation. These results indicate that the two objectives are complementary, and combining them enables TTCL to achieve a better balance between accuracy and calibration.

\begin{table}[t]
    \centering
    \caption{\textbf{Performance of binary pseudo-correctness as confidence supervision vs. TTCL confidence supervision.} Pseudo-RLCR uses the majority-vote pseudo-label for both correctness and confidence supervision signals, whereas TTCL uses answer frequency as the confidence supervision.}
    \label{Tble:TTCL_PseudoRLCR}
    \vspace{-3mm}
    \resizebox{1.0\textwidth}{!}{
    \begin{tabular}{ll|ccc|ccc|ccc|ccc}
    \toprule[1.6pt]
        \multirow{2}{*}{\textbf{Model}} & \multirow{2}{*}{\textbf{Method}} & \multicolumn{3}{c|}{\textbf{AMC23}} & \multicolumn{3}{c|}{\textbf{AIME24}} & \multicolumn{3}{c|}{\textbf{NQ}} & \multicolumn{3}{c}{\textbf{HotpotQA}} \\
        \cmidrule(lr){3-5} \cmidrule(lr){6-8} \cmidrule(lr){9-11} \cmidrule(lr){12-14}
        & & \textbf{ACC $\uparrow$} & \textbf{ECE $\downarrow$} & \textbf{AUROC $\uparrow$} & \textbf{ACC $\uparrow$} & \textbf{ECE $\downarrow$} & \textbf{AUROC $\uparrow$} & \textbf{ACC $\uparrow$} & \textbf{ECE $\downarrow$} & \textbf{AUROC $\uparrow$} & \textbf{ACC $\uparrow$} & \textbf{ECE $\downarrow$} & \textbf{AUROC $\uparrow$} \\
    \midrule
        \multirow{2}{*}{Qwen3-4B}
        & Pseudo-RLCR & 74.38 & 0.158 & 0.698 & 26.67 & 0.595 & 0.574 & \textbf{10.94} & 0.836 & 0.688 & \textbf{11.00} & 0.853 & 0.554 \\
        & \textbf{TTCL} & \textbf{77.81} & \textbf{0.091} & \textbf{0.910}
        & \textbf{28.33} & \textbf{0.065} & \textbf{0.856} & 10.63 & \textbf{0.155} & \textbf{0.848} & 10.50 & \textbf{0.244} & \textbf{0.744} \\
    \midrule
        \multirow{2}{*}{Qwen3-8B}
        & Pseudo-RLCR & 72.50 & 0.249 & 0.735 & \textbf{31.25} & 0.450 & 0.744 & \textbf{15.50} & 0.751 & 0.553 & 14.00 & 0.847 & 0.553 \\
        & \textbf{TTCL} & \textbf{80.63} & \textbf{0.168} & \textbf{0.844}
        & 30.83 & \textbf{0.120} & \textbf{0.882} & 14.81 & \textbf{0.190} & \textbf{0.752} & \textbf{15.31} & \textbf{0.330} & \textbf{0.798} \\
    \bottomrule[1.6pt]
    \end{tabular}}
    \vspace{-4mm}
\end{table}

\definecolor{accuracyblue}{HTML}{41B6E6}
\definecolor{calibrationgapblue}{HTML}{B7D2EA}

\begin{figure}[t!]
  \centering
  \begin{minipage}[t]{0.49\textwidth}
    \centering
    \includegraphics[width=0.99\linewidth]{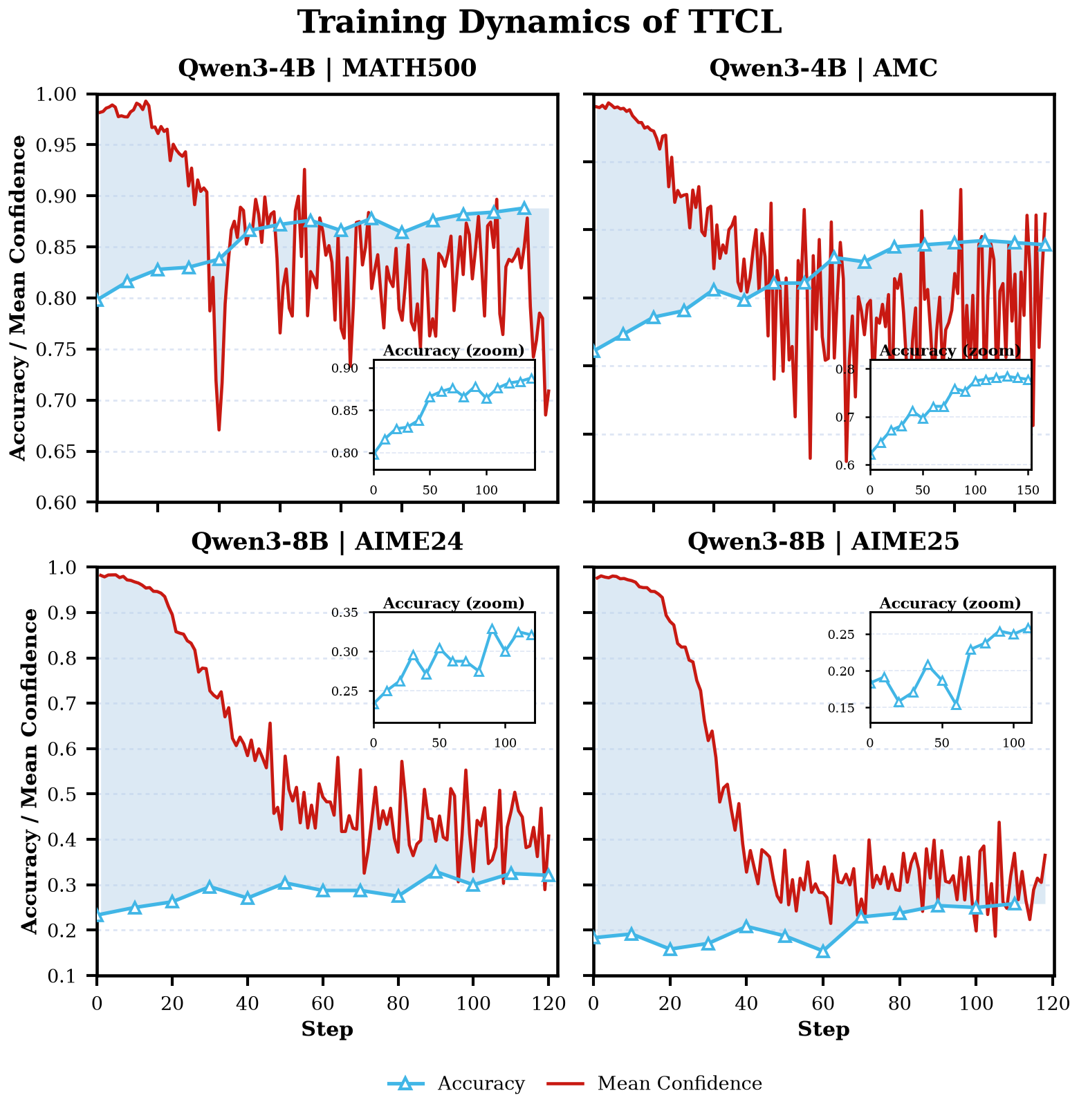}
    \vspace{-8mm}
    \caption{\textbf{Training dynamics of TTCL.} The dynamics illustrate the calibration process between accuracy and mean confidence during TTCL training. More dynamics refer to Appendix~\ref{App:Training_Dynamics_TTCL}.
    }
    \label{Fig:training_dynamics}
  \end{minipage}\hfill
  \begin{minipage}[t]{0.49\textwidth}
    \centering
    \includegraphics[width=0.99\linewidth]{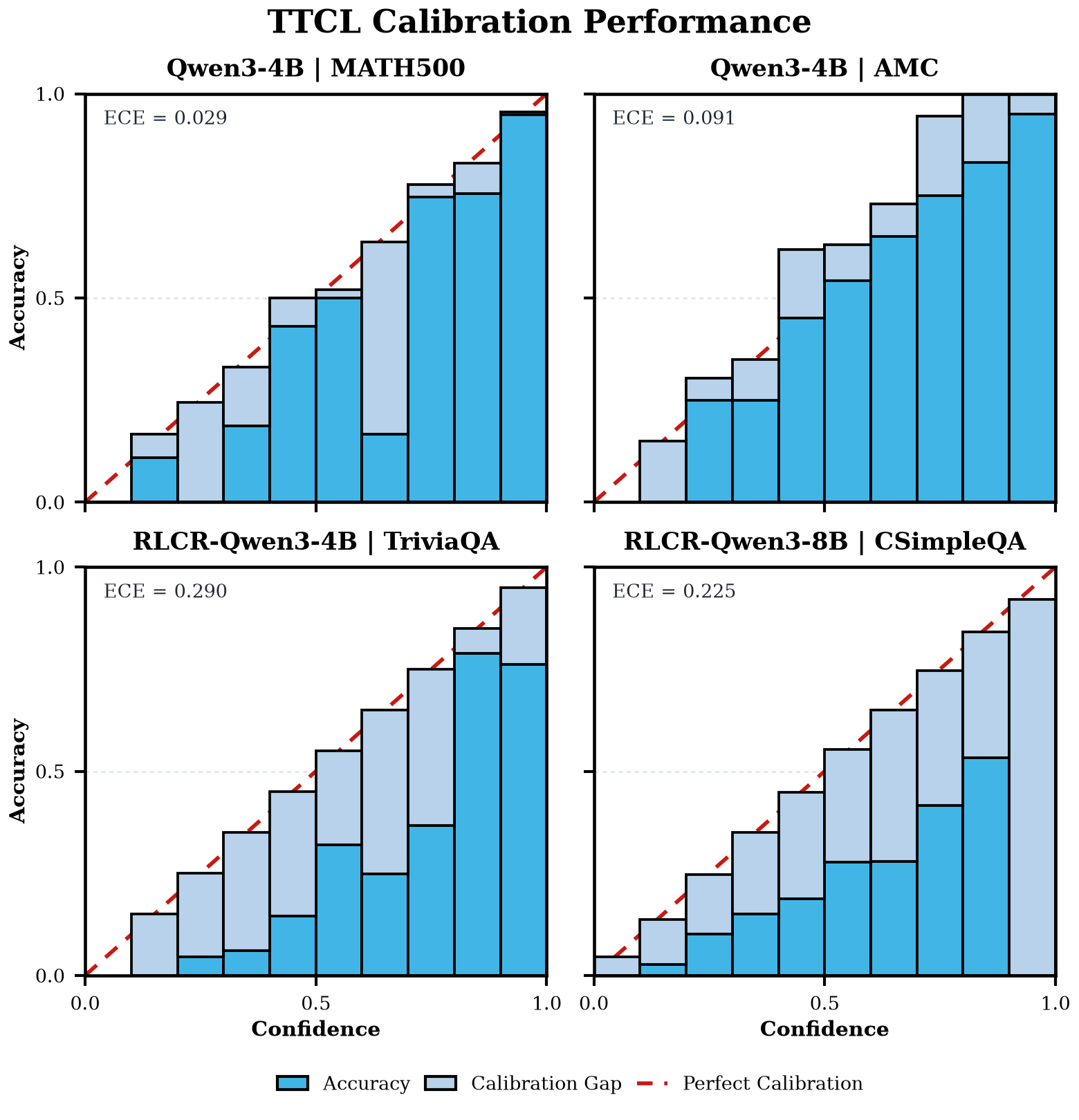}
    \vspace{-8mm}
    \caption{\textbf{Visulization of TTCL calibration performance across different settings.} Additional visualizations comparing calibration before and after TTCL are provided in Appendix~\ref{App:Visual_Calibrate}.
    }
    \label{Fig:calibration_bar}
  \end{minipage}
  \vspace{-7mm}
\end{figure}

\par
\textbf{Frequency-based confidence supervision is more effective than binary pseudo-correctness supervision.} 
Table~\ref{Tble:TTCL_PseudoRLCR} compares TTCL with Pseudo-RLCR, which uses the majority-vote pseudo-label for both correctness and confidence supervision. We observe that TTCL achieves lower ECE and higher ACC and AUROC in most cases, showing that binary pseudo-labels are insufficient for confidence learning and that frequency-based probability provides a more informative target.

\par
\textbf{TTCL progressively improves calibration during training.}
Fig.~\ref{Fig:training_dynamics} shows the evolution of accuracy and mean confidence during TTCL training. The initial models are generally over-confident, with mean confidence exceeding empirical accuracy. As training proceeds, TTCL narrows this gap while improving accuracy, bringing the two increasingly into alignment. This training behavior empirically supports our theoretical interpretation of TTCL as a surrogate for the ideal calibration objective.

\par
\textbf{TTCL effectively aligns confidence with empirical accuracy.}
Beyond the scalar ECE metric, Fig.~\ref{Fig:calibration_bar} visualizes confidence against empirical accuracy. Accuracy across confidence bins becomes closer to the ideal calibration diagonal after TTCL, indicating better alignment between predicted confidence and correctness likelihood. This pattern holds across mathematical reasoning and factual QA, as well as base and pre-calibrated models, showing that reliable confidence estimates of TTCL.

\vspace{-2mm}
\section{Conclusion}
\vspace{-2mm}

In this work, we introduce \textit{Test-Time Calibration Learning (TTCL)}, a label-free framework for adapting LLM calibration directly to target tasks at test time. Unlike existing methods relying on labeled supervision, TTCL derives correctness and calibration self-supervision from multiple model-generated responses, enabling calibration learning without access to ground-truth labels. Theoretical analysis shows that TTCL provides a label-free surrogate for the ideal calibration objective. Extensive experiments on mathematical reasoning and factual QA demonstrate consistent improvements in accuracy and calibration. TTCL can further adapt pre-calibrated models under domain shift, improving calibration when source-domain transfer is limited. We hope this work inspires further research on test-time calibration for reliable confidence estimation in unlabeled deployment settings.

\bibliography{iclr2027_conference}
\bibliographystyle{iclr2027_conference}

\newpage

\appendix

\section{Group Relative Policy Optimization (GRPO)}
\textbf{Group Relative Policy Optimization (GRPO).} GRPO is a widely used RL algorithm for LLM reasoning. Given a question $x$, GRPO samples a group of $G$ responses $\{y_i\}_{i=1}^G$, each associated with a reward $r_i$, and optimizes the policy LLM using the group-relative advantages~\citep{shao2024deepseekmath}:
\begin{equation}
\label{Eq:GRPO}
    \mathcal{J}_{\mathrm{GRPO}}(\theta) = \mathbb{E}\left[\frac{1}{G}\sum_{i=1}^{G}\frac{1}{|y_i|}\sum_{t=1}^{|y_i|}\left(\rho_{i,t}(\theta)\hat{A}_i\right) \right], \quad \hat{A}_i = r_i - \operatorname{Mean}\left(\{r_i\}_{i=1}^G\right),
\end{equation}
where $\rho_{i,t}(\theta)$ denotes the PPO-style clipped importance ratio between the current policy $\pi_\theta$ and the rollout policy $\pi_{\theta_{\mathrm{old}}}$. Different from standard GRPO, we use mean-only advantage normalization without dividing by the reward standard deviation. As the Brier calibration reward is continuous, standard-deviation normalization can amplify small reward differences and lead to unstable or extreme calibration updates~\citep{bereket2025uncalibrated}. While several improved variants of GRPO have been proposed, such as DAPO~\citep{yu2026dapo} and Dr.GRPO~\citep{liu2025DrGRPO}, our work primarily focuses on test-time calibration learning of confidence estimation. We adopt GRPO as our policy optimization algorithm throughout our experiments.

\section{Related Work}
\label{App:RelatedWork}
In this section, we mainly review related work from three perspectives: LLM reasoning, label-free RL methods for LLM reasoning, and existing LLM calibration methods, to better position TTCL within the broader research landscape and demonstrate its distinctions from prior studies.

\par
\textbf{Large language model (LLM) reasoning.} 
The reasoning capabilities of LLMs have advanced rapidly in recent years. Early studies primarily elicit reasoning capability through prompting and inference-time strategies, such as Chain-of-Thought (CoT) prompting~\citep{wei2022CoT}, Tree-of-Thoughts (ToT)~\citep{yao2023treeofthought}, or self-consistency~\citep{wang2023selfconsistency}. More recently, reinforcement learning (RL) has emerged as an effective paradigm for further eliciting and strengthening LLM reasoning capabilities~\citep{shao2024deepseekmath,openai2024reasoningo1}, with reinforcement learning with verifiable rewards (RLVR) becoming a widely adopted approach in modern LLM post-training pipelines~\citep{guo2025deepseekr1,team2026kimiK3,xiao2026mimov2,yang2025qwen3}. However, improvements in reasoning accuracy do not necessarily translate into reliable uncertainty estimation. Existing studies have shown that LLMs can exhibit substantial over-confidence~\citep{groot2024overconfidence,leng2025taming}, while RL training that mainly rewards answer correctness may further exacerbate such miscalibration~\citep{kalai2025language,bereket2025uncalibrated}. This poses substantial risks for deploying LLMs in high-risk scenarios, where confidently incorrect predictions may lead to severe consequences. Therefore, beyond improving reasoning accuracy, reliable reasoning models should also faithfully reflect the uncertainty of their predictions through well-calibrated confidence estimates.

\par
\textbf{Label-free RL methods for LLM reasoning.} 
Recently, emerging studies have explored RL without annotated ground-truth answers to improve LLM reasoning capabilities, reducing reliance on labeled supervision~\citep{yang2026scaling}. According to the source of self-supervised signals, these methods can be broadly divided into two categories. One line of work mainly encourages self-consistency among model outputs~\citep{zuo2026ttrl,zhang2026consistent}, for example, using majority-voted answers as pseudo labels to supervise RL optimization~\citep{zhang2026co,shafayat2025can}. Another line constructs rewards from the model's internal signals~\citep{zhang2025no,wen2026verifier}, such as minimizing prediction entropy~\citep{zhang2026empo,prabhudesai2025maximizing} or maximizing self-certainty~\citep{zhao2026learning}, to sharpen the model's predictive distribution. Although these methods enable label-free improvement of reasoning performance, they primarily focus on answer correctness and lack an explicit optimization objective for calibration learning. As a result, the resulting models may still fail to express confidence that faithfully reflects uncertainty over their own predictions. In contrast, our work further explores label-free learning for LLM calibration, enabling models to adapt both reasoning capability and confidence estimation on target tasks at test time.

\par
\textbf{Existing LLM calibration methods.} 
Existing approaches to LLM calibration can be broadly categorized into post-hoc calibration and calibration learning. Post-hoc methods estimate confidence for model-generated responses using signals from multiple perspectives, such as token probabilities~\citep{tan2026basecal}, verbalized confidence~\citep{tian2023calibration}, self-evaluation~\citep{kadavath2022PTrue}, sampling consistency~\citep{lyu2025calibrating}, or semantic uncertainty~\citep{farquhar2024detecting,kuhn2023semantic}. While effective in many settings, these methods primarily derive confidence through external signals or post-processing, rather than explicitly training the LLM to develop the capability of estimating the confidence of its own generated responses. In contrast, recent calibration learning studies directly train LLMs to express their calibrated confidence, typically by incorporating calibration-aware objectives into existing LLM training objectives~\citep{ma2026decoupling,seo2026advice,finlay2026caliber}. For example, RLCR~\citep{damani2026RLCR}, BCRL~\citep{wu2025BCRL}, and Rewarding Doubt~\citep{bani2026rewarding} leverage proper scoring rules~\citep{gneiting2007strictlyproper} to introduce calibration rewards into RL training, jointly optimizing answer correctness and confidence estimation. 

These methods explicitly encourage calibration to become an intrinsic capability of the model. However, their calibration supervision generally relies on ground-truth correctness labels during training~\citep{tan2026basecal}, incurring additional annotation costs, while the learned calibration behavior may transfer poorly when the target distribution differs from the training distribution~\citep{jangconfidence2026}. In this work, we instead introduce \emph{Test-Time Calibration Learning (TTCL)}, which directly adapts model calibration on the target task at test time using label-free supervision without requiring ground-truth labels, thereby enabling calibration learning under unseen test distributions.

\section{Pseudocode of TTCL}
To provide an intuitive overview of the TTCL pipeline, we summarize its pseudocode in Algorithm~\ref{Alg:TTCL}. Specifically, the target cache is initialized once using the initial policy model and remains fixed throughout each epoch. It provides the pseudo-correctness and confidence targets and is updated only at the end of each epoch using the newly observed answer distribution.

\begin{algorithm}[t]
\caption{Test-Time Calibration Learning (TTCL)}
\label{Alg:TTCL}
\begin{algorithmic}[1]
\State \textbf{Input:} Unlabeled target set $\mathcal{D}_{\mathrm{test}}$; initial policy model $\pi_{\theta_0}$; number of training epochs $E$; learning rate $\alpha$; cache update rate $\eta$.
\State \textbf{Output:} Calibration-adapted policy model $\pi_{\theta}$
\State Initialize $\theta \gets \theta_0$
\State Initialize an empty target cache $q_{\mathrm{cache}}^{(0)}$
\ForAll{$x \in \mathcal{D}_{\mathrm{test}}$}
    \State Sample $G$ responses from $\pi_{\theta_0}(\cdot\mid x)$
    \State Set $q_{\mathrm{cache}}^{(0)}(\cdot\mid x)$ to their empirical
    answer distribution
\EndFor
\For{$e=0,\ldots,E-1$}
    \State Freeze $q_{\mathrm{cache}}^{(e)}$ for supervision in epoch $e$
    \ForAll{minibatches $\mathcal{B}\subset\mathcal{D}_{\mathrm{test}}$}
        \ForAll{input question $x\in\mathcal{B}$}
            \State Sample $\{y_i=(\tau_i,a_i,c_i)\}_{i=1}^{G}$ from the rollout policy $\pi_{\theta_{\mathrm{old}}}(\cdot\mid x)$.
            \State $\tilde a_x^{(e)} \gets \arg\max_a q_{\mathrm{cache}}^{(e)}(a\mid x)$. \# \textit{Read the pseudo-correctness target from the cache}
            \For{$i=1,\ldots,G$}
                \State $\tilde p_i^{(e)} \gets q_{\mathrm{cache}}^{(e)}(a_i\mid x)$. \# \textit{Read the confidence target from the cache}
                \State $r_i^{\mathrm{corr}} \gets \mathbb{I}[a_i\equiv\tilde a_x^{(e)}]$. \# \textit{Compute correctness reward}
                \State $r_i^{\mathrm{cal}} \gets -(c_i-\tilde p_i^{(e)})^2$. \# \textit{Compute calibration reward}
            \EndFor
            \State Compute mean-centered advantages $\{\hat A_i^{\mathrm{corr}},\hat A_i^{\mathrm{cal}}\}_{i=1}^{G}$.
        \EndFor
        \State Compute TTCL objective in Eq.~(\ref{Eq:TTCL}).
        \State Update $\theta \leftarrow \theta + \alpha \nabla_\theta \mathcal{J}_{\mathrm{TTCL}}(\theta)$.
    \EndFor
    \State \# \textit{EMA updates the target cache after each epoch}
    \ForAll{$x \in \mathcal{D}_{\mathrm{test}}$}
        \State Let $q^{(e)}(\cdot\mid x)$ be the empirical answer distribution observed in epoch $e$.
        \State Update $q_{\mathrm{cache}}^{(e+1)}(\cdot\mid x) \gets (1-\eta)q_{\mathrm{cache}}^{(e)}(\cdot\mid x)+\eta q^{(e)}(\cdot\mid x)$ using EMA.
    \EndFor
\EndFor
\State \Return Calibrated policy model $\pi_{\theta}$
\end{algorithmic}
\end{algorithm}

\section{Further Theoretical Analysis of TTCL}
We provide the detailed proofs for the theoretical analysis presented in Section~\ref{Sec:Theory}. We first establish the statistical property of the frequency-based probability confidence target, then show that the conditional correctness probability is the ideal target for calibration learning, and finally prove the main analysis that the TTCL objective provides a bounded surrogate for the ideal calibration objective.
\subsection{Proof of Proposition~\ref{Theory:frequency_proxy}}
\label{App:Proof_Prop1}
Proof of \textbf{Proposition~\ref{Theory:frequency_proxy} (Answer Frequency as a Proxy for Conditional Correctness)}.
\begin{proof}
For a fixed question $x$ and candidate answer $a$, consider $G$ sampled responses
\begin{equation}
    \{Y_1,\ldots,Y_G\} \sim \pi_\theta(\cdot\mid x),
\end{equation}
and let $A_j=\operatorname{Ans}(Y_j)$ denote the extracted answer of the $j$-th response. Define the empirical answer probability as
\begin{equation}
    \hat q_G(a\mid x)=\frac{1}{G}\sum_{j=1}^{G}\mathbb{I}[A_j \equiv a].
\end{equation}
Since
\begin{equation}
    \mathbb{E}\!\left[\mathbb{I}[A_j \equiv a]\right] = \Pr(A_j=a\mid X=x) = q_{\pi_\theta}(a\mid x),
\end{equation}
We obtain
\begin{equation}
    \mathbb{E}\!\left[\hat q_G(a\mid x)\right] = \frac{1}{G} \sum_{j=1}^{G} q_{\pi_\theta}(a\mid x) = q_{\pi_\theta}(a\mid x).
\end{equation}
Therefore, $\hat q_G(a\mid x)$ is an unbiased estimator of the population answer probability. Moreover, each indicator variable is Bernoulli distributed with parameter $q_{\pi_\theta}(a\mid x)$. Hence,
\begin{equation}
    \operatorname{Var}\!\left[\hat q_G(a\mid x)\right] = \frac{q_{\pi_\theta}(a\mid x) \left(1-q_{\pi_\theta}(a\mid x)\right)}{G}.
\end{equation}
As $G\rightarrow\infty$,
\begin{equation}
    \operatorname{Var}\!\left[\hat q_G(a\mid x)\right] \rightarrow 0,
\end{equation}
and by the law of large numbers,
\begin{equation}
    \hat q_G(a\mid x) \xrightarrow[]{p} q_{\pi_\theta}(a\mid x).
\end{equation}
Thus, the empirical answer probability is both unbiased and consistent.
\par
For a response $y$ whose extracted answer is $a=\operatorname{Ans}(y)$, when the model-induced answer distribution reflects the model's uncertainty over which answer is correct, answers with larger $q_{\pi_\theta}(a\mid x)$ correspond to larger probabilities of the associated response being correct. Therefore,
\begin{equation}
    q_{\pi_\theta}(a\mid x)
\end{equation}
provides a frequency-based proxy for
\begin{equation}
    p^\star(x,y)=\Pr(Z=1\mid X=x,Y=y).
\end{equation}
Since $\hat q_G(a\mid x)$ consistently estimates $q_{\pi_\theta}(a\mid x)$, repeated responses provide a statistically well-defined way to estimate this proxy without access to ground-truth correctness labels.
\end{proof}

\subsection{Proof of Proposition~\ref{Theory:ideal_calibration_target}}
\label{App:Proof_Prop2}
Proof of \textbf{Proposition~\ref{Theory:ideal_calibration_target} (Conditional Correctness as the Ideal Calibration Target)}.
\begin{proof}
We first establish the connection between conditional correctness probability and perfect calibration. By definition,
\begin{equation}
    p^\star(X,Y)=\Pr(Z=1\mid X,Y).
\end{equation}
Since $Z\in\{0,1\}$,
\begin{equation}
    \mathbb{E}[Z\mid X,Y]=p^\star(X,Y).
\end{equation}
Suppose the confidence estimator satisfies
\begin{equation}
    C=c(X,Y)=p^\star(X,Y).
\end{equation}
Using the law of iterated expectations,
\begin{equation}
    \mathbb{E}[Z\mid C] = \mathbb{E}\!\left[ \mathbb{E}[Z\mid X,Y] \mid C \right].
\end{equation}
Substituting the definition of $p^\star$,
\begin{equation}
    \mathbb{E}[Z\mid C] = \mathbb{E}\!\left[ p^\star(X,Y) \mid C \right].
\end{equation}
Since $C=p^\star(X,Y)$,
\begin{equation}
    \mathbb{E}[Z\mid C] = C.
\end{equation}
Conditioning on $C=c$ therefore gives
\begin{equation}
    \Pr(Z=1\mid C=c) = \mathbb{E}[Z\mid C=c] = c,    
\end{equation}
which is exactly the perfect calibration condition in Eq.~(\ref{Eq:Def_Calibration}).
\par
We next show that the expected Brier score recovers the same target. Consider
\begin{equation}
    \mathcal{J}_{\mathrm{Brier}}(c) = \mathbb{E}\!\left[(c(X,Y)-Z)^2\right].
\end{equation}
Conditioning on $(X,Y)$, and writing $c=c(X,Y)$ and $p^\star=p^\star(X,Y)$ for simplicity, we have
\begin{equation}
    \mathbb{E}\!\left[(c-Z)^2\mid X,Y\right].
\end{equation}
Adding and subtracting $p^\star$,
\begin{equation}   
    c-Z=(c-p^\star)+(p^\star-Z).
\end{equation}
Therefore,
\begin{equation}
    \mathbb{E}\!\left[(c-Z)^2\mid X,Y\right] = (c-p^\star)^2 +2(c-p^\star)\, \mathbb{E}\!\left[p^\star-Z\mid X,Y\right] +\mathbb{E}\!\left[(p^\star-Z)^2\mid X,Y\right].
\end{equation}
Since
\begin{equation}
    \mathbb{E}[Z\mid X,Y]=p^\star,    
\end{equation}
the cross term vanishes:
\begin{equation}
    \mathbb{E}\!\left[p^\star-Z\mid X,Y\right]=0.
\end{equation}
Hence,
\begin{equation}
    \mathbb{E}\!\left[(c-Z)^2\mid X,Y\right] = (c-p^\star)^2 + \mathbb{E}\!\left[(p^\star-Z)^2\mid X,Y\right]. 
\end{equation}
Because $Z$ is Bernoulli with conditional mean $p^\star$,
\begin{equation}
    \mathbb{E}\!\left[(p^\star-Z)^2\mid X,Y\right] = p^\star(1-p^\star).
\end{equation}
Taking expectation over $(X,Y)$ yields
\begin{equation}
    \mathcal{J}_{\mathrm{Brier}}(c) = \mathbb{E}\!\left[ (c(X,Y)-p^\star(X,Y))^2 \right] + \mathbb{E}\!\left[ p^\star(X,Y)(1-p^\star(X,Y)) \right].  
\end{equation}
The second term is independent of the confidence estimator $c$. Therefore, minimizing the expected Brier score is equivalent to minimizing
\begin{equation}    
    \mathbb{E}\!\left[ (c(X,Y)-p^\star(X,Y))^2 \right],
\end{equation}
whose minimum is attained at
\begin{equation}    
    c^\star(X,Y)=p^\star(X,Y).
\end{equation}
Thus,
\begin{equation}    
    c^\star = \arg\min_c \mathcal{J}_{\mathrm{Brier}}(c) = p^\star.
\end{equation}
Therefore, the conditional correctness probability is both sufficient for perfect calibration and the optimal target under Brier-score minimization.
\end{proof}

\subsection{Proof of Theorem~\ref{Theory:TTCL_boundary}}
\label{App:Proof_Theorem}
With the frequency-based proxy and the ideal calibration target established above, we now prove the surrogate bound in Theorem~\ref{Theory:TTCL_boundary}.
\par
Proof of \textbf{Theorem~\ref{Theory:TTCL_boundary} (TTCL as a Surrogate for the Ideal Calibration Objective)}.
\begin{proof}
Recall that
\begin{equation}
    \mathcal{J}_{\mathrm{TTCL}}(c) = \mathbb{E}\!\left[ (c(X,Y)-s(X,Y))^2 \right],
\end{equation}
and
\begin{equation}    
    \mathcal{J}_{\mathrm{ideal}}(c) = \mathbb{E}\!\left[ (c(X,Y)-p^\star(X,Y))^2 \right].
\end{equation}
The estimation error is defined as
\begin{equation}    
    \epsilon = \mathbb{E}\!\left[ \left|s(X,Y)-p^\star(X,Y)\right| \right].
\end{equation}
For notational simplicity, in this proof, let
\begin{equation}
    c=c(X,Y), \qquad s=s(X,Y), \qquad p^\star=p^\star(X,Y).    
\end{equation}
We first consider the pointwise difference between the two squared-error objectives:
\begin{equation}    
    \left|(c-s)^2-(c-p^\star)^2\right|.
\end{equation}
Using the difference-of-squares identity,
\begin{align}
\left|(c-s)^2-(c-p^\star)^2\right| &= \left|(p^\star-s)(2c-s-p^\star)\right| \\
&= |s-p^\star|\,|2c-s-p^\star|.
\end{align}
Since
\begin{equation}    
    c,s,p^\star\in[0,1],
\end{equation}
we have
\begin{equation} 
    |2c-s-p^\star|\le 2.
\end{equation}
Therefore,
\begin{equation}    
    \left|(c-s)^2-(c-p^\star)^2\right| \le 2|s-p^\star|.
\end{equation}
Taking expectation on both sides and applying
\begin{equation}    
    |\mathbb{E}[U]| \le \mathbb{E}[|U|],
\end{equation}
we obtain
\begin{align}
    \left| \mathcal{J}_{\mathrm{TTCL}}(c) - \mathcal{J}_{\mathrm{ideal}}(c) \right| &= \left| \mathbb{E}\!\left[ (c-s)^2-(c-p^\star)^2 \right] \right| \\
    &\le \mathbb{E}\!\left[ \left| (c-s)^2-(c-p^\star)^2 \right| \right] \\
    &\le 2\mathbb{E}\!\left[ |s-p^\star| \right] \\ &= 2\epsilon.
\end{align}
Therefore,
\begin{equation}
    \left| \mathcal{J}_{\mathrm{TTCL}}(c) - \mathcal{J}_{\mathrm{ideal}}(c) \right| \le 2\epsilon,
\end{equation}
for any confidence estimator $c(X,Y)\in[0,1]$. This establishes that TTCL provides a bounded surrogate for the ideal calibration learning objective.
\end{proof}

\section{Additional Experimental Details}
\subsection{Details of Baselines}
\label{App:Baselines}
We compare TTCL with several existing label-free calibration strategies and additionally include RLCR, a calibration learning method with access to ground-truth labels, as a label-leakage reference. We briefly introduce these methods below:
\begin{itemize}[left=2pt]
    \item \textbf{RLCR}~\citep{damani2026RLCR}: Reinforcement Learning with Calibration Rewards which jointly optimizes answer
    correctness and verbalized confidence by augmenting the binary correctness reward with a Brier-score-based calibration reward. Unlike TTCL, RLCR requires ground-truth answer labels to construct its training reward for both correctness and calibration.
    \item \textbf{P(True)}~\citep{kadavath2022PTrue}: P(True) estimates confidence through self-evaluation. Given a question and
    the generated answer, the model is prompted to judge whether the
    proposed answer is true or false, and the predicted probability assigned to ``True'' is used as the confidence score.
    \item \textbf{Verbalization}~\citep{tian2023calibration}: Verbalization directly prompts the model to report a numerical confidence
    together with its answer, using the model's instruction-following ability
    to express uncertainty without accessing internal logits or additional
    labeled data.
    \item \textbf{Elicitation}~\citep{tian2023calibration}: Elicitation improves confidence verbalization by obtaining multiple
    confidence estimates through repeated prompting and sampling, and
    aggregating these estimates to produce the final confidence score.
    \item \textbf{BaseCal}~\citep{tan2026basecal}: BaseCal exploits the observation that the corresponding pre-trained base
    model can provide better-calibrated signals than its post-trained
    counterpart. Following BaseCal-ReEval, we re-evaluate the response
    generated by the post-trained model using the base model and aggregate the token probabilities as the calibrated confidence.
\end{itemize}
For a fair comparison, we use the same evaluation settings for all methods.

\begin{table}[t]
\caption{\textbf{Detailed implementation configurations.}}
\vspace{-2mm}
\begin{center}
\resizebox{1.0\columnwidth}{!}{
    \begin{tabular}{l|cccccc}
    \toprule[1.5pt]
    \textbf{Setting} & \textbf{MATH500} & \textbf{AMC} & \textbf{AIME24/AIME25/AIME26} & \textbf{SimpleQA} & \textbf{ChineseSimpleQA} & \textbf{NQ/HotpotQA/TriviaQA} \\
    \midrule
    Training epochs & 10 & 40 & 40 & 10 & 10 & 10 \\
    Training batch size & 32 & 8 & 8 & 64 & 128 & 32 \\
    \midrule
    Training Temperature & \multicolumn{6}{c}{0.7} \\
    Rollouts $G$ & \multicolumn{6}{c}{32} \\
    Max prompt length & \multicolumn{6}{c}{1,024} \\
    Max response length & \multicolumn{6}{c}{4,096} \\
    Learning rate & \multicolumn{6}{c}{5e-7} \\
    KL coefficient & \multicolumn{6}{c}{0.001} \\
    Optimizer & \multicolumn{6}{c}{AdamW ($\beta_1=0.9,\beta_2=0.999$)} \\
    Clip ratio & \multicolumn{6}{c}{0.2} \\
    Warmup style & \multicolumn{6}{c}{Cosine} \\
    Warmup step ratio & \multicolumn{6}{c}{0.1} \\
    \midrule
    Evaluation Temperature & \multicolumn{6}{c}{0.7} \\
    \bottomrule[1.5pt]
    \end{tabular}
}
\label{Tble:implementation_details}
\end{center}
\end{table}

\begin{table}[t!]
\caption{\textbf{Statistics of benchmarks} used in our experiments.}
\vspace{-2mm}
\begin{center}
\resizebox{0.7\columnwidth}{!}{
    \begin{tabular}{l|lc}
    \toprule[1.5pt]
    \textbf{Task} & \textbf{Dataset Name} & \textbf{\# Data Size} \\
    \midrule
    \multirow{6}{*}{\textbf{Mathematics}} & MATH500~\citep{lightman2024math500} & 500 \\
    ~ & AMC~\citep{yang2024qwen2p5math} & 40 \\
    ~ & AIME24~\citep{aime24} & 30 \\
    ~ & AIME25~\citep{aime25} & 30 \\
    ~ & AIME26~\citep{aime26} & 30 \\
    ~ & DAPO-Math-14K~\citep{yu2026dapo} & 14,109 \\
    \midrule
    \multirow{6}{*}{\textbf{Factual QA}} & SimpleQA~\citep{wei2024simpleqa,haas2025simpleqaVerified} & 1,000 \\
    ~ & ChineseSimpleQA~\citep{he2025chinese} & 3,000 \\
    ~ & NQ~\citep{kwiatkowski2019natural} & 200 \\
    ~ & HotpotQA~\citep{yang2018hotpotqa} & 200 \\
    ~ & TriviaQA~\citep{joshi2017triviaqa} & 200 \\
    ~ & FactQA-10K~\citep{yang2018hotpotqa,joshi2017triviaqa} & 10,000 \\ 
    \bottomrule[1.5pt]
    \end{tabular}}
\label{Tble:dataset_details}
\end{center}
\end{table}

\subsection{More Implementation Details}
\label{App:Implement_detail}
For both training and evaluation, we use the same system prompt as follows:
\begin{lstlisting}[title={\textbf{System prompt for mathematical reasoning}}]
Let's think step by step and output the final answer within \boxed{}. 
Also output your confidence for this answer after <conf> as a floating-point number between 0 (complete uncertainty) and 1 (full confidence).
Example: <conf> Confidence: <float value>
Ensure that the confidence output appears on a single line and strictly follows the specified format.
\end{lstlisting}

\begin{lstlisting}[title={\textbf{System prompt for factual question answering}}]
Let's think step by step and output the final answer within <answer> </answer> tags. 
Also output your confidence for this answer after <conf> as a floating-point number between 0 (complete uncertainty) and 1 (full confidence).
Example: <conf> Confidence: <float value>
Ensure that the confidence output appears on a single line and strictly follows the specified format.
\end{lstlisting}
The first prompt is used for mathematical reasoning, while the second is used for factual question answering; they differ only in the requirement for final answer extraction.

\par
Table~\ref{Tble:implementation_details} summarizes the implementation details of TTCL for each benchmark. Following a similar principle adopted in existing test-time training methods~\citep{zuo2026ttrl}, we use smaller batch sizes and more training epochs for smaller benchmarks to ensure a sufficient number of training steps. Additionally, all our experiments are conducted on $8\times$ Ascend 910B NPUs.

\begin{table}[t]
    \centering
    \caption{\textbf{Brier score and mean confidence of TTCL for base models on mathematical reasoning benchmarks.} Improved results after TTCL are highlighted in \textbf{bold}, with positive relative improvements shown in \textcolor{softred}{red}. As mean confidence is not a metric for which higher or lower values are better, we do not report its relative improvement, and it only serves as a reference alongside other metrics.}
    \label{Tble:TTCL_Base_Math_BrierConf}
    \vspace{-3mm}
    \resizebox{\textwidth}{!}{
    \begin{tabular}{l|ccc|ccc|ccc|ccc}
    \toprule[1.6pt]
        \multirow{2}{*}{\textbf{Methods}} & \multicolumn{3}{c}{\textbf{MATH500}} & \multicolumn{3}{c}{\textbf{AMC}} & \multicolumn{3}{c}{\textbf{AIME24}} & \multicolumn{3}{c}{\textbf{AIME25}} \\ \cmidrule(lr){2-4} \cmidrule(lr){5-7} \cmidrule(lr){8-10} \cmidrule(lr){11-13} 
        ~ & \textbf{ACC $\uparrow$} & \textbf{Brier $\downarrow$} & \textbf{Conf} & \textbf{ACC $\uparrow$} & \textbf{Brier $\downarrow$} & \textbf{Conf} & \textbf{ACC $\uparrow$} & \textbf{Brier $\downarrow$} & \textbf{Conf} & \textbf{ACC $\uparrow$} & \textbf{Brier $\downarrow$} & \textbf{Conf} \\ 
        \midrule
            \multicolumn{13}{c}{\textit{\textbf{Qwen3-4B}}} \\
        \midrule
        Base Model  & 80.6 & 0.184 & 0.984 & 59.38 & 0.387 & 0.984 & 20.42 & 0.731 & 0.951 & 16.25 & 0.790 & 0.979 \\
        \rowcolor{lightblue} w/ \textbf{TTCL} & \textbf{86.6} & \textbf{0.071} & 0.881 & \textbf{77.81} & \textbf{0.131} & 0.709 & \textbf{28.33} & \textbf{0.144} & 0.325 & \textbf{20.42} & \textbf{0.185} & 0.470 \\
        \# Improv & \color{softred}+7.44\% & \color{softred}+61.48\% & - & \color{softred}+31.05\% & \color{softred}+66.15\% & - & \color{softred}+38.78\% & \color{softred}+80.29\% & - & \color{softred}+25.64\% & \color{softred}+76.58\% & - \\
        \specialrule{1pt}{0.4ex}{0.4ex}
            \multicolumn{13}{c}{\textit{\textbf{Qwen3-8B}}} \\
        \midrule
        Base Model & 84.4 & 0.219 & 0.987 & 62.81 & 0.357 & 0.983 & 21.67 & 0.757 & 0.978 & 18.75 & 0.780 & 0.981 \\
        \rowcolor{lightblue} w/ \textbf{TTCL} & \textbf{89.0} & \textbf{0.090} & 0.769 & \textbf{80.63} & \textbf{0.145} & 0.844 & \textbf{30.83} & \textbf{0.177} & 0.392 & \textbf{26.67} & \textbf{0.113} & 0.303 \\
        \# Improv & \color{softred}+5.45\% & \color{softred}+59.17\% & - & \color{softred}+28.36\% & \color{softred}+59.30\% & - & \color{softred}+42.31\% & \color{softred}+76.63\% & - & \color{softred}+42.22\% & \color{softred}+85.53\% & - \\
        \specialrule{1pt}{0.4ex}{0.4ex}
            \multicolumn{13}{c}{\textit{\textbf{Llama-3.1-8B-Instruct}}} \\
        \midrule
        Base Model & 39.6 & 0.480 & 0.858 & 21.88 & 0.605 & 0.518 & 4.58 & 0.731 & 0.856 & 0.00 & 0.751 & 0.853 \\
        \rowcolor{lightblue} w/ \textbf{TTCL} & \textbf{58.0} & \textbf{0.208} & 0.622 & \textbf{40.94} & \textbf{0.261} & 0.617 & \textbf{9.17} & \textbf{0.112} & 0.284 & 0.00 & \textbf{0.036} & 0.114 \\
        \# Improv & \color{softred}+46.46\% & \color{softred}+56.68\% & - & \color{softred}+87.14\% & \color{softred}+56.80\% & - & \color{softred}+100.00\% & \color{softred}+84.69\% & - & 0.00\% & \color{softred}+95.25\% & - \\
        \bottomrule[1.6pt]
    \end{tabular}}
\end{table}

\begin{table}[t!]
    \centering
    \caption{\textbf{Brier score and mean confidence of TTCL for base models on factual QA benchmarks.} Improved results after TTCL are highlighted in \textbf{bold}, with positive relative improvements shown in \textcolor{softred}{red} color. As mean confidence is not a metric for which higher or lower values are better, we do not report its relative improvement and include it only as a reference alongside the other metrics.}
    \label{Tble:TTCL_Base_Fact_BrierConf}
    \vspace{-3mm}
    \resizebox{\textwidth}{!}{
    \begin{tabular}{l|ccc|ccc|ccc|ccc}
    \toprule[1.6pt]
        \multirow{2}{*}{\textbf{Methods}} & \multicolumn{3}{c}{\textbf{SimpleQA}} & \multicolumn{3}{c}{\textbf{NQ}} & \multicolumn{3}{c}{\textbf{HotpotQA}} & \multicolumn{3}{c}{\textbf{TriviaQA}} \\ \cmidrule(lr){2-4} \cmidrule(lr){5-7} \cmidrule(lr){8-10} \cmidrule(lr){11-13}
        ~ & \textbf{ACC $\uparrow$} & \textbf{Brier $\downarrow$} & \textbf{Conf} & \textbf{ACC $\uparrow$} & \textbf{Brier $\downarrow$} & \textbf{Conf} & \textbf{ACC $\uparrow$} & \textbf{Brier $\downarrow$} & \textbf{Conf} & \textbf{ACC $\uparrow$} & \textbf{Brier $\downarrow$} & \textbf{Conf} \\ 
        \midrule
        \multicolumn{13}{c}{\textit{\textbf{Qwen3-4B}}} \\
        \midrule
        Base Model & 1.40 & 0.875 & 0.927 & 8.63 & 0.855 & 0.961 & 9.56 & 0.795 & 0.927 & 15.63 & 0.772 & 0.954  \\
        \rowcolor{lightblue} w/ \textbf{TTCL} & \textbf{2.40} & \textbf{0.041} & 0.095 & \textbf{10.63} & \textbf{0.176} & 0.420 & \textbf{10.06} & \textbf{0.198} & 0.349 & \textbf{17.00} & \textbf{0.099} & 0.191 \\
        \# Improv & \color{softred}+71.43\% & \color{softred}+95.36\% & - & \color{softred}+23.25\% & \color{softred}+79.41\% & - & \color{softred}+5.20\% & \color{softred}+75.06\% & - & \color{softred}+8.80\% & \color{softred}+87.22\% & - \\
        \specialrule{1pt}{0.4ex}{0.4ex}
        \multicolumn{13}{c}{\textit{\textbf{Qwen3-8B}}} \\
        \midrule
        Base Model & 2.20 & 0.886 & 0.928 & 9.63 & 0.876 & 0.983 & 12.75 & 0.810 & 0.950 & 24.13 & 0.743 & 0.989 \\
        \rowcolor{lightblue} w/ \textbf{TTCL} & \textbf{3.50} & \textbf{0.184} & 0.387 & \textbf{14.81} & \textbf{0.154} & 0.338 & \textbf{15.31} & \textbf{0.240} & 0.483 & \textbf{26.50} & \textbf{0.216} & 0.492 \\
        \# Improv & \color{softred}+59.09\% & \color{softred}+79.29\% & - & \color{softred}+53.87\% & \color{softred}+82.39\% & - & \color{softred}+20.08\% & \color{softred}+70.36\% & - & \color{softred}+9.84\% & \color{softred}+70.91\% & - \\
        \specialrule{1pt}{0.4ex}{0.4ex}
        \multicolumn{13}{c}{\textit{\textbf{Llama-3.1-8B-Instruct}}} \\
        \midrule
        Base Model & 3.20 & 0.719 & 0.846 & 14.69 & 0.666 & 0.854 & 12.31 & 0.662 & 0.851 & 36.88 & 0.479 & 0.857 \\
        \rowcolor{lightblue} w/ \textbf{TTCL} & \textbf{5.80} & \textbf{0.111} & 0.278 & \textbf{23.31} & \textbf{0.203} & 0.065 & \textbf{20.13} & \textbf{0.163} & 0.098 & \textbf{50.19} & \textbf{0.352} & 0.169 \\
        \# Improv & \color{softred}+81.25\% & \color{softred}+84.51\% & - & \color{softred}+40.23\% & \color{softred}+69.55\% & - & \color{softred}+64.29\% & \color{softred}+75.34\% & - & \color{softred}+30.78\% & \color{softred}+26.44\% & - \\
        \bottomrule[1.6pt]
    \end{tabular}}
    \vspace{-3mm}
\end{table}

\subsection{More Benchmark Details}
\label{App:Benchmarks}
We conduct experiments on benchmarks mainly covering two tasks: mathematical reasoning and factual question answering. For mathematical reasoning, we include benchmarks below:
\begin{itemize}[left=2pt]
\item \textbf{MATH500}~\citep{lightman2024math500}: MATH500 is a commonly used benchmark for mathematics, which is a 500-question subset of the MATH dataset~\citep{hendrycksmath2021}, covering a broad range of topics including algebra, geometry, number theory, counting, and probability. 

\item \textbf{AMC}~\citep{yang2024qwen2p5math}: AMC contains problems drawn from the American Mathematics Competitions, which evaluate math problem-solving ability across topics such as algebra, geometry, combinatorics, and number theory. Compared with MATH500, AMC problems are typically shorter but still require non-trivial reasoning and careful derivation to obtain correct answer.

\item \textbf{AIME24/AIME25/AIME26}~\citep{aime24,aime25,aime26}: These benchmarks consist of problems from the 2024/2025/2026 American Invitational Mathematics Examination, respectively. These problems are substantially more challenging than standard school-level mathematics questions and typically require several intermediate reasoning steps, making the benchmark useful for evaluating both reasoning accuracy and confidence on difficult mathematical problems.

\end{itemize}

\begin{table}[t]
    \centering
    \caption{\textbf{In-domain performance of Brier score and mean confidence of TTCL on RLCR-calibrated models.} Improved results are highlighted in \textbf{bold}, with positive relative gains shown in \textcolor{softred}{red}. Mean confidence serves as a reference for analysis alongside other metrics. Since neither higher nor lower mean confidence is inherently better, we do not report its relative improvement.}
    \label{Tble:TTCL_InDomain_BrierConf}
    \vspace{-3mm}
    \resizebox{\textwidth}{!}{
    \begin{tabular}{l|ccc|ccc|ccc|ccc|ccc}
    \toprule[1.6pt]
        \multirow{2}{*}{\textbf{Math-to-Math}} & \multicolumn{3}{c}{\textbf{MATH500}} & \multicolumn{3}{c}{\textbf{AMC}} & \multicolumn{3}{c}{\textbf{AIME24}} & \multicolumn{3}{c}{\textbf{AIME25}} & \multicolumn{3}{c}{\textbf{AIME26}} \\ 
        \cmidrule(lr){2-4} \cmidrule(lr){5-7} \cmidrule(lr){8-10} \cmidrule(lr){11-13} \cmidrule(lr){14-16}
        ~ & \textbf{ACC $\uparrow$} & \textbf{Brier $\downarrow$} & \textbf{Conf} & \textbf{ACC $\uparrow$} & \textbf{Brier $\downarrow$} & \textbf{Conf} & \textbf{ACC $\uparrow$} & \textbf{Brier $\downarrow$} & \textbf{Conf} & \textbf{ACC $\uparrow$} & \textbf{Brier $\downarrow$} & \textbf{Conf} & \textbf{ACC $\uparrow$} & \textbf{Brier $\downarrow$} & \textbf{Conf}  \\ 
        \midrule
        \multicolumn{16}{c}{\textit{\textbf{RLCR-calibrated Qwen3-4B on DAPO-Math-14K}}} \\
        \midrule
        Base Model  & 80.6 & 0.184 & 0.984 & 59.38 & 0.387 & 0.984 & 20.42 & 0.731 & 0.951 & 16.25 & 0.790 & 0.979 & 15.42 & 0.803 & 0.974 \\
        \rowcolor{lightgray} RLCR Model & 89.0 & 0.090 & 0.772 & 79.38 & 0.134 & 0.701 & 29.17 & 0.228 & 0.557 & 24.17 & 0.238 & 0.566 & 23.33 & 0.257 & 0.568 \\
        \rowcolor{lightblue} w/ \textbf{TTCL} & \textbf{90.4} & \textbf{0.069} & 0.913 & \textbf{88.13} & \textbf{0.078 }& 0.804 & \textbf{42.08} & \textbf{0.160} & 0.453 & \textbf{25.00} & \textbf{0.112} & 0.360 & \textbf{24.58} & \textbf{0.152} & 0.397 \\
        \# Improv & \color{softred}+1.57\% & \color{softred}+23.76\% & - & \color{softred}+11.02\% & \color{softred}+41.66\% & - & \color{softred}+44.29\% & \color{softred}+29.60\% & - & \color{softred}+3.45\% & \color{softred}+52.98\% & - & \color{softred}+5.36\% & \color{softred}+40.71\% & - \\
        \specialrule{1pt}{0.4ex}{0.4ex}
        \multicolumn{16}{c}{\textit{\textbf{RLCR-calibrated Qwen3-8B on DAPO-Math-14K}}} \\
        \midrule
        Base Model & 84.4 & 0.219 & 0.987 & 62.81 & 0.357 & 0.983 & 21.67 & 0.757 & 0.978 & 18.75 & 0.780 & 0.981 & 13.75 & 0.834 & 0.981 \\
        \rowcolor{lightgray} RLCR Model & 90.0 & 0.052 & 0.860 & 80.63 & 0.092 & 0.754 & 35.42 & 0.106 & 0.441 & 28.33 & 0.069 & 0.388 & 27.50 & 0.127 & 0.416 \\
        \rowcolor{lightblue} w/ \textbf{TTCL} & \textbf{91.6} & \textbf{0.035} & 0.881 & \textbf{89.38} & \textbf{0.086} & 0.811 & \textbf{45.42} & \textbf{0.101} & 0.425 & \textbf{30.42} & \textbf{0.063} & 0.390 & \textbf{28.33} & \textbf{0.101} & 0.385 \\
        \# Improv & \color{softred}+1.78\% & \color{softred}+31.74\% & - & \color{softred}+10.85\% & \color{softred}+6.19\% & - & \color{softred}+28.24\% & \color{softred}+4.38\% & - & \color{softred}+7.35\% & \color{softred}+8.73\% & - & \color{softred}+3.03\% & \color{softred}+20.71\% & - \\
        \bottomrule[1.6pt]
    \end{tabular}}
    \vspace{-2mm}
\end{table}

\begin{table}[t!]
    \centering
    \caption{\textbf{Out-of-domain performance of Brier score and mean confidence of TTCL on RLCR-calibrated models.} Improved results are highlighted in \textbf{bold}, with positive relative gains shown in \textcolor{softred}{red}. Mean confidence serves as a reference for analysis alongside other metrics. Since neither higher nor lower mean confidence is inherently better, we do not report its relative improvement.}
    \label{Tble:TTCL_OutDomain_BrierConf}
    \vspace{-3mm}
    \resizebox{\textwidth}{!}{
    \begin{tabular}{l|ccc|ccc|ccc|ccc|ccc}
    \toprule[1.6pt]
        \multirow{2}{*}{\textbf{Math-to-FactQA}} & \multicolumn{3}{c}{\textbf{SimpleQA}} & \multicolumn{3}{c}{\textbf{ChineseSimpleQA}} & \multicolumn{3}{c}{\textbf{NQ}} & \multicolumn{3}{c}{\textbf{HotpotQA}} & \multicolumn{3}{c}{\textbf{TriviaQA}} \\ \cmidrule(lr){2-4} \cmidrule(lr){5-7} \cmidrule(lr){8-10} \cmidrule(lr){11-13} \cmidrule(lr){14-16}
        ~ & \textbf{ACC $\uparrow$} & \textbf{Brier $\downarrow$} & \textbf{Conf} & \textbf{ACC $\uparrow$} & \textbf{Brier $\downarrow$} & \textbf{Conf} & \textbf{ACC $\uparrow$} & \textbf{Brier $\downarrow$} & \textbf{Conf} & \textbf{ACC $\uparrow$} & \textbf{Brier $\downarrow$} & \textbf{Conf} & \textbf{ACC $\uparrow$} & \textbf{Brier $\downarrow$} & \textbf{Conf} \\  
        \midrule
            \multicolumn{16}{c}{\textit{\textbf{RLCR-calibrated Qwen3-4B on DAPO-Math-14K}}} \\
        \midrule
        Base Model & 1.40 & 0.875 & 0.927 & 12.13 & 0.801 & 0.950 & 8.63 & 0.855 & 0.961 & 9.56 & 0.795 & 0.927 & 15.63 & 0.772 & 0.954 \\
        \rowcolor{lightgray} RLCR Model & 2.00 & 0.431 & 0.643 & 13.00 & 0.472 & 0.730 & 11.00 & 0.518 & 0.762 & 11.13 & 0.413 & 0.671 & 19.38 & 0.456 & 0.745 \\
        \rowcolor{lightblue} w/ \textbf{TTCL} & \textbf{2.70} & \textbf{0.134} & 0.304 & \textbf{14.03} & \textbf{0.204} & 0.447 & \textbf{12.81} & \textbf{0.159} & 0.356 & \textbf{13.50} & \textbf{0.176} & 0.428 & \textbf{20.38} & \textbf{0.207} & 0.494 \\
        \# Improv & \color{softred}+35.00\% & \color{softred}+68.92\% & - & \color{softred}+7.95\% & \color{softred}+56.77\% & - & \color{softred}+16.48\% & \color{softred}+69.35\% & - & \color{softred}+21.35\% & \color{softred}+57.56\% & - & \color{softred}+5.16\% & \color{softred}+54.55\% & - \\
        \specialrule{1pt}{0.4ex}{0.4ex}
            \multicolumn{16}{c}{\textit{\textbf{RLCR-calibrated Qwen3-8B on DAPO-Math-14K}}} \\
        \midrule
        Base Model & 2.20 & 0.886 & 0.928 & 15.77 & 0.815 & 0.978 & 9.63 & 0.876 & 0.983 & 12.75 & 0.810 & 0.950 & 24.13 & 0.743 & 0.989  \\
        \rowcolor{lightgray} RLCR Model & 2.80 & 0.656 & 0.796 & 18.17 & 0.571 & 0.824 & 12.56 & 0.666 & 0.862 & 14.13 & 0.553 & 0.777 & 25.69 & 0.559 & 0.862 \\
        \rowcolor{lightblue} w/ \textbf{TTCL} & \textbf{3.90} & \textbf{0.136} & 0.309 & \textbf{20.83} & \textbf{0.201} & 0.434 & \textbf{15.00} & \textbf{0.183} & 0.440 & \textbf{18.38} & \textbf{0.219} & 0.476 & \textbf{29.31} & \textbf{0.344} & 0.720 \\
        \# Improv & \color{softred}+39.29\% & \color{softred}+79.28\% & - & \color{softred}+14.68\% & \color{softred}+64.75\% & - & \color{softred}+19.40\% & \color{softred}+72.59\% & - & \color{softred}+30.09\% & \color{softred}+60.46\% & - & \color{softred}+14.11\% & \color{softred}+38.44\% & - \\
        \toprule[1.6pt]
        \multirow{2}{*}{\textbf{FactQA-to-Math}} & \multicolumn{3}{c}{\textbf{MATH500}} & \multicolumn{3}{c}{\textbf{AMC}} & \multicolumn{3}{c}{\textbf{AIME24}} & \multicolumn{3}{c}{\textbf{AIME25}} & \multicolumn{3}{c}{\textbf{AIME26}} \\ \cmidrule(lr){2-4} \cmidrule(lr){5-7} \cmidrule(lr){8-10} \cmidrule(lr){11-13} \cmidrule(lr){14-16}
        ~ & \textbf{ACC $\uparrow$} & \textbf{Brier $\downarrow$} & \textbf{Conf} & \textbf{ACC $\uparrow$} & \textbf{Brier $\downarrow$} & \textbf{Conf} & \textbf{ACC $\uparrow$} & \textbf{Brier $\downarrow$} & \textbf{Conf} & \textbf{ACC $\uparrow$} & \textbf{Brier $\downarrow$} & \textbf{Conf} & \textbf{ACC $\uparrow$} & \textbf{Brier $\downarrow$} & \textbf{Conf} \\  
        \midrule
            \multicolumn{16}{c}{\textit{\textbf{RLCR-calibrated Qwen3-4B on FactQA-10K}}} \\
        \midrule
        Base Model & 80.6 & 0.184 & 0.984 & 59.38 & 0.387 & 0.984 & 20.42 & 0.731 & 0.951 & 16.25 & 0.790 & 0.979 & 15.42 & 0.803 & 0.974 \\
        \rowcolor{lightgray} RLCR Model & 79.8 & 0.126 & 0.726 & 64.38 & 0.182 & 0.661 & 23.75 & 0.268 & 0.536 & 18.75 & 0.235 & 0.511 & 14.58 & 0.263 & 0.485 \\
        \rowcolor{lightblue} w/ \textbf{TTCL} & \textbf{85.2} & \textbf{0.075} & 0.827 & \textbf{80.00} & \textbf{0.119} & 0.749 & \textbf{25.00} & \textbf{0.153} & 0.333 & \textbf{24.17} & \textbf{0.187} & 0.354 & \textbf{16.67} & \textbf{0.172} & 0.371 \\
        \# Improv & \color{softred}+6.77\% & \color{softred}+40.26\% & - & \color{softred}+24.27\% & \color{softred}+34.58\% & - & \color{softred}+5.26\% & \color{softred}+42.80\% & - & \color{softred}+28.89\% & \color{softred}+20.57\% & - & \color{softred}+14.29\% & \color{softred}+34.59\% & - \\
        \bottomrule[1.6pt]
    \end{tabular}}
\end{table}

For factual question answering, we include benchmarks below:
\begin{itemize}[left=2pt]
\item \textbf{SimpleQA}~\citep{wei2024simpleqa,haas2025simpleqaVerified}: SimpleQA is a challenging factual question answering benchmark proposed by OpenAI, consisting of short fact-seeking questions. In this work, we use the 1,000-question SimpleQA-Verified version introduced by~\citet{haas2025simpleqaVerified}.

\item \textbf{ChineseSimpleQA}~\citep{he2025chinese}: ChineseSimpleQA extends factual question answering evaluation to Chinese and covers a diverse range of knowledge domains. We use it to evaluate calibration and reasoning performance across different language settings.

\item \textbf{NQ}~\citep{kwiatkowski2019natural}: NQ is a large-scale open-domain question answering benchmark constructed from real user queries with Wikipedia-based answers. Its questions cover diverse factual topics and formats. We randomly sample 200 examples from the test set for evaluation.

\item \textbf{HotpotQA}~\citep{yang2018hotpotqa}: HotpotQA is a multi-hop question answering benchmark that requires combining information from multiple supporting facts or documents. We randomly sample 200 examples from the validation set for evaluation.

\item \textbf{TriviaQA}~\citep{joshi2017triviaqa}: TriviaQA is a large-scale factual question answering benchmark constructed from trivia questions paired with independently collected evidence documents. We randomly sample 200 examples from the test set as our evaluation set.
\end{itemize}

\begin{figure}[t]
  \centering
  \includegraphics[width=0.24\textwidth]{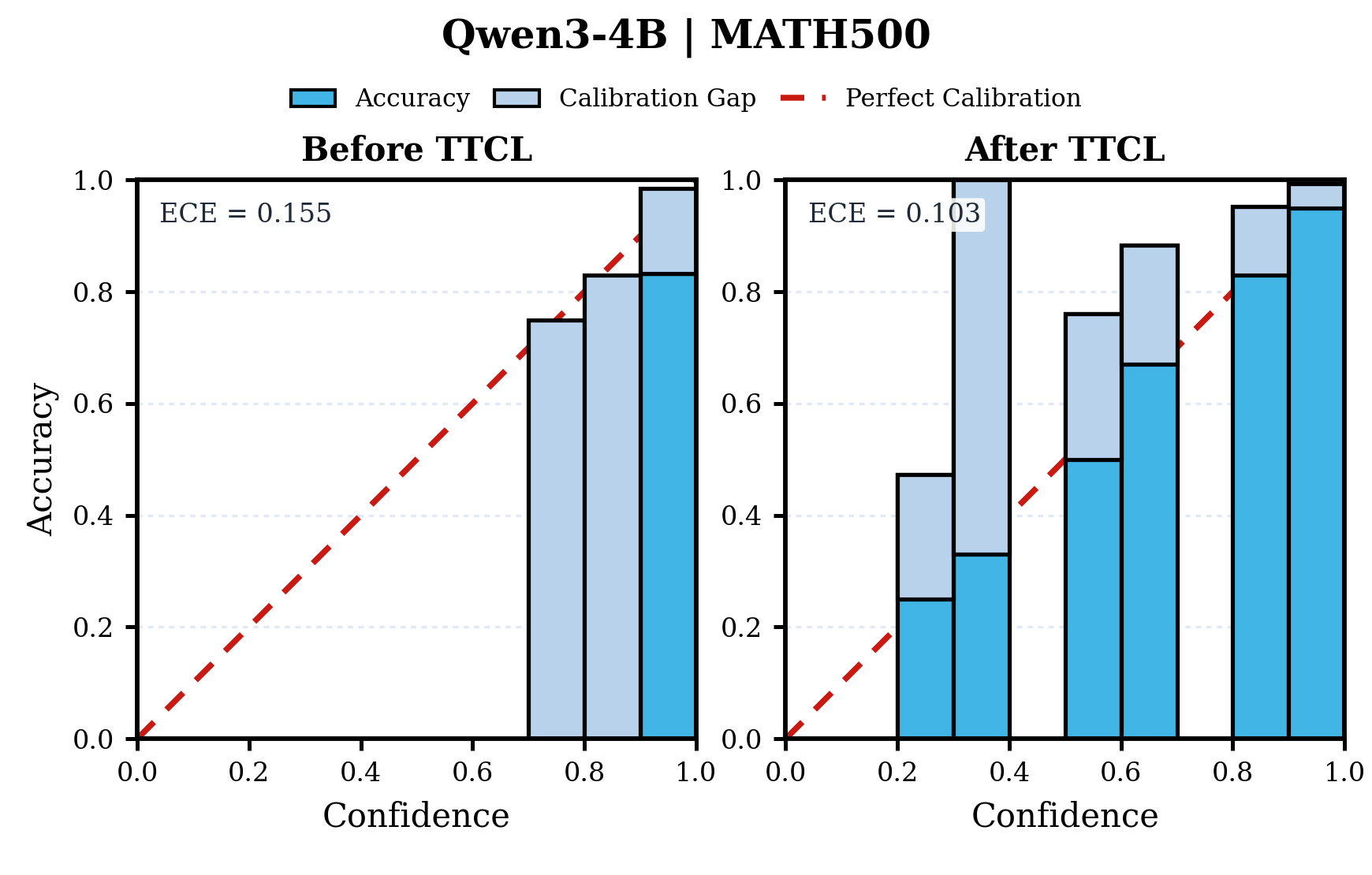}
  \includegraphics[width=0.24\textwidth]{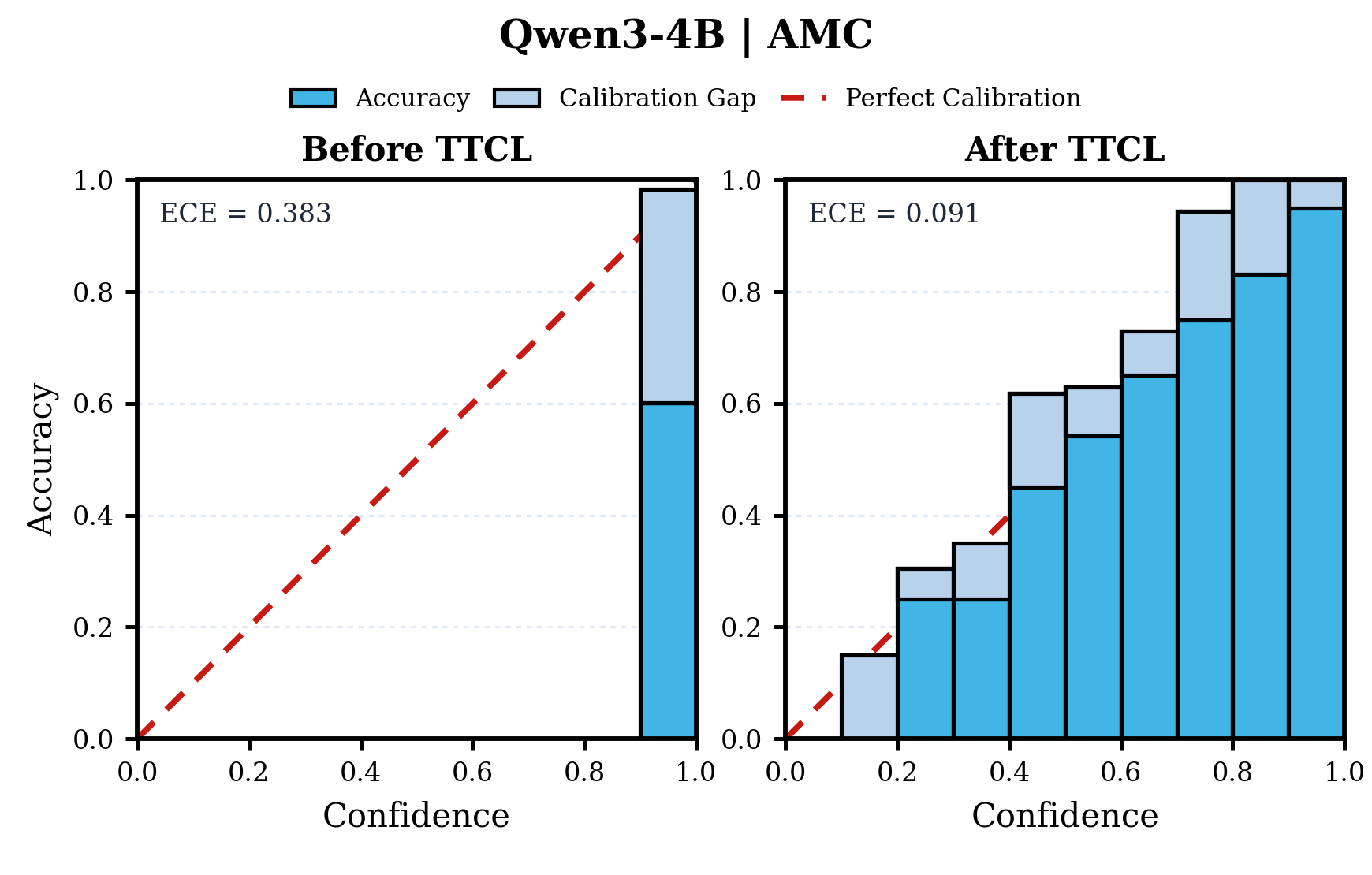}
  \includegraphics[width=0.24\textwidth]{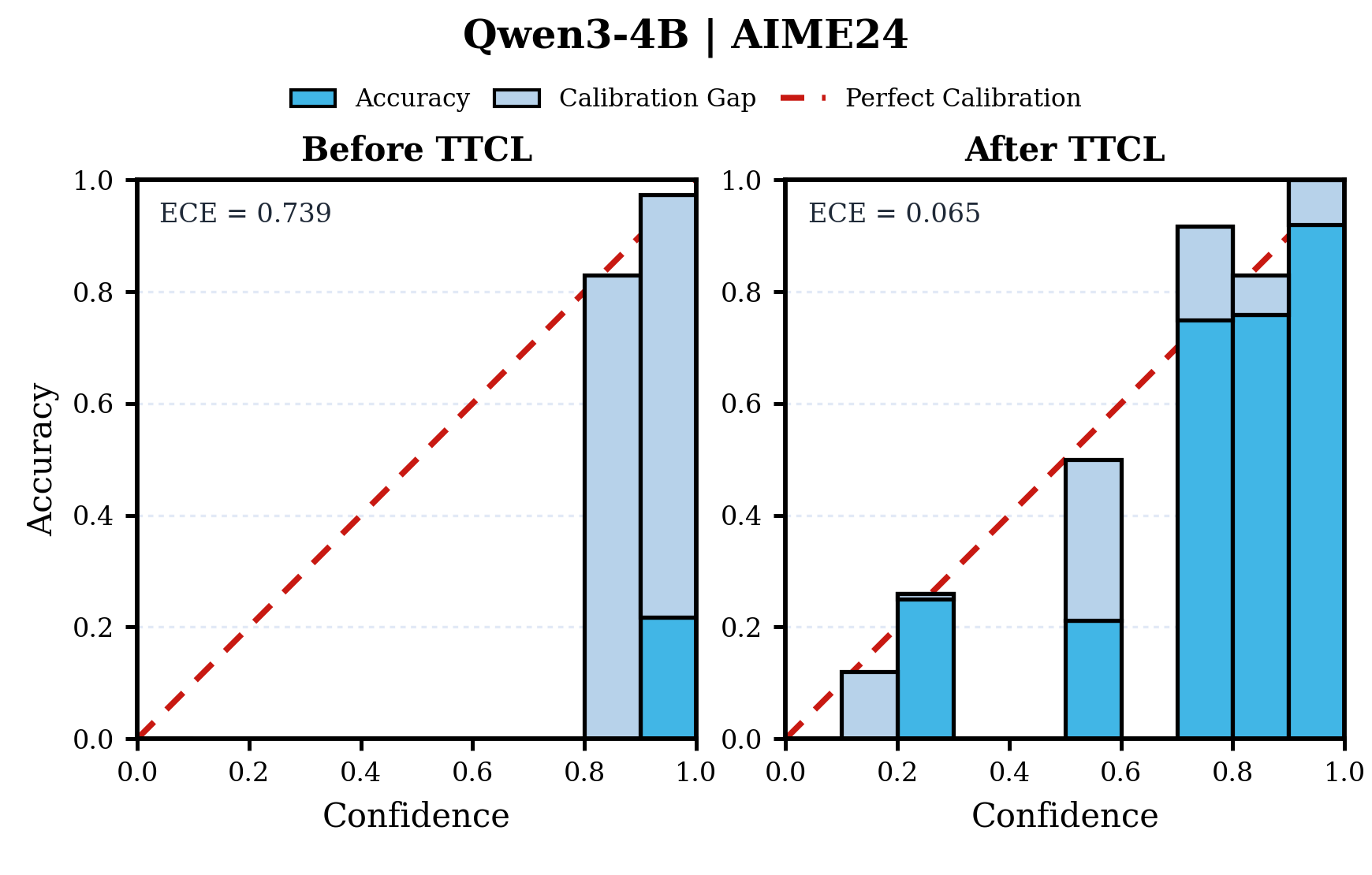} 
  \includegraphics[width=0.24\textwidth]{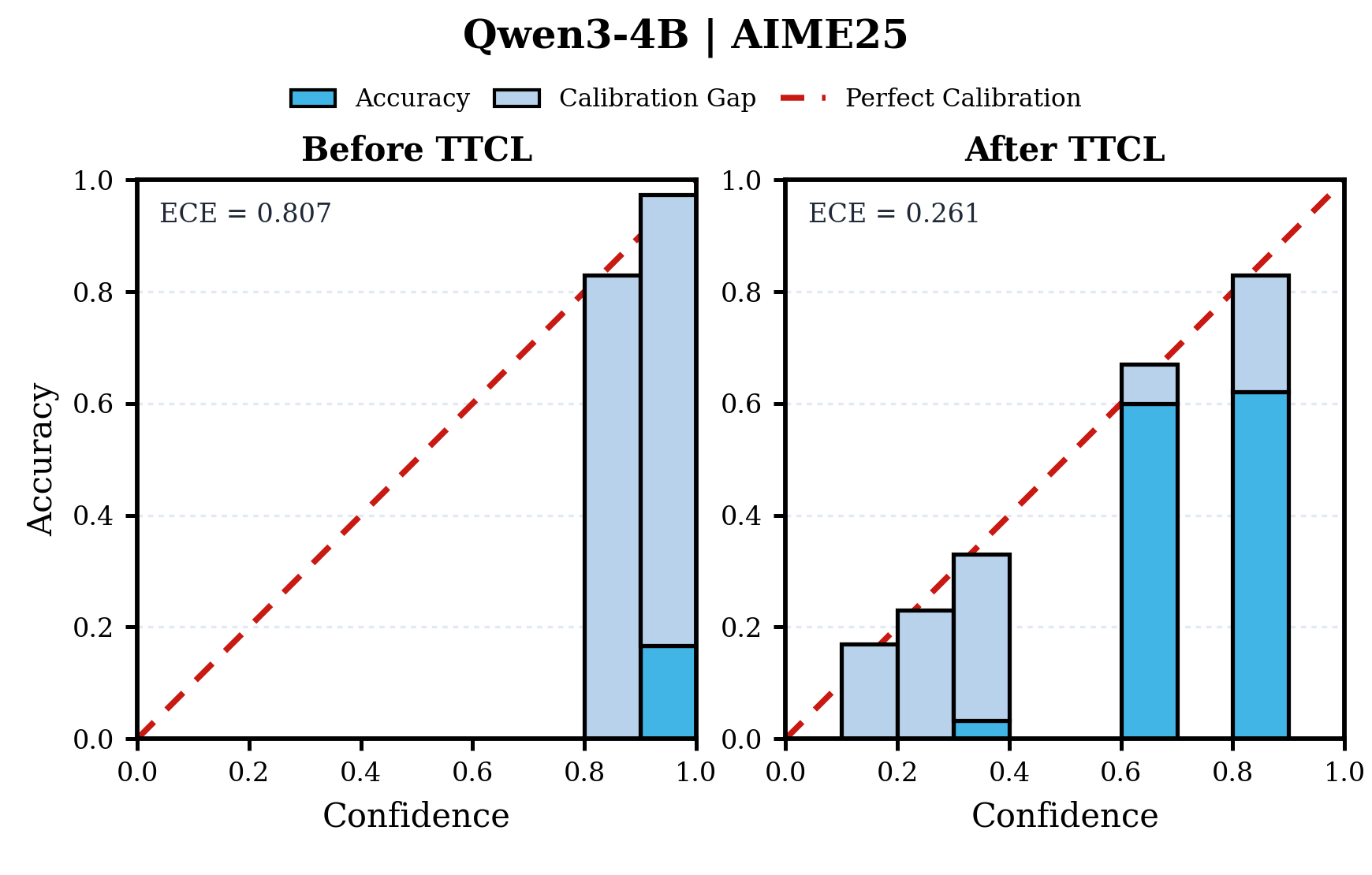} \\
  \includegraphics[width=0.24\textwidth]{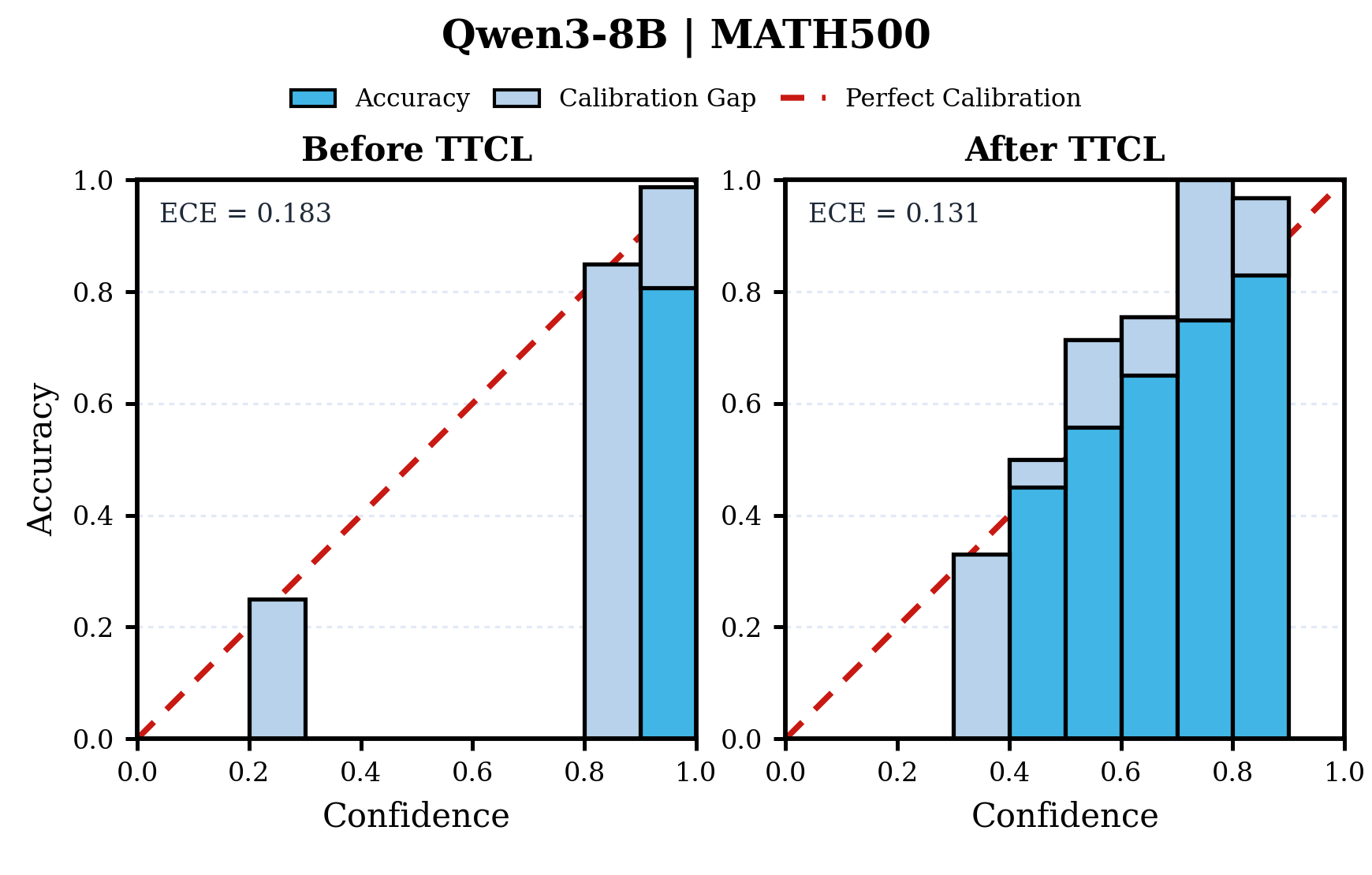}
  \includegraphics[width=0.24\textwidth]{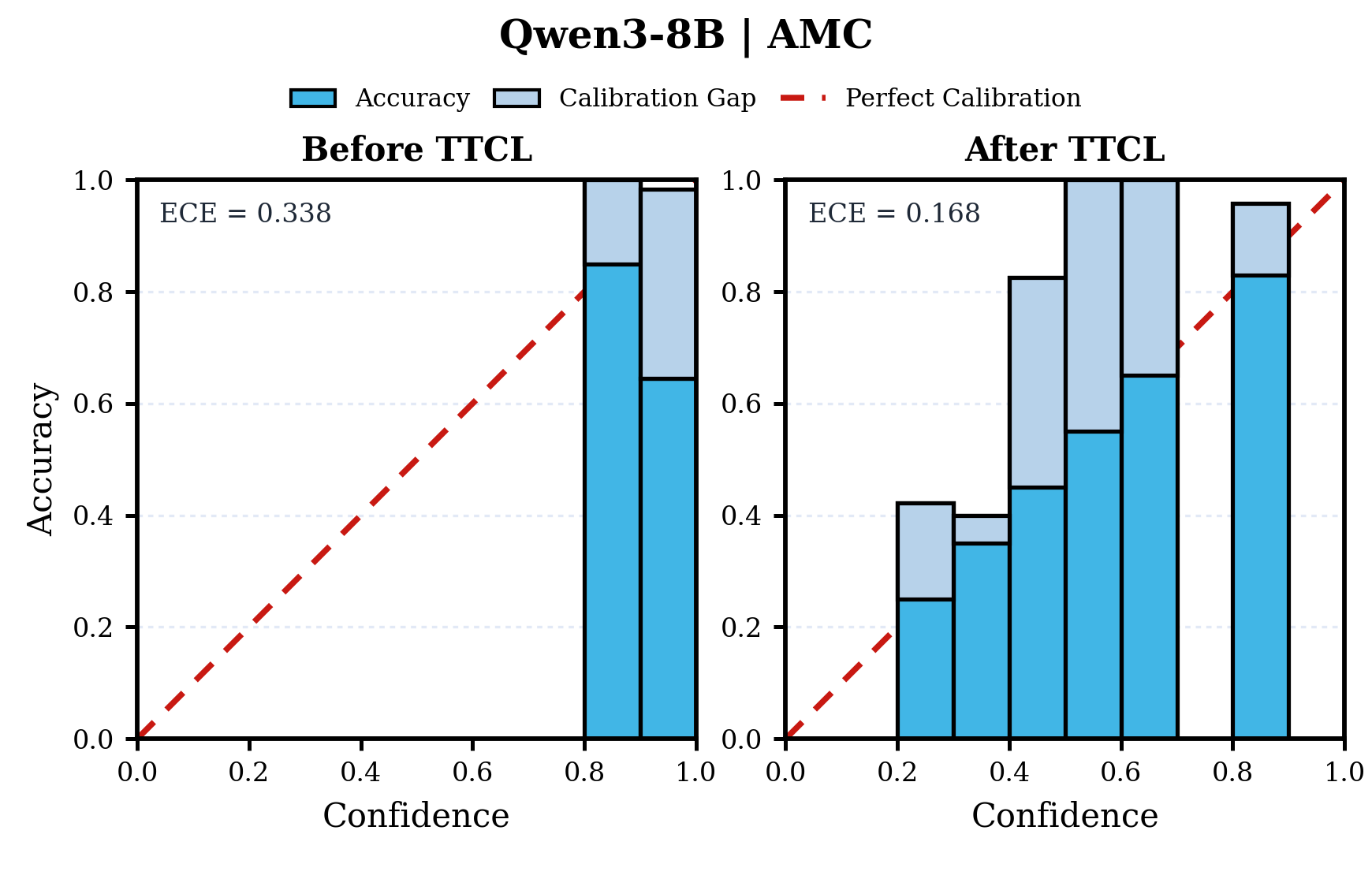} 
  \includegraphics[width=0.24\textwidth]{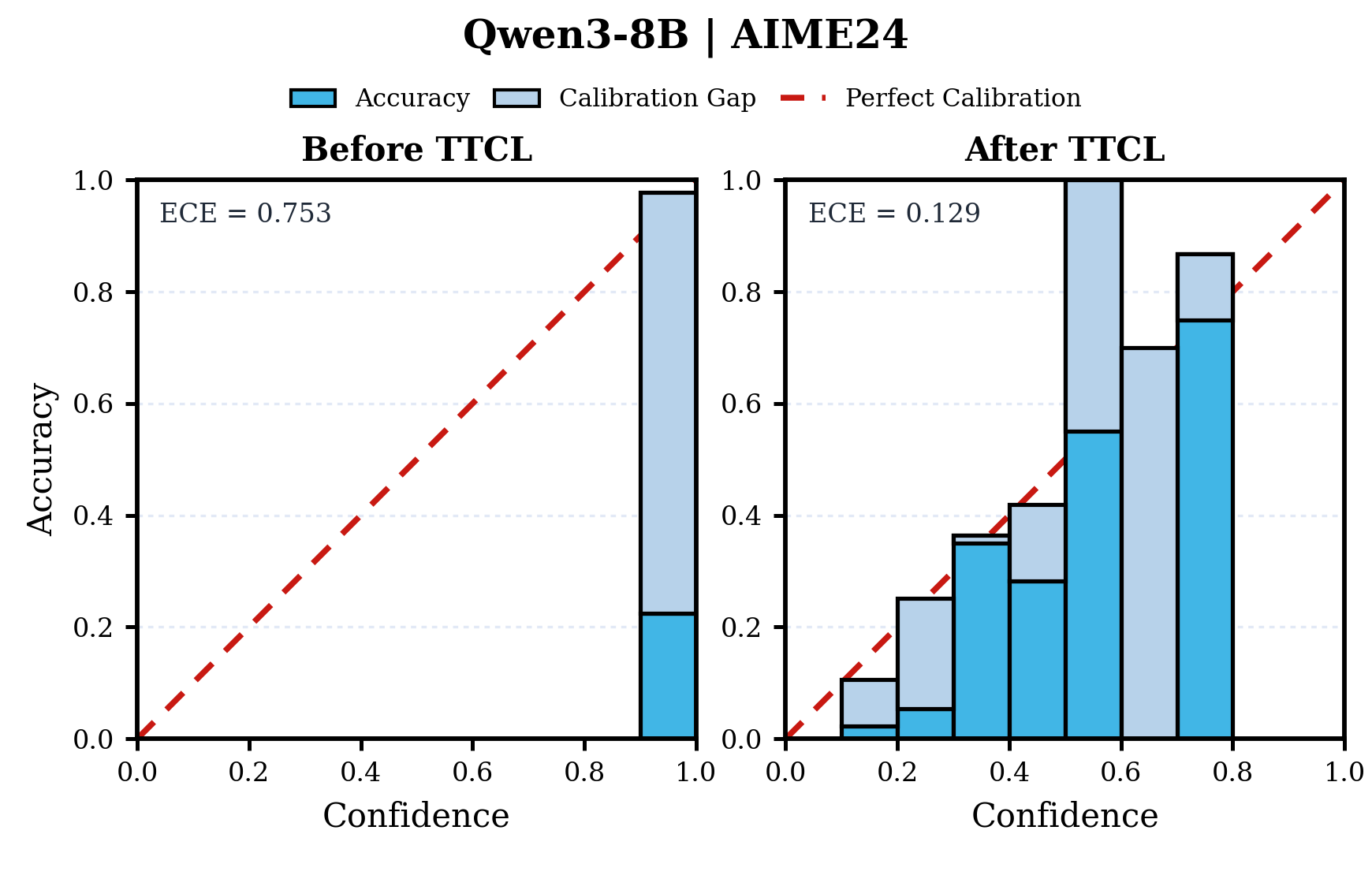}
  \includegraphics[width=0.24\textwidth]{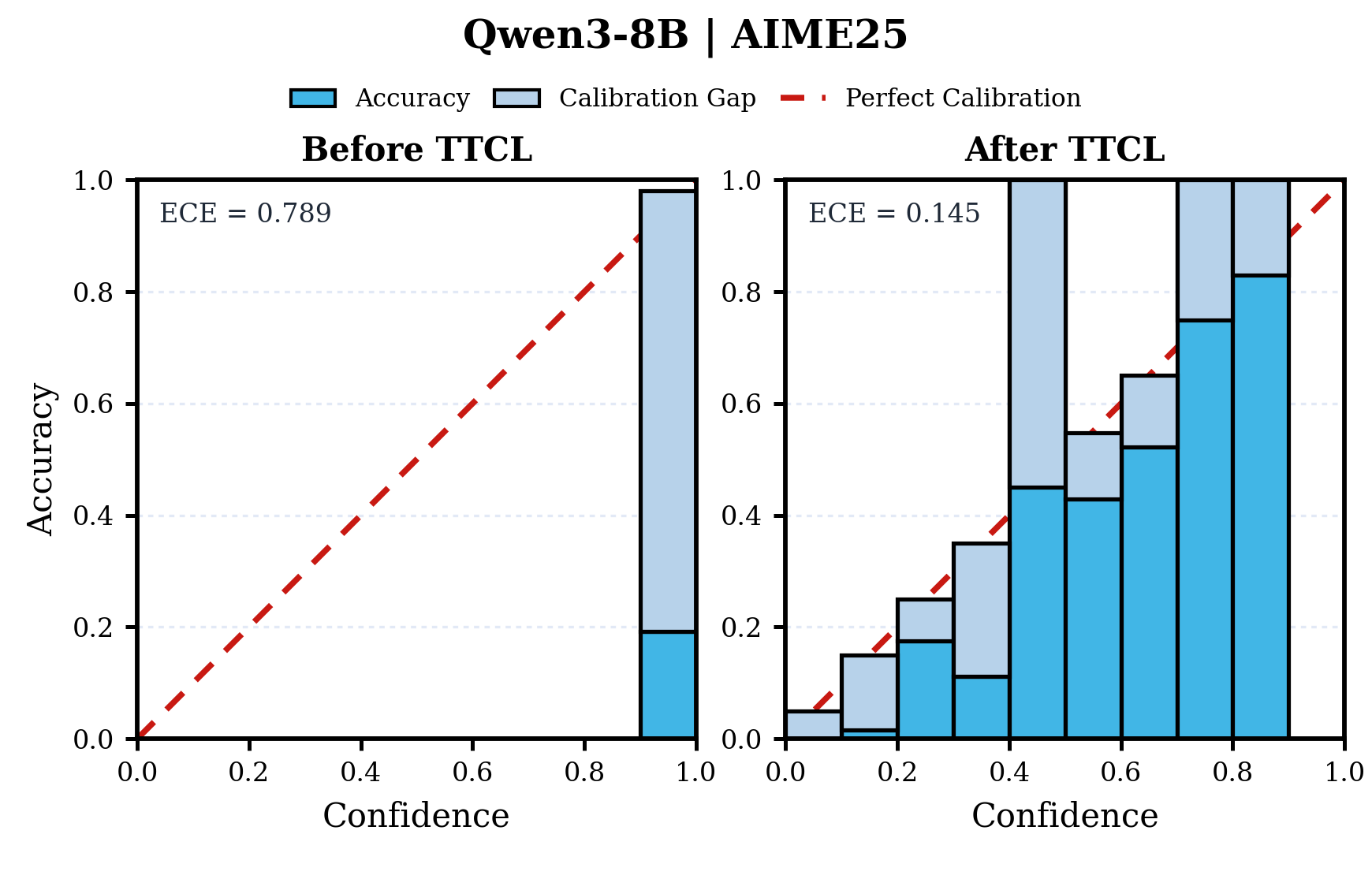} \\
  \includegraphics[width=0.24\textwidth]{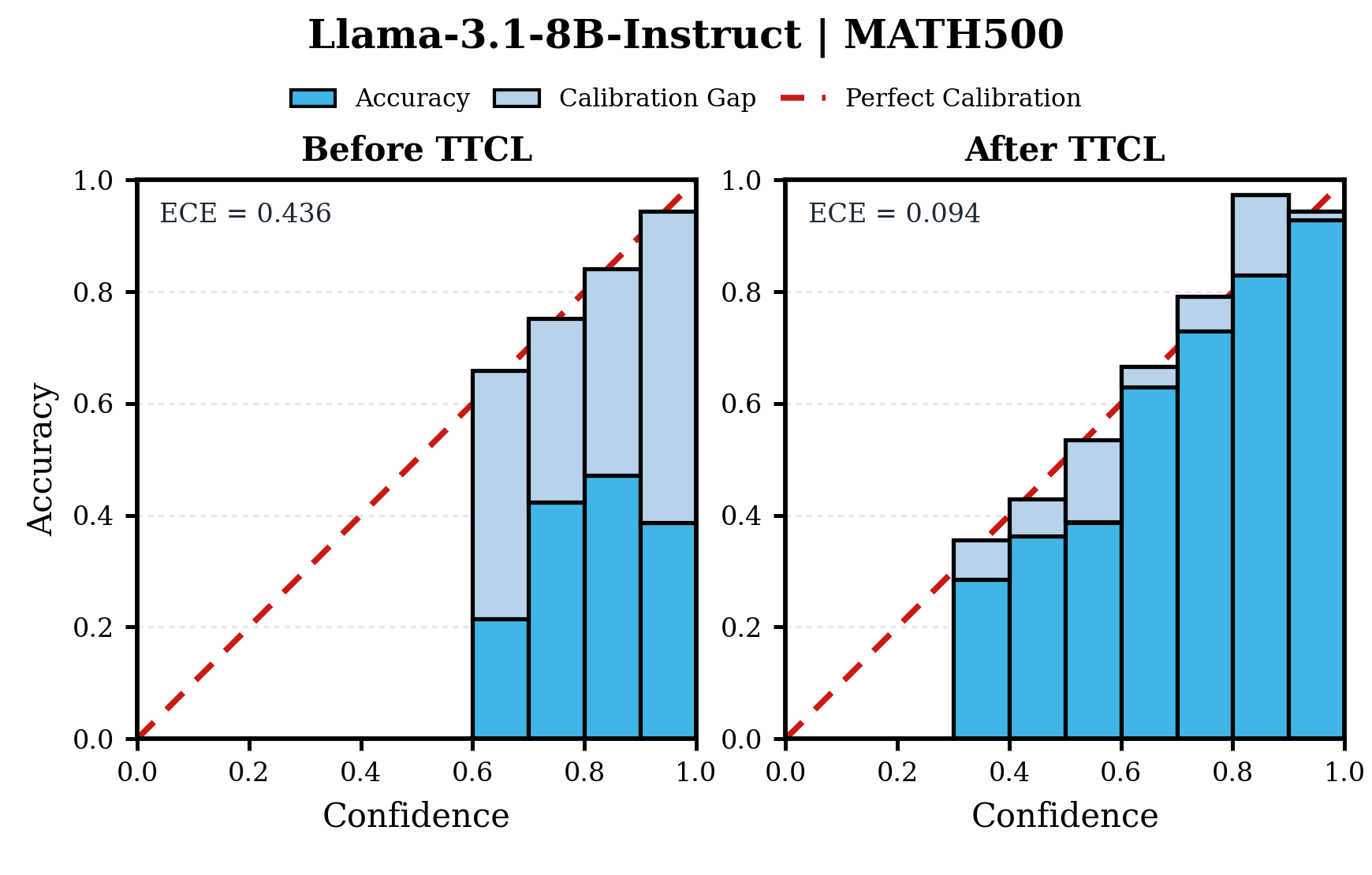}
  \includegraphics[width=0.24\textwidth]{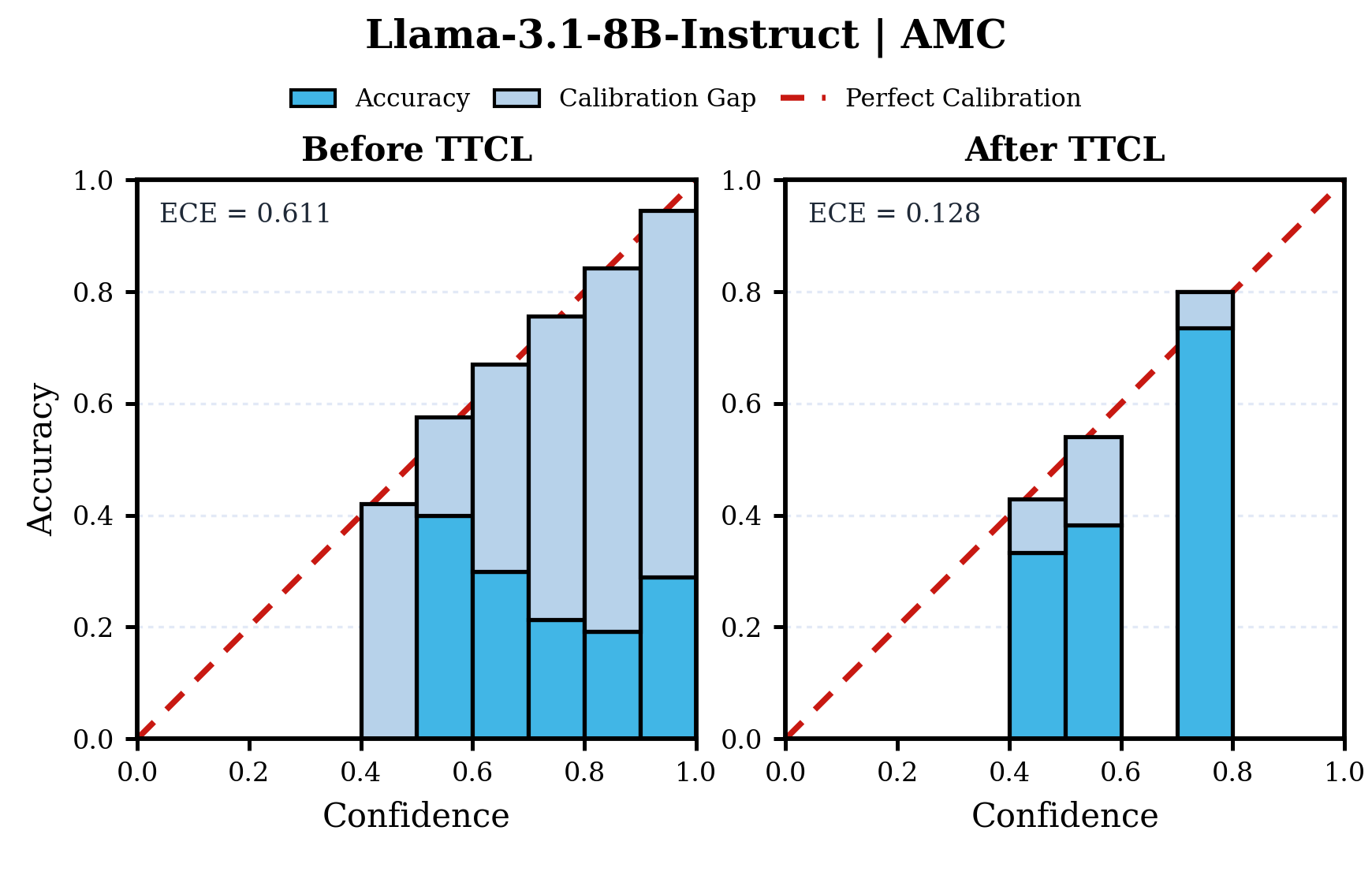}
  \includegraphics[width=0.24\textwidth]{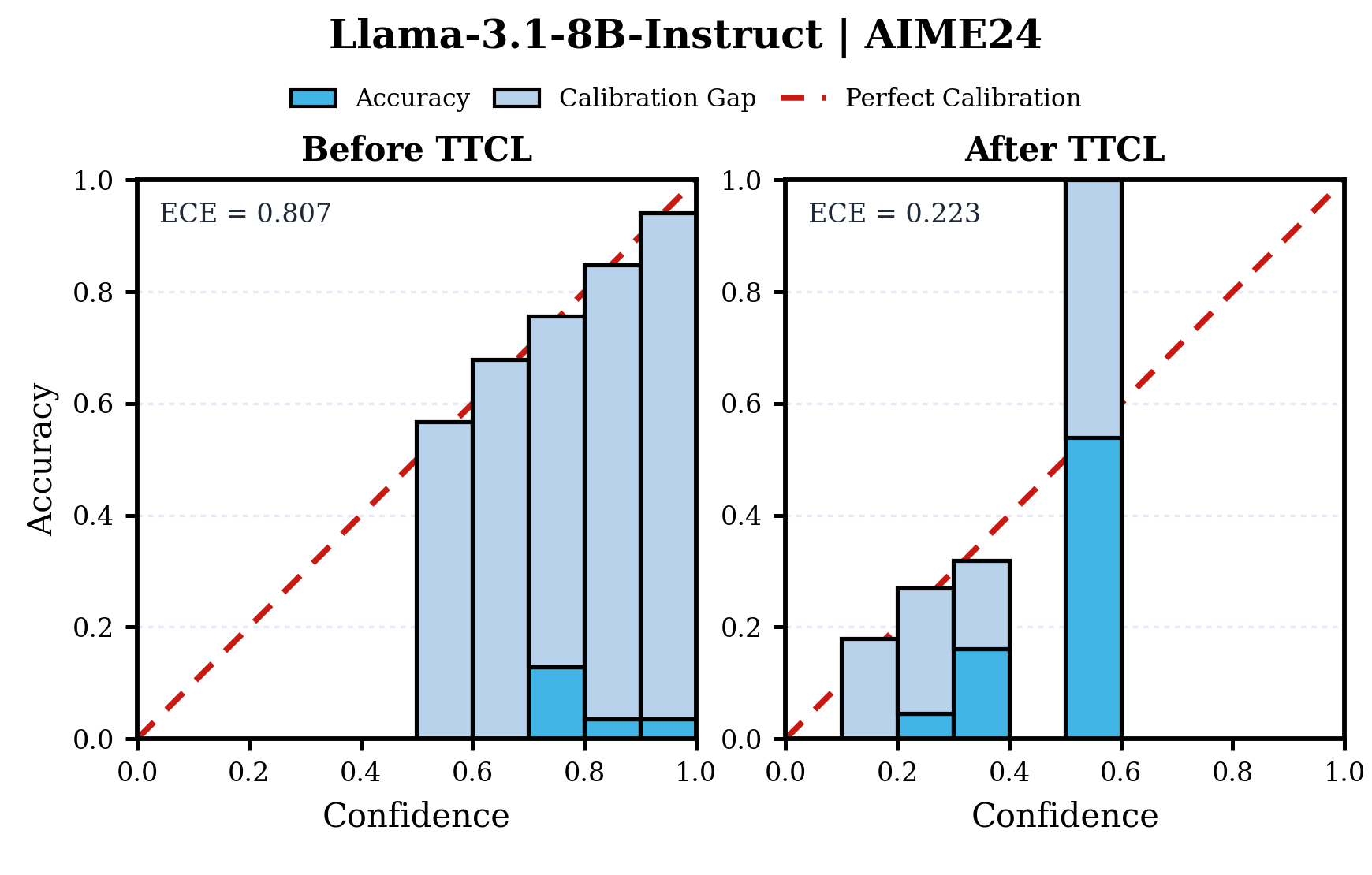}
  \includegraphics[width=0.24\textwidth]{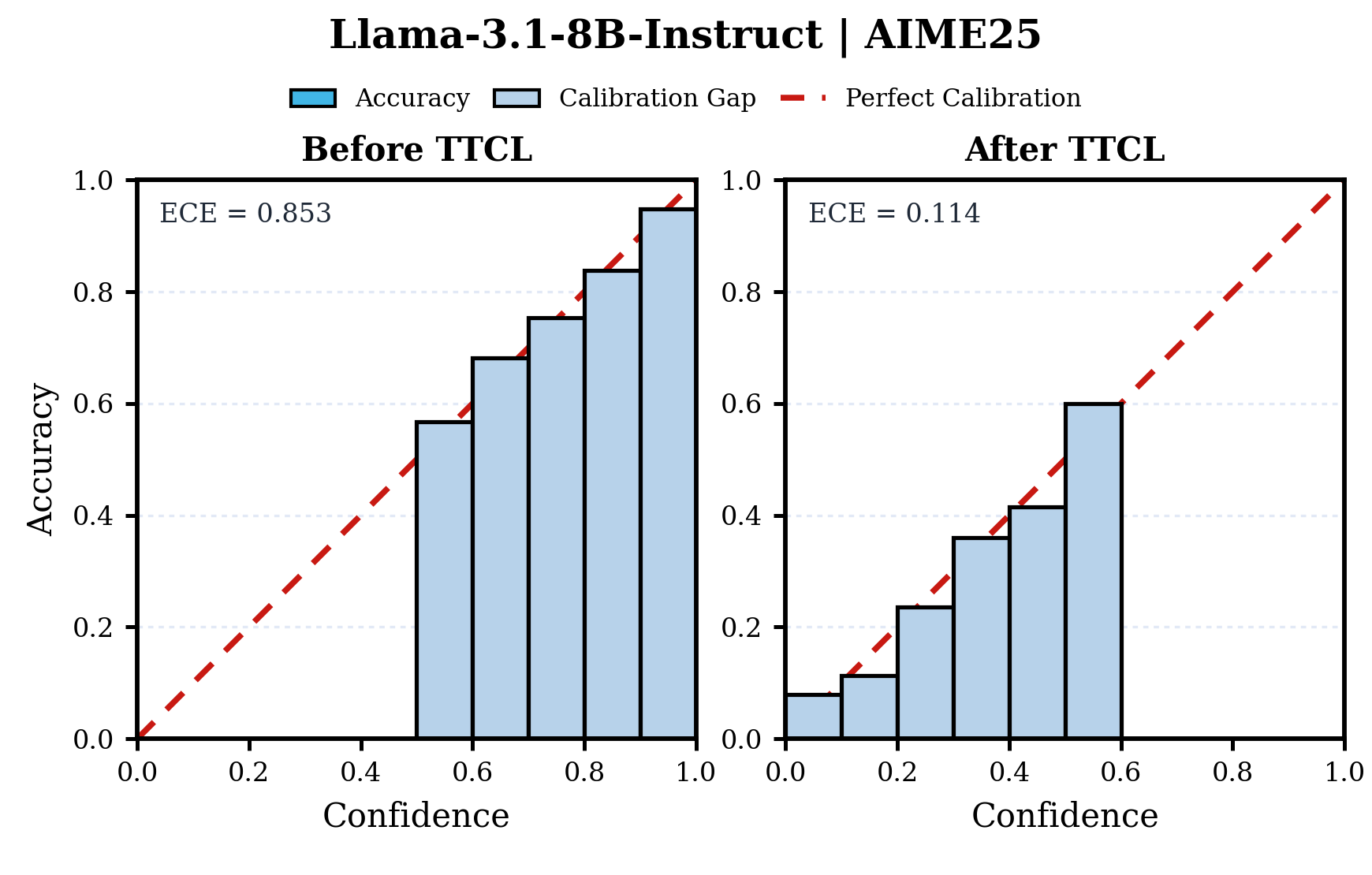}
  \vspace{-2mm}
  \caption{\textbf{Visualization of calibration effects before and after TTCL} on base models for mathematical benchmarks. TTCL effectively calibrates models to the target benchmarks, substantially narrowing the gap between confidence and empirical accuracy. \textcolor{accuracyblue}{Blue bars} denote empirical accuracy within each confidence bin, while \textcolor{calibrationgapblue}{light blue bars} indicate the deviation from perfect calibration. For Llama-3.1-8B-Instruct on AIME25, no \textcolor{accuracyblue}{accuracy bars} is shown because its empirical accuracy is 0.}
  \label{Fig:TTCL_BaseMath_Calibration_Visualize}
\end{figure}

\begin{figure}[t!]
  \centering
  \includegraphics[width=0.24\textwidth]{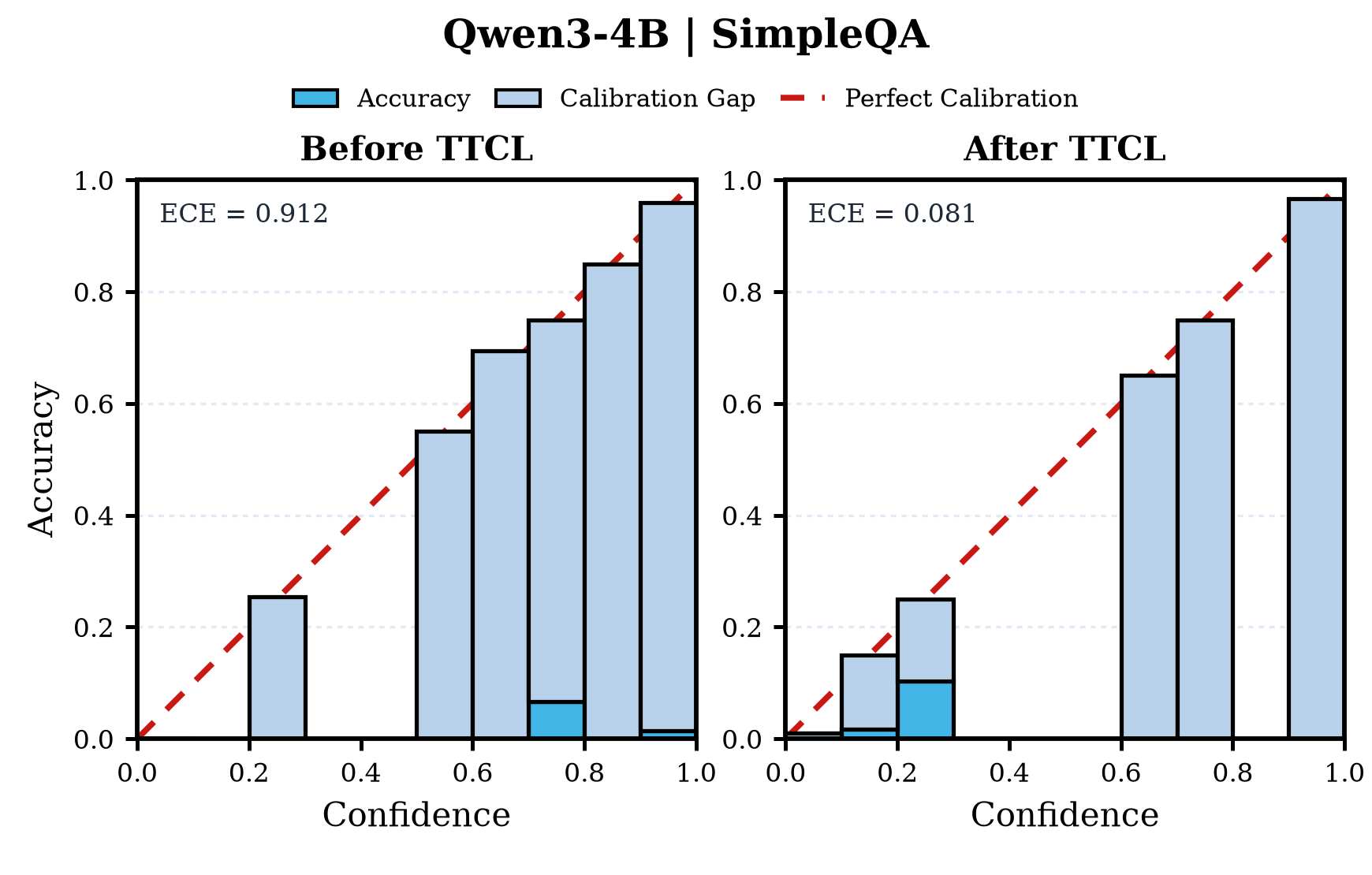}
  \includegraphics[width=0.24\textwidth]{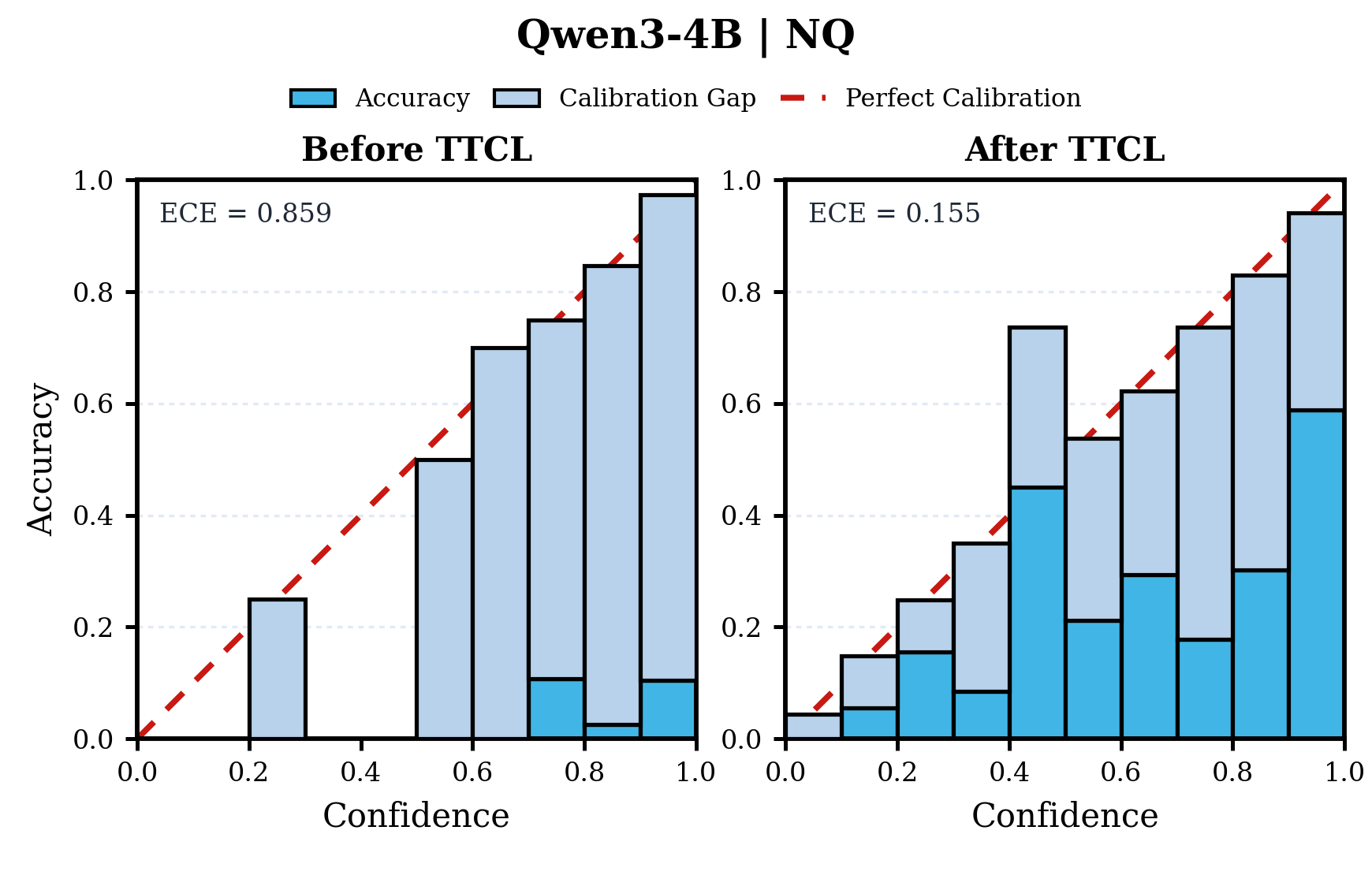}
  \includegraphics[width=0.24\textwidth]{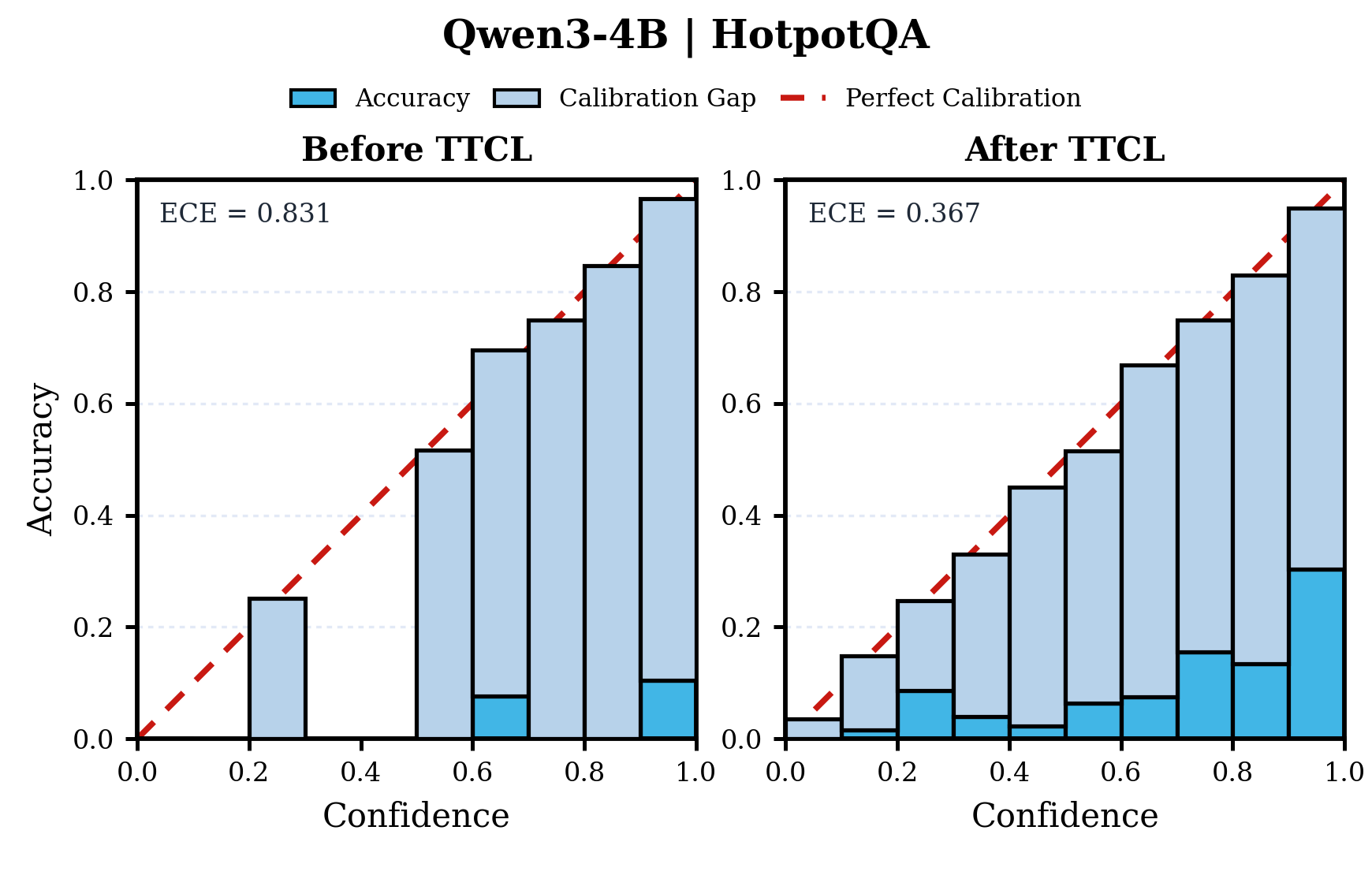}
  \includegraphics[width=0.24\textwidth]{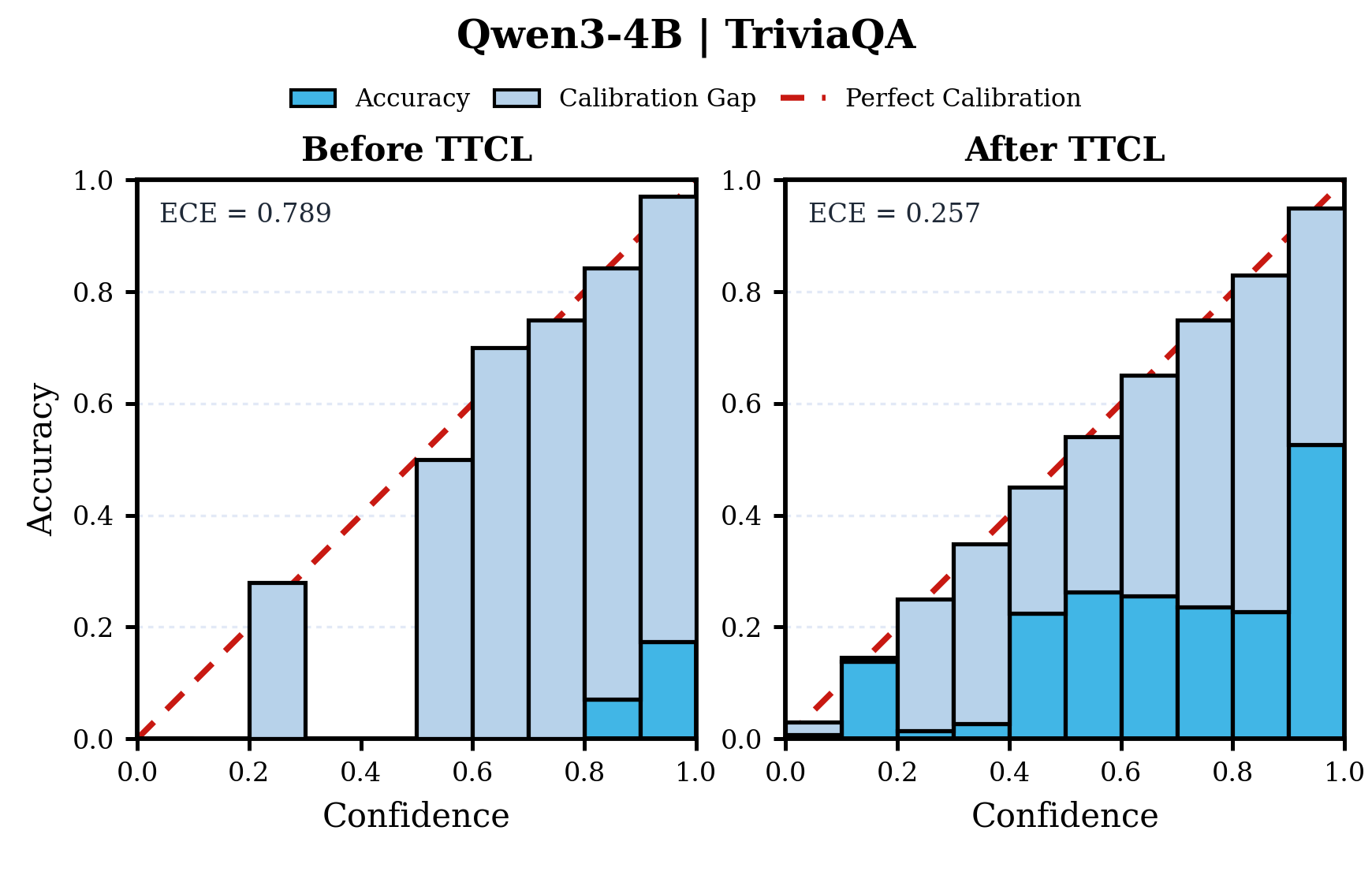} \\
  \includegraphics[width=0.24\textwidth]{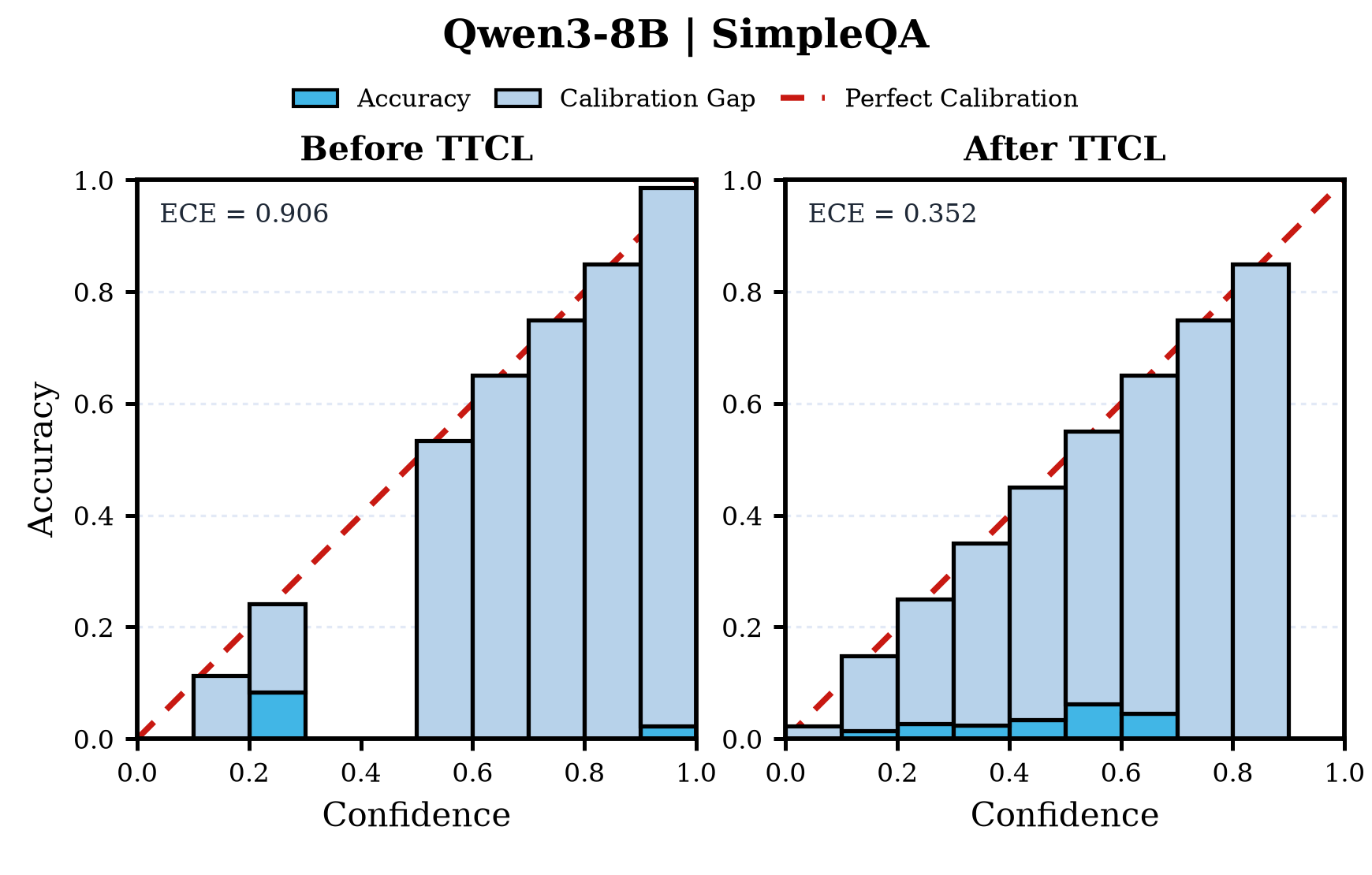}
  \includegraphics[width=0.24\textwidth]{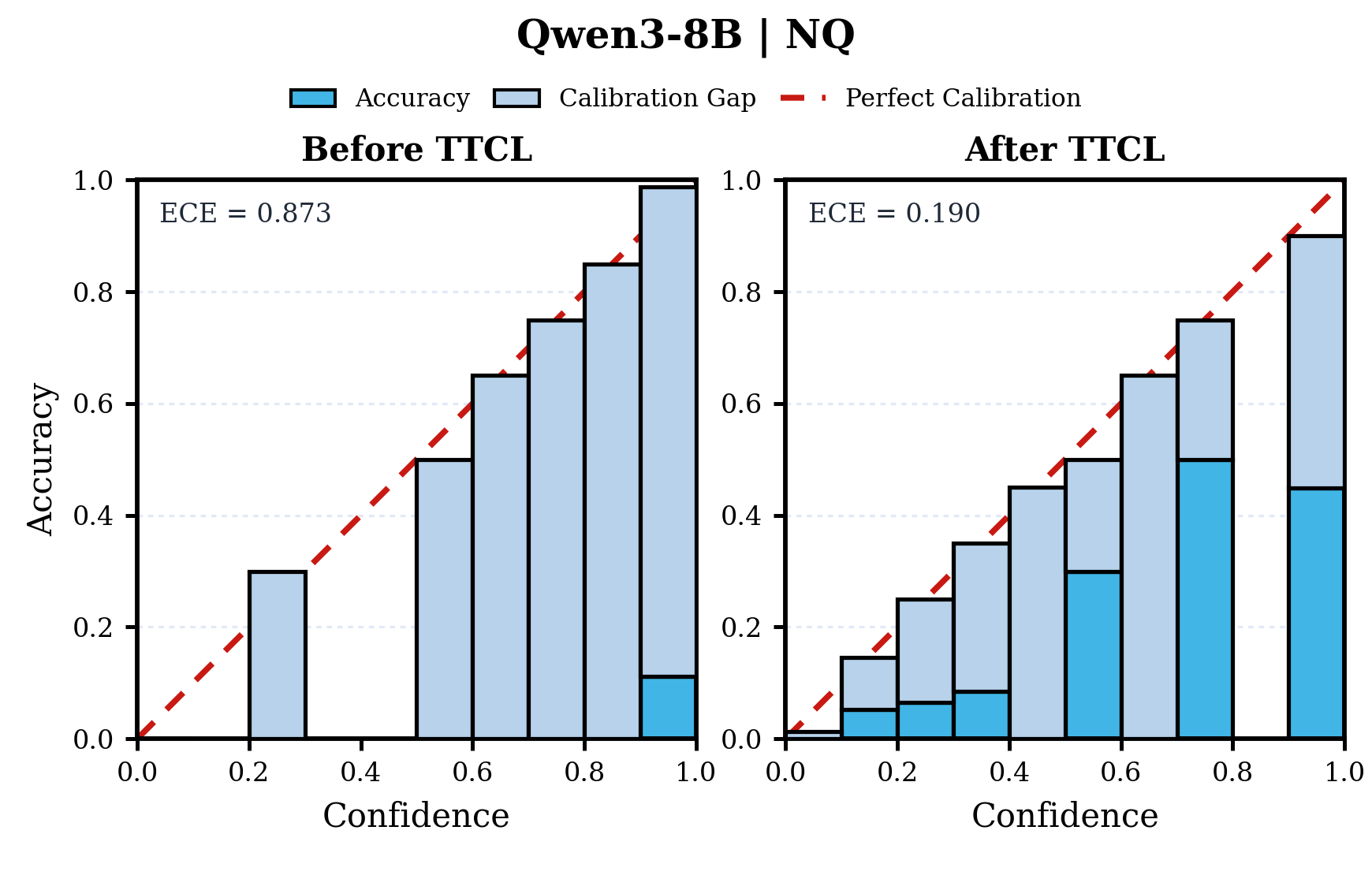}
  \includegraphics[width=0.24\textwidth]{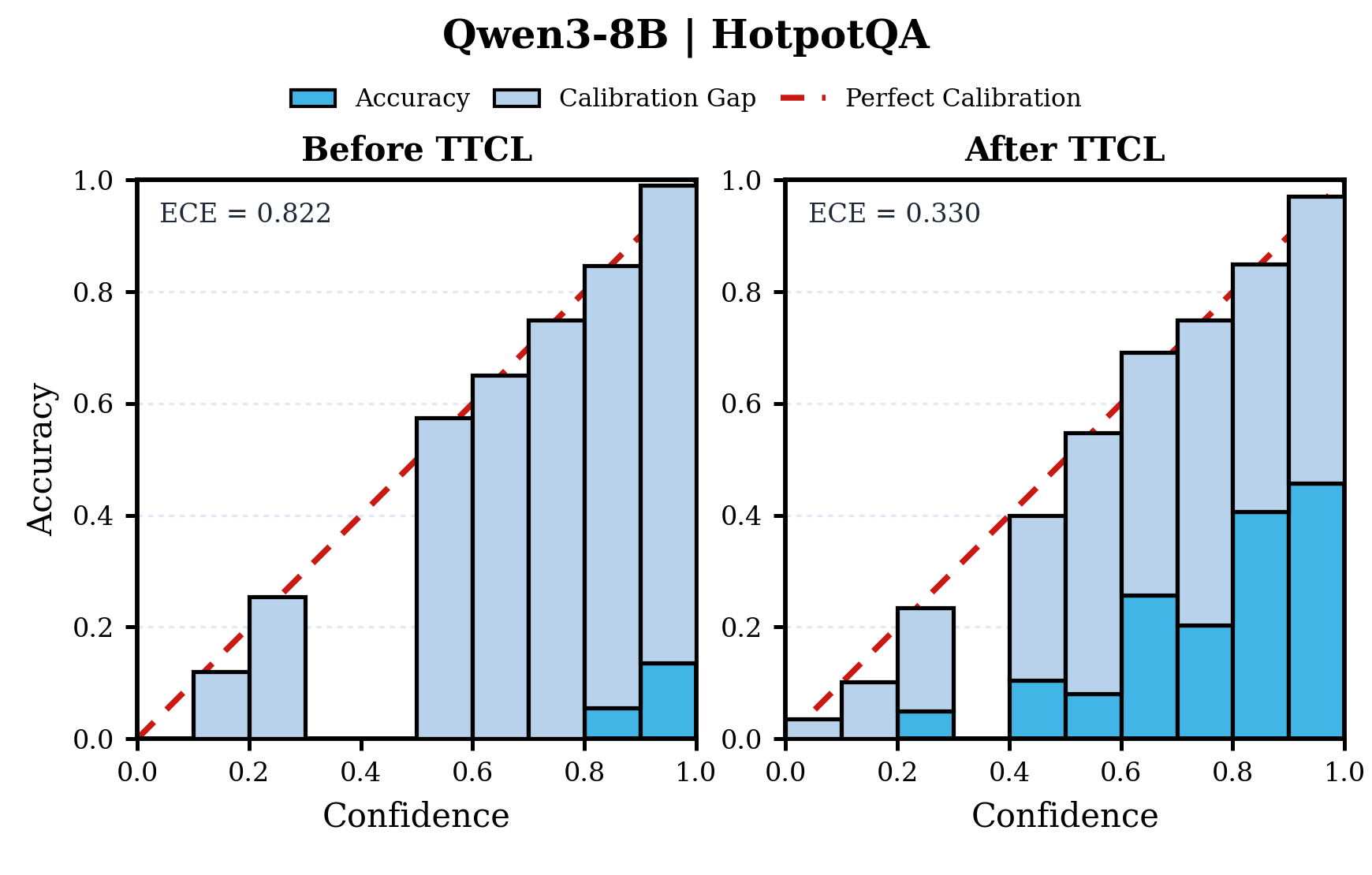}
  \includegraphics[width=0.24\textwidth]{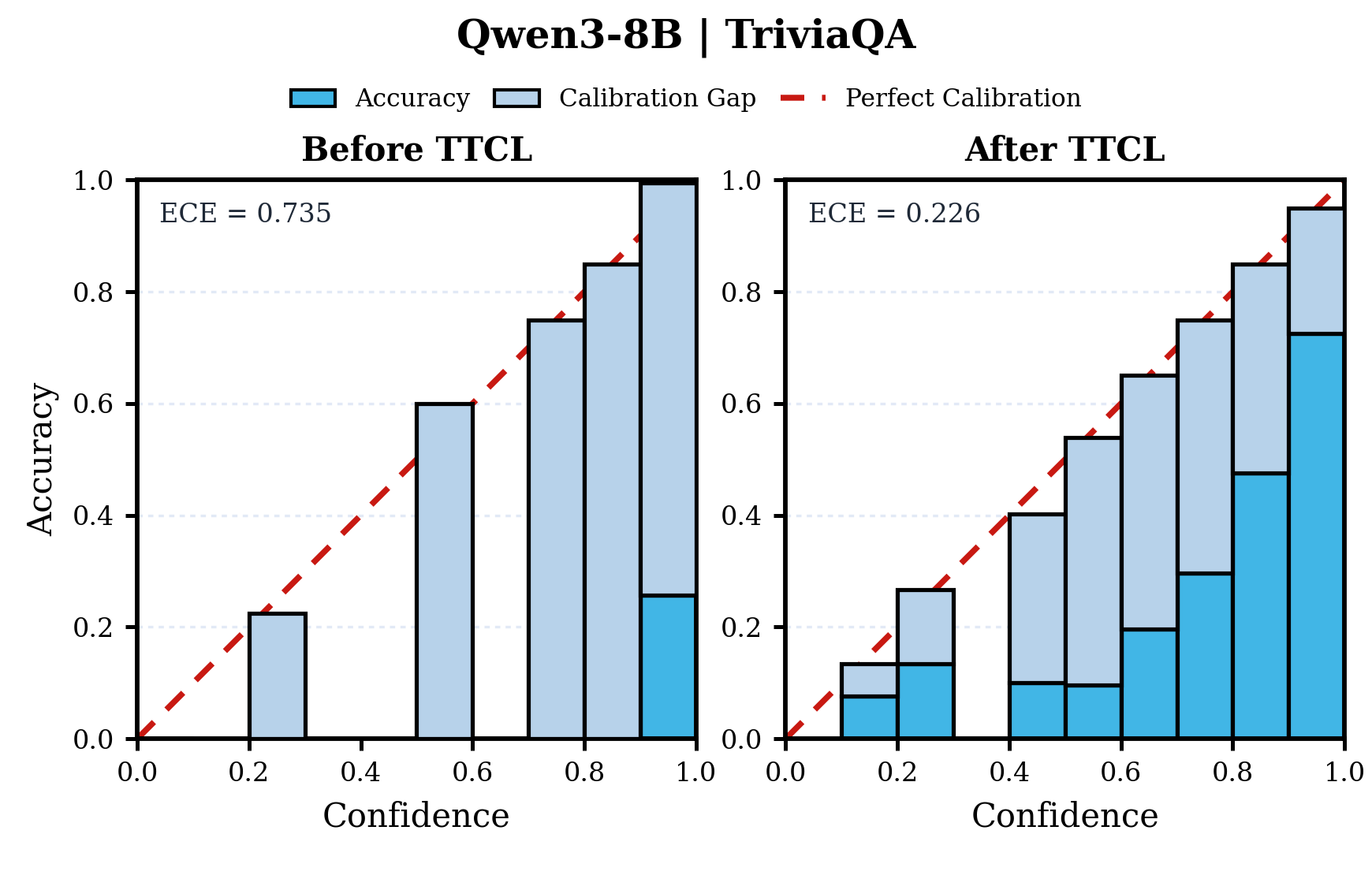} \\
  \includegraphics[width=0.24\textwidth]{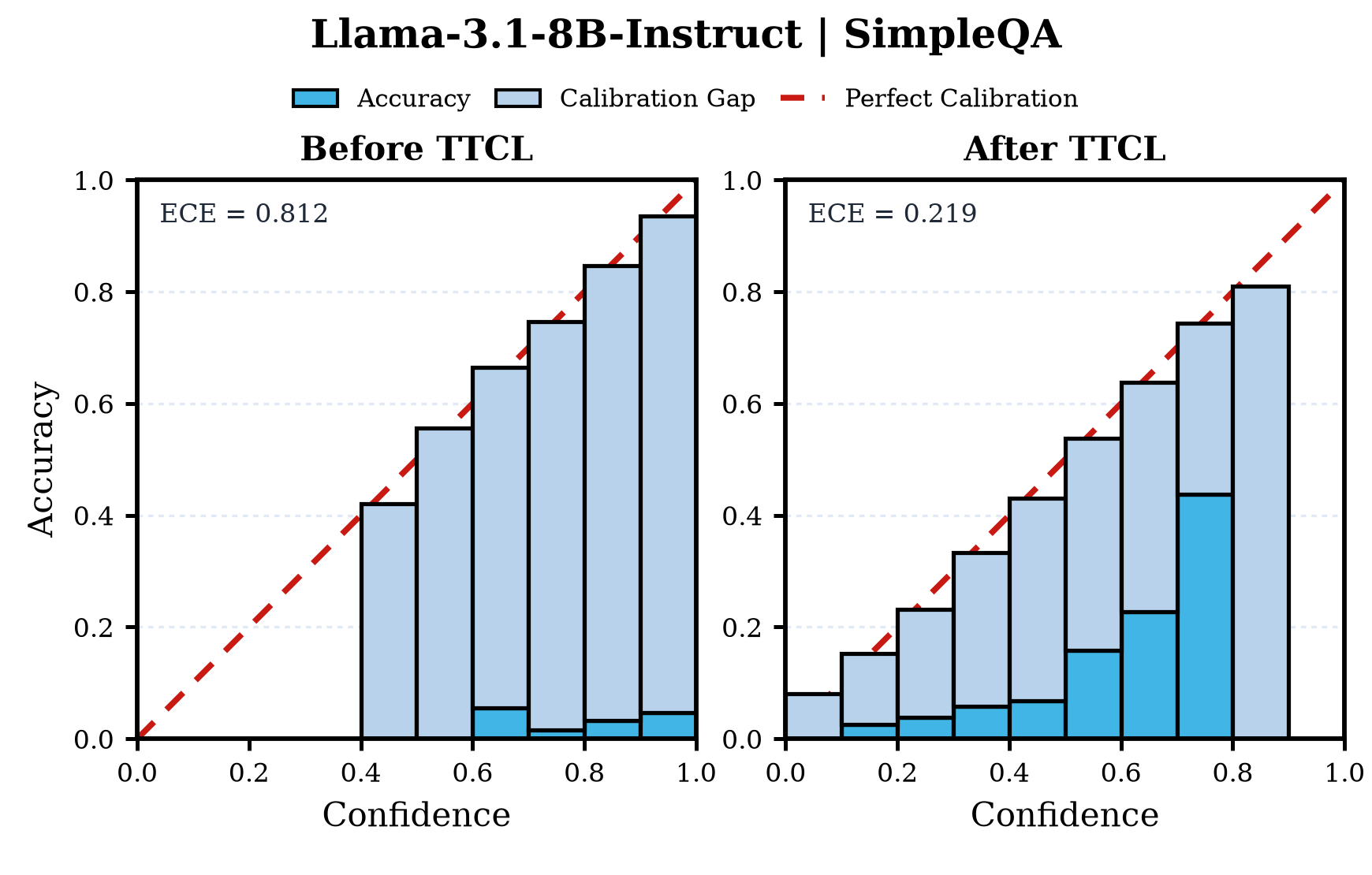}
  \includegraphics[width=0.24\textwidth]{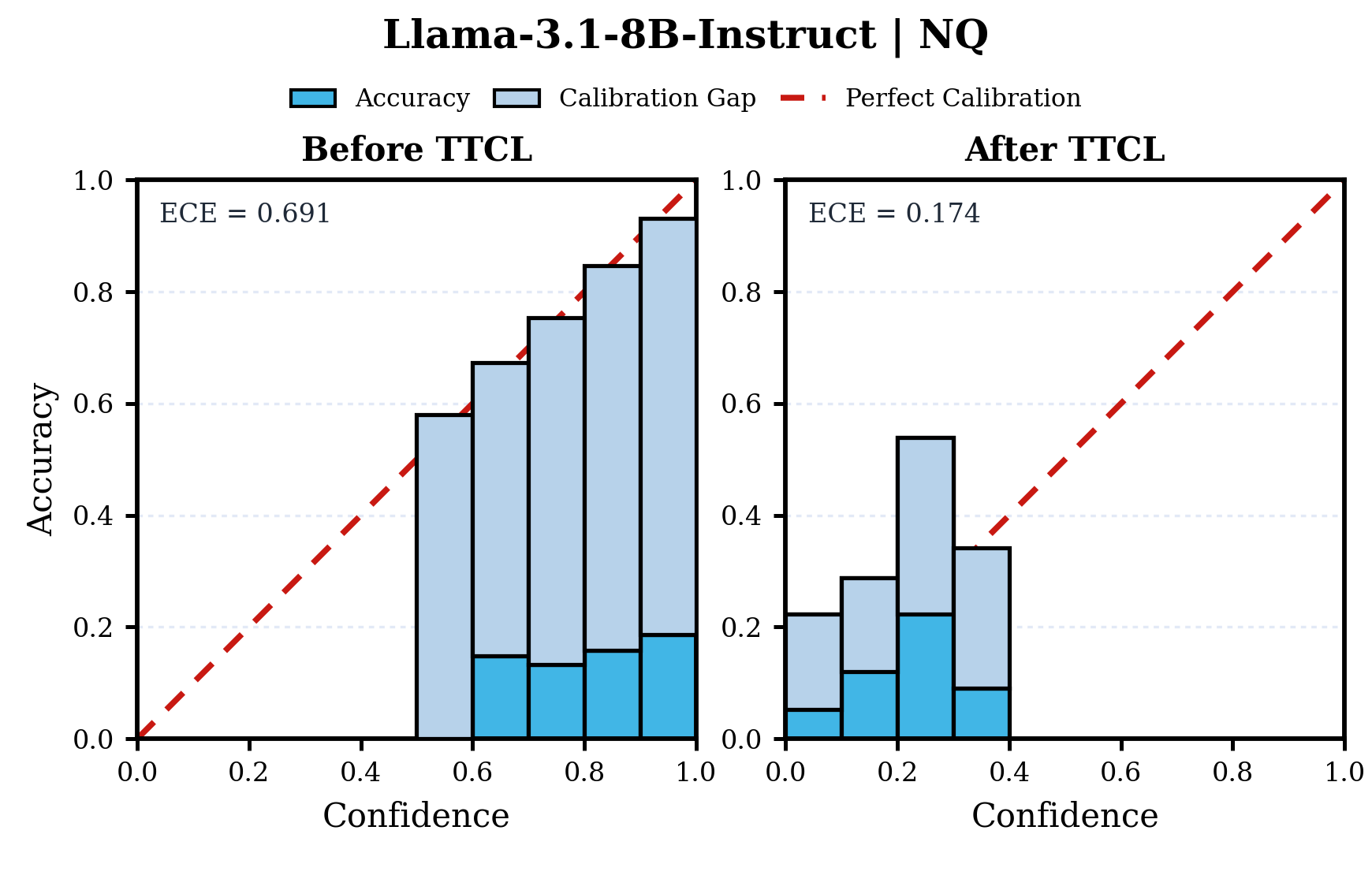}
  \includegraphics[width=0.24\textwidth]{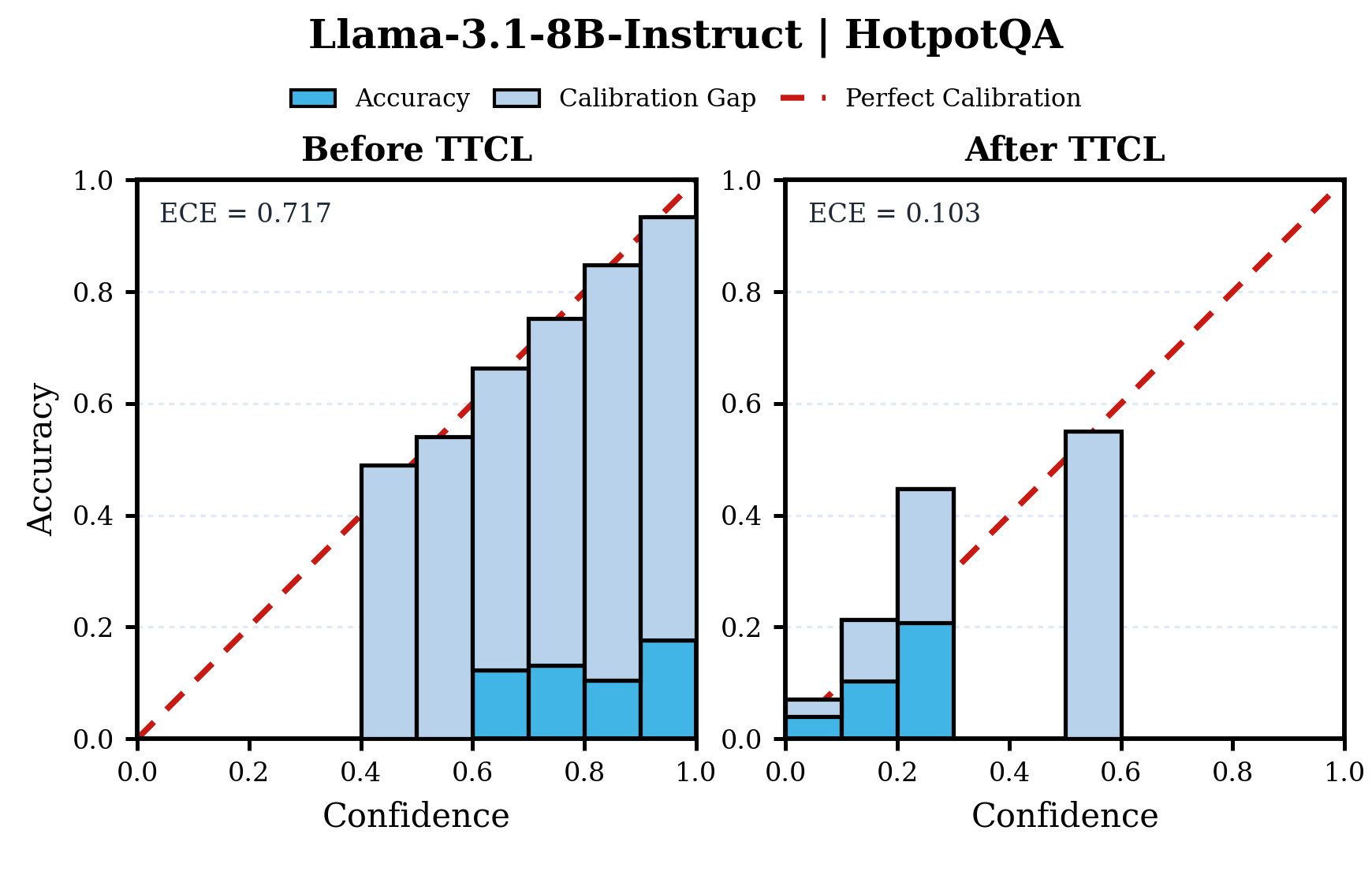}
  \includegraphics[width=0.24\textwidth]{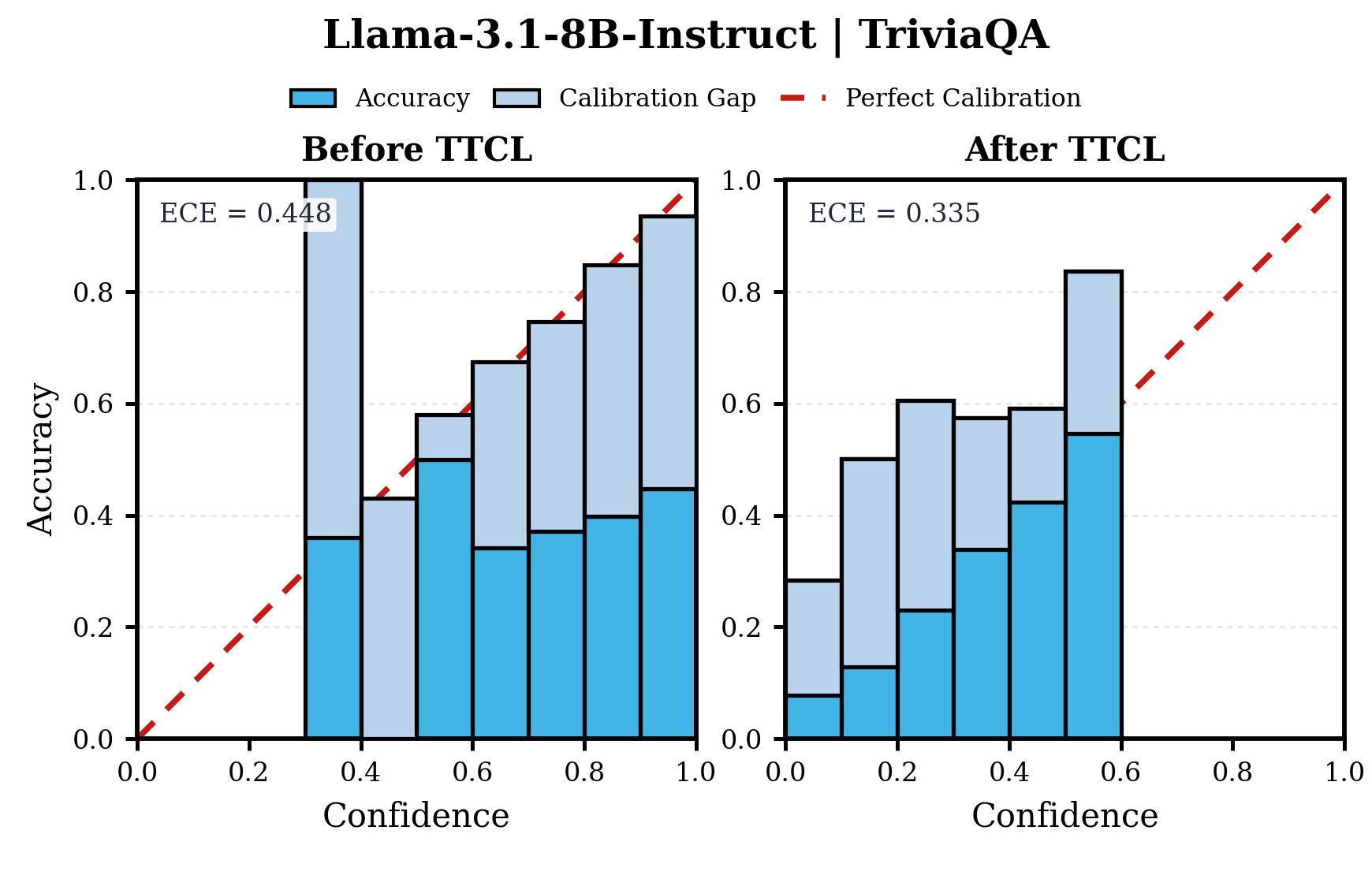}
  \vspace{-2mm}
  \caption{\textbf{Visualization of calibration effects before and after TTCL} on base models for factual QA benchmarks. TTCL effectively calibrates models to the target benchmarks, substantially narrowing the gap between confidence and empirical accuracy. \textcolor{accuracyblue}{Blue bars} denote empirical accuracy within each confidence bin, while \textcolor{calibrationgapblue}{light blue bars} indicate the deviation from perfect calibration.}
  \label{Fig:TTCL_BaseFact_Calibration_Visualize}
\end{figure}

Additionally, we train RLCR on two labeled datasets, DAPO-Math-14K and FactQA-10K, to obtain calibrated models. The two datasets are introduced as follows:
\begin{itemize}[left=2pt]
\item \textbf{DAPO-Math-14K}~\citep{yu2026dapo}: We construct DAPO-Math-14K using the 14.1K English samples from the DAPO-Math-17k-Processed dataset, a widely used training set for mathematical reasoning containing diverse and high-quality examples with annotated ground-truth answers.

\item \textbf{FactQA-10K}: We construct FactQA-10K by randomly sampling 5K training examples from HotpotQA~\citep{yang2018hotpotqa} and 5K training examples from TriviaQA~\citep{joshi2017triviaqa}, resulting in a total of 10K factual question answering samples.
\end{itemize}

The benchmark statistics are summarized in Table~\ref{Tble:dataset_details}. Additionally, we adopt several benchmarks from diverse domains, including ARC~\citep{clark2018arc}, GPQA-Diamond~\citep{rein2023gpqa}, IFEval~\citep{zhou2023ifeval}, MMLU-Pro~\citep{wang2024mmlupro}, and TruthfulQA~\citep{lin2022truthfulqa}, to examine whether TTCL affects the model's general capabilities beyond the target tasks. These benchmarks cover scientific reasoning, instruction following, multi-domain knowledge and truthfulness. We evaluate the general capabilities of the models on these benchmarks using LightEval~\citep{lighteval}, which is a widely used evaluation platform that supports a broad range of LLM benchmarks.

\begin{figure}[t]
  \centering
  \includegraphics[width=0.19\textwidth]{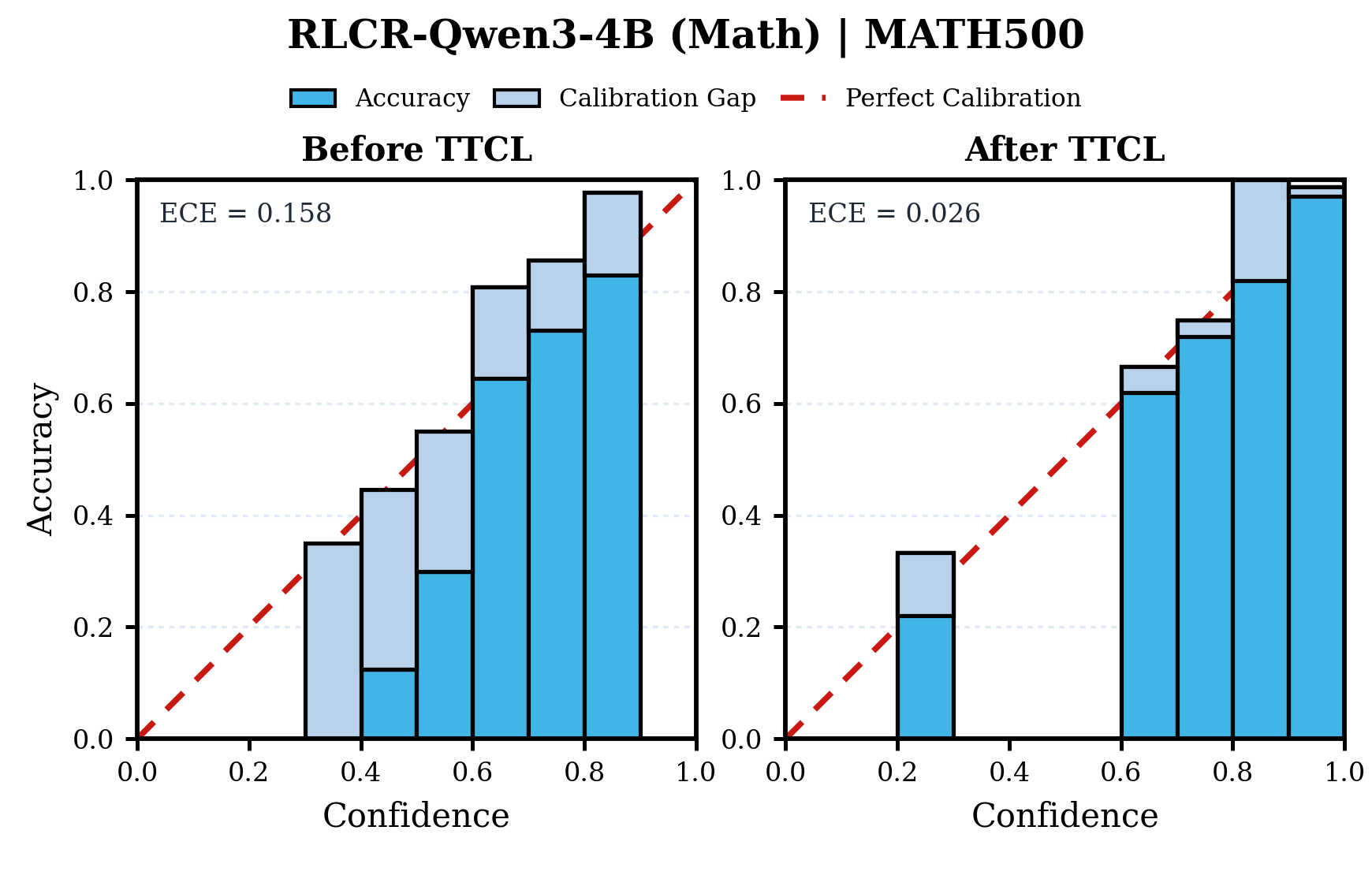}
  \includegraphics[width=0.19\textwidth]{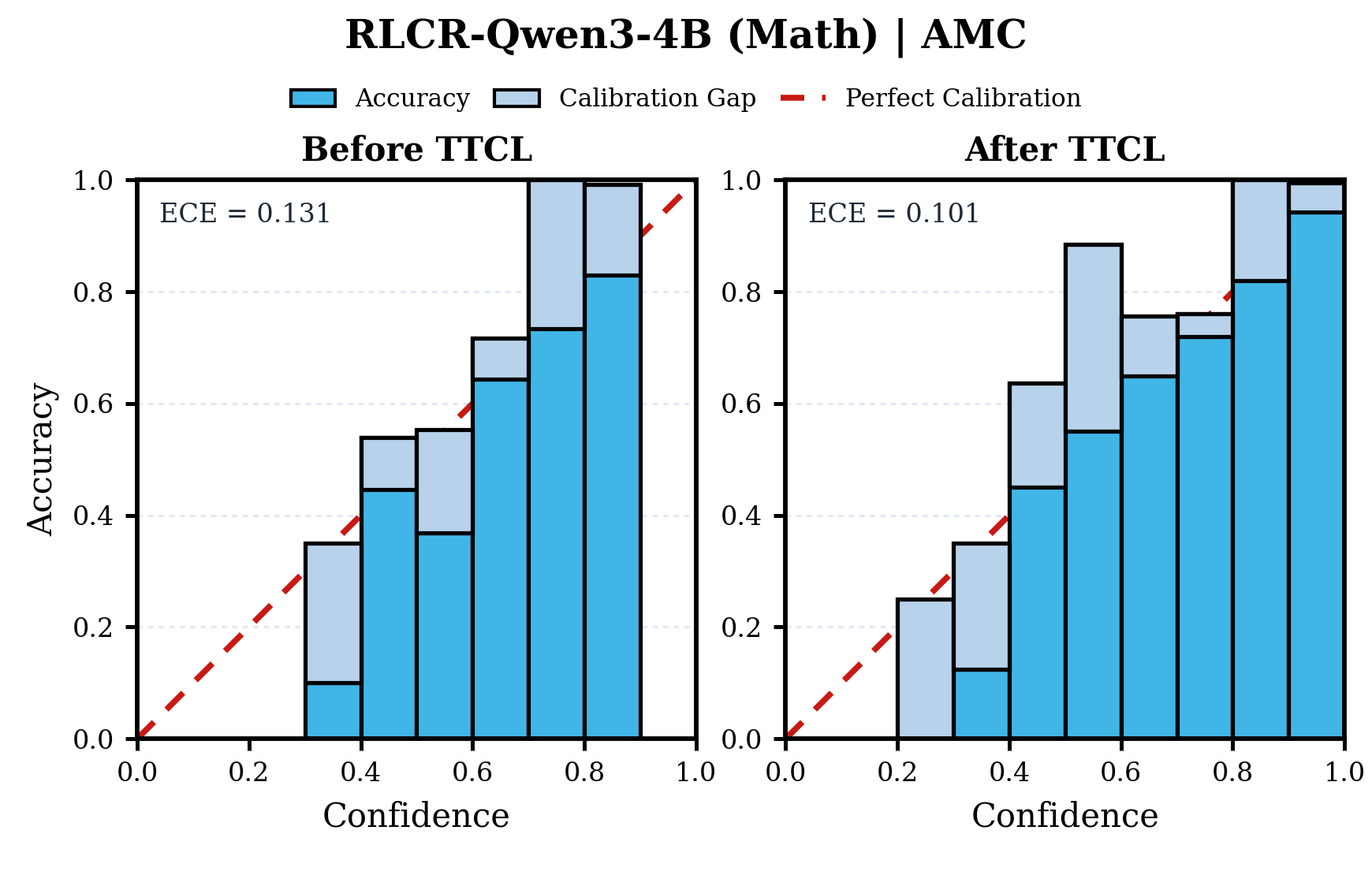}
  \includegraphics[width=0.19\textwidth]{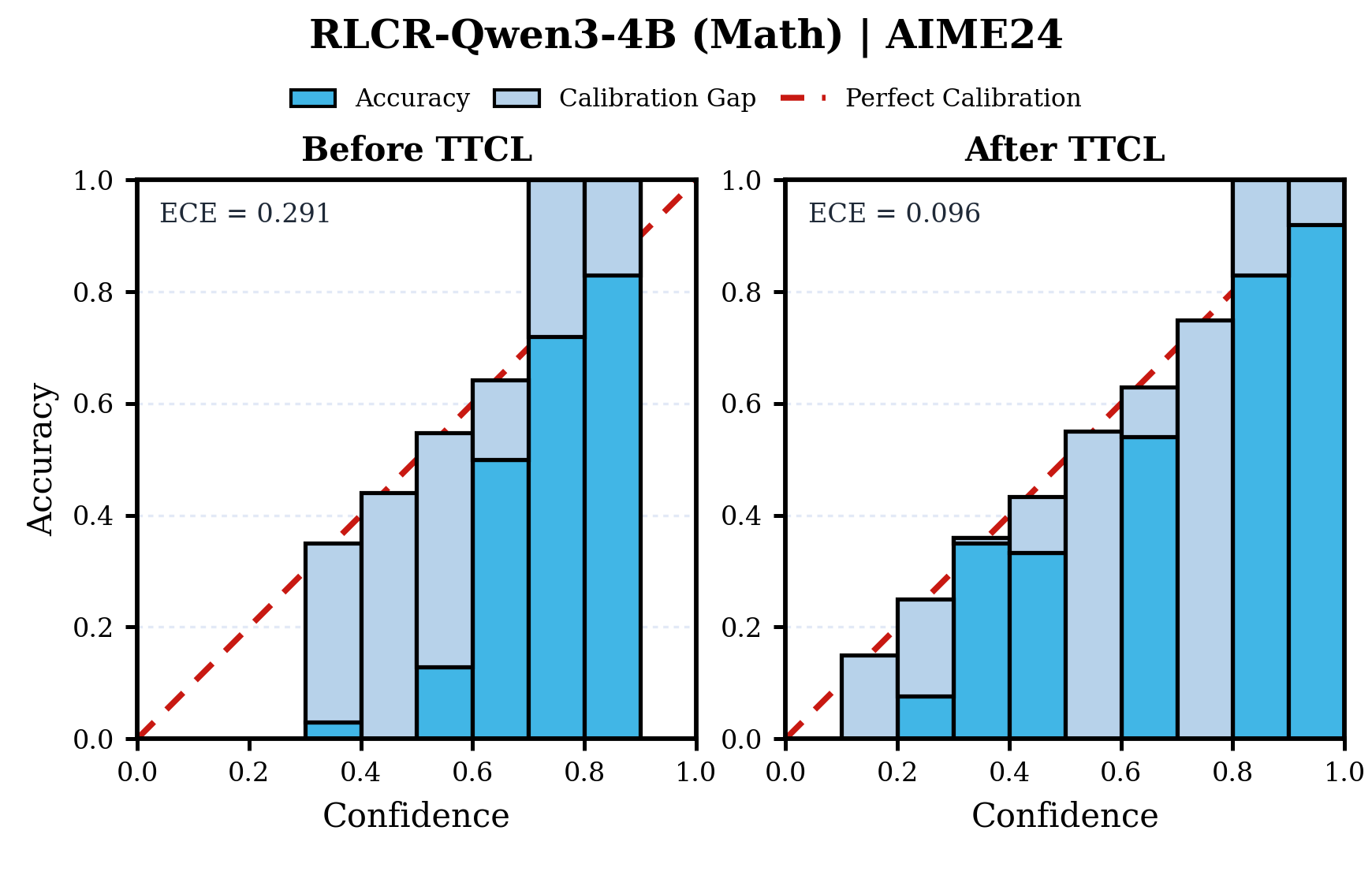} 
  \includegraphics[width=0.19\textwidth]{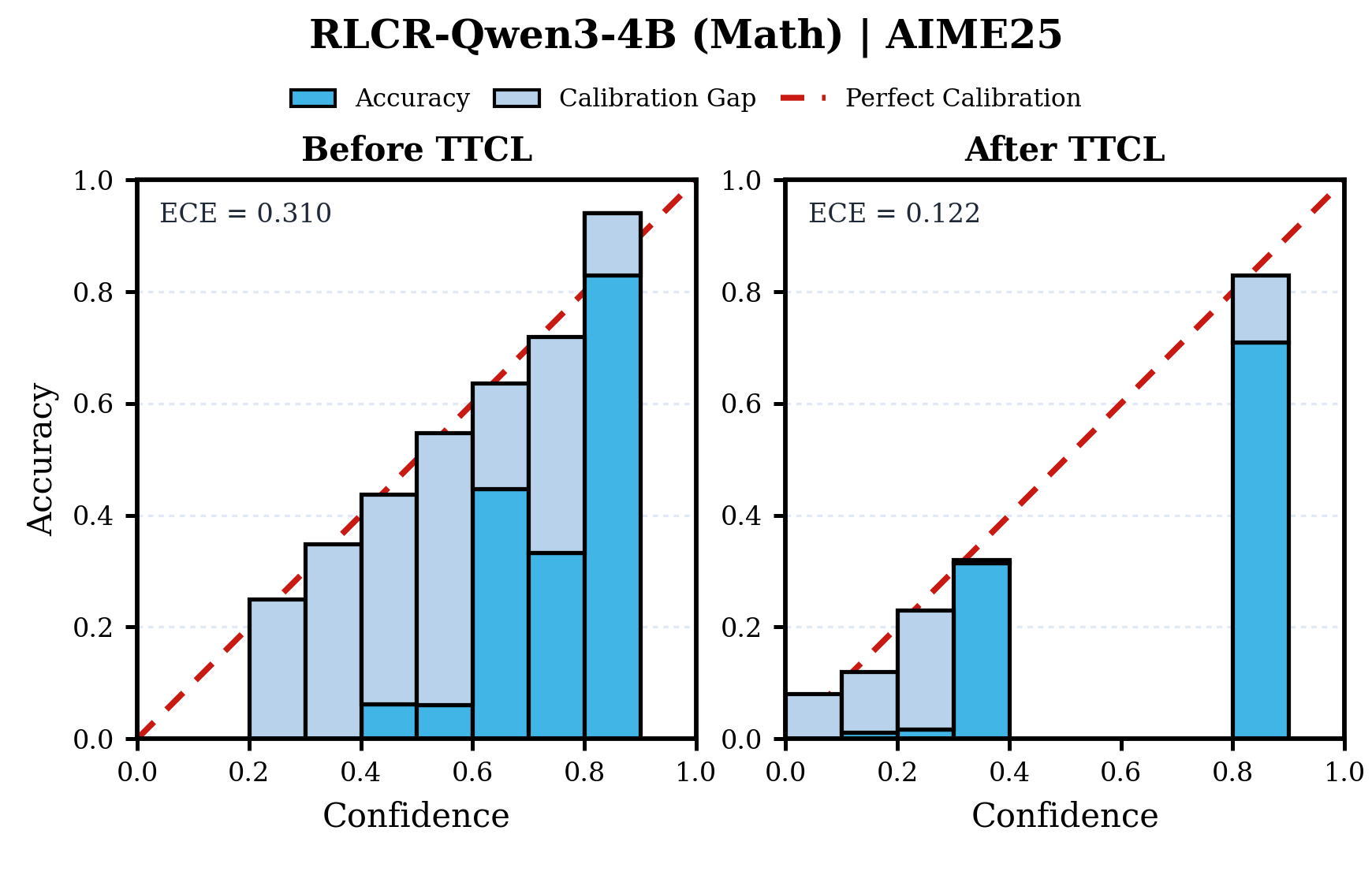}
  \includegraphics[width=0.19\textwidth]{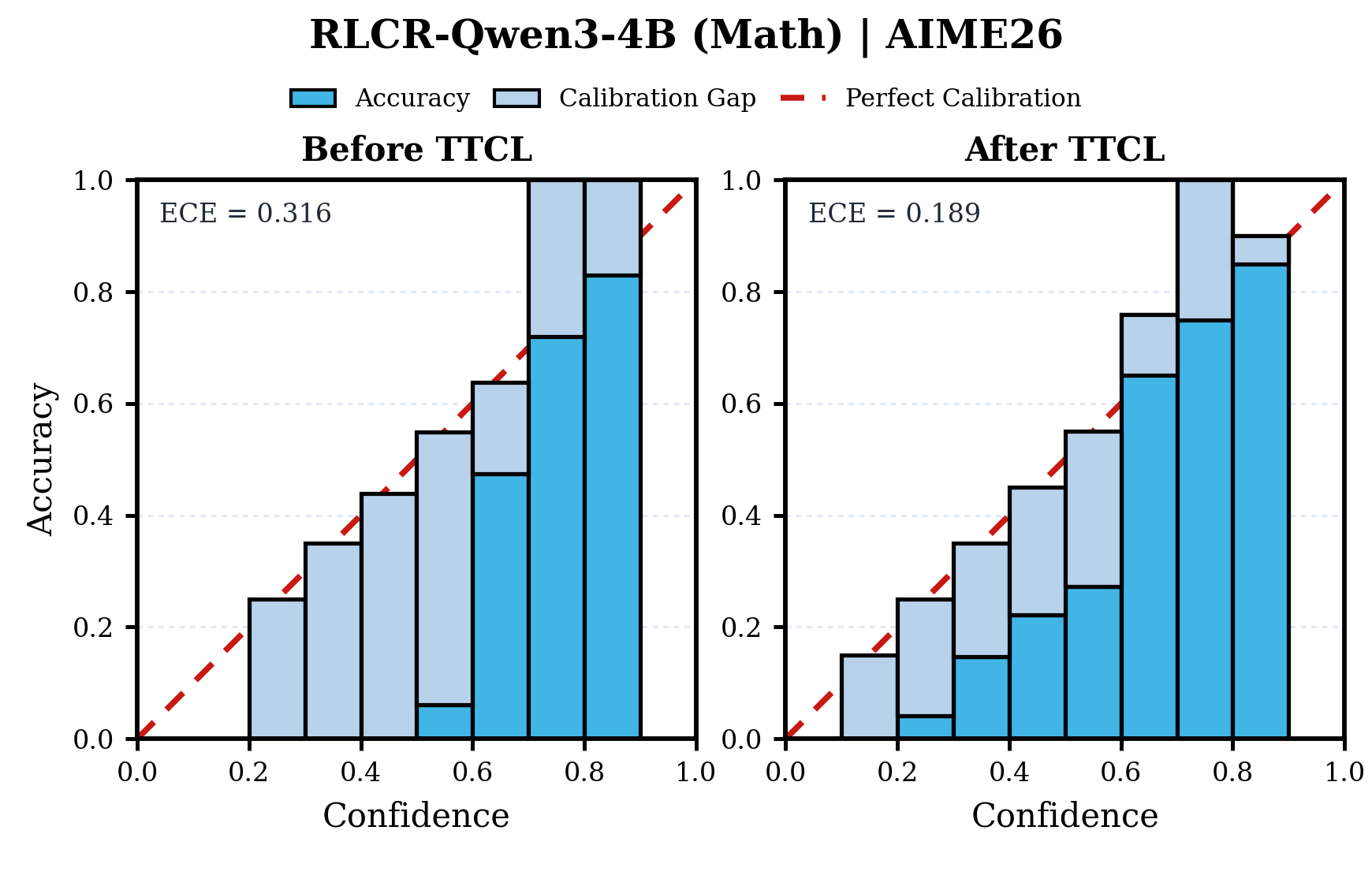} \\
  \includegraphics[width=0.19\textwidth]{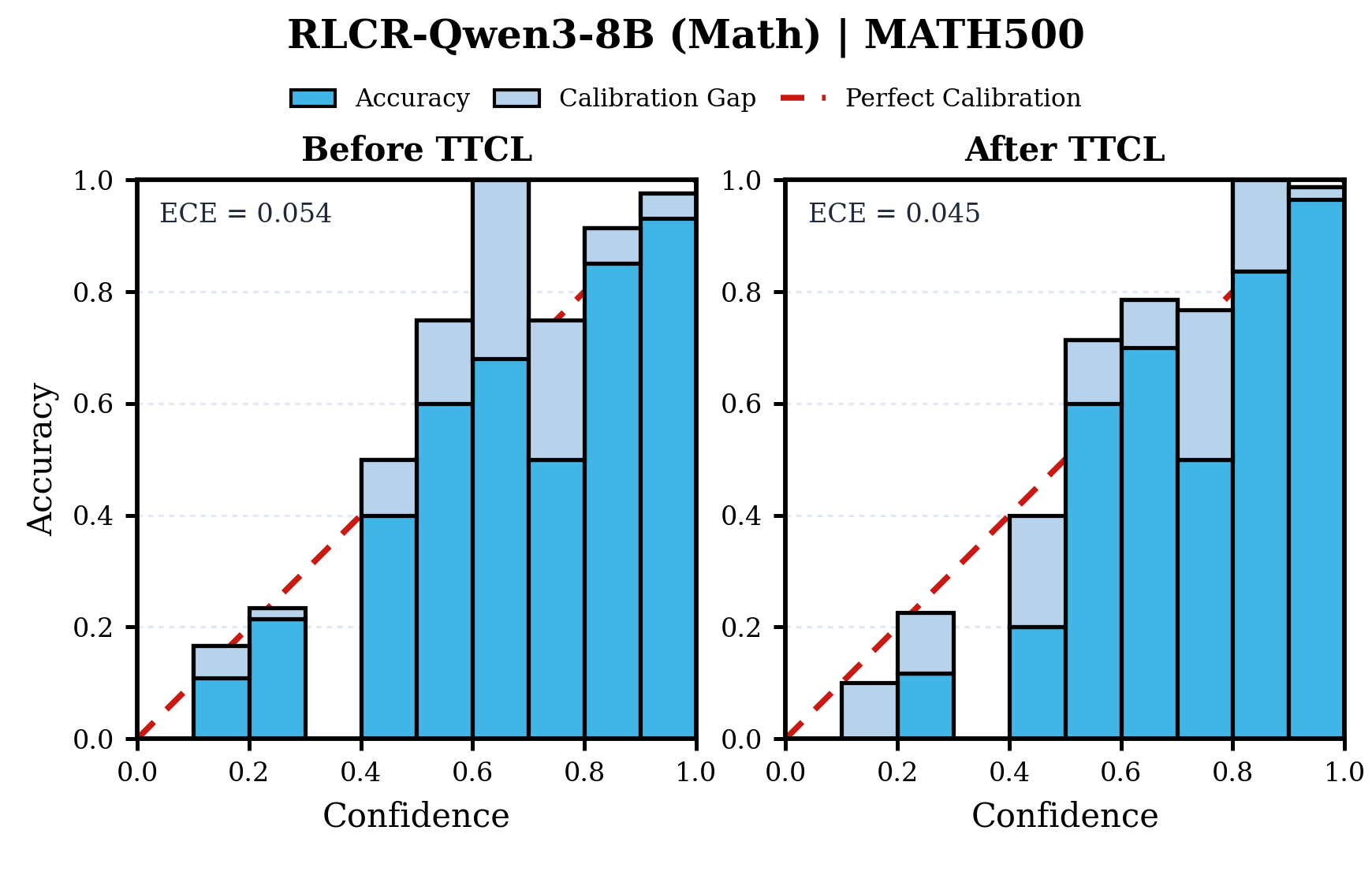}
  \includegraphics[width=0.19\textwidth]{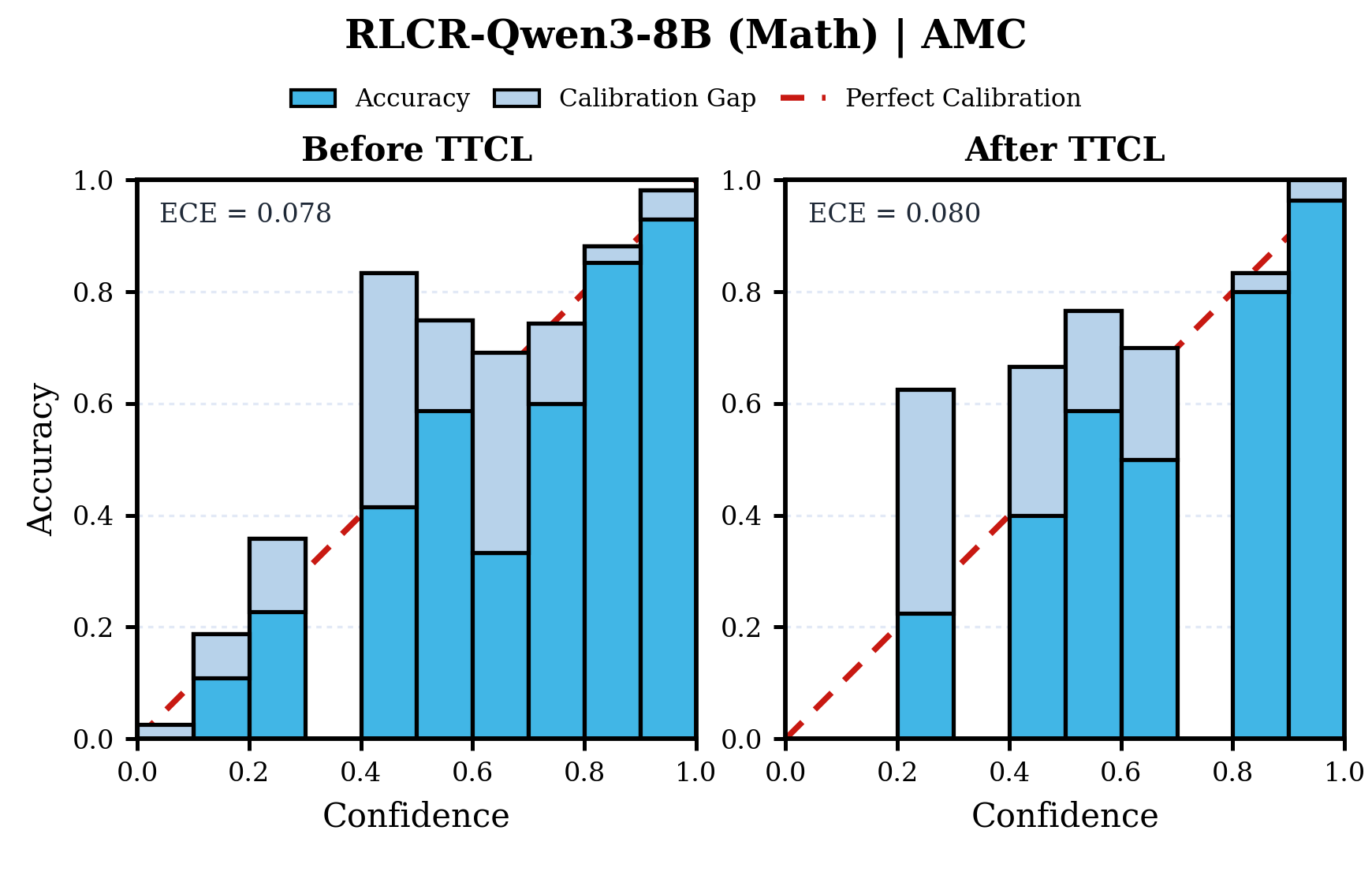}
  \includegraphics[width=0.19\textwidth]{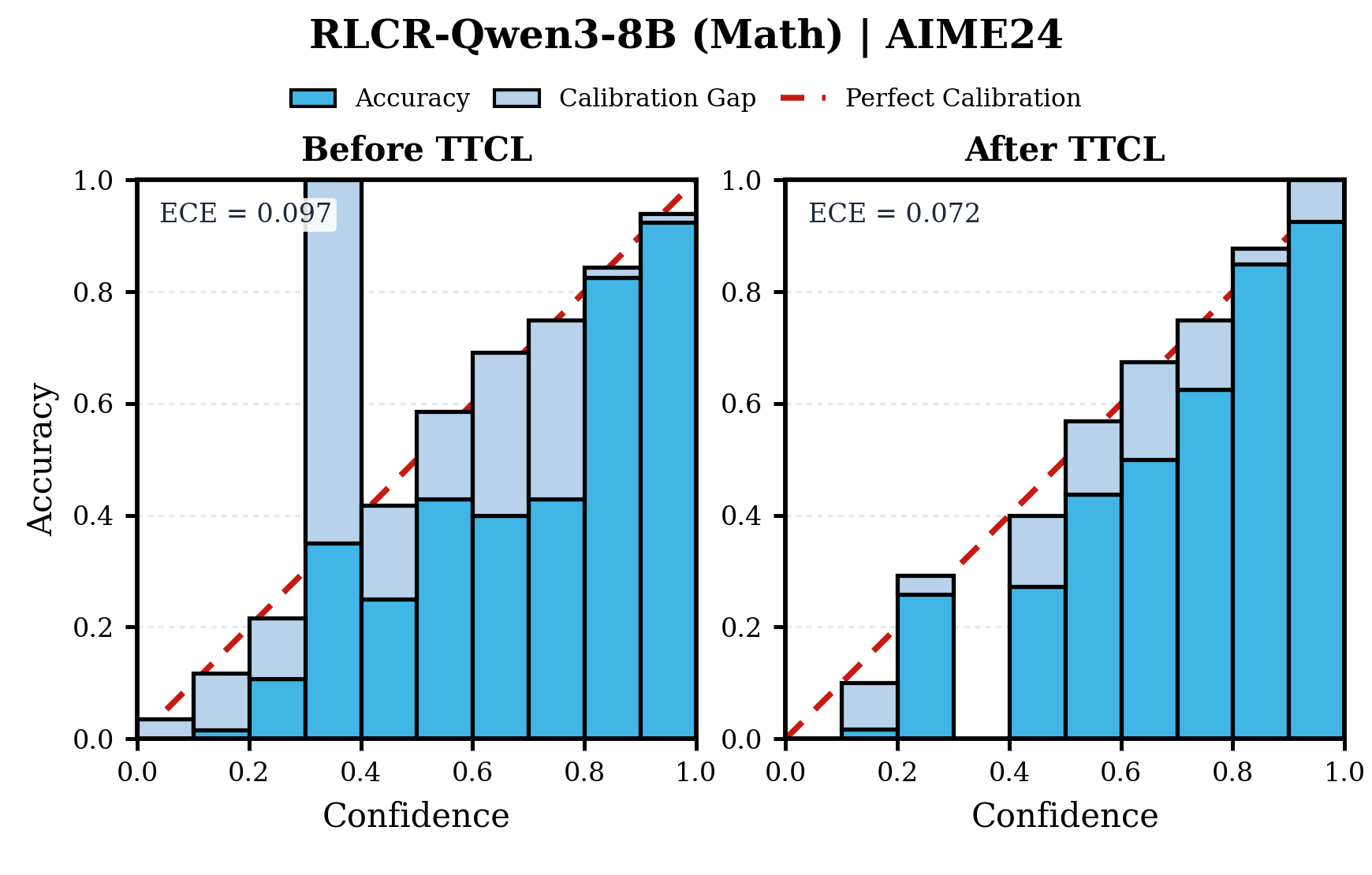} 
  \includegraphics[width=0.19\textwidth]{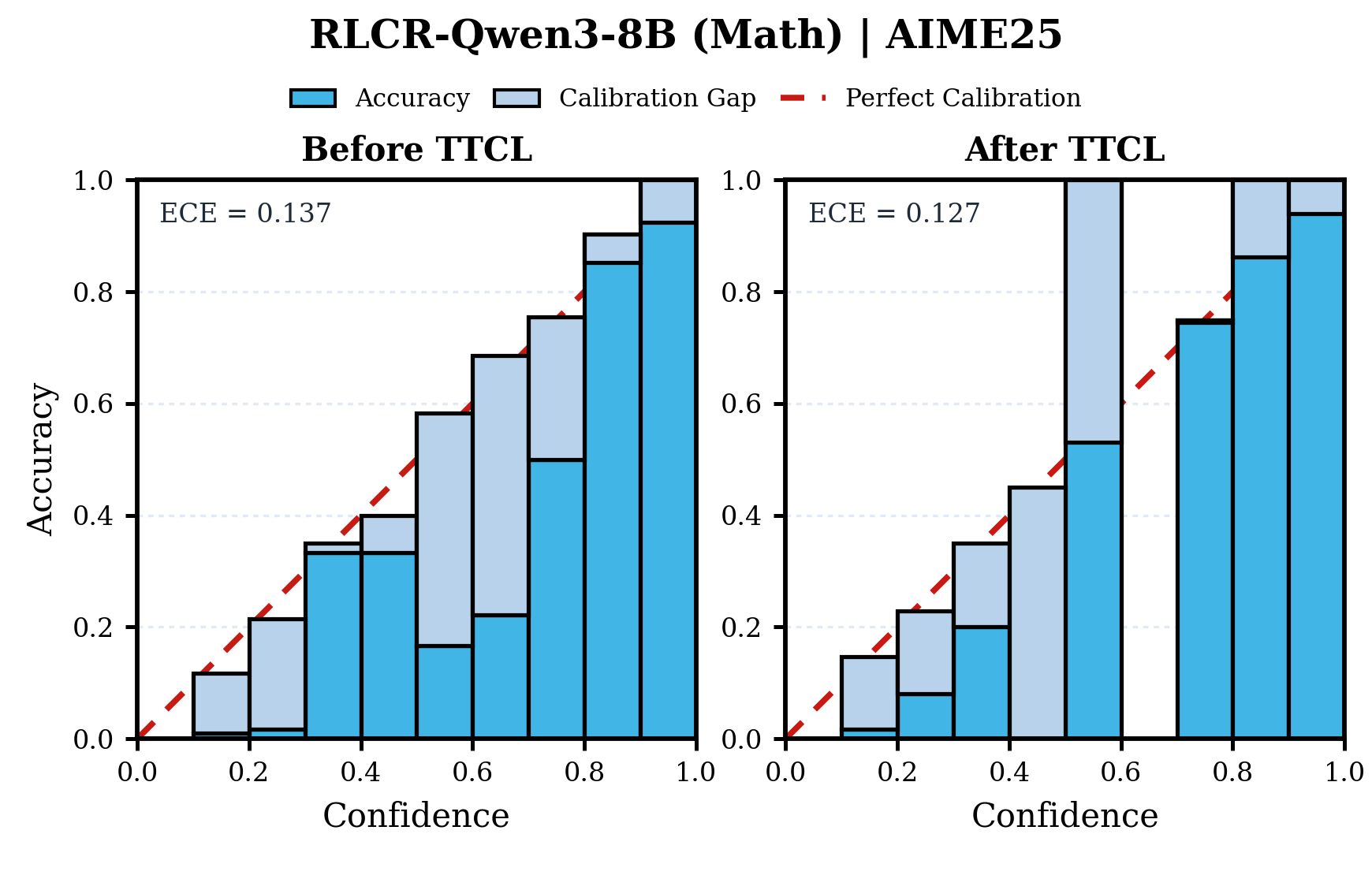}
  \includegraphics[width=0.19\textwidth]{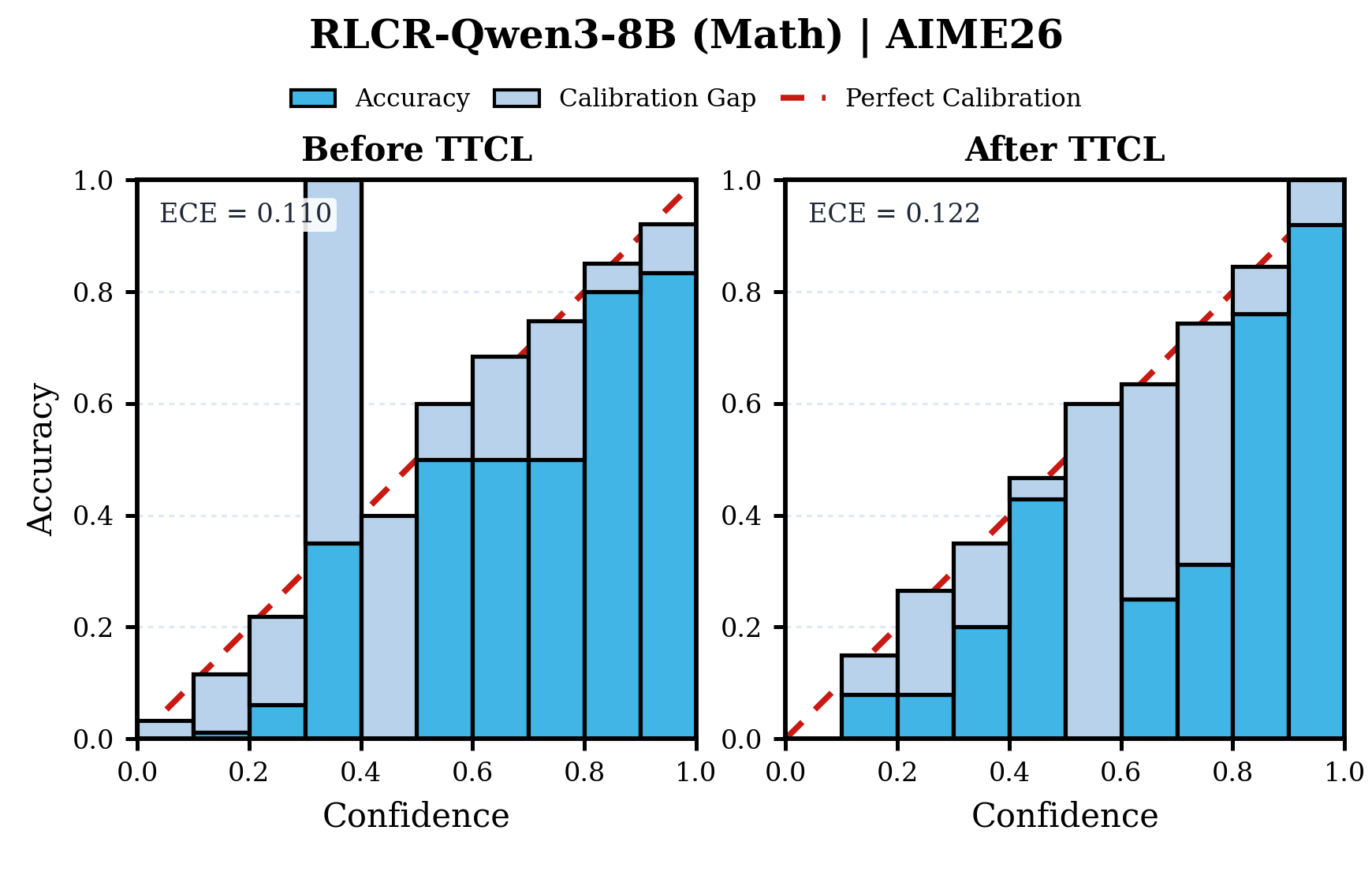}
  \vspace{-2mm}
  \caption{\textbf{Visualization of in-domain calibration effects before and after TTCL} on math-calibrated models for mathematical benchmarks. TTCL calibrates models to the target benchmarks, further narrowing the gap between confidence and empirical accuracy. \textcolor{accuracyblue}{Blue bars} denote empirical accuracy within each confidence bin, while \textcolor{calibrationgapblue}{light blue bars} indicate the deviation from perfect calibration.}
  \label{Fig:TTCL_RLCR_InDomain_Visualize}
  \vspace{-3mm}
\end{figure}

\begin{figure}[t!]
  \centering
  \includegraphics[width=0.19\textwidth]{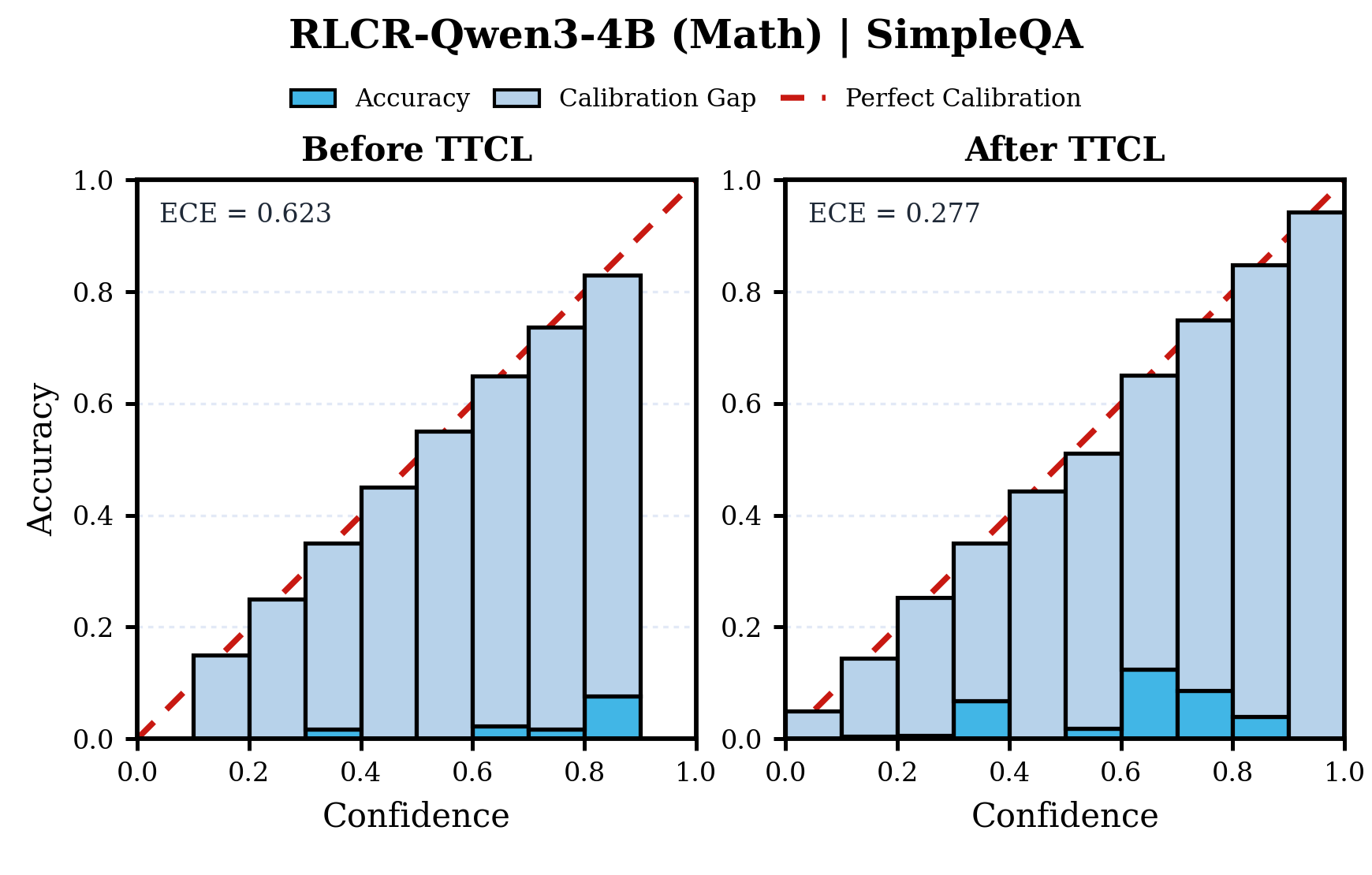}
  \includegraphics[width=0.19\textwidth]{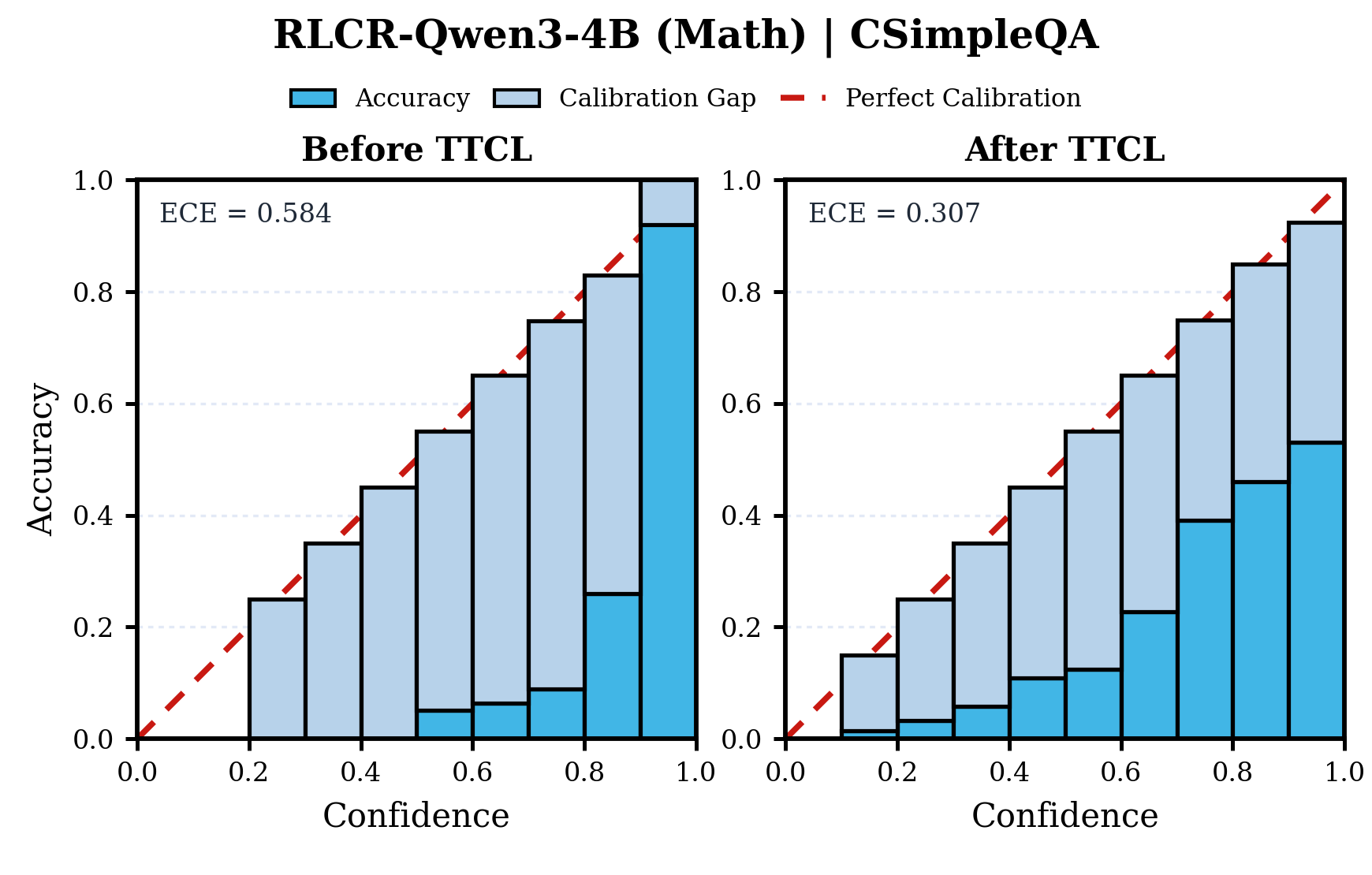}
  \includegraphics[width=0.19\textwidth]{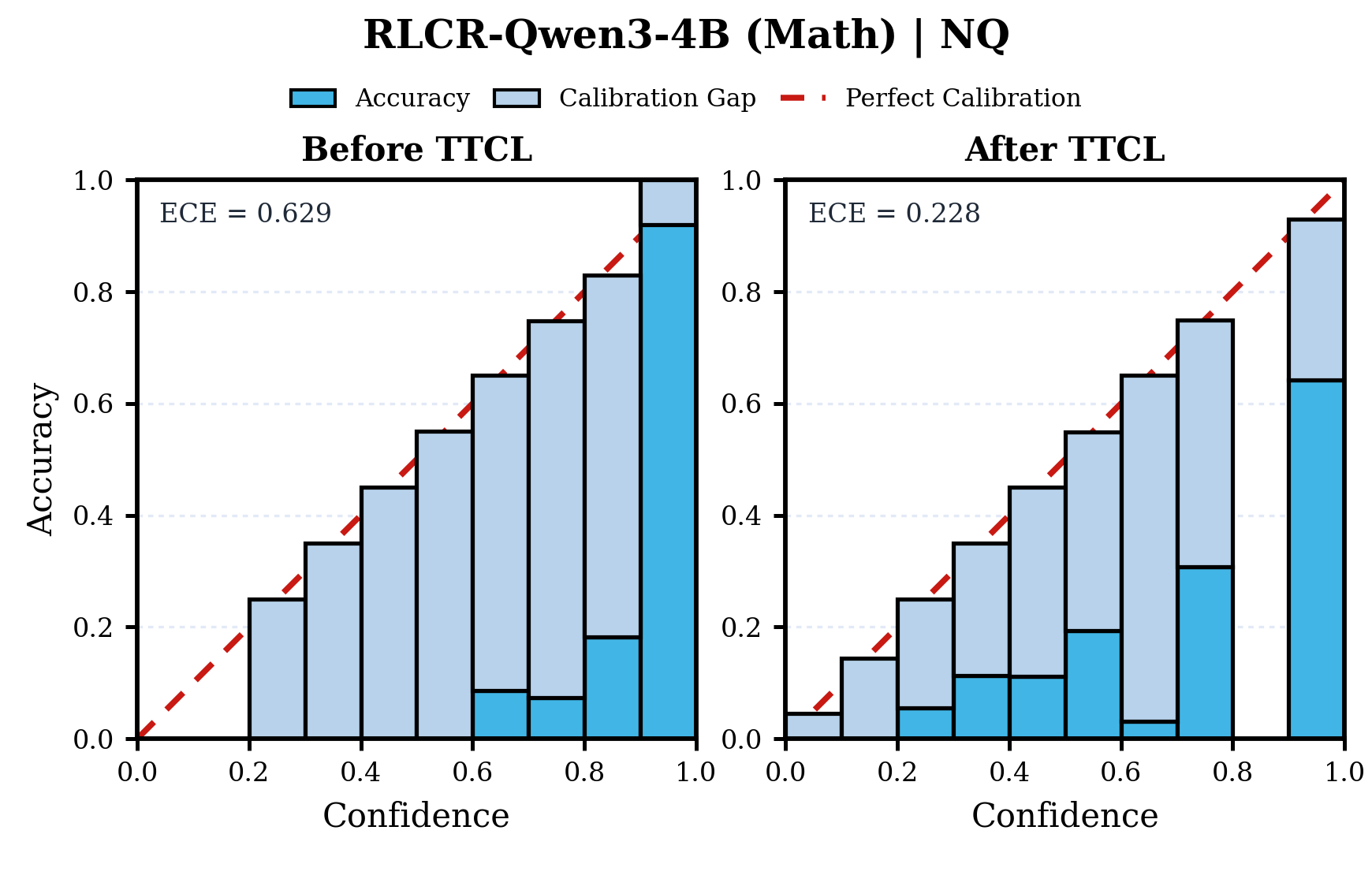} 
  \includegraphics[width=0.19\textwidth]{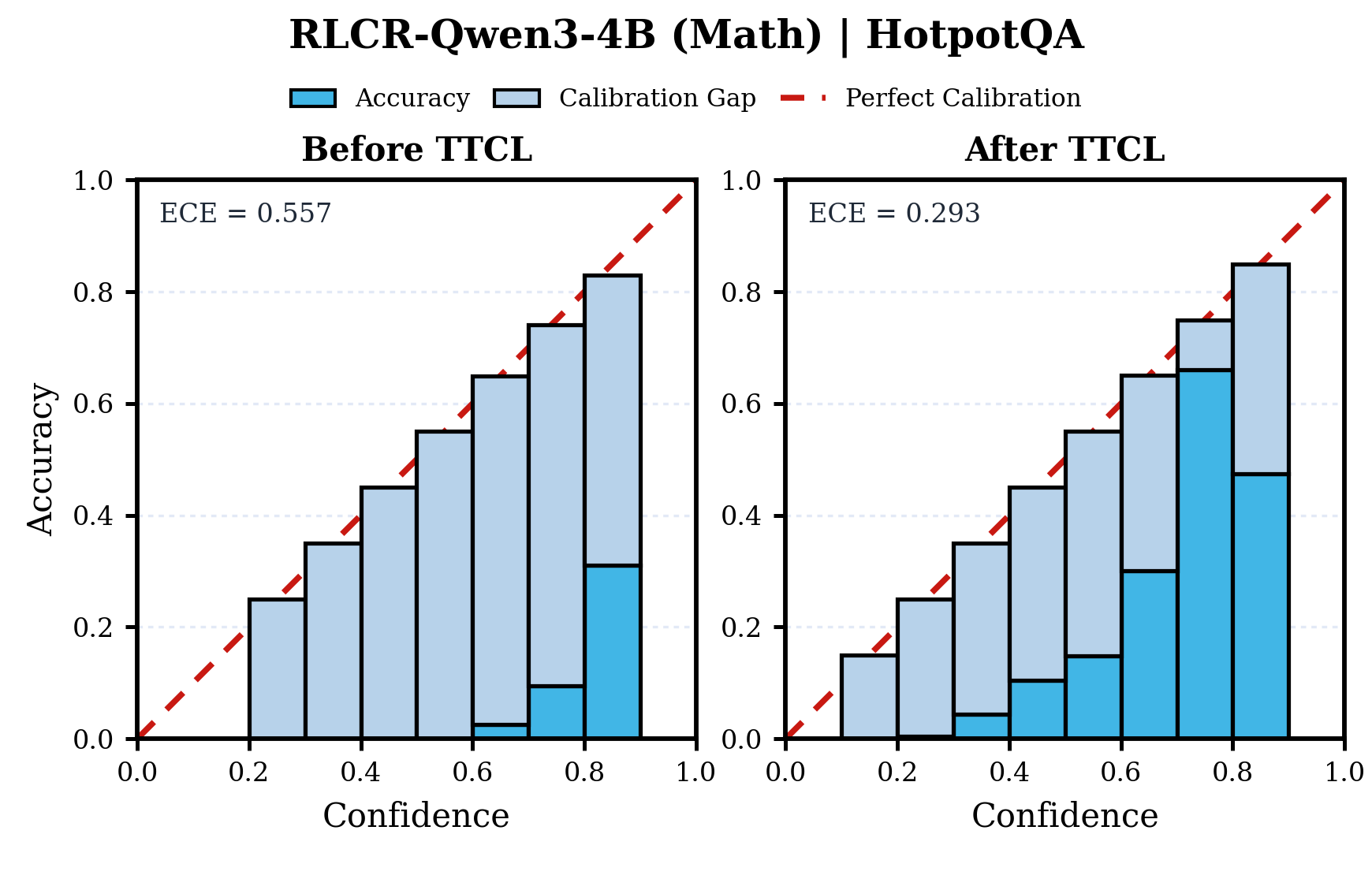}
  \includegraphics[width=0.19\textwidth]{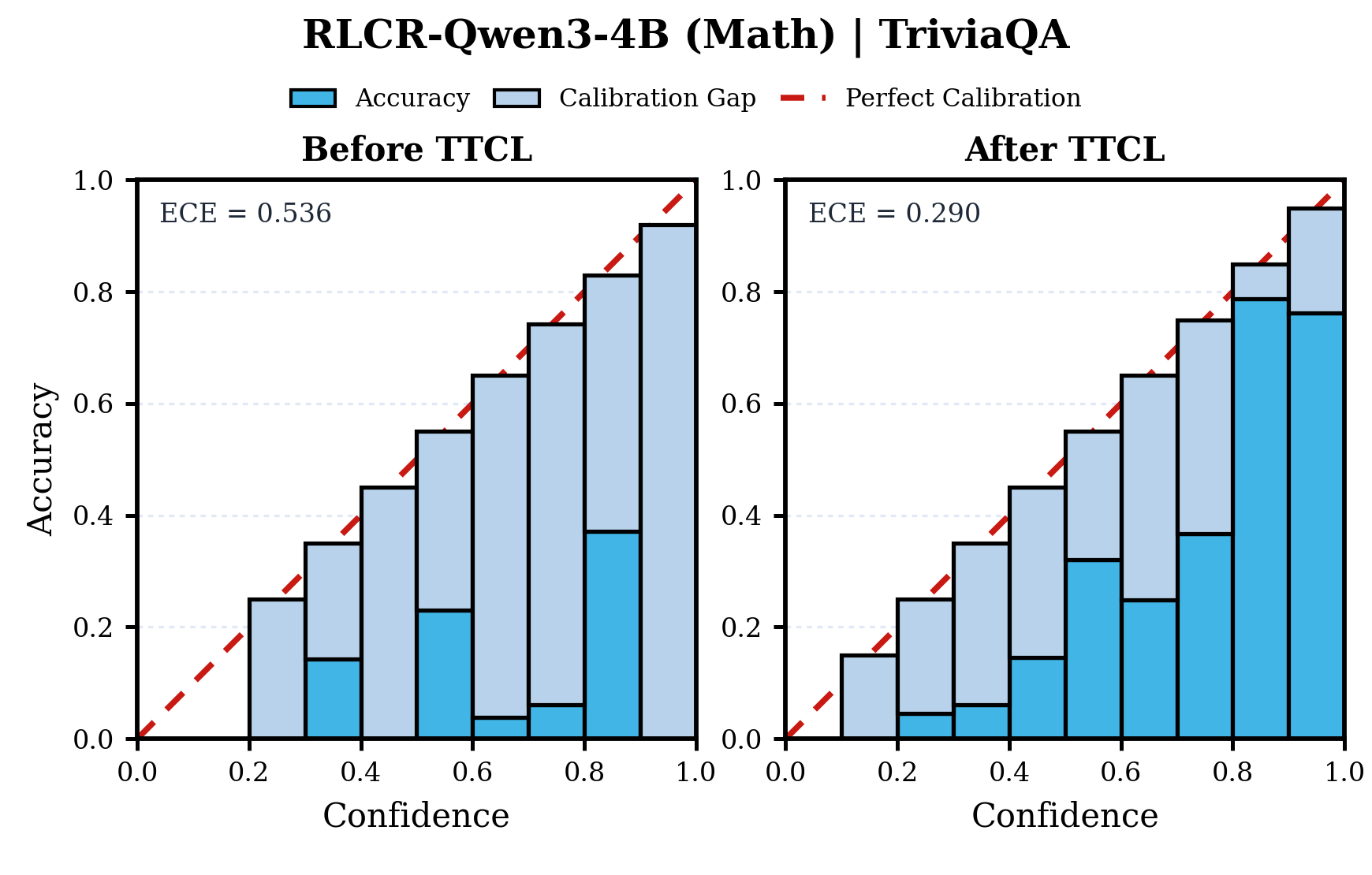} \\
  \includegraphics[width=0.19\textwidth]{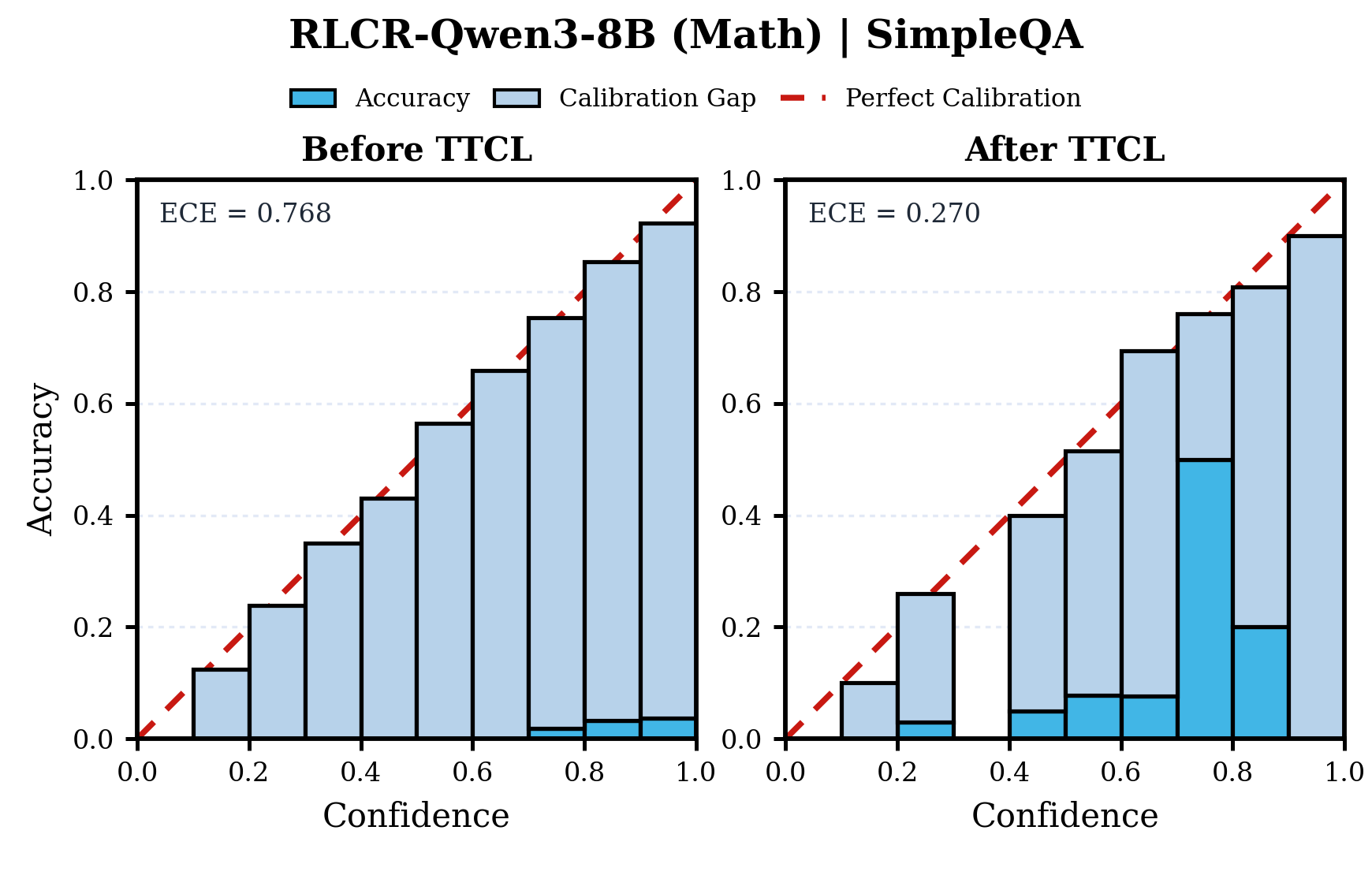}
  \includegraphics[width=0.19\textwidth]{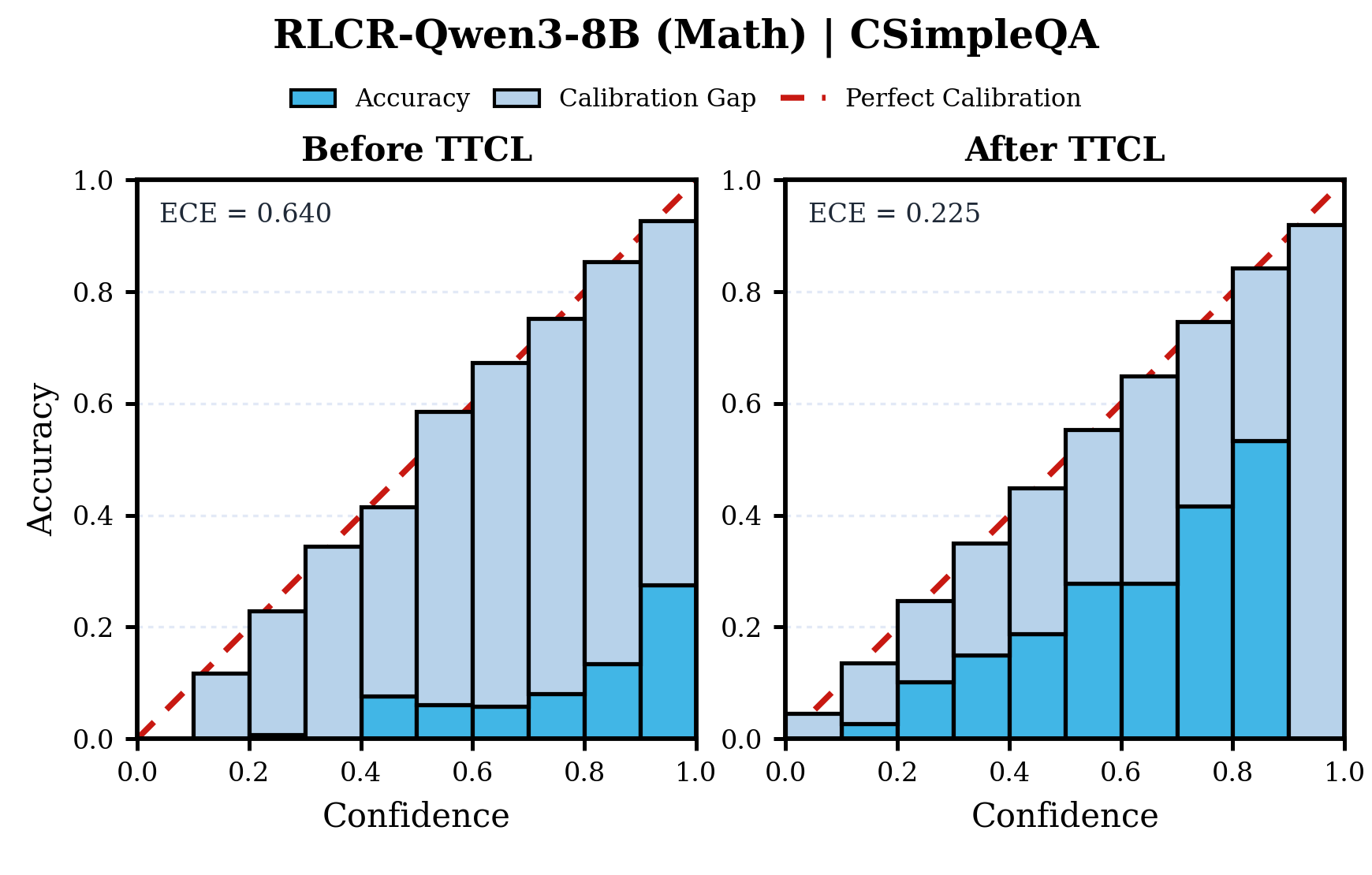}
  \includegraphics[width=0.19\textwidth]{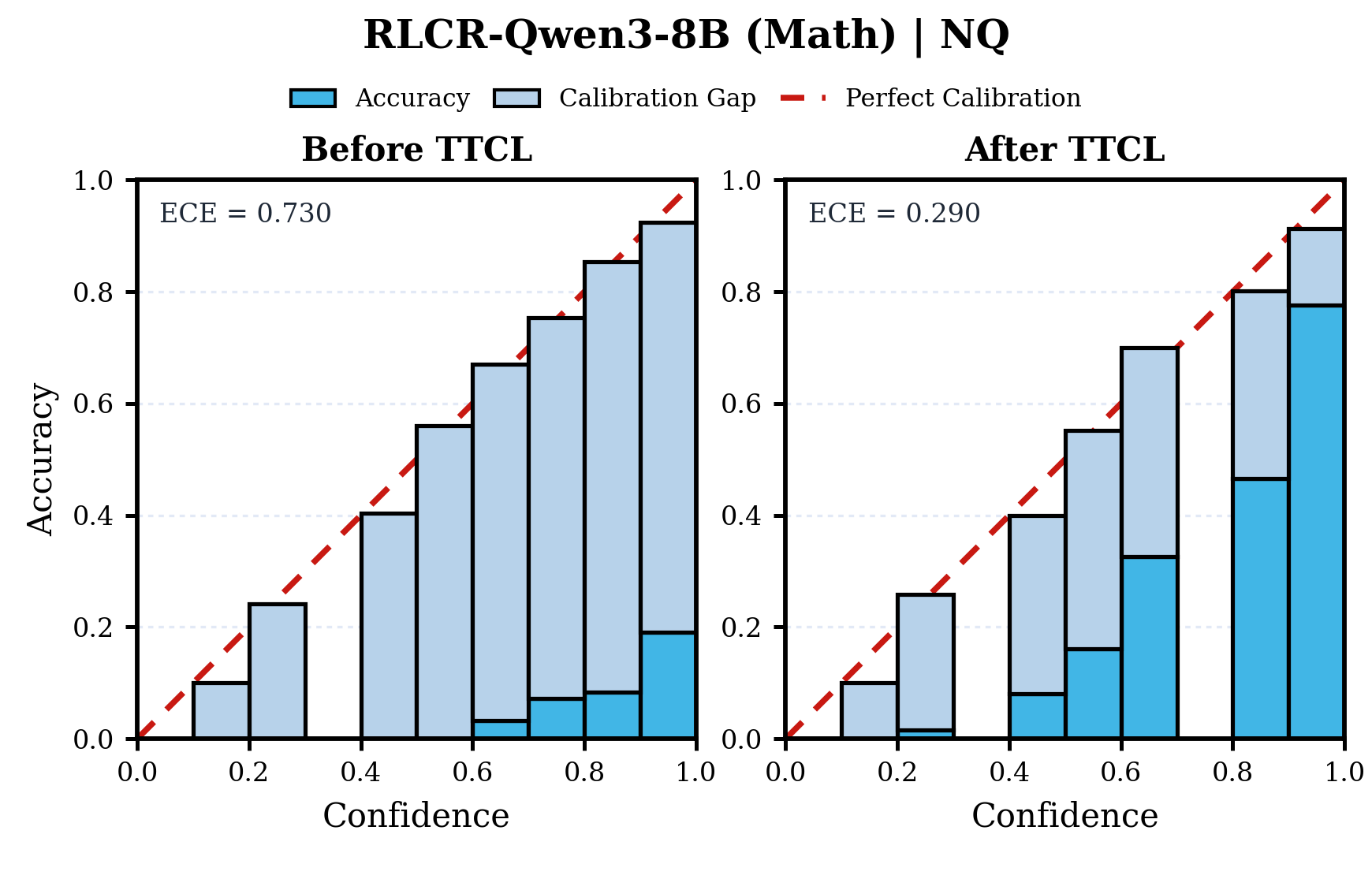} 
  \includegraphics[width=0.19\textwidth]{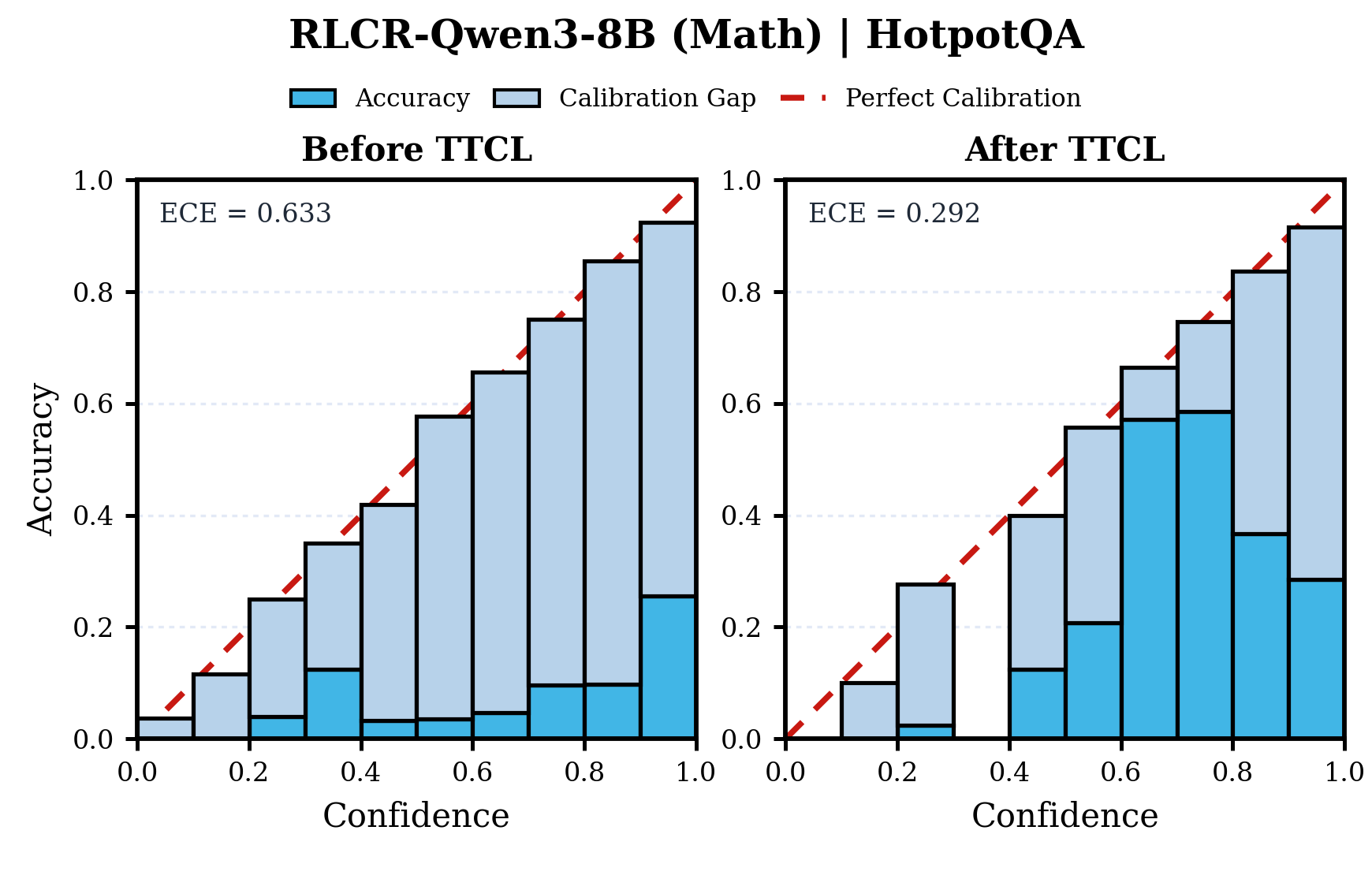}
  \includegraphics[width=0.19\textwidth]{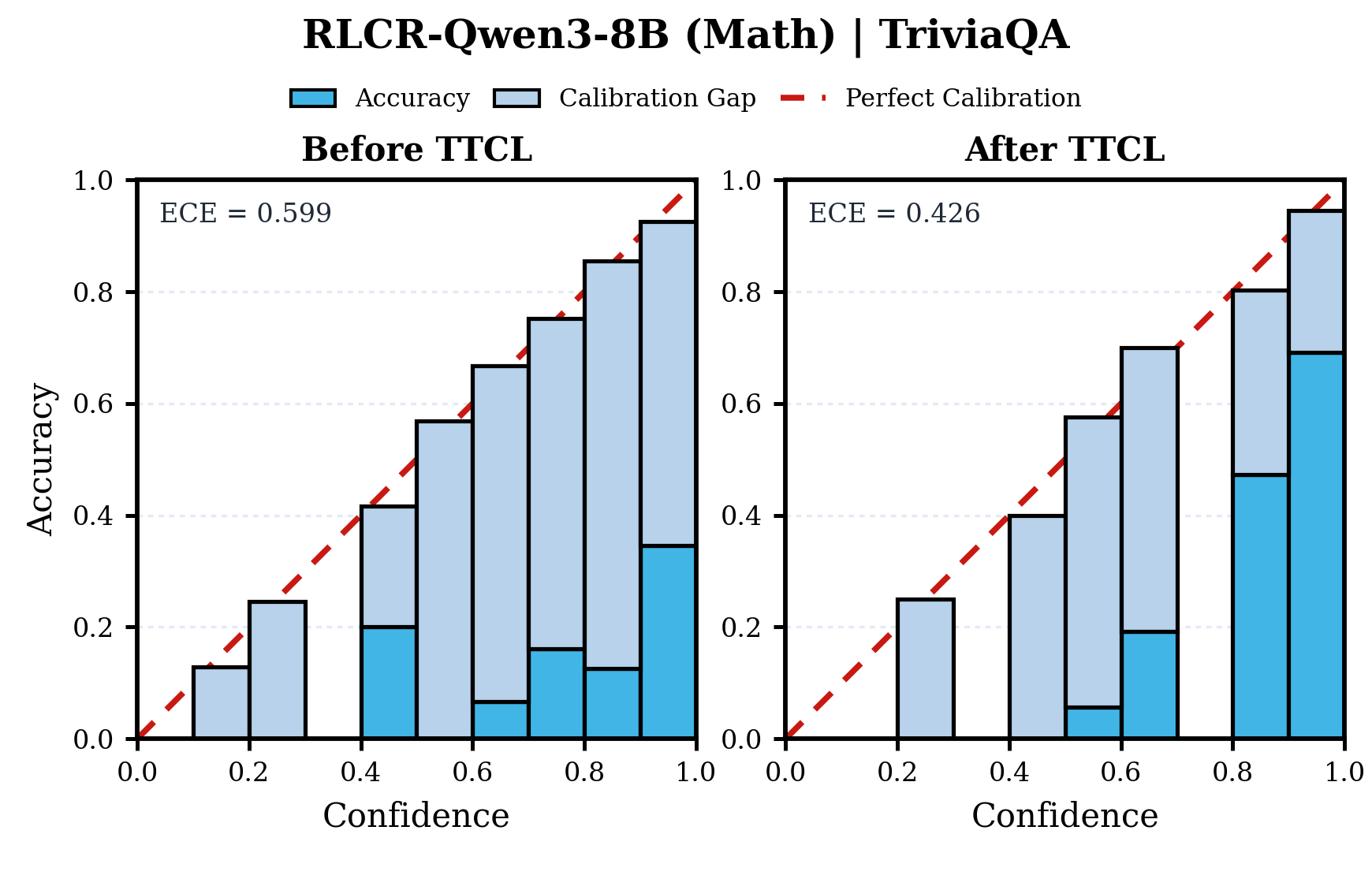} \\
  \includegraphics[width=0.19\textwidth]{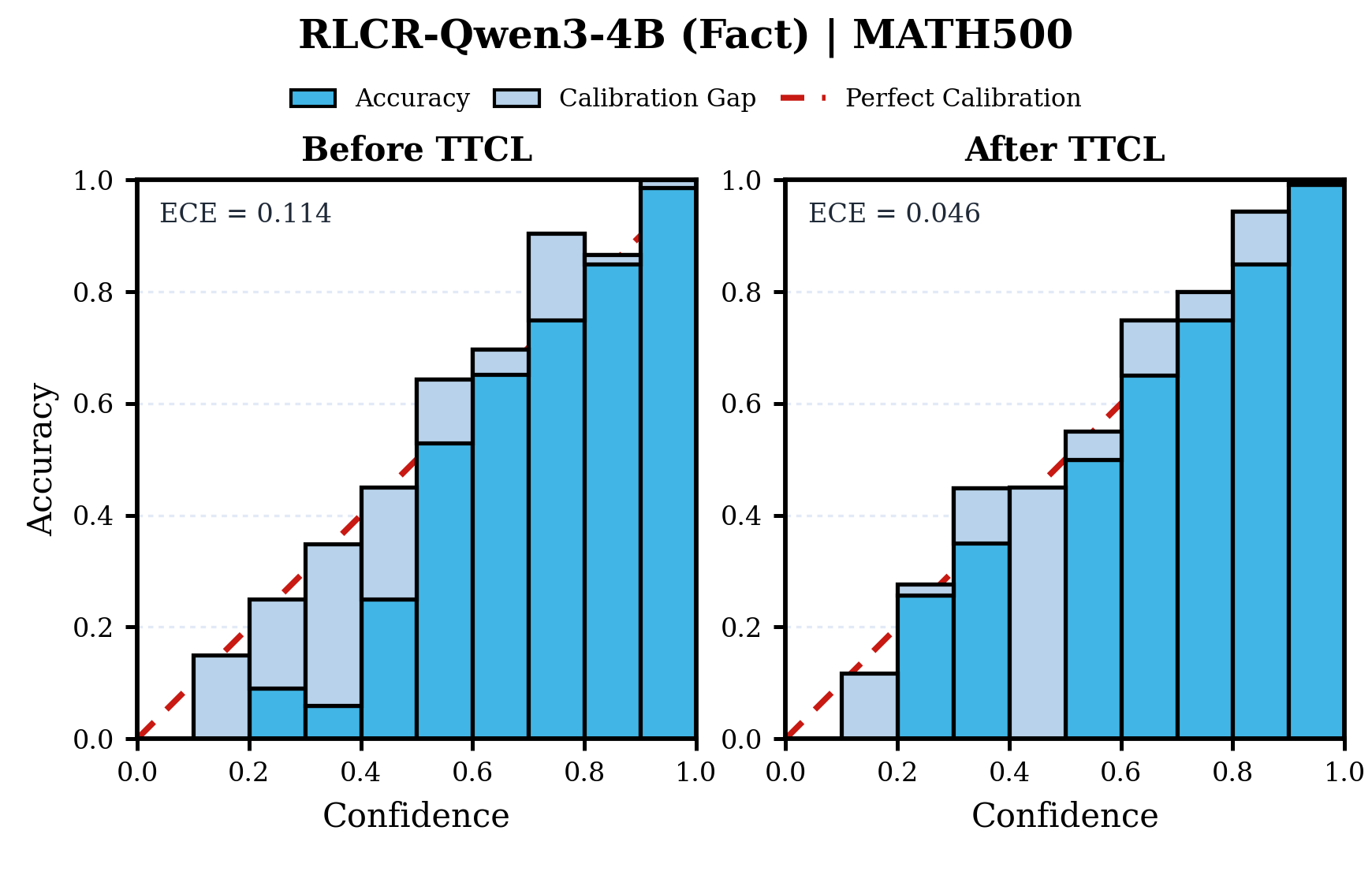}
  \includegraphics[width=0.19\textwidth]{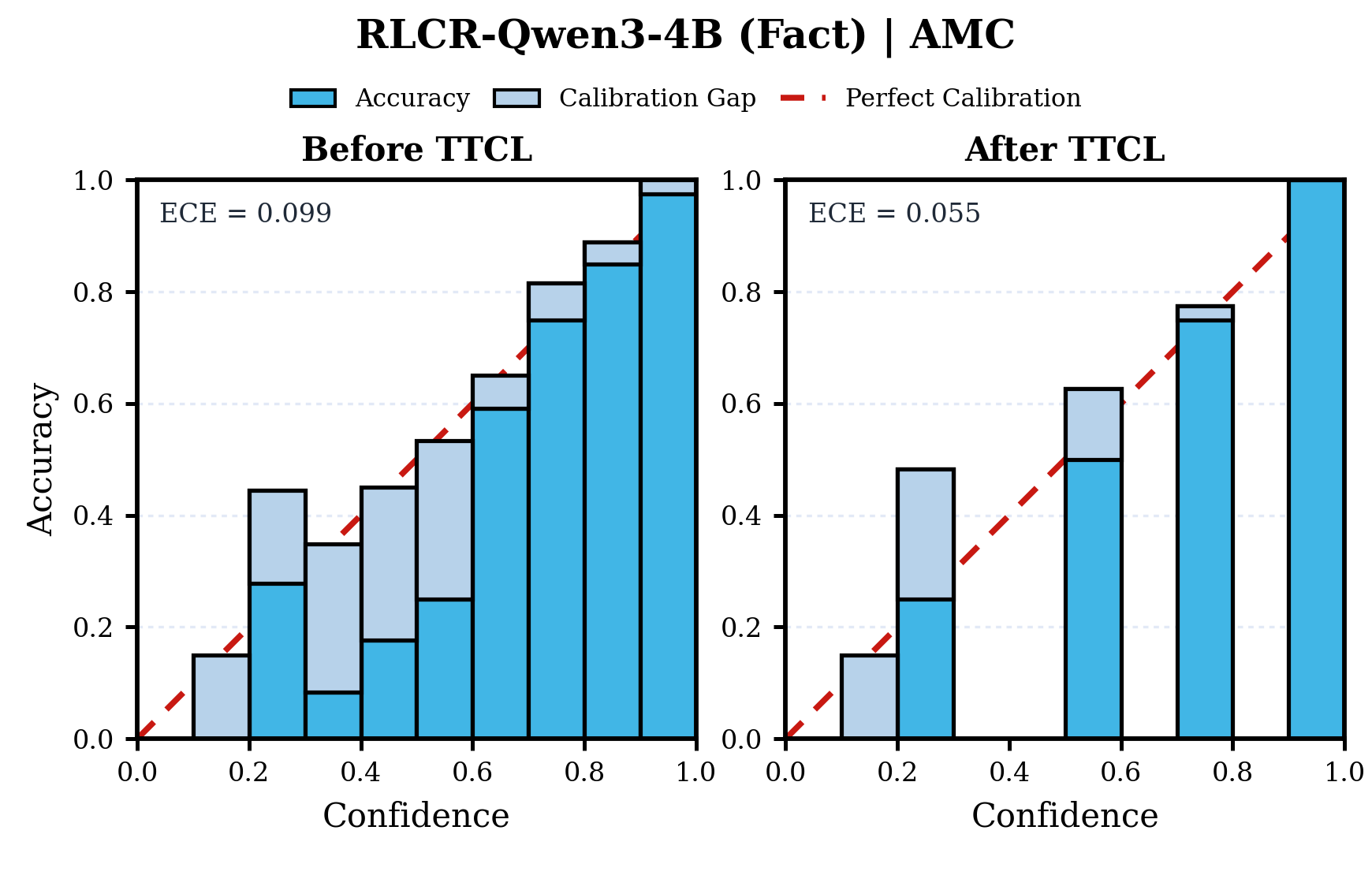}
  \includegraphics[width=0.19\textwidth]{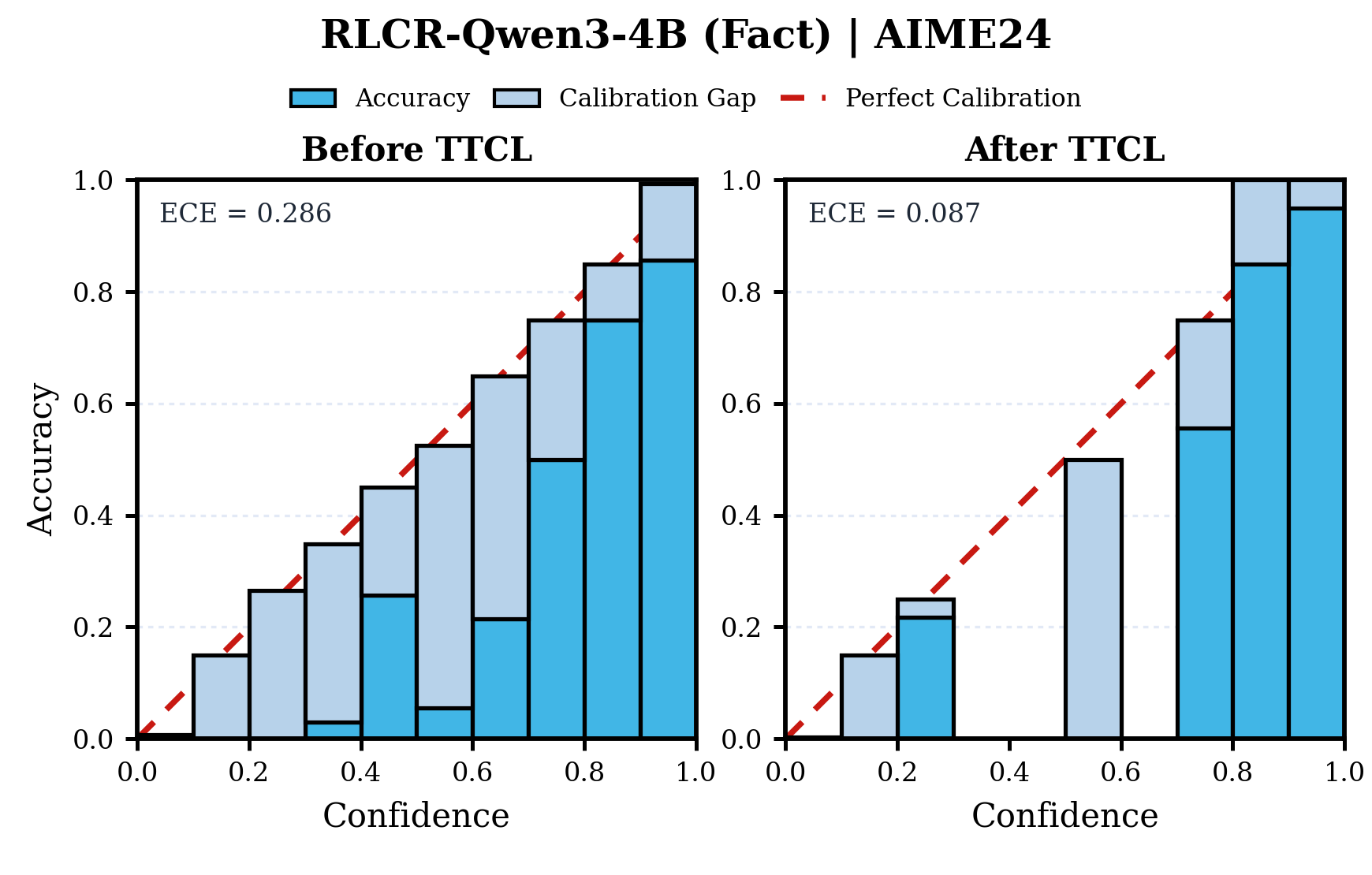} 
  \includegraphics[width=0.19\textwidth]{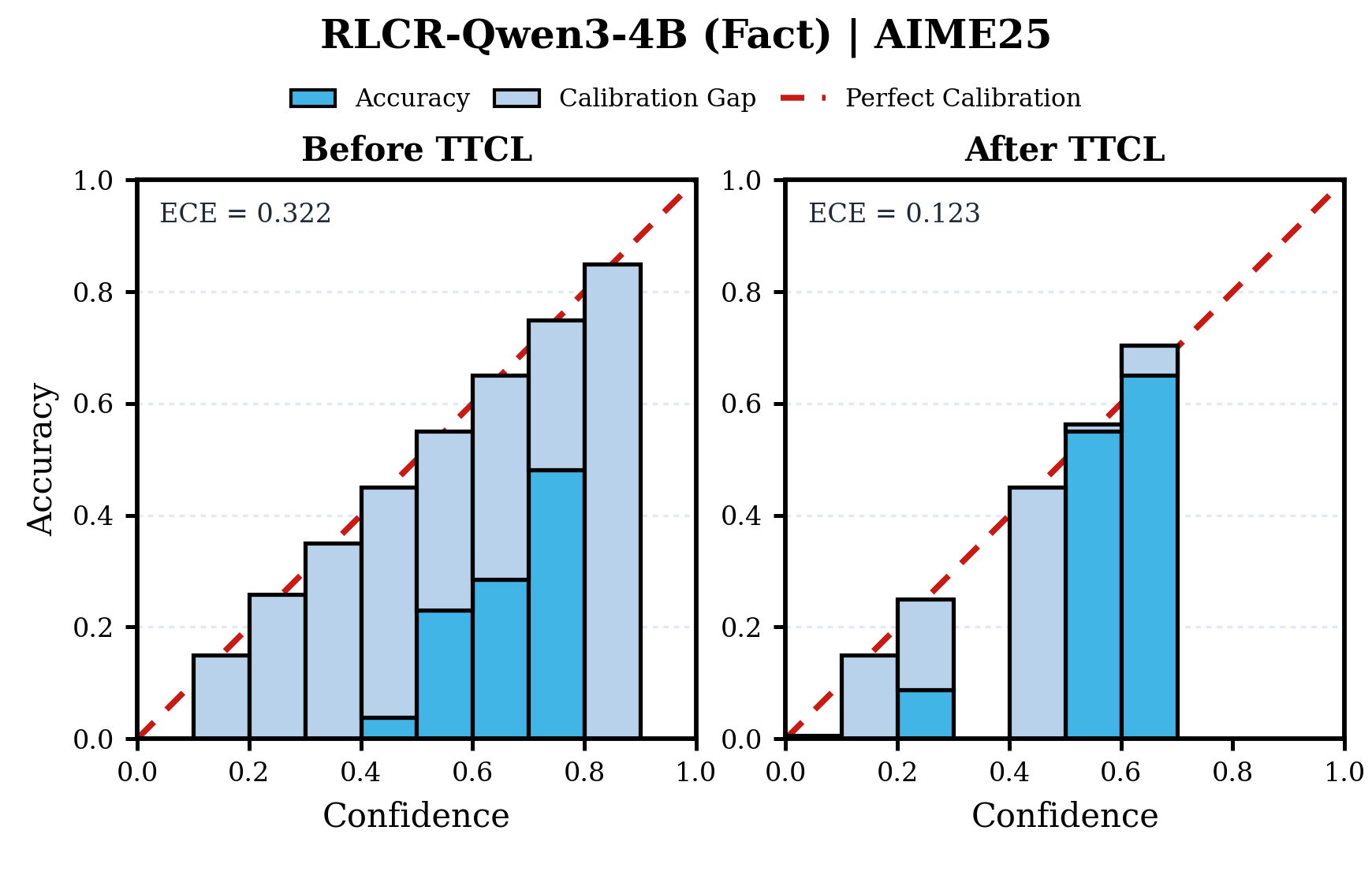}
  \includegraphics[width=0.19\textwidth]{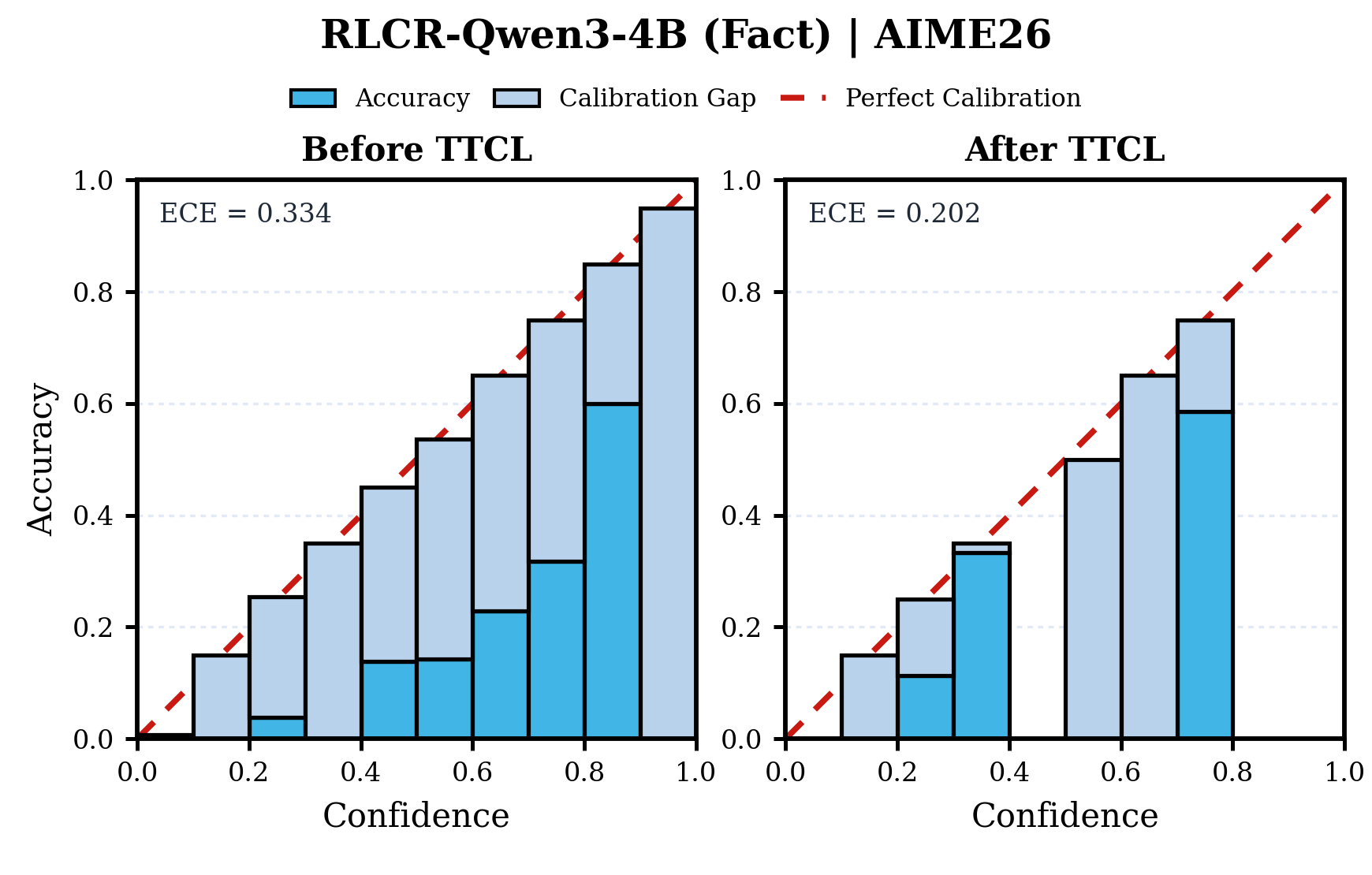}
  \vspace{-2mm}
    \caption{\textbf{Visualization of out-of-domain calibration effects before and after TTCL} on math-calibrated models for factual benchmarks. The calibration learned on the source domain transfers poorly to the target domain in both Math-to-FactQA and FactQA-to-Math settings, whereas TTCL effectively adapts the models to the target benchmarks and further narrows the gap between confidence and empirical accuracy. \textcolor{accuracyblue}{Blue bars} denote empirical accuracy within each confidence bin, while \textcolor{calibrationgapblue}{light blue bars} indicate the deviation from perfect calibration.}

  \label{Fig:TTCL_RLCR_OutDomain_Visualize}
\end{figure}

\subsection{More Details of Metrics}
\label{App:Metrics}
In this work, we evaluate both reasoning correctness and confidence calibration mainly using three metrics: Accuracy (ACC), Expected Calibration Error (ECE), and Area Under the Receiver Operating Characteristic Curve (AUROC). Their formulations and details are introduced as follows:
\begin{itemize}[left=2pt]
\item \textbf{ACC:} Accuracy measures the proportion of correctly answered examples and is used to evaluate the model's reasoning performance. Given \(N\) evaluation examples, it is computed as
\begin{equation}
    \mathrm{ACC} = \frac{1}{N}\sum_{i=1}^{N} \mathbb{I}\left[ \operatorname{Ans}(y_i) \equiv a_i^\star \right],
\end{equation}
where $\mathbb{I}\left[ \operatorname{Ans}(y_i) \equiv a_i^\star\right]$ indicates whether the answer of the $i$-th response is correct, and $a_i^\star$ is the ground-truth answer. A higher ACC indicates better correctness of the reasoning capability. 

\item \textbf{ECE:} ECE measures the discrepancy between the model's predicted confidence and its empirical correctness. The formulation of ECE is introduced in Eq.~(\ref{Eq:ECE}). A lower ECE indicates better alignment between confidence estimates and actual correctness. Following previous work~\citep{damani2026RLCR,ma2026decoupling}, we set the number of bins to 10 when computing ECE.

\item \textbf{AUROC:} AUROC measures how well the estimated confidence distinguishes correct responses from incorrect ones. By varying the confidence threshold, we obtain the true-positive rate (TPR) and false-positive rate (FPR), and compute AUROC as follows:
\begin{equation}
    \mathrm{AUROC} = \int_{0}^{1} \mathrm{TPR}(u)\, d\mathrm{FPR}(u).
\end{equation}
A higher AUROC indicates that correct responses tend to receive higher confidence than incorrect ones. Unlike ECE, which evaluates the absolute alignment between predicted confidence and empirical correctness, AUROC mainly reflects the ranking quality of confidence scores.
\end{itemize}

\begin{figure}[t]
  \centering
  \includegraphics[width=0.24\textwidth]{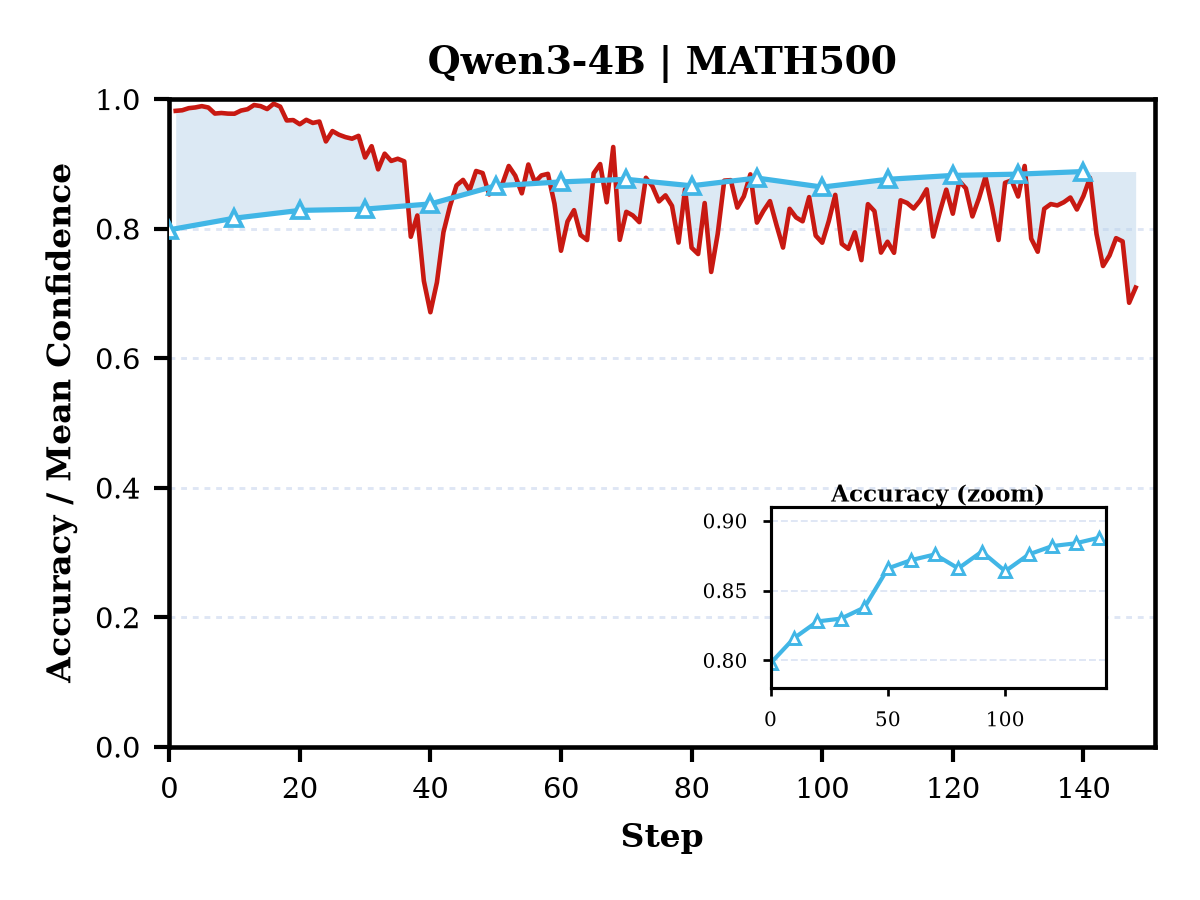}
  \includegraphics[width=0.24\textwidth]{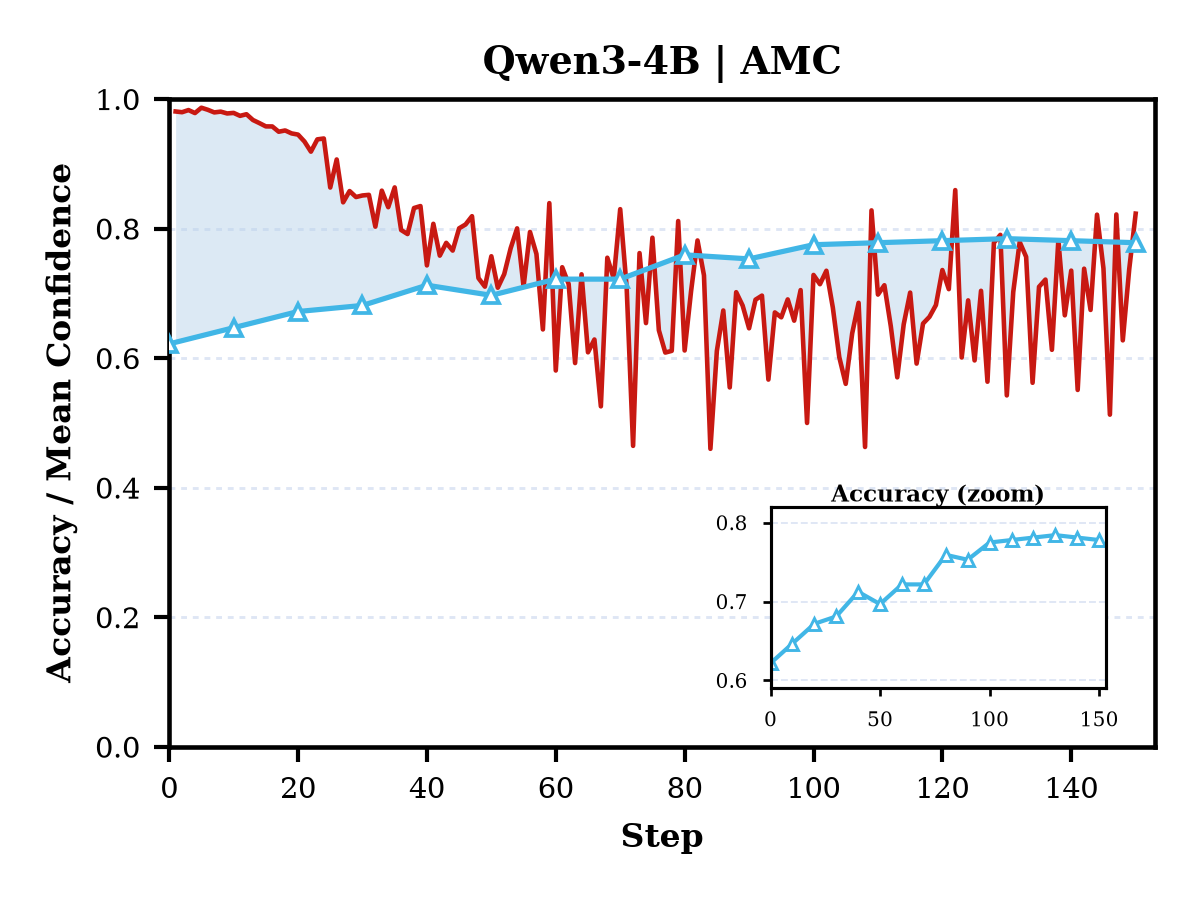}
  \includegraphics[width=0.24\textwidth]{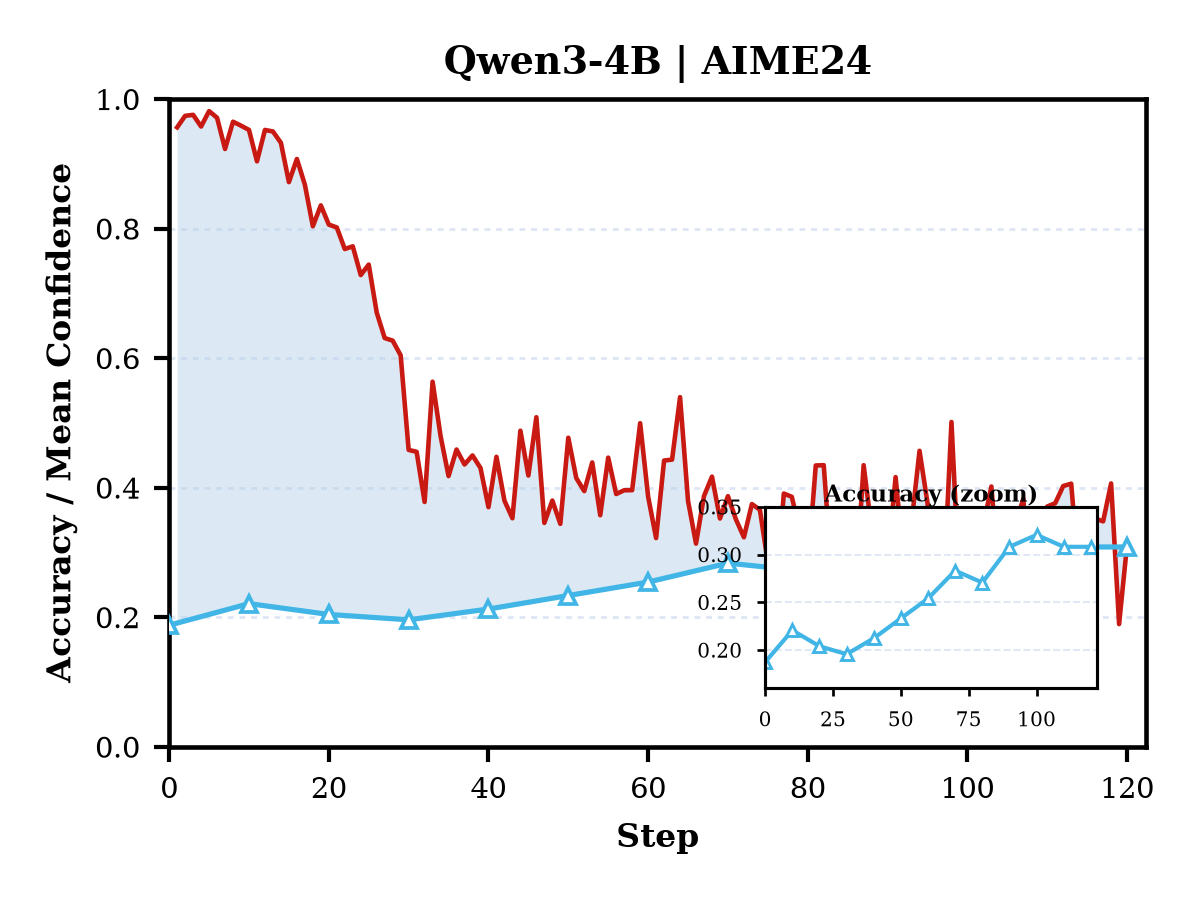}
  \includegraphics[width=0.24\textwidth]{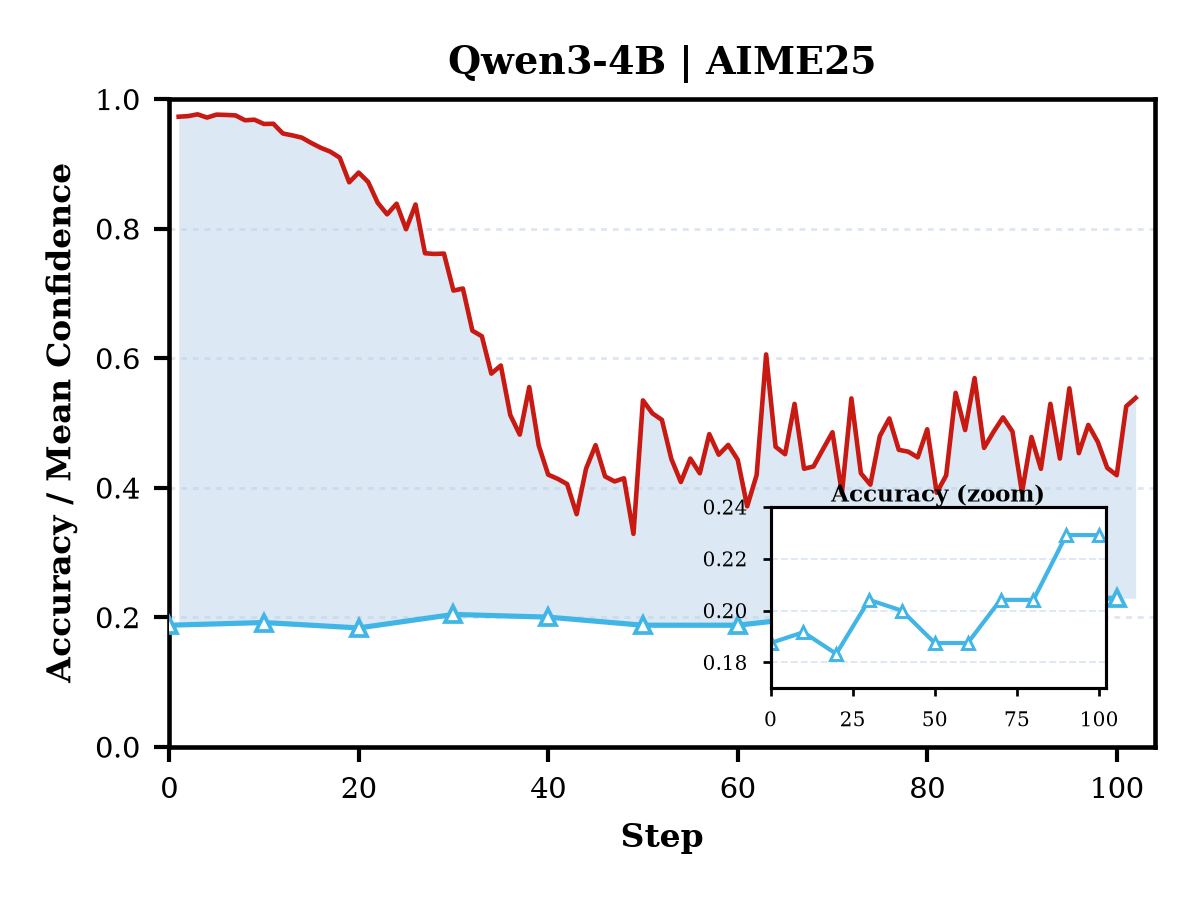} \\
  \includegraphics[width=0.24\textwidth]{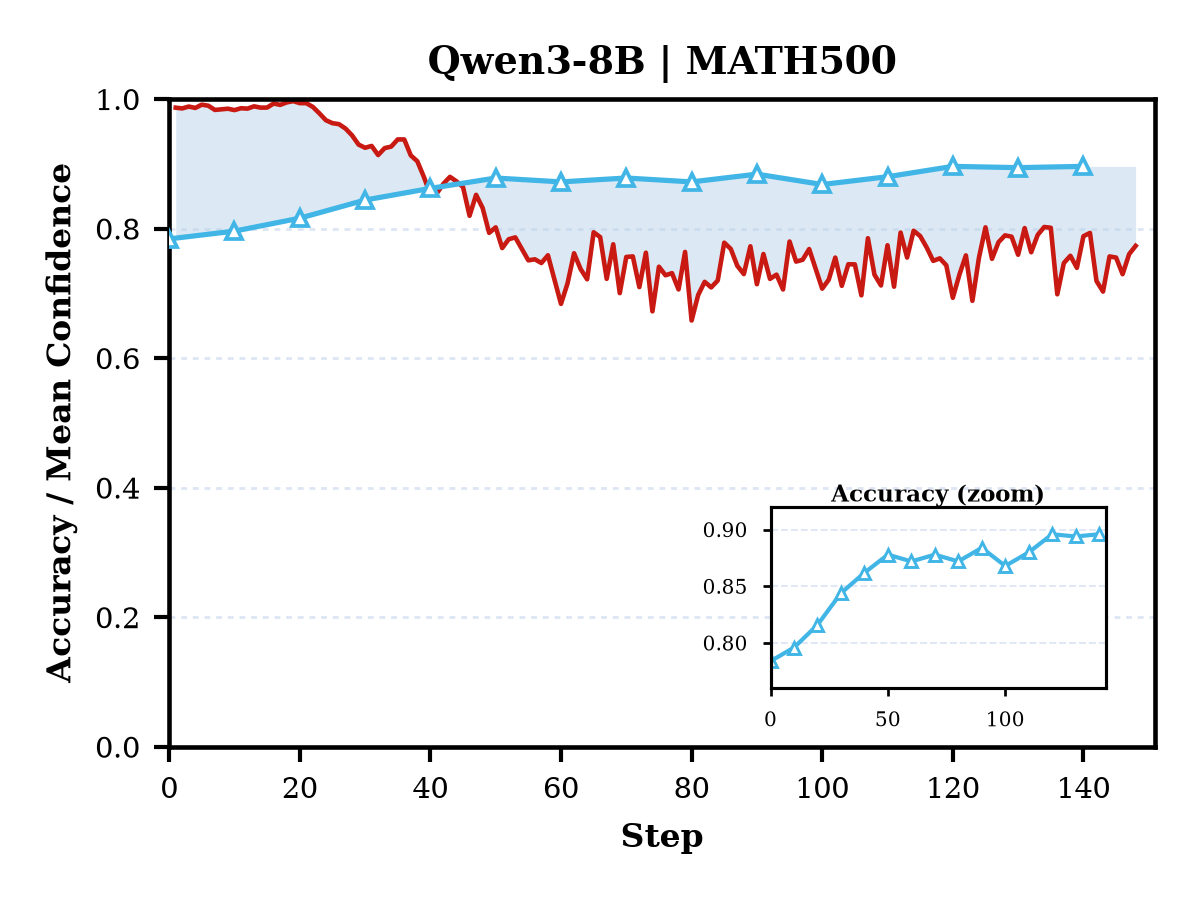}
  \includegraphics[width=0.24\textwidth]{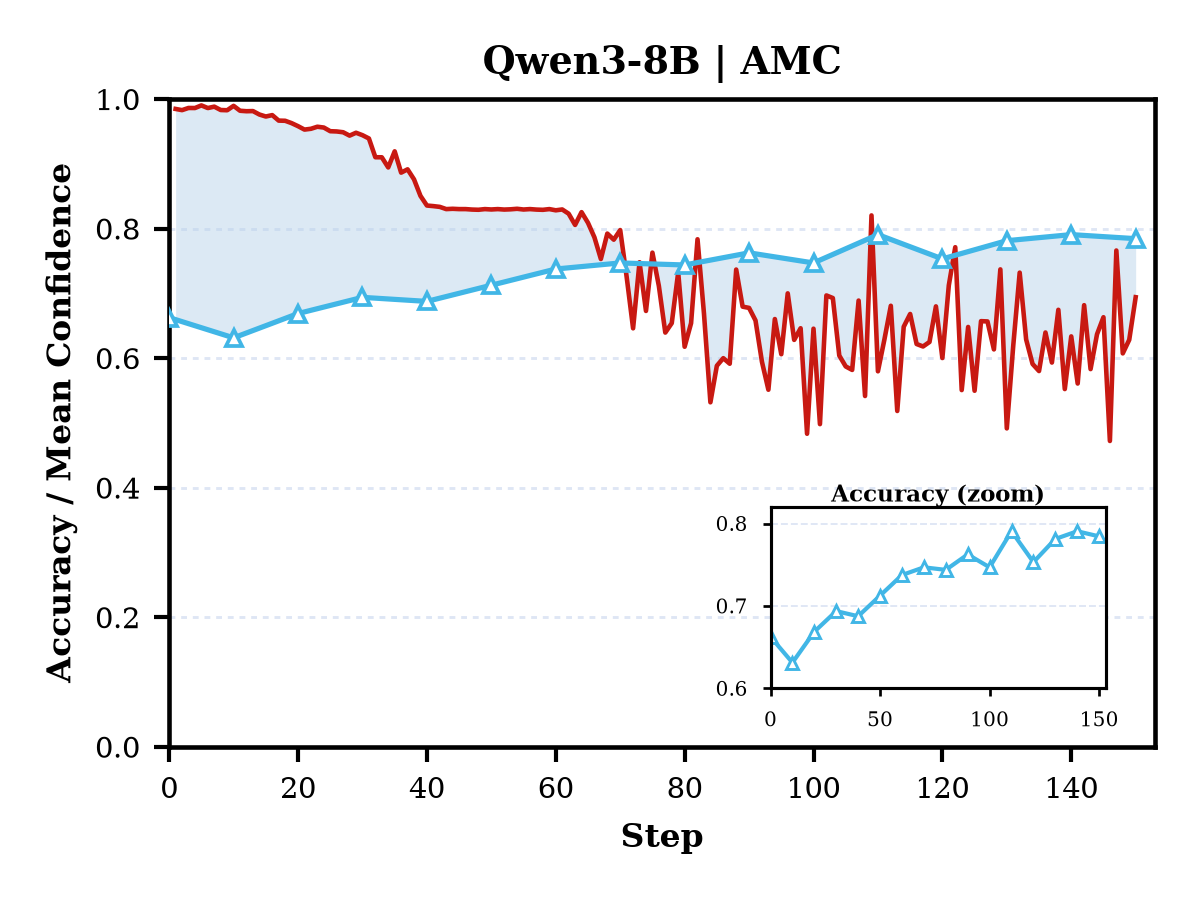}
  \includegraphics[width=0.24\textwidth]{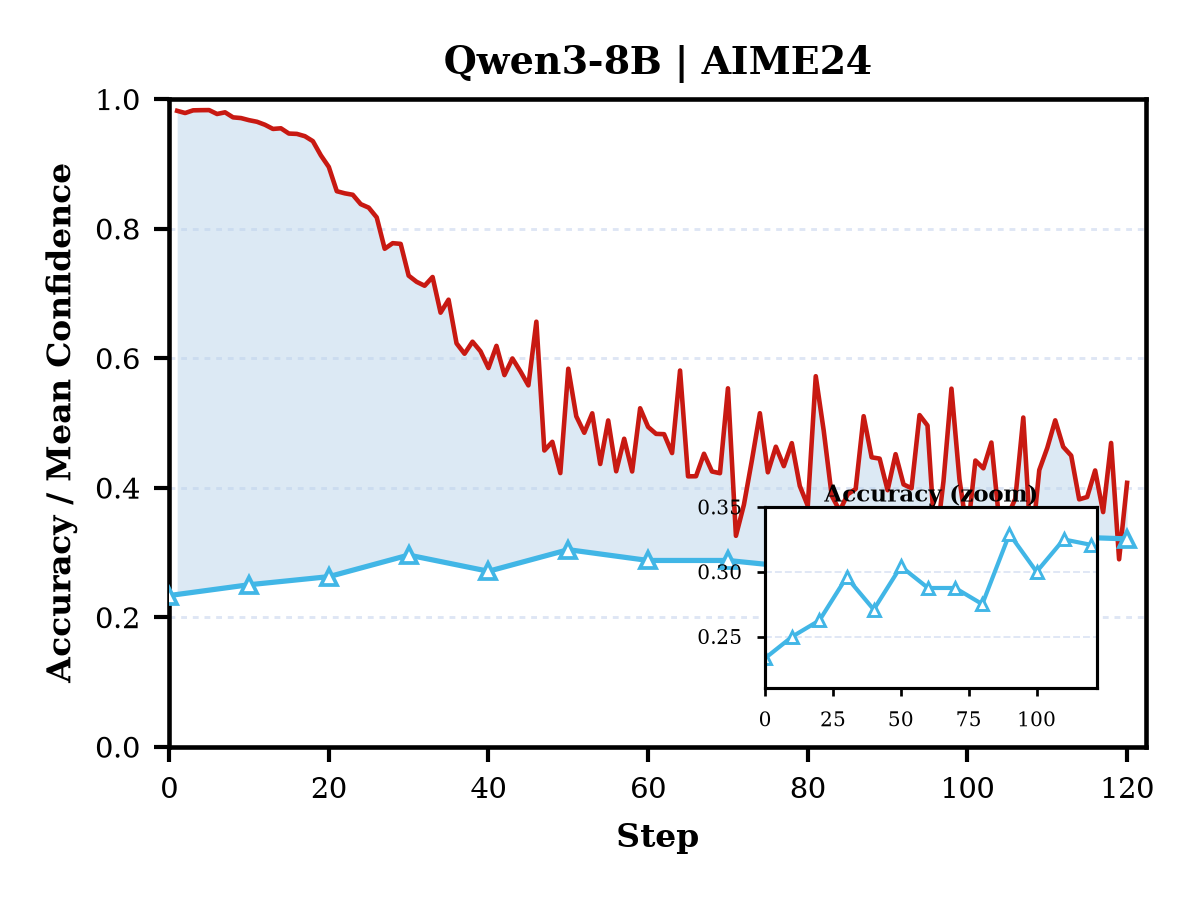}
  \includegraphics[width=0.24\textwidth]{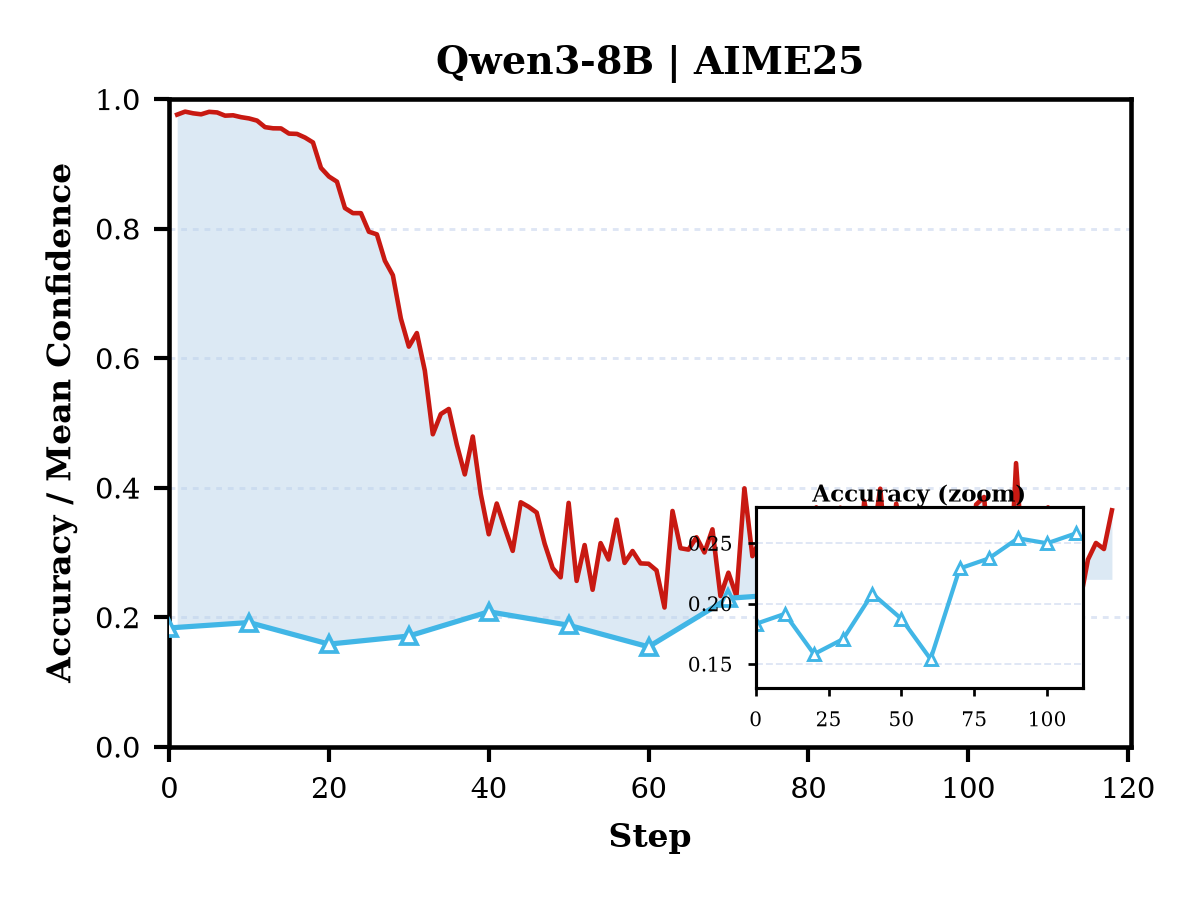} \\
  \includegraphics[width=0.24\textwidth]{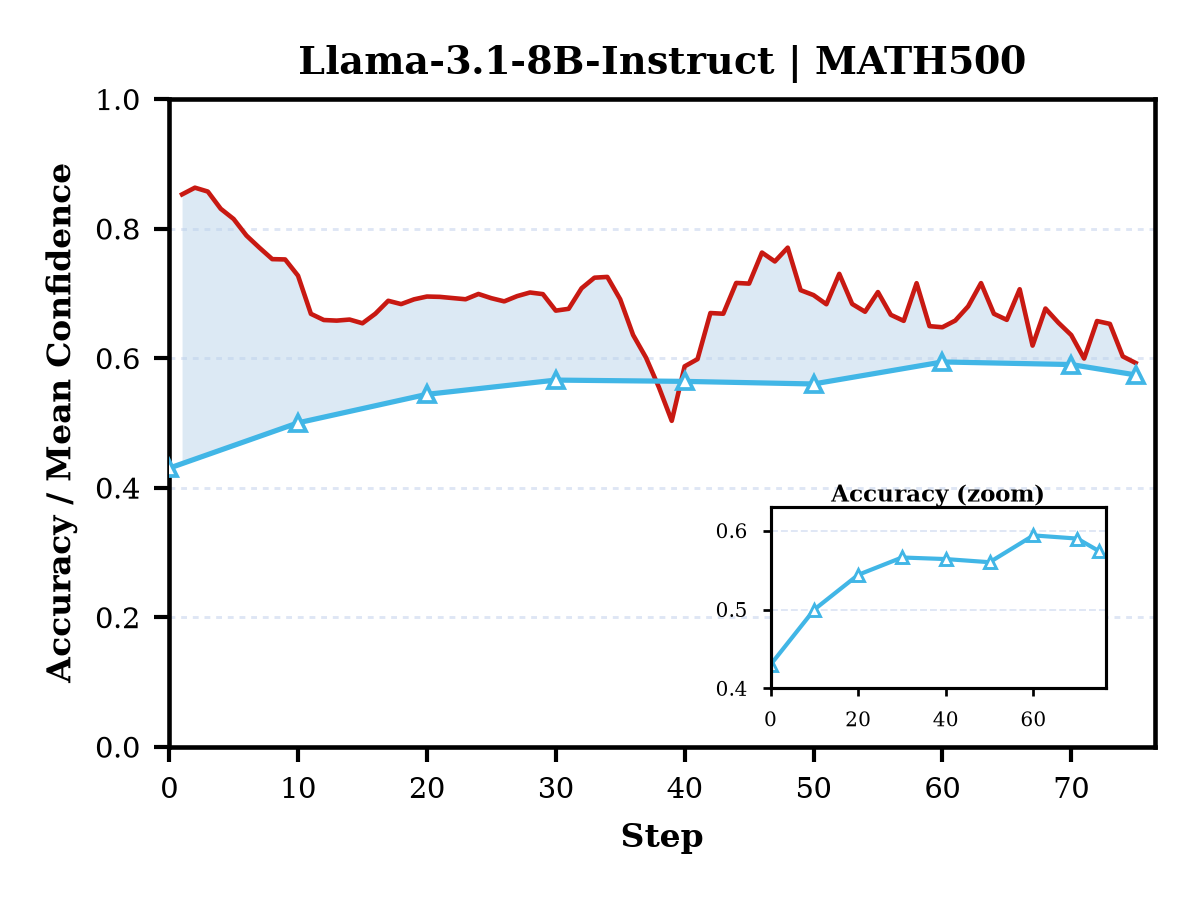}
  \includegraphics[width=0.24\textwidth]{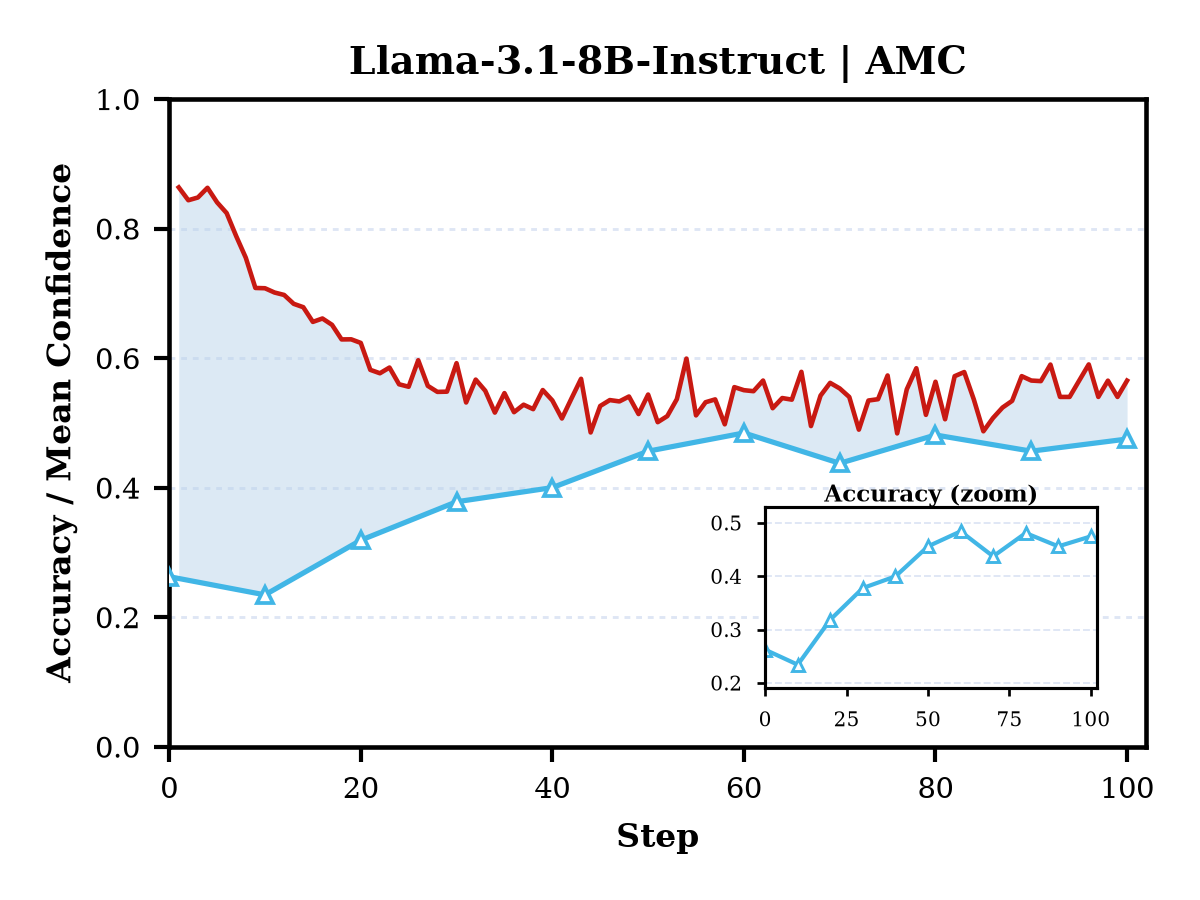}
  \includegraphics[width=0.24\textwidth]{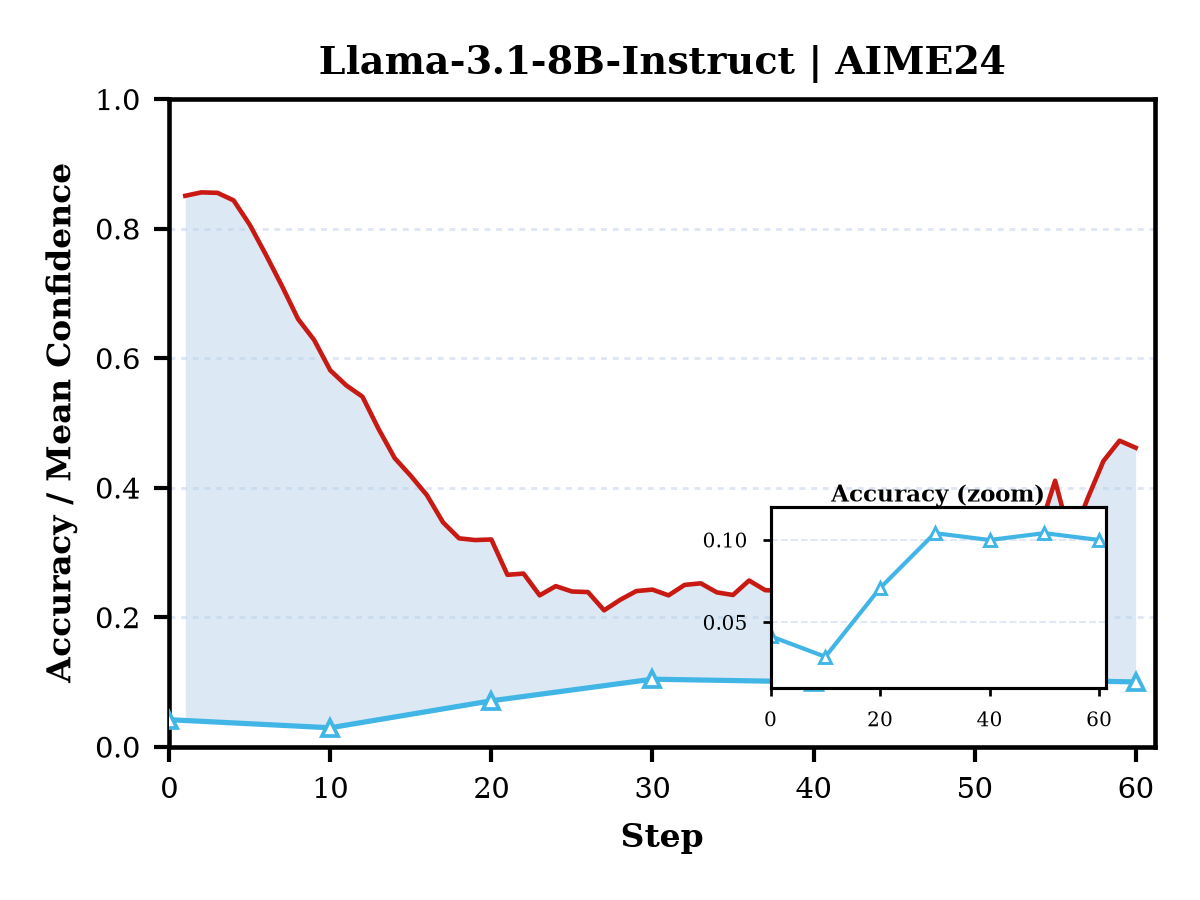}
  \includegraphics[width=0.24\textwidth]{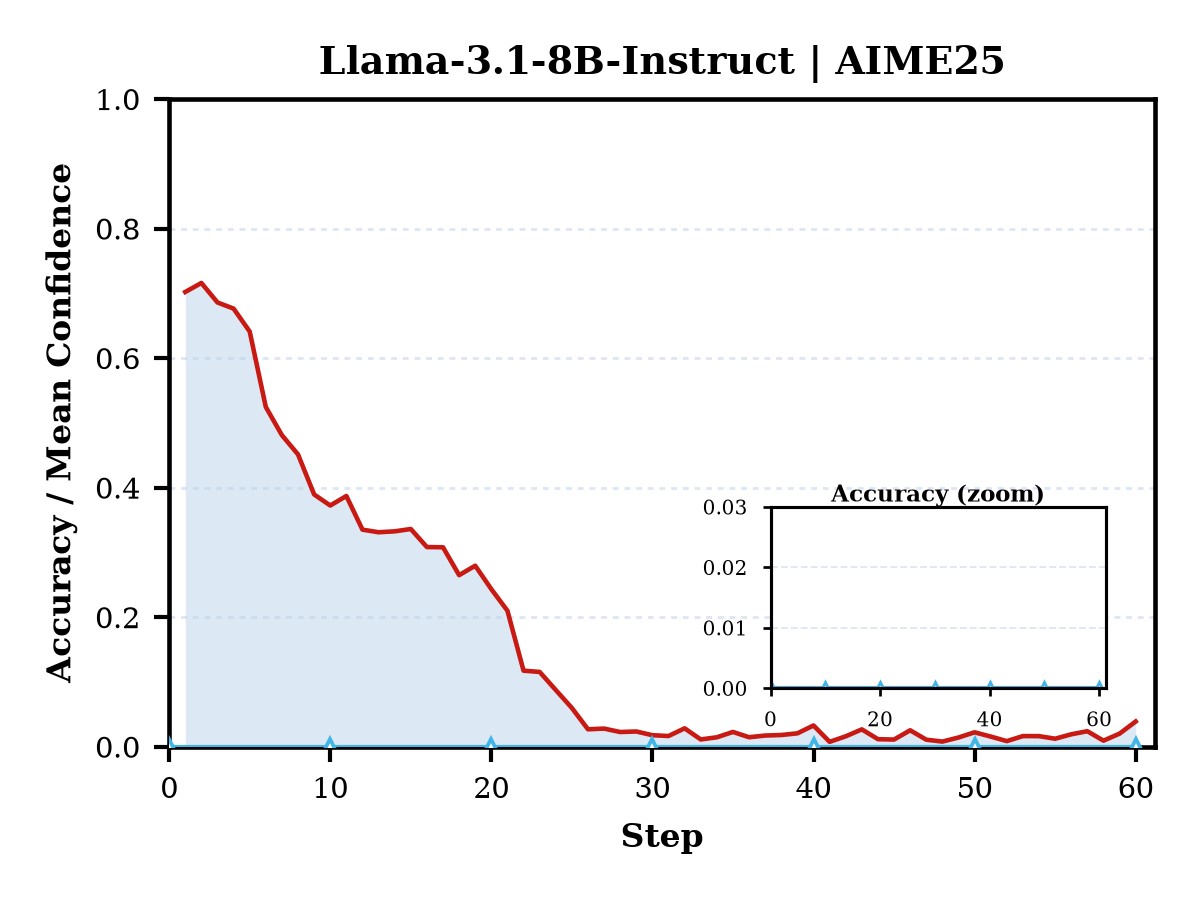}
  \vspace{-2mm}
  \caption{\textbf{More training dynamics of TTCL on base models.} The \textcolor{softred}{red curves} show mean confidence and the \textcolor{cyan}{cyan-blue curves} show empirical accuracy during TTCL training. The shaded area indicates the confidence-accuracy gap, and the inset zooms in on the accuracy trajectory. TTCL progressively aligns confidence with empirical accuracy while improving task accuracy.}
  \label{Fig:TTCL_Base_Dynamics_Appendix}
\end{figure}

\begin{figure}[t!]
  \centering
  \includegraphics[width=0.24\textwidth]{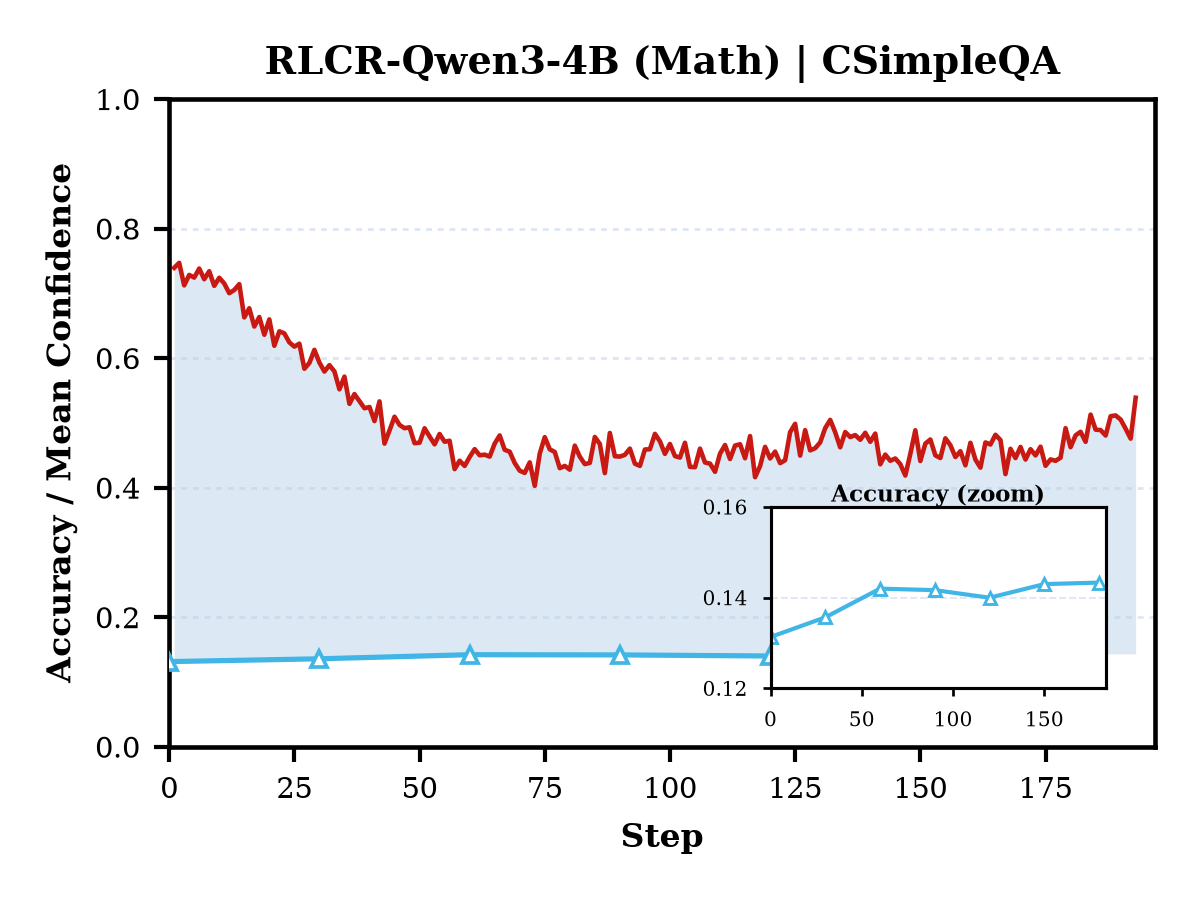}
  \includegraphics[width=0.24\textwidth]{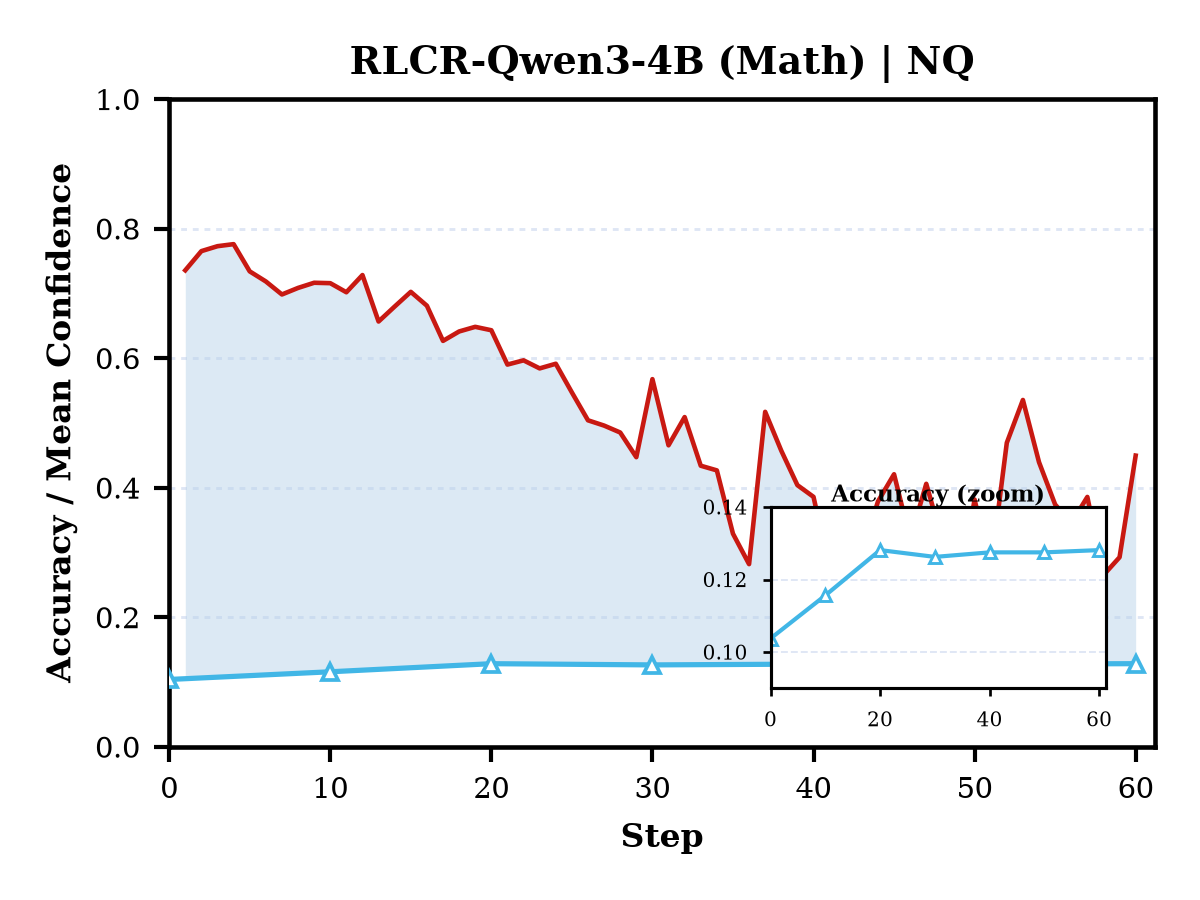}
  \includegraphics[width=0.24\textwidth]{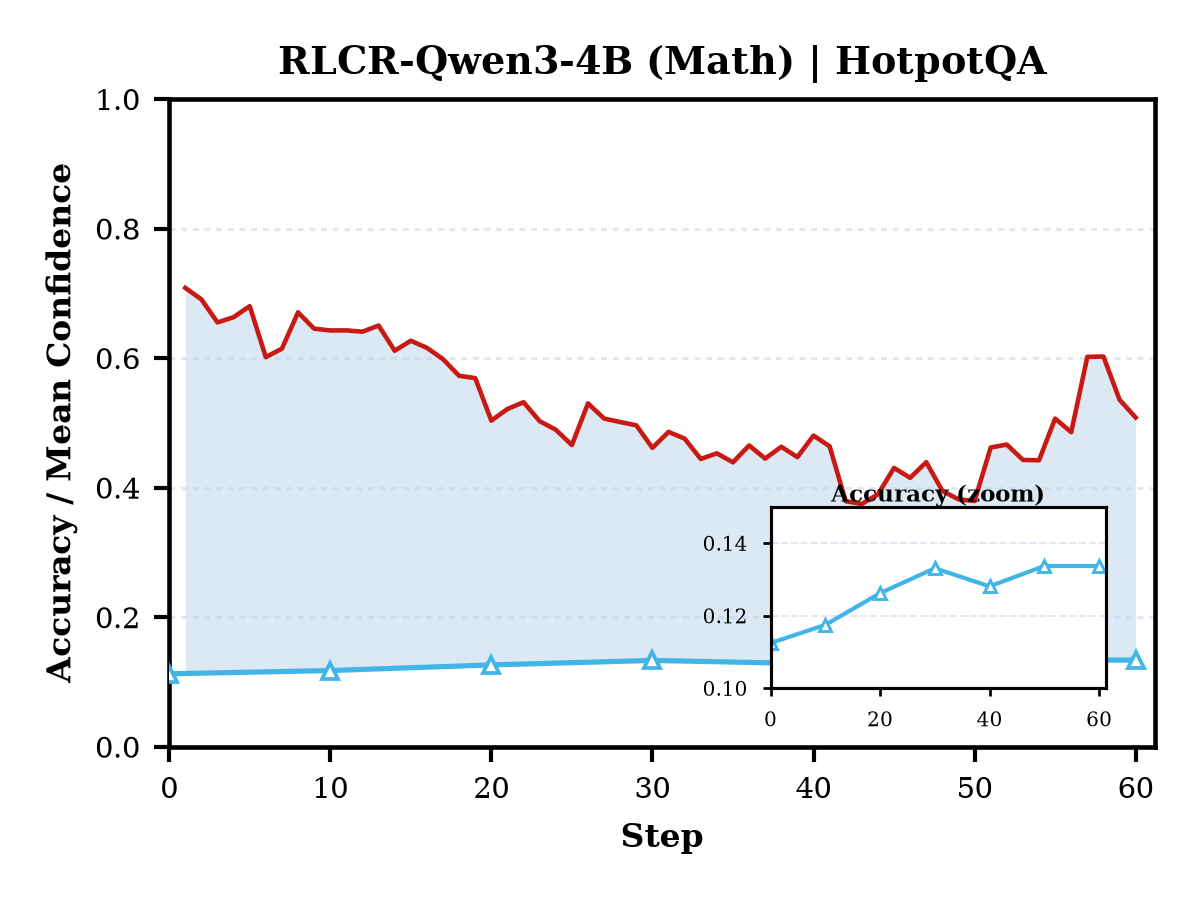}
  \includegraphics[width=0.24\textwidth]{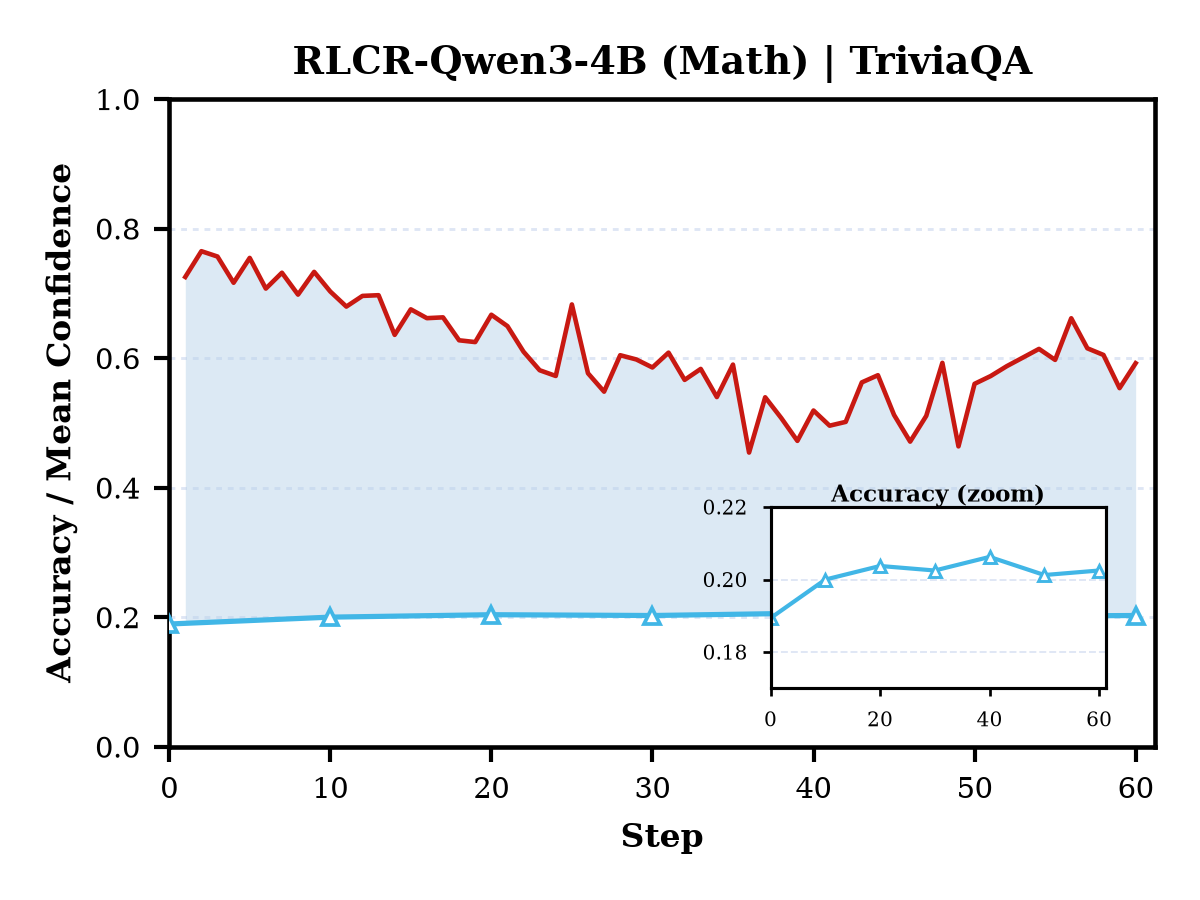} \\
  \includegraphics[width=0.24\textwidth]{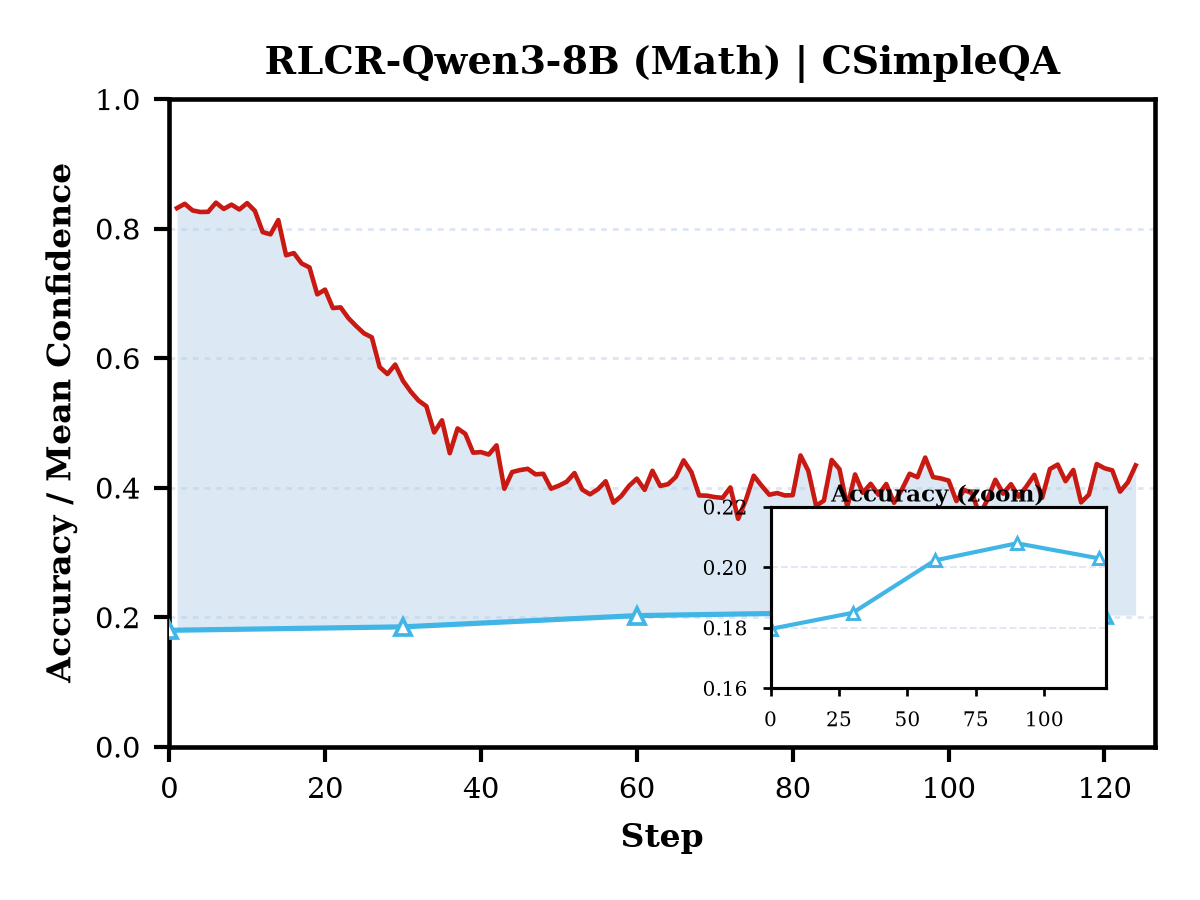}
  \includegraphics[width=0.24\textwidth]{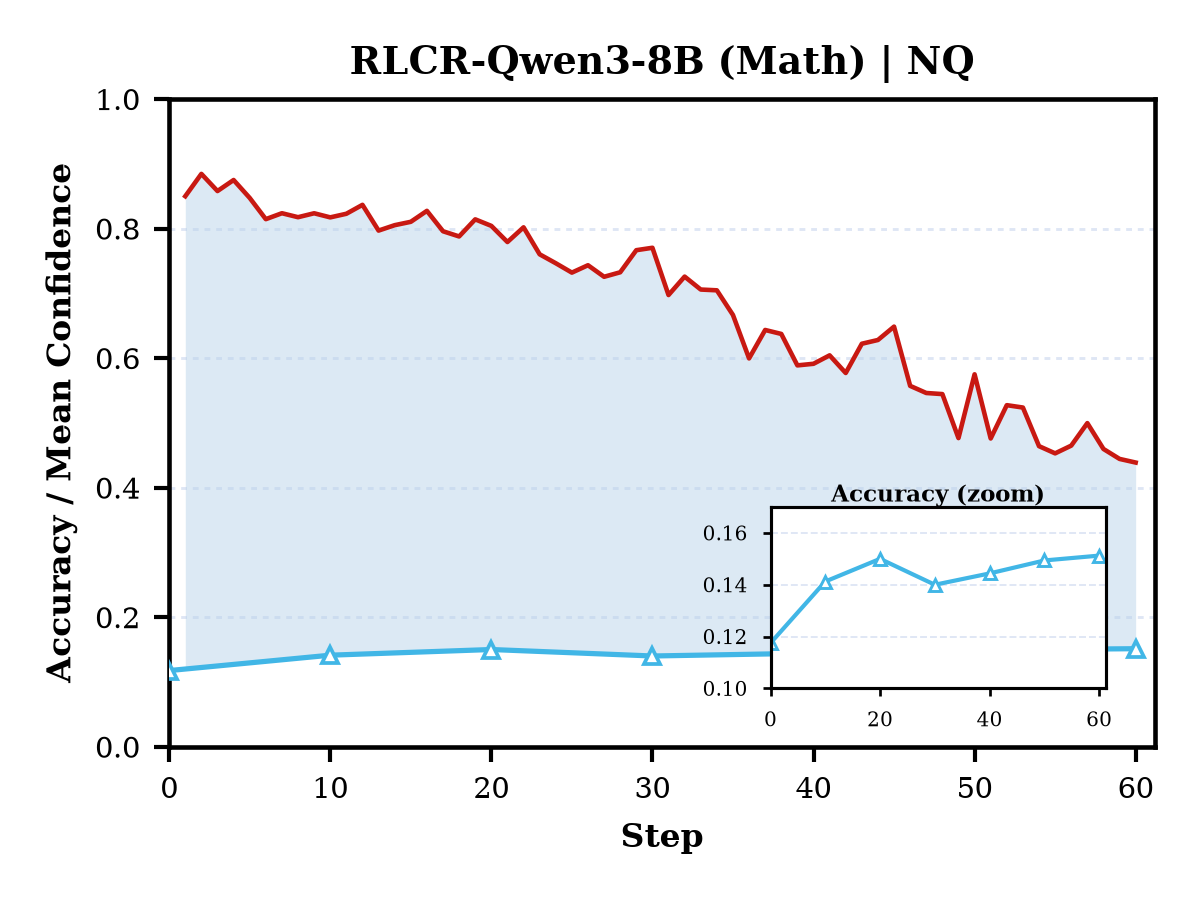}
  \includegraphics[width=0.24\textwidth]{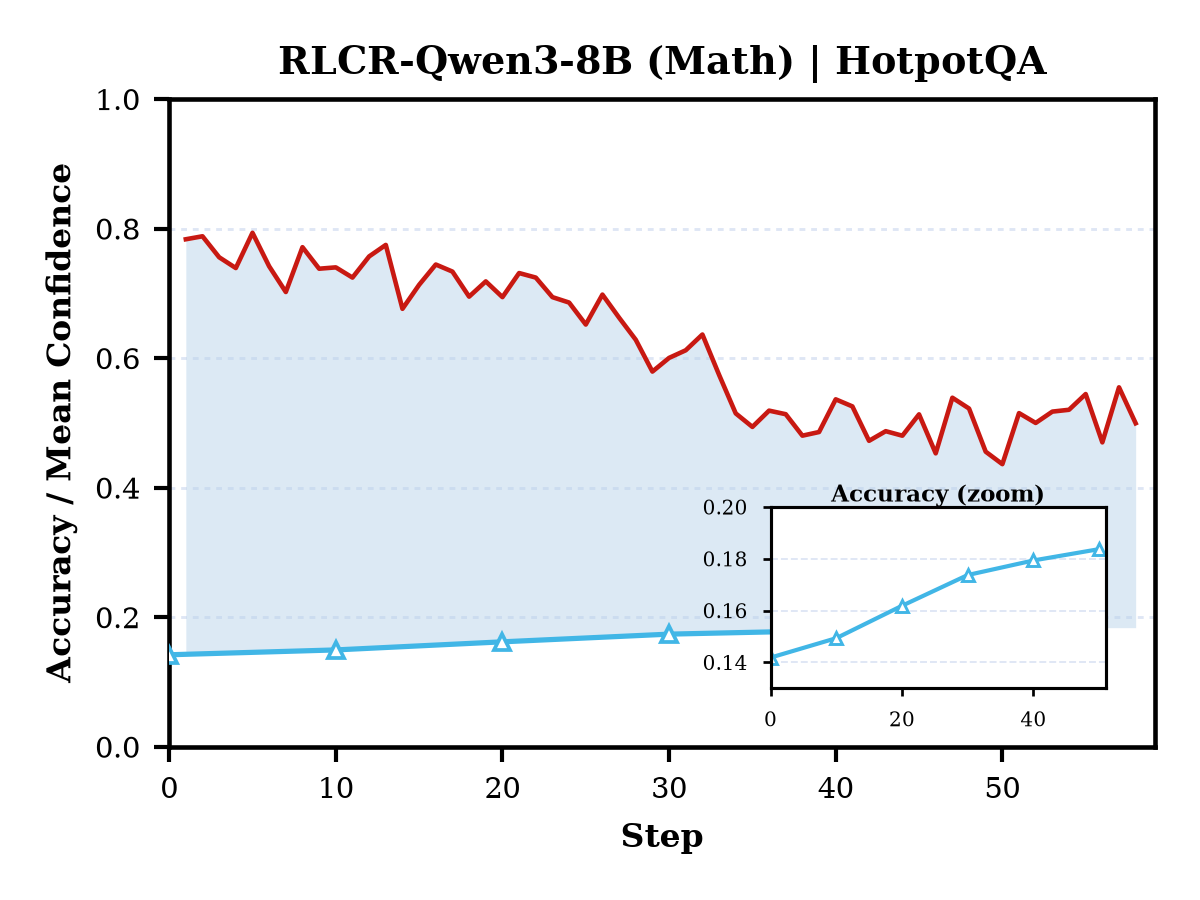}
  \includegraphics[width=0.24\textwidth]{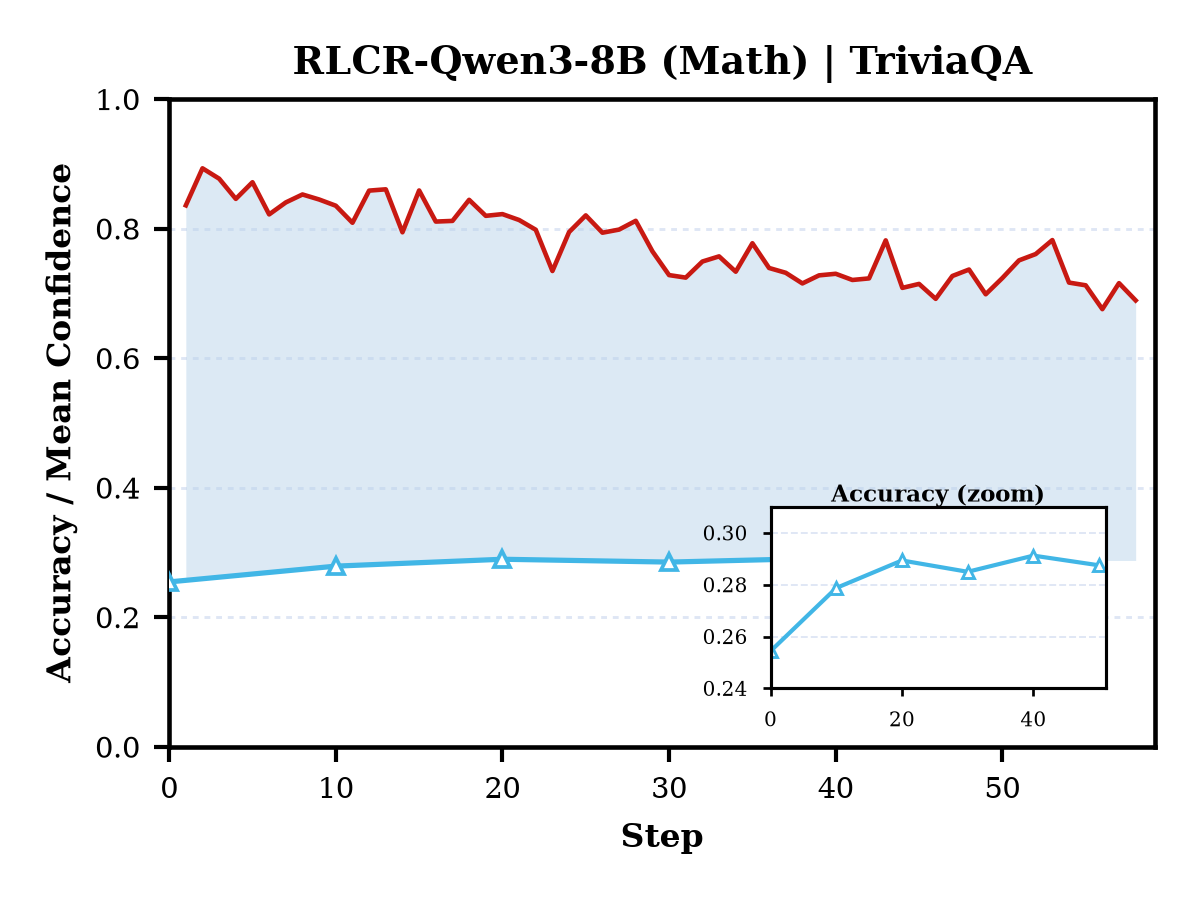}
  \vspace{-2mm}
  \caption{\textbf{More training dynamics of TTCL on calibrated models.} The \textcolor{softred}{red curves} show mean confidence and the \textcolor{cyan}{cyan-blue curves} show empirical accuracy during TTCL training. The shaded area indicates the confidence-accuracy gap, and the inset zooms in on the accuracy trajectory. TTCL progressively aligns confidence with empirical accuracy while improving task accuracy.}
  \label{Fig:TTCL_Calibrated_Dynamics_Appendix}
\end{figure}

For AMC/AIME24/AIME25/AIME26 and NQ/HotpotQA/TriviaQA, due to their relatively small dataset sizes, we sample eight responses per question to reduce the variance of the computed metrics. For MATH500, SimpleQA and ChineseSimpleQA, we sample one response per question.

\section{Additional Experimental Results}
\subsection{More Results of Calibration Performance}
\label{App:Brier_Conf}
As a supplement, Table~\ref{Tble:TTCL_Base_Math_BrierConf}, Table~\ref{Tble:TTCL_Base_Fact_BrierConf}, Table~\ref{Tble:TTCL_InDomain_BrierConf} and Table~\ref{Tble:TTCL_OutDomain_BrierConf} additionally introduce the metrics of Brier score and mean confidence. The Brier score measures the squared discrepancy between predicted confidence and actual correctness, with a lower score indicating better confidence calibration. The mean confidence, when considered together with accuracy, provides a complementary view of the model's overall confidence tendency toward over-confidence or under-confidence.

\par
\textbf{Existing base models exhibit substantial over-confidence across both mathematical reasoning and factual QA benchmarks.} 
As shown in Table~\ref{Tble:TTCL_Base_Math_BrierConf} and Table~\ref{Tble:TTCL_Base_Fact_BrierConf}, it can be observed that the mean confidence of the base models is often considerably higher than their empirical accuracy. This mismatch is especially evident on several benchmarks where the models maintain high confidence despite relatively low correctness, resulting in large Brier scores. These results further motivate the need for calibration learning of the models to provide reliable confidence estimates.

\par
\textbf{TTCL consistently lowers the Brier score and makes mean confidence more closely aligned with empirical accuracy.} 
Across all four tables, it can be observed that TTCL consistently reduces the Brier score and narrows the gap between accuracy and mean confidence, providing additional evidence that TTCL indeed performs effective calibration learning for the models. For results of the base models in Table~\ref{Tble:TTCL_Base_Math_BrierConf} and Table~\ref{Tble:TTCL_Base_Fact_BrierConf}, TTCL effectively corrects the mismatch between confidence and empirical accuracy across both mathematical reasoning and factual QA benchmarks, while simultaneously improving answer accuracy. For results of the RLCR-calibrated models in Table~\ref{Tble:TTCL_InDomain_BrierConf} and Table~\ref{Tble:TTCL_OutDomain_BrierConf}, TTCL further improves calibration on top of the already calibrated models. In particular, under out-of-domain, where the confidence learned from the source domain may transfer poorly on the target domain, TTCL effectively adapts the confidence estimates to the target distribution.

\begin{table}[t]
    \centering
    \caption{\textbf{More results of TTCL-trained models on multiple other domains.} These results examine whether TTCL improves calibration at the cost of model general capabilities on other domains. 
    }
    \label{Tble:TTCL_Other_Bench_Appendix}
    \vspace{-3mm}
    \resizebox{\textwidth}{!}{
    \begin{tabular}{l|ccccccc|l|ccccccc}
    \toprule[1.6pt]
        \multirow{2}{*}{\textbf{TTCL Target}} & \multicolumn{2}{c}{\textbf{ARC}} & \multirow{2}{*}{\textbf{GPQA-D}} & \multirow{2}{*}{\textbf{IFEval}} & \multirow{2}{*}{\textbf{MMLU-Pro}} & \multicolumn{2}{c|}{\textbf{TruthfulQA}} & \multirow{2}{*}{\textbf{TTCL Target}} & \multicolumn{2}{c}{\textbf{ARC}} & \multirow{2}{*}{\textbf{GPQA-D}} & \multirow{2}{*}{\textbf{IFEval}} & \multirow{2}{*}{\textbf{MMLU-Pro}} & \multicolumn{2}{c}{\textbf{TruthfulQA}} \\
        \cmidrule(lr){2-3} \cmidrule(lr){7-8} \cmidrule(lr){10-11} \cmidrule(lr){15-16}
        ~ & Challenge & Easy & ~ & ~ & ~ & MC1 & MC2 & ~ & Challenge & Easy & ~ & ~ & ~ & MC1 & MC2 \\
        \midrule
            \multicolumn{16}{c}{\textit{\textbf{Qwen3-8B}}} \\
        \midrule
        \textbf{Base Model} & 55.20 & 75.34 & 50.00 & 84.62 & 63.70 & 32.80 & 51.60 & \textbf{Base Model} & 55.20 & 75.34 & 50.00 & 84.62 & 63.70 & 32.80 & 51.60 \\
        \textbf{MATH500} & 55.29 & 74.83 & 51.52 & 85.47 & 64.82 & 33.29 & 51.65 & \textbf{SimpleQA} & 54.52 & 74.83 & 45.96 & 84.65 & 63.86 & 33.17 & 51.20 \\
        \textbf{AMC} & 55.29 & 74.83 & 45.96 & 85.83 & 63.87 & 33.05 & 51.35 & \textbf{NQ} & 55.12 & 74.62 & 51.52 & 84.77 & 63.12 & 33.05 & 51.54 \\ 
        \textbf{AIME24} & 54.52 & 74.07 & 48.99 & 86.20 & 62.66 & 32.68 & 50.95 & \textbf{HotpotQA} & 55.46 & 74.79 & 45.45 & 86.11 & 63.87 & 33.29 & 51.03 \\ 
        \textbf{AIME25} & 54.52 & 74.12 & 51.01 & 84.98 & 63.16 & 32.80 & 50.82 & \textbf{TriviaQA} & 54.78 & 74.75 & 48.99 & 84.56 & 63.74 & 32.56 & 50.70 \\
        \specialrule{1pt}{0.4ex}{0.4ex}
            \multicolumn{16}{c}{\textit{\textbf{Qwen3-4B}}} \\
        \midrule
        \textbf{Base Model} & 48.63 & 65.28 & 46.97 & 81.39 & 56.50 & 35.74 & 51.49 & \textbf{Base Model} & 48.63 & 65.28 & 46.97 & 81.39 & 56.50 & 35.74 & 51.49 \\
        \textbf{MATH500} & 48.98 & 65.11 & 46.46 & 84.48 & 57.46 & 35.99 & 51.66 & \textbf{SimpleQA} & 48.63 & 65.95 & 40.91 & 82.18 & 57.05 & 36.23 & 52.01 \\
        \textbf{AMC} & 48.55 & 65.45 & 45.45 & 83.29 & 57.43 & 36.72 & 51.94 & \textbf{NQ} & 48.89 & 64.39 & 44.95 & 84.57 & 57.00 & 36.23 & 51.64 \\
        \textbf{AIME24} & 47.70 & 64.81 & 43.94 & 83.57 & 56.99 & 36.23 & 52.07 & \textbf{HotpotQA} & 48.63 & 64.56 & 43.94 & 82.20 & 56.86 & 36.47 & 51.30 \\
        \textbf{AIME25} & 48.29 & 64.81 & 46.46 & 83.52 & 56.39 & 36.60 & 51.99 & \textbf{TriviaQA} & 47.95 & 64.60 & 42.42 & 84.92 & 56.79 & 35.25 & 51.70 \\
        \specialrule{1pt}{0.4ex}{0.4ex}
            \multicolumn{16}{c}{\textit{\textbf{Llama-3.1-8B-Instruct}}} \\
        \midrule
        \textbf{Base Model} & 59.13 & 81.23 & 34.85 & 77.26 & 37.73 & 38.07 & 52.77 & \textbf{Base Model} & 59.13 & 81.23 & 34.85 & 77.26 & 37.73 & 38.07 & 52.77 \\
        \textbf{MATH500} & 57.59 & 79.46 & 33.33 & 75.16 & 37.55 & 37.58 & 51.58 & \textbf{AMC} & 55.63 & 78.87 & 32.32 & 75.63 & 37.73 & 35.01 & 49.23 \\
        \textbf{AIME24} & 58.19 & 80.01 & 34.85 & 77.58 & 34.92 & 35.13 & 49.76 & \textbf{AIME25} & 58.53 & 80.05 & 32.83 & 77.53 & 36.40 & 35.74 & 50.13 \\
        \specialrule{1pt}{0.4ex}{0.4ex}
            \multicolumn{16}{c}{\textit{\textbf{RLCR-calibrated Qwen3-8B on DAPO-Math-14K}}} \\
        \midrule
        \textbf{Base Model}     & 55.20 & 75.34 & 50.00 & 84.62 & 63.70 & 32.80 & 51.60 & \textbf{Base Model}     & 55.20 & 75.34 & 50.00 & 84.62 & 63.70 & 32.80 & 51.60 \\
        \textbf{RLCR Model}     & 55.03 & 74.62 & 52.53 & 84.63 & 63.85 & 32.44 & 51.48 & \textbf{RLCR Model}     & 55.03 & 74.62 & 52.53 & 84.63 & 63.85 & 32.44 & 51.48 \\
        \textbf{MATH500}        & 54.95 & 74.33	& 52.02 & 84.01 & 64.44 & 32.93 & 51.63 & \textbf{SimpleQA}       & 55.03 & 74.03 & 52.53 & 84.28 & 64.07 & 32.93 & 50.49 \\
        \textbf{AMC}            & 54.61 & 74.41 & 57.07 & 84.58 & 64.50 & 32.80 & 51.61 & \textbf{ChineseSimpleQA} & 54.95 & 74.16 & 55.05 & 84.10 & 64.54 & 32.56 & 50.83 \\
        \textbf{AIME24}         & 54.44 & 73.53 & 52.53 & 85.28 & 64.15 & 32.68 & 50.51 & \textbf{NQ}             & 54.69 & 74.20 & 52.53 & 84.74 & 64.06 & 32.56 & 51.12 \\
        \textbf{AIME25}         & 55.03 & 74.20 & 50.00 & 83.46 & 64.39 & 33.17 & 51.32 & \textbf{HotpotQA}       & 54.86 & 74.33 & 52.53 & 85.38 & 64.40 & 32.93 & 51.22 \\
        \textbf{AIME26}         & 54.61 & 74.16 & 52.53 & 84.08 & 64.35 & 32.93 & 51.27 & \textbf{TriviaQA}       & 54.86 & 74.49 & 53.03 & 84.07 & 63.92 & 32.56 & 50.71 \\
        \specialrule{1pt}{0.4ex}{0.4ex}
            \multicolumn{16}{c}{\textit{\textbf{RLCR-calibrated Qwen3-4B on DAPO-Math-14K}}} \\
        \midrule
        \textbf{Base Model}     & 48.63 & 65.28 & 46.97 & 81.39 & 56.50 & 35.74 & 51.49 & \textbf{Base Model}     & 48.63 & 65.28 & 46.97 & 81.39 & 56.50 & 35.74 & 51.49 \\
        \textbf{RLCR Model}     & 48.63 & 65.45 & 50.51 & 82.70	& 58.75 & 36.72	& 51.84 & \textbf{RLCR Model}     & 48.63 & 65.45 & 50.51 & 82.70	& 58.75 & 36.72	& 51.84 \\
        \textbf{MATH500}        & 48.04 & 64.52 & 47.98 & 82.68 & 59.52 & 35.86 & 51.71 & \textbf{SimpleQA}       & 49.06 & 65.78 & 48.99 & 82.21 & 59.56 & 35.74 & 51.75 \\
        \textbf{AMC}            & 48.63 & 64.77 & 51.01 & 81.92 & 58.21 & 36.23 & 52.11 & \textbf{ChineseSimpleQA} & 48.46 & 64.56 & 47.47 & 83.08 & 58.30 & 36.60 & 51.79 \\
        \textbf{AIME24}         & 47.95 & 64.69 & 48.99 & 83.50 & 59.14 & 36.47 & 51.98 & \textbf{NQ}             & 48.29 & 64.81 & 48.48 & 83.61 & 59.86 & 36.35 & 51.94 \\
        \textbf{AIME25}         & 48.12 & 64.98 & 49.49 & 84.36 & 58.64 & 36.35 & 52.00 & \textbf{HotpotQA}       & 48.63 & 65.24 & 46.97 & 83.06 & 58.89 & 36.35 & 52.26 \\
        \textbf{AIME26}         & 48.04 & 65.15 & 48.99 & 83.61 & 59.00 & 36.47 & 52.03 & \textbf{TriviaQA}       & 48.38	& 65.32 & 53.03 & 82.91 & 58.88	& 36.23 & 51.46 \\
        \specialrule{1pt}{0.4ex}{0.4ex}
            \multicolumn{16}{c}{\textit{\textbf{RLCR-calibrated Qwen3-4B on FactQA-10K}}} \\
        \midrule
        \textbf{Base Model}     & 48.63 & 65.28 & 46.97 & 81.39 & 56.50 & 35.74 & 51.49 & \textbf{RLCR Model}     & 49.15 & 65.53	& 43.43	& 83.20 & 57.10 & 35.86 & 51.27 \\
        \textbf{RLCR Model}     & 49.15 & 65.53	& 43.43	& 83.20 & 57.10 & 35.86 & 51.27 & \textbf{AIME24}         & 48.55 & 65.32 & 44.44 & 83.08 & 56.85 & 36.35 & 51.70 \\
        \textbf{MATH500}        & 48.63 & 65.36 & 43.94 & 82.88 & 56.92 & 35.99 & 51.27 & \textbf{AIME25}         & 47.61 & 64.27 & 45.96 & 83.16 & 56.63 & 36.23 & 51.48 \\
        \textbf{AMC}            & 48.72 & 65.49 & 43.94 & 84.10 & 57.58 & 36.11 & 51.90 & \textbf{AIME26}         & 48.29 & 65.03 & 44.95 & 83.52 & 56.88 & 36.35 & 51.72 \\
    \bottomrule[1.6pt]
    \end{tabular}}
\end{table}

\par
\textbf{TTCL does not simply decrease confidence; instead, it adjusts confidence toward the model's empirical accuracy.}
For example, for the RLCR-calibrated Qwen3-4B on MATH500 in Table~\ref{Tble:TTCL_InDomain_BrierConf}, the mean confidence increases from 0.772 to 0.913 after TTCL, closely matching the corresponding accuracy of 90.4\%. This result indicates that the calibration improvement achieved by TTCL is not due to indiscriminately suppressing confidence, but rather to better aligning confidence with empirical correctness. This observation provides empirical evidence for our theoretical analysis that the TTCL objective serves as a surrogate for the ideal calibration learning objective.

\par
These results further support our analysis in Section~\ref{Sec:Main_Results} that TTCL can improve both accuracy and calibration on target tasks at test time, aligning confidence estimates with empirical correctness.

\subsection{More Visualizations of TTCL Calibration Effects}
\label{App:Visual_Calibrate}
As a supplement to the results in Fig.~\ref{Fig:calibration_bar}, to provide an intuitive view of the calibration effects of TTCL, we present additional visualizations under different settings in Fig.~\ref{Fig:TTCL_BaseMath_Calibration_Visualize}, Fig.~\ref{Fig:TTCL_BaseFact_Calibration_Visualize}, Fig.~\ref{Fig:TTCL_RLCR_InDomain_Visualize} and Fig.~\ref{Fig:TTCL_RLCR_OutDomain_Visualize}. Specifically, we visualize the calibration performance before and after TTCL for: (1) base models on mathematical reasoning benchmarks (Fig.~\ref{Fig:TTCL_BaseMath_Calibration_Visualize}), (2) base models on factual QA benchmarks (Fig.~\ref{Fig:TTCL_BaseFact_Calibration_Visualize}), (3) math-calibrated models on mathematical reasoning benchmarks (Fig.~\ref{Fig:TTCL_RLCR_InDomain_Visualize}), and (4) cross-domain adaptation, including math-calibrated models (DAPO-Math-14K) on factual QA benchmarks and fact-calibrated models (FactQA-10K) on mathematical reasoning benchmarks (Fig.~\ref{Fig:TTCL_RLCR_OutDomain_Visualize}). 

\par
Across these settings, TTCL consistently improves the alignment between confidence and empirical accuracy. For base models, TTCL effectively corrects their pronounced over-confidence, bringing confidence into closer alignment with empirical accuracy. For math-calibrated models evaluated in-domain, TTCL further refines calibration on top of the existing calibrated initialization. More importantly, under cross-domain transfer, calibration learned on the source domain transfers poorly to the target domain, whereas TTCL effectively adapts confidence estimates to the target benchmarks. Overall, these visualizations provide an intuitive view of the consistent calibration improvements achieved by TTCL across different model initializations and domain settings.

\begin{table}[t]
    \centering
    \caption{\textbf{Detailed IFEval results of TTCL-trained models.} IFEval reports four metrics: prompt-level strict accuracy, prompt-level loose accuracy, instruction-level strict accuracy, and instruction-level loose accuracy. In previous tables, we report the average of these four metrics. Here, we provide the detailed results as a supplement to Table~\ref{Tble:TTCL_Other_Bench} and Table~\ref{Tble:TTCL_Other_Bench_Appendix}.}
    \label{Tble:TTCL_IFEval_Appendix}
    \vspace{-3mm}
    \resizebox{\textwidth}{!}{
    \begin{tabular}{l|ccccc|l|ccccc}
    \toprule[1.6pt]
        \textbf{TTCL Target} & \textbf{Avg.} & \textbf{Prompt Strict} & \textbf{Inst. Strict} & \textbf{Prompt Loose} & \textbf{Inst. Loose} & \textbf{TTCL Target} & \textbf{Avg.} & \textbf{Prompt Strict} & \textbf{Inst. Strict} & \textbf{Prompt Loose} & \textbf{Inst. Loose} \\
        \midrule
            \multicolumn{12}{c}{\textit{\textbf{Qwen3-8B}}} \\
        \midrule
        \textbf{Base Model} & 84.62 & 80.04 & 86.45 & 83.36 & 88.61 & \textbf{Base Model}   & 84.62 & 80.04 & 86.45 & 83.36 & 88.61 \\
        \textbf{MATH500}    & 85.47 & 81.15 & 87.05 & 84.47 & 89.21 & \textbf{SimpleQA}     & 84.65 & 79.85 & 86.33 & 83.55 & 88.85 \\
        \textbf{AMC}        & 85.83 & 81.33 & 87.29 & 85.03 & 89.69 & \textbf{NQ}           & 84.77 & 80.41 & 83.36 & 86.69 & 88.61  \\
        \textbf{AIME24}     & 86.20 & 82.07 & 87.89 & 85.03 & 89.81 & \textbf{HotpotQA}     & 86.11 & 82.07 & 85.03 & 87.65 & 89.69  \\
        \textbf{AIME25}     & 84.98 & 80.59 & 86.57 & 83.92 & 88.85 & \textbf{TriviaQA}     & 84.56 & 80.22 & 83.18 & 86.33 & 88.49  \\
        \specialrule{1pt}{0.4ex}{0.4ex}
            \multicolumn{12}{c}{\textit{\textbf{Qwen3-4B}}} \\
        \midrule
        \textbf{Base Model} & 81.39 & 75.97 & 83.45 & 80.04 & 86.09 & \textbf{Base Model}   & 81.39 & 75.97 & 83.45 & 80.04 & 86.09  \\
        \textbf{MATH500}    & 84.48 & 80.04 & 85.97 & 83.55 & 88.37 & \textbf{SimpleQA}     & 82.18	& 76.89 & 84.05 & 80.96 & 86.81  \\
        \textbf{AMC}        & 83.29 & 78.37 & 84.77 & 82.26 & 87.77 & \textbf{NQ}           & 84.58 & 80.41 & 83.55 & 85.97 & 88.37 \\
        \textbf{AIME24}     & 83.57 & 78.74 & 85.13 & 82.62 & 87.77 & \textbf{HotpotQA}     & 82.20 & 77.26 & 81.15 & 83.81 & 86.57 \\
        \textbf{AIME25}     & 83.52 & 78.19 & 84.77 & 82.99 & 88.13 & \textbf{TriviaQA}     & 84.93 & 80.59 & 84.29 & 86.09 & 88.73 \\
        \specialrule{1pt}{0.4ex}{0.4ex}
            \multicolumn{12}{c}{\textit{\textbf{Llama-3.1-8B-Instruct}}} \\
        \midrule
        \textbf{Base Model} & 77.26 & 70.98 & 79.74 & 75.23 & 83.09 & \textbf{Base Model}   & 77.26 & 70.98 & 79.74 & 75.23 & 83.09 \\
        \textbf{MATH500}    & 75.16 & 68.21 & 77.10 & 73.94 & 81.41 & \textbf{AMC}          & 75.63 & 69.50 & 78.18 & 73.57 & 81.29 \\
        \textbf{AIME24}     & 77.58 & 71.35 & 79.86 & 75.79 & 83.33 & \textbf{AIME25}       & 77.53 & 70.98 & 80.10 & 75.60 & 83.45 \\
        \specialrule{1pt}{0.4ex}{0.4ex}
            \multicolumn{12}{c}{\textit{\textbf{RLCR-calibrated Qwen3-8B on DAPO-Math-14K}}} \\
        \midrule
        \textbf{Base Model} & 84.62 & 80.04 & 86.45 & 83.36 & 88.61 & \textbf{Base Model} & 84.62 & 80.04 & 86.45 & 83.36 & 88.61 \\
        \textbf{RLCR Model} & 84.63 & 80.22 & 86.69 & 82.99 & 88.61 & \textbf{RLCR Model} & 84.63 & 80.22 & 86.69 & 82.99 & 88.61 \\
        \textbf{MATH500}    & 84.01 & 79.67 & 86.09 & 82.26 & 88.01 & \textbf{SimpleQA}   & 84.28 & 79.85 & 86.21 & 82.81 & 88.25  \\
        \textbf{AMC}        & 84.58 & 79.67 & 86.45 & 83.36 & 88.85 & \textbf{ChineseSimpleQA} & 84.10 & 79.30 & 85.85 & 82.99 & 88.25 \\
        \textbf{AIME24}     & 85.28 & 80.96 & 87.17 & 83.92 & 89.09 & \textbf{NQ}         & 84.74 & 80.59 & 86.45 & 83.55 & 88.37 \\
        \textbf{AIME25}     & 83.46 & 78.74 & 85.73 & 81.70 & 87.65 & \textbf{HotpotQA}   & 85.38 & 81.15 & 87.17 & 84.10 & 89.09 \\
        \textbf{AIME26}     & 84.08 & 79.67 & 86.09 & 82.44 & 88.13 & \textbf{TriviaQA}   & 84.07 & 79.67 & 85.97 & 82.62 & 88.01 \\
        \specialrule{1pt}{0.4ex}{0.4ex}
            \multicolumn{12}{c}{\textit{\textbf{RLCR-calibrated Qwen3-4B on DAPO-Math-14K}}} \\
        \midrule
        \textbf{Base Model} & 81.39 & 75.97 & 83.45 & 80.04 & 86.09 & \textbf{Base Model} & 81.39 & 75.97 & 83.45 & 80.04 & 86.09  \\
        \textbf{RLCR Model} & 82.70 & 77.08 & 84.29 & 81.89 & 87.53 & \textbf{RLCR Model} & 82.70 & 77.08 & 84.29 & 81.89 & 87.53 \\
        \textbf{MATH500}    & 82.68 & 77.63 & 84.29 & 81.52 & 87.29 & \textbf{SimpleQA}   & 82.21 & 77.08 & 84.17 & 80.78 & 86.81  \\
        \textbf{AMC}        & 81.92 & 76.52 & 83.45 & 81.15 & 86.57 & \textbf{ChineseSimpleQA} & 83.08 & 78.19 & 84.77 & 82.07 & 87.29 \\
        \textbf{AIME24}     & 83.50 & 78.37 & 85.01 & 82.62 & 88.01 & \textbf{NQ}         & 83.61 & 78.74 & 85.13 & 82.81 & 87.77 \\
        \textbf{AIME25}     & 84.36 & 79.67 & 85.85 & 83.55 & 88.37 & \textbf{HotpotQA}   & 83.06 & 78.19 & 84.89 & 81.89 & 87.29 \\
        \textbf{AIME26}     & 83.61 & 78.74 & 85.13 & 82.81 & 87.77 & \textbf{TriviaQA}   & 82.91 & 77.26 & 83.93 & 82.81 & 87.65 \\
        \specialrule{1pt}{0.4ex}{0.4ex}
            \multicolumn{12}{c}{\textit{\textbf{RLCR-calibrated Qwen3-4B on FactQA-14K}}} \\
        \midrule
        \textbf{Base Model} & 81.39 & 75.97 & 83.45 & 80.04 & 86.09 & \textbf{RLCR Model} & 83.20 & 78.56 & 84.77 & 82.07 & 87.41 \\
        \textbf{RLCR Model} & 83.20 & 78.56 & 84.77 & 82.07 & 87.41 & \textbf{AIME24}     & 83.08 & 78.00 & 84.77 & 81.89 & 87.65 \\
        \textbf{MATH500}    & 82.88 & 77.82 & 84.53 & 81.89 & 87.29 & \textbf{AIME25}     & 83.16 & 78.19 & 84.65 & 82.26 & 87.53 \\
        \textbf{AMC}        & 84.10 & 79.48 & 85.49 & 83.18 & 88.25 & \textbf{AIME26}     & 83.52 & 78.74 & 85.13 & 82.44 & 87.77 \\
    \bottomrule[1.6pt]
    \end{tabular}}
\end{table}

\subsection{More Training Dynamics of TTCL}
\label{App:Training_Dynamics_TTCL}
As a supplement to Fig.~\ref{Fig:training_dynamics}, we provide additional training dynamics to illustrate how TTCL adjusts model confidence and accuracy throughout test-time learning. Figure~\ref{Fig:TTCL_Base_Dynamics_Appendix} shows the dynamics on base models, and Figure~\ref{Fig:TTCL_Calibrated_Dynamics_Appendix} presents the corresponding dynamics on math-calibrated models under factual QA adaptation. It can be observed that, across different models and target benchmarks, TTCL consistently drives mean confidence toward empirical accuracy while improving task accuracy. For base models, the initially pronounced over-confidence is progressively corrected during training, leading to a substantially reduced confidence-accuracy gap. For calibrated models, TTCL further adapts the existing confidence estimates to the target-domain performance, showing that calibration learned from the source setting can continue to be refined at test time. Overall, these dynamics provide a process-level view of how TTCL jointly improves accuracy and calibration during adaptation, further supporting our theoretical analysis that the TTCL objective serves as a surrogate for the ideal calibration learning objective by aligning confidence estimates with empirical correctness.

\subsection{Reward Hacking without Target Cache}
\label{App:Reward_Hacking_No_Cache}
To examine the failure mode that motivates the target cache, we train Qwen3-4B on AIME24 using supervision computed directly from the current rollout group. Fig.~\ref{Fig:Reward_Hacking_No_Cache} shows four questions for which the rollouts increasingly concentrate on an incorrect answer. As more rollouts produce that answer, its frequency-based probability rises toward one. Because this probability also serves as the confidence target, the supervision encourages the model's confidence to rise toward 1.0 despite the answer being wrong. This is a form of reward hacking: the model can obtain a higher self-generated reward by consistently producing an incorrect answer with high confidence, without improving correctness or calibration. The target cache is motivated by this failure mode; by fixing targets within each epoch, it weakens the immediate feedback between the current policy and its self-supervision signals.

\begin{figure}[t]
    \centering
    \includegraphics[width=\linewidth]{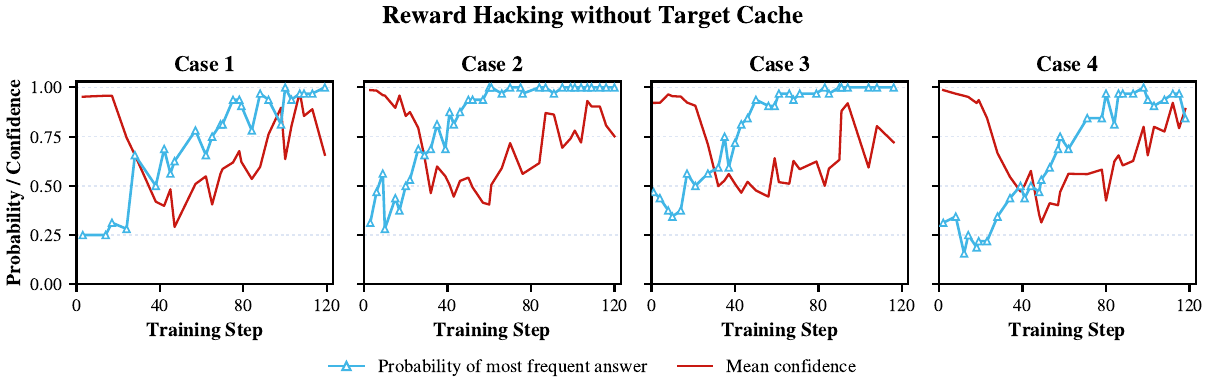}
    \vspace{-3mm}
    \caption{\textbf{Reward hacking without the target cache.} Four AIME24 examples in which the model's rollouts increasingly concentrate on an incorrect answer while mean confidence rises toward 1.0. The \textcolor{cyan}{blue lines} show the frequency-based probability of the most frequent answer, and the \textcolor{softred}{red lines} show mean confidence. When supervision signals are derived from the the rollout group of current policy, increasing agreement on a wrong answer raises its confidence target, leading to reward hacking.
    }
    \label{Fig:Reward_Hacking_No_Cache}
    \vspace{-3mm}
\end{figure}

\subsection{More Results of General Capabilities}
\label{App:Results_Other_Domains}
As a supplement to Table~\ref{Tble:TTCL_Other_Bench} in Section~\ref{Sec:Furhter_Results}, we provide more comprehensive results in Table~\ref{Tble:TTCL_Other_Bench_Appendix}. Specifically, we further report results for more TTCL-trained models on benchmarks covering multiple aspects of general capabilities, including ARC~\citep{clark2018arc}, GPQA-Diamond~\citep{rein2023gpqa}, IFEval~\citep{zhou2023ifeval}, MMLU-Pro~\citep{wang2024mmlupro}, and TruthfulQA~\citep{lin2022truthfulqa}, which evaluate capabilities across scientific reasoning, instruction following, multi-domain knowledge, and truthfulness. For IFEval, the evaluation includes four metrics: prompt-level strict, prompt-level loose, instruction-level strict, and instruction-level loose accuracy; we report their average values, with the detailed results provided in Appendix~\ref{App:Detailed_IFEval}. Across these benchmarks, TTCL-trained models generally maintain performance comparable to their corresponding models before TTCL adaptation, with only minor fluctuations across individual tasks. These results indicate that the calibration improvements brought by TTCL do not come at the cost of degrading the model's general capabilities on other domains. Instead, TTCL can improve accuracy and confidence calibration on the target tasks while largely preserving the model's general capabilities.

\begin{table}[t]
    \centering
    \caption{\textbf{Calibration results of existing label-free RL methods.} We evaluate several label-free RL methods using the released checkpoints from~\citet{zhang2026co}, and report accuracy, ECE, and mean confidence values. Mean confidence values that are substantially higher than the corresponding empirical accuracy, indicating over-confidence of calibration, are highlighted in \textcolor{softred}{red}.}
    \label{Tble:Calibration_LabelfreeRL}
    \vspace{-3mm}
    \resizebox{\textwidth}{!}{
    \begin{tabular}{l|ccc|ccc|ccc|ccc|ccc}
    \toprule[1.6pt]
        \multirow{2}{*}{\textbf{Label-free RL Methods}} & \multicolumn{3}{c}{\textbf{MATH500}} & \multicolumn{3}{c}{\textbf{AMC}} & \multicolumn{3}{c}{\textbf{AIME24}} & \multicolumn{3}{c}{\textbf{AIME25}} & \multicolumn{3}{c}{\textbf{AIME26}} \\ 
        \cmidrule(lr){2-4} \cmidrule(lr){5-7} \cmidrule(lr){8-10} \cmidrule(lr){11-13} \cmidrule(lr){14-16}
        ~ & \textbf{ACC $\uparrow$} & \textbf{ECE $\downarrow$} & \textbf{Conf} & \textbf{ACC $\uparrow$} & \textbf{ECE $\downarrow$} & \textbf{Conf} & \textbf{ACC $\uparrow$} & \textbf{ECE $\downarrow$} & \textbf{Conf} & \textbf{ACC $\uparrow$} & \textbf{ECE $\downarrow$} & \textbf{Conf} & \textbf{ACC $\uparrow$} & \textbf{ECE $\downarrow$} & \textbf{Conf}  \\ 
        \midrule
        \multicolumn{16}{c}{\textit{\textbf{Qwen3-4B-Base}}} \\
        \midrule
        Base Model  & 58.8 & 0.347 & \color{softred}0.935 & 38.44 & 0.541 & \color{softred}0.926 & 10.00 & 0.826 & \color{softred}0.926 & 4.17 & 0.876 & \color{softred}0.918 & 6.25 & 0.861 & \color{softred}0.924 \\
        \midrule
        Self-Certainty~\citep{zhao2026learning} & 61.6 & 0.342 & \color{softred}0.958 & 41.25 & 0.540 & \color{softred}0.953 & 7.50 & 0.877 & \color{softred}0.952 & 5.00 & 0.894 & \color{softred}0.944 & 5.00 & 0.900 & \color{softred}0.950 \\
        Entropy~\citep{prabhudesai2025maximizing} & 77.2 & 0.206 & \color{softred}0.978 & 51.88 & 0.462 & \color{softred}0.981 & 14.58 & 0.833 & \color{softred}0.978 & 8.75 & 0.890 & \color{softred}0.978 & 9.58 & 0.885 & \color{softred}0.980 \\
        Majority-voting~\citep{zuo2026ttrl} & 72.6 & 0.230 & \color{softred}0.956 & 43.13 & 0.509 & \color{softred}0.940 & 9.58 & 0.831 & \color{softred}0.927 & 4.58 & 0.876 & \color{softred}0.921 & 5.83 & 0.864 & \color{softred}0.923 \\
        Co-rewarding~\citep{zhang2026co} & 75.6 & 0.214 & \color{softred}0.970 & 52.81 & 0.431 & \color{softred}0.959 & 14.58 & 0.801 & \color{softred}0.947 & 9.17 & 0.852 & \color{softred}0.944 & 6.67 & 0.881 & \color{softred}0.947 \\
        \specialrule{1pt}{0.4ex}{0.4ex}
        \multicolumn{16}{c}{\textit{\textbf{Qwen3-8B-Base}}} \\
        \midrule
        Base Model  & 60.6 & 0.305 & \color{softred}0.911 & 42.50 & 0.487 & \color{softred}0.912 & 13.75 & 0.773 & \color{softred}0.911 & 13.33 & 0.778 & \color{softred}0.912 & 6.25 & 0.846 & \color{softred}0.908 \\
        \midrule
        Self-Certainty~\citep{zhao2026learning} & 83.2 & 0.132 & \color{softred}0.964 & 59.69 & 0.363 & \color{softred}0.960 & 20.00 & 0.750 & \color{softred}0.950 & 16.25 & 0.787 & \color{softred}0.949 & 13.33 & 0.820 & \color{softred}0.954 \\
        Entropy~\citep{prabhudesai2025maximizing} & 79.0 & 0.177 & \color{softred}0.967 & 57.50 & 0.388 & \color{softred}0.963 & 14.17 & 0.818 & \color{softred}0.960 & 14.17 & 0.813 & \color{softred}0.955 & 9.17 & 0.866 & \color{softred}0.957 \\
        Majority-voting~\citep{zuo2026ttrl} & 78.2 & 0.180 & \color{softred}0.962 & 57.19 & 0.380 & \color{softred}0.952 & 12.08 & 0.822 & \color{softred}0.943 & 7.50 & 0.862 & \color{softred}0.937 & 6.25 & 0.875 & \color{softred}0.937 \\
        Co-rewarding~\citep{zhang2026co} & 81.0 & 0.157 & \color{softred}0.967 & 60.63 & 0.350 & \color{softred}0.956 & 15.00 & 0.797 & \color{softred}0.947 & 11.25 & 0.834 & \color{softred}0.946 & 11.67 & 0.829 & \color{softred}0.945 \\
        \specialrule{1pt}{0.4ex}{0.4ex}
        \multicolumn{16}{c}{\textit{\textbf{Llama-3.2-3B-Instruct}}} \\
        \midrule
        Base Model  & 35.2 & 0.514 & \color{softred}0.866 & 24.06 & 0.615 & \color{softred}0.855 & 6.67 & 0.766 & \color{softred}0.832 & 0.00 & 0.828 & \color{softred}0.828 & 0.83 & 0.825 & \color{softred}0.834 \\
        \midrule
        Self-Certainty~\citep{zhao2026learning} & 47.0 & 0.428 & \color{softred}0.898 & 25.00 & 0.637 & \color{softred}0.887 & 6.67 & 0.807 & \color{softred}0.874 & 0.42 & 0.865 & \color{softred}0.869 & 0.42 & 0.863 & \color{softred}0.867 \\
        Entropy~\citep{prabhudesai2025maximizing} & 31.0 & 0.589 & \color{softred}0.899 & 15.63 & 0.730 & \color{softred}0.887 & 5.00 & 0.815 & \color{softred}0.865 & 0.42 & 0.866 & \color{softred}0.870 & 0.83 & 0.850 & \color{softred}0.858 \\
        Majority-voting~\citep{zuo2026ttrl} & 42.0 & 0.470 & \color{softred}0.890 & 25.63 & 0.619 & \color{softred}0.875 & 7.50 & 0.781 & \color{softred}0.856 & 0.42 & 0.846 & \color{softred}0.850 & 0.83 & 0.838 & \color{softred}0.847 \\
        Co-rewarding~\citep{zhang2026co} & 46.4 & 0.426 & \color{softred}0.890 & 28.44 & 0.561 & \color{softred}0.846 & 7.08 & 0.743 & \color{softred}0.814 & 0.00 & 0.813 & \color{softred}0.813 & 1.25 & 0.801 & \color{softred}0.814 \\
        \bottomrule[1.6pt]
    \end{tabular}}
    \vspace{-2mm}
\end{table}

\subsection{Detailed Results of IFEval}
\label{App:Detailed_IFEval}
In the previous results, we report the average performance over the four IFEval metrics: prompt-level strict accuracy, prompt-level loose accuracy, instruction-level strict accuracy, and instruction-level loose accuracy. As a supplement, Table~\ref{Tble:TTCL_IFEval_Appendix} further reports the detailed results for each individual metric. It can be observed that TTCL effectively preserves the model's instruction-following capability. Across different target tasks and model settings, the IFEval performance after TTCL remains generally comparable to that before TTCL training, with slight improvements observed in several cases, such as on Qwen3-4B. These results further indicate that TTCL can improve accuracy and calibration on target tasks without compromising the model's general instruction-following capability.

\subsection{Calibration Results of Existing Label-free RL Methods}
To examine whether existing label-free RL methods can improve model calibration while enhancing reasoning accuracy, we evaluate several representative approaches, including Self-certainty Maximization~\citep{zhao2026learning}, Entropy Minimization~\citep{prabhudesai2025maximizing}, Majority-voting for pseudo-label construction~\citep{zuo2026ttrl}, and Co-rewarding~\citep{zhang2026co}. These methods are primarily designed to improve reasoning performance without ground-truth labels, but lack an explicit calibration objective. Thanks to the Co-rewarding work, which releases checkpoints trained on DAPO-Math-14K with these different label-free RL methods, we directly evaluate these checkpoints for calibration performance. Since most of them are trained from pretrained models and have limited ability to follow verbalized-confidence instructions, we use the mean token probability of the generated response as the confidence estimate~\citep{malinin2020uncertainty,orgad2025llms}.

\par
As shown in Table~\ref{Tble:Calibration_LabelfreeRL}, although these label-free RL methods improve accuracy to some extent compared with base models, their calibration remains relatively poor. We observe that the ECE values are generally high, while the mean confidence consistently exceeds 0.8 and is typically above 0.9 for the Qwen models, indicating clear over-confidence. These results suggest that accuracy-oriented label-free RL does not naturally lead to well-calibrated confidence estimates when an explicit calibration objective is absent. These observations are consistent with our discussion in Appendix~\ref{App:RelatedWork} Related Work and further motivate our exploration of label-free test-time calibration learning framework.

\subsection{Case Study}
To intuitively illustrate the calibration behavior learned by TTCL, we present several cases after TTCL adaptation. In these case studies, \textcolor{blue}{blue boxes} denote the questions, \textcolor{softred}{red boxes} denote outputs that are not well calibrated (e.g., over-confidence, incorrect answer with high confidence), and \textcolor{darkgreen}{green boxes} denote the well-calibrated outputs after TTCL. These cases show that the TTCL-trained models can effectively assign high confidence to correct answers while expressing relatively low confidence on incorrect answers. When calibration learned on a source domain transfers poorly to newly encountered target tasks, TTCL can further adapt the model's confidence to the target distribution and improve its calibration on these tasks. This provides a concrete view of how TTCL aligns verbalized confidence with prediction correctness on target tasks.

\newtcolorbox{exbox}[2]{
    enhanced,
    breakable,
    colback=#2!8,
    colframe=#2!60,
    coltitle=black,
    fonttitle=\bfseries,
    title={#1},
    boxrule=0.6pt,
    arc=2mm,
    left=2mm,
    right=2mm,
    top=1.5mm,
    bottom=1.5mm
}

\begin{exbox}{Question from AMC:}{blue}
For how many integers $n$ does the expression \[\sqrt{\frac{\log (n^2) - (\log n)^2}{\log n - 3}}\] represent a real number, where log denotes the base $10$ logarithm?

\textbf{Ground-truth answer:} $\boxed{901}$
\end{exbox}

\begin{exbox}{Over-confidence of Qwen3-4B (Incorrect Answer with High Confidence)}{softred}

We are given the expression:

\[
\sqrt{\frac{\log (n^2) - (\log n)^2}{\log n - 3}}
\]

We are to find how many \textbf{integers} $n$ make this expression a
\textbf{real number}, where $\log$ denotes the \textbf{base 10 logarithm}.

\medskip
\hrule
\medskip

\textbf{Step 1: Simplify the expression}

Let’s denote $x = \log n$. Since $\log$ is base 10, the expression becomes:

\[
\sqrt{\frac{\log (n^2) - (\log n)^2}{\log n - 3}}
=
\sqrt{\frac{2\log n - (\log n)^2}{\log n - 3}}
=
\sqrt{\frac{2x - x^2}{x - 3}}
\]

We want this to be \textbf{real}, so the \textbf{expression inside the square root must be non-negative}:

\[
\frac{2x - x^2}{x - 3} \geq 0
\]

Also, the \textbf{denominator cannot be zero}, so:

\[
x \neq 3
\Rightarrow
\log n \neq 3
\Rightarrow
n \neq 10^3 = 1000
\]

So, we must have:

\[
\frac{2x - x^2}{x - 3} \geq 0,
\quad
x \neq 3
\]

\medskip
\hrule
\medskip

\textbf{Step 2: Solve the inequality}

We analyze the sign of the expression:

\[
\frac{2x - x^2}{x - 3}
=
\frac{-x^2 + 2x}{x - 3}
=
\frac{-x(x - 2)}{x - 3}
\]

Let’s find the critical points:

\begin{itemize}
    \item Numerator: $-x(x - 2) = 0 \Rightarrow x = 0, 2$
    \item Denominator: $x - 3 = 0 \Rightarrow x = 3$
\end{itemize}

So the sign chart is defined by intervals:

\begin{itemize}
    \item $x < 0$
    \item $0 < x < 2$
    \item $2 < x < 3$
    \item $x > 3$
\end{itemize}

Now test the sign of the expression in each interval:

\begin{center}
\resizebox{0.98\linewidth}{!}{
\begin{tabular}{c|c|c|c|c}
\hline
\textbf{Interval} & \textbf{Sign of $-x$} & \textbf{Sign of $x-2$} & \textbf{Sign of $x-3$} & \textbf{Sign of $\frac{-x(x-2)}{x-3}$} \\
\hline
$x < 0$ & $-$ & $-$ & $-$ & $\frac{(-)(-)}{(-)} = \frac{+}{-} = -$ \\
\hline
$0 < x < 2$ & $-$ & $-$ & $-$ & $\frac{(-)(-)}{(-)} = \frac{+}{-} = -$ \\
\hline
$2 < x < 3$ & $-$ & $+$ & $-$ & $\frac{(-)(+)}{(-)} = \frac{-}{-} = +$ \\
\hline
$x > 3$ & $-$ & $+$ & $+$ & $\frac{(-)(+)}{(+)} = \frac{-}{+} = -$ \\
\hline
\end{tabular}
}
\end{center}

So the expression is \textbf{non-negative} when:

\[
2 < x < 3
\]

But $x = \log n$, so:

\[
2 < \log n < 3
\Rightarrow
10^2 < n < 10^3
\Rightarrow
100 < n < 1000
\]

Also, $n$ must be an \textbf{integer}, and $n \neq 1000$
(since $\log n = 3$ is not allowed).

\medskip
\hrule
\medskip

\textbf{Step 3: Count the integers in the interval}

We are to count the integers $n$ such that:

\[
100 < n < 1000
\]

So $n$ ranges from 101 to 999 (inclusive). The number of such integers is:

\[
999 - 101 + 1 = 899
\]

\medskip
\hrule
\medskip

\textbf{Final Answer:}

\[
\boxed{899}
\]

\texttt{<conf>} Confidence: 0.95

\end{exbox}

\begin{exbox}{Well-calibrated Qwen3-4B after TTCL on AMC (Incorrect Answer with Low Confidence)}{darkgreen}

To determine for how many integers $n$ the expression

\[
\sqrt{\frac{\log (n^2) - (\log n)^2}{\log n - 3}}
\]

represents a real number, we need to ensure two conditions:

\begin{enumerate}
    \item The expression inside the square root must be \textbf{non-negative} (i.e., greater than or equal to 0).
    \item The denominator $\log n - 3$ must not be \textbf{zero} (to avoid division by zero).
\end{enumerate}

Let’s analyze the expression step by step.

\medskip
\hrule
\medskip

\textbf{Step 1: Simplify the numerator}

We are given:

\[
\log(n^2) - (\log n)^2
\]

Using the logarithmic identity $\log(n^2) = 2\log n$, this becomes:

\[
2\log n - (\log n)^2
\]

Let $x = \log n$. Then the numerator becomes:

\[
2x - x^2 = -x^2 + 2x
\]

So the entire expression becomes:

\[
\sqrt{\frac{-x^2 + 2x}{x - 3}}
\]

Where $x = \log n$.

\medskip
\hrule
\medskip

\textbf{Step 2: Analyze the expression inside the square root}

We need:

\[
\frac{-x^2 + 2x}{x - 3} \geq 0
\]

Also, we must have $x \neq 3$ (to avoid division by zero).

Let’s analyze the sign of the expression:

\[
\frac{-x^2 + 2x}{x - 3} = \frac{-x(x - 2)}{x - 3}
\]

We now analyze the sign of this expression.

\medskip
\hrule
\medskip

\textbf{Step 3: Find the critical points}

The numerator is zero when $x = 0$ or $x = 2$.

The denominator is zero when $x = 3$.

So the critical points are $x = 0, 2, 3$.

We now test the sign of the expression in the intervals determined by these points:

\begin{itemize}
    \item $x < 0$
    \item $0 < x < 2$
    \item $2 < x < 3$
    \item $x > 3$
\end{itemize}

Let’s analyze the sign of the expression in each interval.

\textbf{Interval 1: $x < 0$}

\begin{itemize}
    \item Numerator: $-x(x - 2)$: $x < 0$, so $-x > 0$, and $x - 2 < 0$, so overall numerator is positive.
    \item Denominator: $x - 3 < 0$
    \item Overall: $\frac{+}{-} = -$ $\rightarrow$ Negative $\rightarrow$ Not allowed.
\end{itemize}

\textbf{Interval 2: $0 < x < 2$}

\begin{itemize}
    \item Numerator: $-x(x - 2)$: $x > 0$, $x - 2 < 0$, so overall numerator is positive.
    \item Denominator: $x - 3 < 0$
    \item Overall: $\frac{+}{-} = -$ $\rightarrow$ Negative $\rightarrow$ Not allowed.
\end{itemize}

\textbf{Interval 3: $2 < x < 3$}

\begin{itemize}
    \item Numerator: $-x(x - 2)$: $x > 0$, $x - 2 > 0$, so overall numerator is negative.
    \item Denominator: $x - 3 < 0$
    \item Overall: $\frac{-}{-} = +$ $\rightarrow$ Positive $\rightarrow$ Allowed.
\end{itemize}

\textbf{Interval 4: $x > 3$}

\begin{itemize}
    \item Numerator: $-x(x - 2)$: $x > 0$, $x - 2 > 0$, so overall numerator is negative.
    \item Denominator: $x - 3 > 0$
    \item Overall: $\frac{-}{+} = -$ $\rightarrow$ Negative $\rightarrow$ Not allowed.
\end{itemize}

\medskip
\hrule
\medskip

\textbf{Step 4: Valid interval}

The expression is \textbf{non-negative} only in the interval $2 < x < 3$.

So we require:

\[
2 < \log n < 3
\]

Convert to $n$:

\[
10^2 < n < 10^3 \Rightarrow 100 < n < 1000
\]

Now, $n$ must be an \textbf{integer} in this interval.

The integers from 101 to 999 inclusive are:

\[
999 - 101 + 1 = 899
\]

But we must also ensure that $\log n \neq 3$, which means $n \neq 1000$.

So the valid integers are from 101 to 999 inclusive.

\[
999 - 101 + 1 = 899
\]

Thus, the number of integers $n$ for which the expression represents a real number is:

\[
\boxed{899}
\]

\texttt{<conf>} Confidence: 0.25

\end{exbox}

\begin{exbox}{Question from AIME24:}{blue}
Jen enters a lottery by picking $4$ distinct numbers from $S=\{1,2,3,\cdots,9,10\}.$ $4$ numbers are randomly chosen from $S.$ She wins a prize if at least two of her numbers were $2$ of the randomly chosen numbers, and wins the grand prize if all four of her numbers were the randomly chosen numbers. The probability of her winning the grand prize given that she won a prize is $\tfrac{m}{n}$ where $m$ and $n$ are relatively prime positive integers. Find $m+n$.

\textbf{Ground-truth answer:} $\boxed{116}$
\end{exbox}

\begin{exbox}{Well-calibrated Qwen3-4B after TTCL on AIME24 (Correct Answer with High Confidence)}{darkgreen}

To solve this problem, we need to calculate the probability that Jen wins the \textbf{grand prize} given that she has won a \textbf{prize} (i.e., at least two of her numbers match the randomly chosen numbers).

\medskip
\textbf{Step 1: Total number of ways to choose 4 numbers from 10}

\[
\binom{10}{4} = 210
\]

\medskip
\textbf{Step 2: Number of ways Jen can win a prize (at least 2 of her numbers match)}

This includes:
\begin{itemize}
    \item Exactly 2 matching numbers
    \item Exactly 3 matching numbers
    \item Exactly 4 matching numbers (grand prize)
\end{itemize}

Let’s calculate each case.

\medskip
\textbf{Case 1: Exactly 2 matching numbers}

\begin{itemize}
    \item Choose 2 numbers from Jen's 4: $\binom{4}{2}$
    \item Choose 2 numbers from the remaining 6: $\binom{6}{2}$
\end{itemize}

\[
\binom{4}{2} \cdot \binom{6}{2} = 6 \cdot 15 = 90
\]

\medskip
\textbf{Case 2: Exactly 3 matching numbers}

\begin{itemize}
    \item Choose 3 numbers from Jen's 4: $\binom{4}{3}$
    \item Choose 1 number from the remaining 6: $\binom{6}{1}$
\end{itemize}

\[
\binom{4}{3} \cdot \binom{6}{1} = 4 \cdot 6 = 24
\]

\medskip
\textbf{Case 3: Exactly 4 matching numbers (grand prize)}

\begin{itemize}
    \item Choose all 4 numbers from Jen's 4: $\binom{4}{4} = 1$
\end{itemize}

\medskip
\textbf{Total number of ways to win a prize:}

\[
90 + 24 + 1 = 115
\]

\medskip
\textbf{Step 3: Number of ways to win the grand prize (exactly 4 matching numbers)}

\[
1
\]

\medskip
\textbf{Step 4: Probability of winning the grand prize given that she won a prize}

\[
P(\text{grand prize} \mid \text{prize}) = \frac{1}{115}
\]

\medskip
\textbf{Step 5: Simplify the fraction $\frac{1}{115}$}

The fraction is already in simplest form, so $m = 1$, $n = 115$.

\medskip
\textbf{Step 6: Find $m + n$}

\[
1 + 115 = 116
\]

\medskip
\textbf{Final Answer:}

\[
\boxed{116}
\]

\texttt{<conf>} Confidence: 0.92

\end{exbox}

\begin{exbox}{Question from factual benchmark NQ:}{blue}
when is season 2 of jessica jones being released

\textbf{Ground-truth answer:} March 8, 2018
\end{exbox}

\begin{exbox}{RLCR-Qwen3-8B calibrated on math transfers poorly on FactQA \\ (Incorrect Answer with High Confidence)}{softred}

Let's break this down step by step:

\begin{enumerate}
    \item \textbf{Identify the show}: \textit{Jessica Jones} is a Marvel superhero TV series that aired on Netflix. It is part of the \textit{Marvel Netflix Universe} (MNLU).

    \item \textbf{Season 1 release}: \textit{Jessica Jones} Season 1 was released on \textbf{November 20, 2015}.

    \item \textbf{Season 2 release}: Season 2 was released on \textbf{March 15, 2017}.

    \item \textbf{Current status}: The show was canceled by Netflix in 2018, and no further seasons were produced. However, \textit{Jessica Jones} was later picked up by Marvel Studios for a film, \textit{Jessica Jones} (2020), which is not a continuation of the TV series.
\end{enumerate}

\textbf{Final Answer:}

Season 2 of \textit{Jessica Jones} was released on \textbf{March 15, 2017}.

\texttt{<answer>}March 15, 2017\texttt{</answer>}

\texttt{<conf>} Confidence: 0.92

\end{exbox}

\begin{exbox}{TTCL further calibrates RLCR-Qwen3-8B on the target NQ task \\ (Incorrect Answer with Low Confidence)}{darkgreen}

Jessica Jones is a Marvel Netflix series that originally aired in 2015. Season 2 of \textit{Jessica Jones} was released on \textbf{December 1, 2016}.

The series was produced by Marvel Television and released on Netflix, so the release date is specific to the streaming platform.

\texttt{<answer>}December 1, 2016\texttt{</answer>}

\texttt{<conf>} Confidence: 0.1

\end{exbox}

\begin{exbox}{Question from factual benchmark NQ:}{blue}
what's the dog's name on tom and jerry

\textbf{Ground-truth answer:} Spike
\end{exbox}

\begin{exbox}{TTCL further calibrates RLCR-Qwen3-8B on the target NQ task \\ (Correct Answer with High Confidence):}{darkgreen}

The dog's name on \textit{Tom and Jerry} is \textbf{Spike}.

Tom is the cat, and Spike is the dog. They are the main characters in the classic animated series.

\texttt{<answer>}Spike\texttt{</answer>}

\texttt{<conf>} Confidence: 0.95

\end{exbox}

\end{document}